\documentclass[11pt,english]{article}

\usepackage{graphics}
\usepackage{amsmath,amsfonts,amssymb}
\usepackage{hyperref}
\hypersetup{hidelinks}
\usepackage{enumitem}

\usepackage{url}            % simple URL typesetting
\usepackage{booktabs}       % professional-quality tables
\usepackage{nicefrac}       % compact symbols for 1/2, etc.
\usepackage{mathtools}
\usepackage{stmaryrd}  % arrows
\usepackage{color}
\usepackage{algorithm, algorithmic}
\usepackage{comment}
\usepackage{subfig}
\usepackage{multirow, makecell}
\usepackage{multirow}
\usepackage{multicol}

\newenvironment{thmbis}[1]
{%
	\addtocounter{theorem}{-1}%
	\begin{theorem}}
	{\end{theorem}}

\usepackage{amsthm}

\usepackage[top=1in, bottom=1in, left=1.5in, right=1.5in]{geometry}
\usepackage[utf8]{inputenc} % allow utf-8 input
\usepackage[T1]{fontenc}    % use 8-bit T1 fonts
\usepackage{lmodern}
\usepackage{nicefrac}       % compact symbols for 1/2, etc.
\usepackage{booktabs}       % professional-quality tables
\usepackage{amsfonts}       % blackboard math symbols
\usepackage{microtype}      % microtypography
\usepackage{algorithm}
\usepackage{algorithmic}
\usepackage{multirow}
\usepackage{multicol}

\newcommand*{\tor}{ \xrightarrow{\mathrm{or}}}

\usepackage[round]{natbib}

\newtheorem{definition}{Definition}
\newtheorem{theorem}{Theorem}
\newtheorem{corollary}{Corollary}
\newtheorem{lemma}{Lemma}

\newtheorem{proposition}{Proposition}

\newtheorem{example}{Example}

%\makeatletter
%\newcommand*{\indep}{%
%	\mathbin{%
%		\mathpalette{\@indep}{}%
%	}%
%}
%\newcommand*{\nindep}{%
%	\mathbin{%                   % The final symbol is a binary math operator
%		\mathpalette{\@indep}{\not}% \mathpalette helps for the adaptation
%		% of the symbol to the different math styles.
%	}%
%}
%\newcommand*{\@indep}[2]{%
%	% #1: math style
%	% #2: empty or \not
%	\sbox0{$#1\perp\m@th$}%        box 0 contains \perp symbol
%	\sbox2{$#1=$}%                 box 2 for the height of =
%	\sbox4{$#1\vcenter{}$}%        box 4 for the height of the math axis
%	\rlap{\copy0}%                 first \perp
%	\dimen@=\dimexpr\ht2-\ht4-.2pt\relax
%	% The equals symbol is centered around the math axis.
%	% The following equations are used to calculate the
%	% right shift of the second \perp:
%	% [1] ht(equals) - ht(math_axis) = line_width + 0.5 gap
%	% [2] right_shift(second_perp) = line_width + gap
%	% The line width is approximated by the default line width of 0.4pt
%	\kern\dimen@
%	{#2}%
%	% {\not} in case of \nindep;
%	% the braces convert the relational symbol \not to an ordinary
%	% math object without additional horizontal spacing.
%	\kern\dimen@
%	\copy0 %                       second \perp
%}
%\makeatother

\title{On the Representation of Pairwise Causal Background Knowledge and Its Applications in Causal Inference}

%\author{Zhuangyan Fang\textsuperscript{1} \;~
%	Yue Liu\textsuperscript{2} \;~
%	Zhi Geng\textsuperscript{1} \;
%	Yangbo He\textsuperscript{1}\thanks{Correspondence to: heyb@pku.edu.cn .} \\
%	\textsuperscript{1}Peking University \quad \textsuperscript{2}Huawei Noah's Ark Lab}

%\author{Zhuangyan Fang\textsuperscript{1} \;~
%	Yue Liu\textsuperscript{2} \;~
%	Zhi Geng\textsuperscript{3} \\ 
%	Shengyu Zhu\textsuperscript{4} \;~
%	Yangbo He\textsuperscript{1}\thanks{Correspondence to: heyb@pku.edu.cn .} \\
%	~\\
%	\textsuperscript{1}School of Mathematical Sciences, LMAM, LMEQF, and \\ Center of Statistical Science, Peking University \\ \textsuperscript{2}Center for Applied Statistics and School of Statistics,\\Renmin University of China \\ \textsuperscript{3}School of Mathematics and Statistics,\\Beijing Technology and Business University \\ \textsuperscript{4}Huawei Noah's Ark Lab}

\author{Zhuangyan Fang\textsuperscript{1,2} \;~
	Ruiqi Zhao\textsuperscript{1,3} \;~
	Yue Liu\textsuperscript{4} \;~
	Yangbo He\textsuperscript{1}\thanks{Correspondence to: heyb@pku.edu.cn .} \\
	~\\
	\textsuperscript{1}Peking University \quad \textsuperscript{2}Xiaomi Corporation \\
	\textsuperscript{3}Inspur Industrial Innovation (Shandong) Intelligent Manufacturing Co., Ltd.\\ \textsuperscript{4}Renmin University of China}
%\author{Zhuangyan Fang\textsuperscript{1} \;~
%	Yue Liu\textsuperscript{2} \;~
%	Zhi Geng\textsuperscript{3} \\ 
%	Shengyu Zhu\textsuperscript{4} \;~
%	Yangbo He\textsuperscript{1}\thanks{Correspondence to: heyb@pku.edu.cn .} \\
%	~\\
%	\textsuperscript{1}Peking University \quad \textsuperscript{2}Renmin University of China \\ \textsuperscript{3}Beijing Technology and Business University \\ \textsuperscript{4}Huawei Noah's Ark Lab}

\begin{document}

\maketitle

\begin{abstract}
Pairwise causal background knowledge about the existence or  absence of causal edges and paths is frequently encountered in observational studies. Such  constraints   allow the shared directed  and undirected edges in the constrained subclass of Markov equivalent DAGs to be represented as a causal maximally partially directed acyclic graph (MPDAG). In this paper, we first provide a sound and complete graphical characterization of causal MPDAGs and introduce a minimal representation of a causal MPDAG. Then, we give a unified representation for three types of pairwise causal background knowledge, including direct, ancestral and non-ancestral causal knowledge,  by introducing a novel concept called direct causal clause (DCC). {Using DCCs, we study the consistency and equivalence of pairwise causal background knowledge and show that any pairwise causal background knowledge set can be uniquely and equivalently decomposed into the causal MPDAG representing the refined Markov equivalence class and a minimal residual set of DCCs.} Polynomial-time algorithms are also provided for checking consistency and equivalence, as well as for finding the decomposed MPDAG and the residual DCCs. Finally, with pairwise causal background knowledge, we prove a sufficient and necessary condition to identify causal effects and surprisingly find that the identifiability of causal effects only depends on the decomposed MPDAG. We also develop a local IDA-type algorithm to estimate the possible values of an unidentifiable effect. Simulations suggest that pairwise causal background knowledge can significantly improve the identifiability of causal effects.
\end{abstract}

\section{Introduction}
Causal background knowledge refers to the understanding or consensus of causal and non-causal relations in a system. Such information, as a supplement to data, may be obtained from domain knowledge or experts' judgments (such as smoking causes lung cancer and eating betel nuts causes oral cancer), from common sense (such as a subsequent event is not a cause of a prior event), or even from previous experimental studies (such as double-blind experiments or A/B tests). Under the framework of causal graphical models,  representing and exploiting causal background knowledge  may improve the identifiability of causal structures or causal effects in a study of causal discovery or causal inference  \citep{meek1995causal,Perkovic2020mpdag}.
For example, as shown in Figure~\ref{fig:intro}, consider a simple causal chain with three binary variables: smoking, bronchitis and dyspnea. With observational data only, it is possible to consistently estimate a completed partially directed acyclic graph (CPDAG) shown in Figure~\ref{fig:intro1}, representing a set of statistically equivalent DAGs called Markov equivalent. In this case, neither the causal structure among the three vertices nor the causal effect of smoking on bronchitis is identifiable, since there are three Markov equivalent DAGs and the causal effects estimated based on each of them are not identical. However, if we have already known that smoking can cause dyspnea, then there is only one DAG in the Markov equivalence class satisfying this causal constraint and thus
the causal effect of smoking on bronchitis is identifiable.

This paper  focuses on representing    \emph{pairwise} causal background knowledge and   incorporating  this knowledge into causal inference assuming no hidden variables or selection biases. % Since typically a CPDAG can be consistently estimated from observational data, we assume that a CPDAG is given throughout the paper.
We consider three types of pairwise causal background knowledge, including direct, ancestral and non-ancestral causal knowledge. A piece of direct causal knowledge is defined as the presence of a directed edge in a DAG.  Direct causal knowledge  is natural  and   has been studied extensively \citep{dor1992simple, meek1995causal,perkovic2017interpreting, 2019arXiv190702435H, Perkovic2020mpdag,Witte2020efficient,Guo2020minimal}. A piece of ancestral (non-ancestral) causal knowledge is defined as the presence (absence) of a directed path in a causal DAG. One can learn ancestral (non-ancestral) causal relations  from observational data \citep{fang2021local}, or from interventional data   since when a variable is intervened, its descendant variables could be changed  while other variables usually keep unchanged \citep{he2008active}. Each non-ancestral relation essentially implies a causal topological order between two variables.  Thus,  a causal topological order among variables, which is a common type of causal background knowledge in literature  \citep{park2017bayesian,wang2019directed},  can   be translated to  pairwise non-ancestral relations equivalently:  each   variable is not an ancestral variable of its preceding variables  in the order.

Existing works on pairwise causal background knowledge mainly focus on direct  causal knowledge~\citep{dor1992simple, meek1995causal,perkovic2017interpreting, 2019arXiv190702435H, Perkovic2020mpdag,Witte2020efficient,Guo2020minimal}. \citet{meek1995causal} proved that the set of DAGs in a Markov equivalence class satisfying given direct causal knowledge is non-empty if and only if it can be represented by a causal maximally partially directed acyclic graph (MPDAG), which contains both directed and undirected edges. Benefiting from the compact graphical representation, many researchers discussed the identifiability and efficient estimation of a causal effect, or  the estimation of all possible causal effects of a treatment on a response  with direct  causal knowledge~\citep{perkovic2017interpreting, 2019arXiv190702435H, Perkovic2020mpdag,Witte2020efficient,Guo2020minimal}. Recently, \citet{fang2020bgida} further  studied non-ancestral causal knowledge and proved that non-ancestral causal knowledge can also be represented exactly by causal MPDAGs. However, causal MPDAGs may fail to represent ancestral causal knowledge. The DAGs in a Markov equivalence class satisfying given ancestral causal knowledge may satisfy some structural constraints that cannot be posed by any causal MPDAG. An example is provided in Example \ref{rep:failrep}.

%Without a unified representation of all types of pairwise causal background knowledge, we have to regard pairwise causal background knowledge as path constraints, which  puts global constraints on the possible DAG structures~\citep{Borboudakis2012Incorporating}. As a result,  to use pairwise causal background knowledge one may need to explicitly or implicitly enumerate all equivalent DAGs in a given Markov equivalence class, and check the paths in each enumerated DAG~\citep{Borboudakis2012Incorporating}. This enumeration-based approach is infeasible in high dimensional settings as the number of Markov equivalent DAGs grows exponentially as the number of variables increases, which also limits the application of pairwise causal background knowledge in causal inference.

{Instead, the representation of ancestral causal knowledge remains under-explored. In the existing studies, ancestral causal knowledge is generally regarded as a constraint on the existence of directed paths~\citep[see, for example,][]{Borboudakis2012Incorporating}. Such a constraint is \emph{global} in the sense that it imposes complex restrictions on the direction of all edges along the paths connecting two nodes. As a result, it becomes challenging to answer some basic queries, such as whether two pieces of ancestral causal knowledge contradict each other, or whether one piece of ancestral causal knowledge can be inferred from others. Existing literature often addresses these queries, as well as other issues related to ancestral causal knowledge, by explicitly or implicitly enumerating all equivalent DAGs within a given Markov equivalence class and examining the paths in each DAG~\citep{Borboudakis2012Incorporating}. However, this enumeration-based approach is infeasible
	in high dimensional settings, where the number of Markov equivalent DAGs grows exponentially with the number of variables. This limitation also restricts the practical application of pairwise causal background knowledge in causal inference.}

% As illustrated in Figure~\ref{fig:intro}, using additional causal information can further prune the class of Markov equivalent DAGs, making an unidentifiable effect become identifiable among the remaining DAGs. That is, the causal effects of a treatment on a response may differ from each other when estimated with all Markov equivalent DAGs, but are identical given additional pairwise causal background knowledge. Therefore, it is important to study the identifiability of a causal effect when pairwise causal background knowledge is available. On the other hand, even if the causal effect of interest is unidentifiable, it is meaningful to  estimate all possible values of it, that is, a list of causal effects each of which is computed with a Markov equivalent DAG satisfying pairwise causal background knowledge.

\begin{figure}[!t]
	%	\vspace{-0.5em}
	\centering
	\subfloat[A CPDAG over three variables\label{fig:intro1}]{
		\begin{minipage}[t]{0.35\linewidth}
			\centering
			\includegraphics[width=1\linewidth]{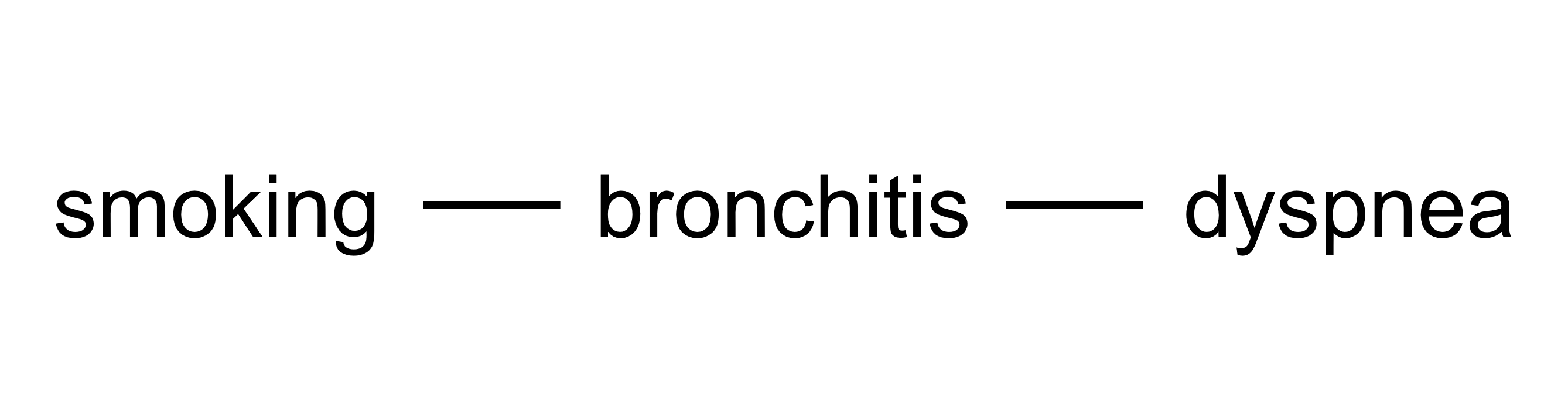}
			%\caption{fig1}
		\end{minipage}%
	}%
	\hspace{0.15\linewidth}
	\subfloat[Three Markov equivalent DAGs\label{fig:intro2}]{
		\begin{minipage}[t]{0.35\linewidth}
			\centering
			\includegraphics[width=1\linewidth]{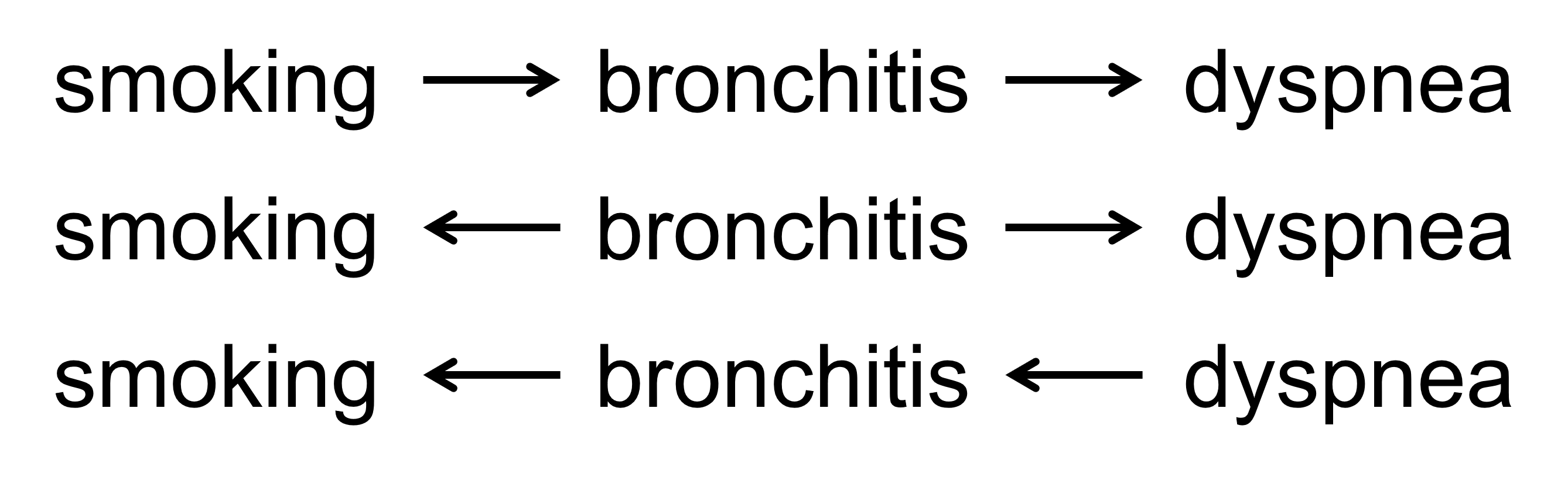}
			%\caption{fig1}
		\end{minipage}%
	}%

	\caption{A CPDAG over three variables including smoking, bronchitis and dyspnea is given in Figure~\ref{fig:intro1}, which represents the Markov equivalent class shown in Figures~\ref{fig:intro2}.}
	\label{fig:intro}
	% \vspace{-0.5em}
\end{figure}

In this paper, we first provide a sound and complete graphical characterization of causal MPDAGs, together with their minimal representation. We establish sufficient and necessary conditions under which a partially directed graph qualifies as a causal MPDAG. Next, we introduce a novel representation of pairwise causal background knowledge, called direct causal clauses, which provide a unified way to represent direct, ancestral, and non-ancestral knowledge within a single framework. We further analyze the consistency and equivalence of pairwise causal background knowledge represented by direct causal clauses, and show that any set of pairwise causal background knowledge can be uniquely and equivalently decomposed into (i) the causal MPDAG representing the subset of Markov equivalent DAGs consistent with the given knowledge, and (ii) a minimal residual set of direct causal clauses. We also provide sufficient and necessary conditions under which an MPDAG can exactly represent pairwise causal background knowledge.

Leveraging direct causal clauses, we then propose polynomial-time algorithms for checking the consistency and equivalence of pairwise causal background knowledge, as well as for constructing the corresponding decomposed causal MPDAG and residual direct causal clauses. To the best of our knowledge, these are the first polynomial-time algorithms that address these tasks in the presence of all three types of pairwise causal knowledge.

Finally, we investigate the identifiability of causal effects when pairwise causal background knowledge is available. We find, perhaps surprisingly, that the identifiability depends only on the decomposed MPDAG derived from the background knowledge. When a causal effect is not identifiable, its possible values are determined jointly by the MPDAG and the residual set of direct causal clauses. For such cases, we develop IDA-type algorithms to estimate the possible effects locally or semi-locally, building on new local orientation rules for CPDAGs with direct causal clauses.

The following three subsections devote to some preliminaries. Unless otherwise stated, we use capital letters such as $X$ to denote variables or vertices or nodes, and use boldface letters like $\mathbf{X}$ to denote variable sets or vectors. An instantiation of a variable or vector is denoted by a lowercase letter, such as $x$ and $\mathbf{x}$. We use $\mathbf{X}\subseteq\mathbf{Y}$, $\mathbf{X}\subsetneq\mathbf{Y}$ and $\mathbf{X}\nsubseteq\mathbf{Y}$ to denote that $\mathbf{X}$ is a subset, proper subset and not a subset of $\mathbf{Y}$, respectively.

%\section{Background}
%\label{sec:pre}

\subsection{Causal Graphical Models}\label{sec:sec:graphical}

% graph subgraph
In this paper, we use $\textbf{V}(\mathcal{G})$, $\textbf{E}(\mathcal{G})$, $\textbf{E}_d(\mathcal{G})$, and $\textbf{E}_u(\mathcal{G})$ to denote the vertex set (or node set), edge set, set of directed edges, and set of undirected edges of a given graph $\mathcal{G}$, respectively. Here, $\mathcal{G}$ can be a directed, undirected or partially directed graph.
% A  graph $\mathcal{G}=(\textbf{V}, \textbf{E})$ is \emph{directed} (\emph{undirected}, or \emph{partially directed}) if all edges in the graph are directed (undirected, or a mixture of directed and undirected ones). In this paper, we occasionally use $\textbf{V}(\mathcal{G})$, $\textbf{E}(\mathcal{G})$, $\textbf{E}_d(\mathcal{G})$ and $\textbf{E}_u(\mathcal{G})$ to denote the vertex set, edge set, set of directed edges and set of undirected edges of a given graph $\mathcal{G}$, respectively.
The {skeleton} of $\cal G$ is the undirected graph obtained by  removing all arrowheads from $\cal G$. For any $\textbf{V}'\subseteq \textbf{V}$, the {induced subgraph} of $\mathcal{G}$ over $\textbf{V}'$, denoted by $\mathcal{G}(\textbf{V}')$, is the graph with vertex set $\textbf{V}'$ and edge set $\textbf{E}'\subseteq \textbf{E}$ containing all and only edges between vertices in $\textbf{V}'$. The {undirected subgraph} and {directed subgraph} of ${\cal G}$ are denoted by $\mathcal{G}_u$ and $\mathcal{G}_d$, respectively. The former is defined as the undirected graph obtained by removing all directed edges, while the latter is the directed graph obtained by removing all undirected edges from ${\cal G}$. An undirected (or directed) induced subgraph of ${\cal G}$ over $\textbf{V}'\subseteq \textbf{V}$ is the induced subgraph of $\mathcal{G}_u$ (or $\mathcal{G}_d$) over $\textbf{V}'$.

% edge
In a graph $\mathcal{G}$, $X_i$ is a {parent} of $X_j$ and $X_j$ is a {child} of $X_i$ if $X_i\rightarrow X_j$, and $X_i$ is a {sibling} of $X_j$ if $X_i - X_j$. Two vertices $X_i$ and $X_j$ are {adjacent} and called {neighbors} of each other if they are connected by an edge. We use $pa(X_i, \cal G)$, $ch(X_i, \cal G)$, $sib(X_i, \cal G)$, and  $adj(X_i, \cal G)$ to denote the sets of parents, children, siblings, and adjacent vertices of $X_i$ in $\mathcal{G}$, respectively. A graph is called {complete} if every two distinct vertices are adjacent. A vertex is {simplicial} if its neighbors induce a complete subgraph.

% path, cycle, chord, dag
A {path} is a sequence of distinct vertices $(X_{1},\cdots,X_{n})$ such that any two consecutive vertices are adjacent. $X_{1}$ and $X_{n}$ are endpoints and the others are intermediate nodes. A path connecting $X\in\mathbf{X}$ and $Y\in\mathbf{Y}$ is {proper} if the intermediate nodes on the path are not in $\mathbf{X}\cup \mathbf{Y}$. If every two distinct vertices in a graph are connected by a path, then the graph is {connected}. A path from $X_1$ to $X_n$ is partially directed if $X_i \leftarrow X_{i+1}$ does not occur in $\cal G$ for any $i=1,...,n-1$ and $X_i \to X_{i+1}$ for some $i=1,...,n-1$.
Moreover, a path from $X_{1}$ to $X_{n}$ is {possibly causal} if $X_{i} \leftarrow X_{j}$ does not occur in $\cal {G}$ for any $i,j=1, \ldots, n$ and $i<j$, and is non-causal otherwise~\citep{perkovic2017interpreting}. A path from $X_1$ to $X_n$ is {directed} if $X_i \to X_{i+1}$ for every $i=1,...,n-1$, and is undirected if $X_i - X_{i+1}$ for every $i=1,...,n-1$. A partially directed (directed, or undirected) cycle is a partially directed (directed, or undirected) path from $X_{1}$ to $X_{n}$ together with a directed or an undirected edge (a directed edge, or an undirected edge) from $X_{n}$ to $X_{1}$. A directed graph is {acyclic} (DAG) if there are no directed cycles. A partially directed acyclic graph (PDAG) is a partially directed graph without directed cycles.
A chain graph is a partially directed graph with no partially directed cycles~\citep{lauritzen2002chain}. The length of a path (cycle) is the number of edges on the path (cycle). A vertex $X_i$ is an {ancestor} of $X_j$ and $X_j$ is a {descendant} of $X_i$  if there is a directed path from $X_i$ to $X_j$ or $X_i = X_j$; the sets of ancestors and descendants of $X_i$ in $\cal G$ are denoted by $an(X_i, \cal G)$ and $de(X_i, \cal G)$, respectively. A vertex $X_j$ is a possible descendant of $X_i$ if there is a possibly causal path from $X_i$ to $X_j$. A {chord} of a path (cycle) is an edge joining two nonconsecutive vertices on the path (cycle). An undirected graph is {chordal} if it has no chordless cycle with length greater than three.

% d separation
Let $\pi = (X_1, \cdots , X_n)$ be a path in $\mathcal{G}$. $X_i$ ($i\neq1,n$) is a collider on $\pi$ if $X_{i-1} \rightarrow X_i \leftarrow X_{i+1}$, and is a {definite non-collider} on $\pi$ if $X_{i-1} \leftarrow X_i$, or $X_i \rightarrow X_{i+1}$, or $X_{i-1} - X_i - X_{i+1}$ but $X_{i-1}$ is not adjacent to $X_{i+1}$.  Moreover, $X_i$ is of definite status on $\pi$ if it is a collider, or a definite non-collider, or an endpoint on $\pi$~\citep{Guo2020minimal}. A path $\pi$ is of definite status if its nodes are of definite status. For distinct vertices $X_i, X_j$ and $X_k$, if $X_i\rightarrow X_j\leftarrow X_k$ and $X_i$ is not adjacent to $X_k$ in $\cal G$, the triple $(X_i, X_j, X_k)$ is called a {v-structure} collided on $X_j$. A definite status path $\pi$ from $X$ to $Y$ is d-separated (blocked) by $\bf Z$ ($X,Y\notin{\bf Z}$) if $\pi$ has a definite non-collider in $\bf Z$ or $\pi$ has no collider who has a descendant in $\bf Z$, and is d-connected given $\bf Z$ otherwise.

% Given $\textbf{Z} \subseteq \textbf{V}$, we say $\pi$ is \emph{d-connected (or active)} given $\textbf{Z}$ if $\textbf{Z}$ does not contain any endpoint or non-collider on the path and every collider on the path has a descendant in $\textbf{Z}$. $\pi$ is \emph{blocked} by $\textbf{Z}$ if it is not d-connected given $\textbf{Z}$.

Two DAGs are Markov {equivalent} if they induce the same d-separation relations.~\citet{pearl1989conditional} proved that two DAGs are equivalent if and only if they have the same skeleton and the same v-structures. A  {Markov equivalence class} contains all DAGs equivalent to each other. A Markov equivalence class can be uniquely represented by a completed PDAG, or {essential graph}, defined as follows:
\begin{definition}[Completed PDAG,~\citealt{andersson1997characterization}]\label{cpdag}
	Given a DAG  $\cal G$, the completed PDAG (CPDAG) of $\cal G$, denoted by ${\cal G}^*$, is a PDAG that has the same skeleton as $\cal G$, and  a directed edge occurs in $\mathcal{G}^*$ if and only if it appears in all equivalent DAGs of $\cal G$.
\end{definition}
We assume that the CPDAG $\mathcal{G}^*$ of the Markov equivalence class containing the underlying DAG $\cal G$ is provided, and use $[\mathcal{G}]$ or $[\mathcal{G}^*]$ to represent the Markov equivalence class.~\citet{andersson1997characterization} proved that a CPDAG is a chain graph, and its undirected subgraph is the union of disjoint connected chordal graphs, which are called {chain components}. A causal DAG model consists of a DAG $\cal G$ and a distribution $f$ over the same set $\textbf{V}$ such that
$f(x_1, . . . ,x_{n}) =\prod_{i=1}^{n} f(x_i |pa(x_i, \cal G))$.

% \footnote{Note that recovering a CPDAG from observational data may need additional assumptions.}

\subsection{Pairwise Causal Background Knowledge}\label{sec:sec:interpretbg}

%In real applications, we may know in advance that some variable is a cause of another, or a variable is not a cause of another. Such background information is usually called \emph{causal background knowledge}, which Intuitively,  causal background knowledge describes causal and/or non-causal relations among variables.
In this paper, we mainly consider \emph{pairwise} causal background knowledge, which can be formally defined in terms of constraints as follows.
%\footnote{We note that, there is also non-pairwise causal background knowledge. For example, smoking either causes lung cancer or causes bronchitis. Nevertheless, most non-pairwise causal background knowledge can be viewed as Boolean combinations of pairwise causal background knowledge. we will
	% including direct, ancestral and non-ancestral causal knowledge defined as follows.
	%\begin{definition}[direct causal constraint]\label{interpretbg:directcausaldef} A \emph{direct causal constraint}, denoted by $X\to Y$, is a proposition saying that $X$ is a parent of $Y$, that is, $X$ is a direct cause of $Y$.
	%\end{definition}
	%\begin{definition}[ancestral causal constraint]\label{interpretbg:ancestralcausaldef} An \emph{ancestral causal constraint}, denoted by $X\dashrightarrow Y$, is a proposition saying that $X$ is an ancestor of $Y$, that is, $X$ is a cause of $Y$.
	%\end{definition}
	%\begin{definition}[non-ancestral causal constraint]\label{interpretbg:noncausaldef} A  \emph{non-ancestral causal constraint}, denoted by $X\longarrownot\dashrightarrow Y$, is a proposition saying that $X$ is a non-ancestor of $Y$, that is, $X$ is not a cause of $Y$
	%\end{definition}
	%{\color{red} background knowledge vs constraints}
	
	\begin{definition}[Pairwise Causal  Constraints]\label{interpretbg:causaldef} A direct causal constraint denoted by%
		\linebreak $X\to Y$ is a proposition saying that $X$ is a parent of $Y$, that is, $X$ is a direct cause of $Y$. An {ancestral causal constraint} denoted by $X\dashrightarrow Y$ is a proposition saying that $X$ is an ancestor of $Y$, that is, $X$ is a cause of $Y$. A {non-ancestral causal constraint} denoted by $X\longarrownot\dashrightarrow Y$ is a proposition saying that $X$ is not an ancestor of $Y$, that is, $X$ is not a cause of $Y$. Moreover, $X$ is called the tail and $Y$ is called the head in the above notions.
	\end{definition}
	
	%(HERE give an example)
	
	%Non-pairwise causal background knowledge will be briefly discussed in Section \ref{sec:sec:non-pair}.
	
	Non-pairwise causal background knowledge will be briefly discussed in Section \ref{sec:sec:non-pair}. A pairwise causal constraint set is also called a (causal) background knowledge set for short.  A pairwise causal constraint set over $\bf V$ consists of some of the constraints with  heads and tails in $\bf V$. Given a DAG ${\cal G}$, a pairwise causal constraint set ${\cal B}$ over ${\bf V}({\cal G})$ is said to hold for ${\cal G}$, or equivalently, ${\cal G}$ is said to satisfy ${\cal B}$, if every proposition in ${\cal B}$ is true for ${\cal G}$. We define the restricted Markov equivalence class induced by $\mathcal{G}^*$ and    $\cal B$ as follows.
	
	% \emph{(pairwise) background knowledge set} $\cal B$ (or causal constraint set for short) is a set of direct, ancestral and non-ancestral causal constraints.
	
	\begin{definition}[Restricted Markov Equivalence Class]\label{interpretbg:reseqclassdef}
		The restricted Markov equivalence class induced by a CPDAG $\mathcal{G}^*$ and  a pairwise causal constraint set  $\cal B$ over ${\bf V}(\mathcal{G}^*)$, denoted by  $[\mathcal{G}^*, {\cal B}]$, consists of all equivalent DAGs in $[{\cal G^*}]$ that satisfy  $\cal B$.
	\end{definition}

	If ${\cal B}$ is empty, then $[\mathcal{G}^*, {\cal B}]=[\mathcal{G}^*]$. A restricted Markov equivalence class  $[\mathcal{G}^*, {\cal B}]$ is empty if none of the DAGs   in $[{\cal G^*}]$  satisfies  $\cal B$. For example, if two exclusive  constraints, say both $X\dashrightarrow Y$ and $X\longarrownot\dashrightarrow Y$, appear in $\cal B$, we have that $[{\mathcal G}^*, {\cal B}]=\varnothing$.  Conversely, we say that $\cal B$ is {consistent} with $\mathcal{G}^*$ if $[\mathcal{G}^*, {\cal B}]\neq\varnothing$.

	A PDAG is maximal (MPDAG) if it is closed under the four Meek's rules shown in Figure~\ref{fig:meek}~\citep{meek1995causal}. Given a  CPDAG  ${\cal G}^*$ and a pairwise causal constraint set $\cal B$ consistent with ${\cal G}^*$, the causal MPDAG of $[\mathcal{G}^*, {\cal B}]$ is defined as follows.
	\begin{figure}[!t]
		\centering
		\vspace{2em}
		\begin{minipage}[b]{1\linewidth}
			\centering
			\includegraphics[width=0.95\linewidth]{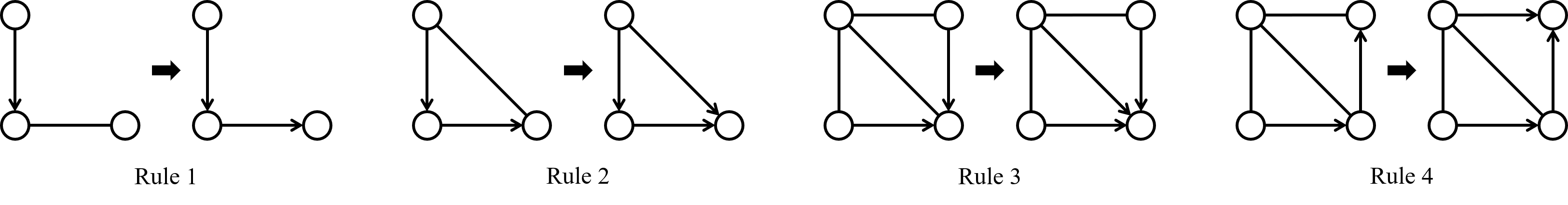}
		\end{minipage}
		\caption{A visualization of Meek's rules. If the graph on the left-hand side of a rule is an induced
			subgraph of a PDAG, then orient the undirected edge such that the resulting subgraph is the one on the right-hand side of the rule.}
		\label{fig:meek}
	\end{figure}
	% \end{comment}
%In this paper, we are interested in representing all common direct causal relations in a non-empty restricted Markov equivalence class.

\begin{definition}[Causal MPDAG]\label{interpretbg:mpdagresclass}   The MPDAG $\cal H$ of a non-empty restricted Markov equi-%
	\linebreak valence class $[\mathcal{G}^*, {\cal B}]$ induced by a CPDAG $\mathcal{G}^*$ and a pairwise causal constraint set ${\cal B}$ is a PDAG such that (1)  $\cal H$ has the same skeleton and v-structures as $\mathcal{G}^*$, and (2) an edge is directed in $\cal H$ if and only if it appears in all DAGs in $[{\mathcal G}^*, {\cal B}]$. An MPDAG $\cal H$ is a causal MPDAG if there exists a CPDAG ${\cal G}^*$ and a pairwise causal constraint set   $\cal B$ (possibly empty)  consistent with ${\cal G}^*$ such that $\cal H$ is the MPDAG of $[\mathcal{G}^*, {\cal B}]$.
\end{definition}

It is easy to verify that Definition~\ref{interpretbg:mpdagresclass} indeed defines an MPDAG.  Clearly,  the MPDAG $\cal H$ of $[{\mathcal G}^*, {\cal B}]$ contains the common direct causal relations of all restricted Markov equivalent DAGs in  $[{\mathcal G}^*, {\cal B}]$.  Let $[\cal H]$ be the set of DAGs which contain all directed edges of $\cal H$ and have the same skeleton and v-structures as $\cal H$. Following Definition~\ref{interpretbg:mpdagresclass}, if $\cal H$ is the MPDAG of $[{\mathcal G}^*, {\cal B}]$, then every DAG in $[{\mathcal G}^*, {\cal B}]$ belongs to $[\cal H]$, that is,  $[\mathcal{G}^*, {\cal B}]\subseteq [\cal H]$. An example illustrating that $[{\cal G}^*,{\cal B}]$ may be a proper subset of $[\cal H]$  is shown by Example \ref{rep:failrep}.

\begin{example}
	\label{rep:failrep}
	%Figure~\ref{fig:intro_ancestral_fail} shows such an example.
	Figure~\ref{fig:intro2-1} shows a CPDAG ${\cal G}^*$,   and ${\cal G}_1$ to ${\cal G}_4$ shown in Figures~\ref{fig:intro2-2} to~\ref{fig:intro2-5} are DAGs  in $[{\cal G}^*]$ satisfying the ancestral causal constraint ${\cal B} = \{ X\dashrightarrow Y\}$. That is, the restricted Markov equivalence class  $[{\cal G}^*,{\cal B}]=\{{\cal G}_1,{\cal G}_2,{\cal G}_3,{\cal G}_4\}$. The causal MPDAG $\cal H$ of $[{\cal G}^*,\cal B]$ is shown in Figure~\ref{fig:intro2-6}, which has two directed edges $A\to Y$ and $B\to Y$ as they are both in ${\cal G}_1$ to ${\cal G}_4$. On the other hand, $[\cal H]$ consists of ${\cal G}_1$ to ${\cal G}_6$, meaning that $[{\cal G}^*,{\cal B}]\subsetneq[\cal H]$. In summary, $\cal B$ implies two  direct causal relations,  $A\to Y$ and $B\to Y$, as well as a  constraint  that $X$ is a direct cause of either $A$ or $B$. The MPDAG $\cal H$ does not imply the latter constraint.

	% Compared with ${\cal G}_5$ and ${\cal G}_6$, it can be seen that in ${\cal G}_1$ to ${\cal G}_4$ $X$ is either a direct cause of $A$ or a direct cause of $B$.
	%Comparing the ${\cal G}^*$ and $\cal H$,

	% It can be seen that $X$ is either a direct cause of $A$ or  a direct cause of $B$, but neither $X\to A$ nor $X\to B$ is invariant. That is, a ancestral causal constraint may imply non-pairwise causal relations. %This motivates the definition of direct causal clauses.
	% Existing works usually view causal background knowledge as a set of path constraints. %: a direct causal constraint requires the existence of an edge, and a causal and a non-causal constraint requires the existence and the absence of causal paths respectively.
\end{example}

\begin{figure}[!t]
	% \vspace{-2em}
	\centering
	\subfloat[CPDAG ${\cal G}^*$ \label{fig:intro2-1}]{
		\begin{minipage}[t]{0.15\linewidth}
			\centering
			\includegraphics[width=0.8\linewidth]{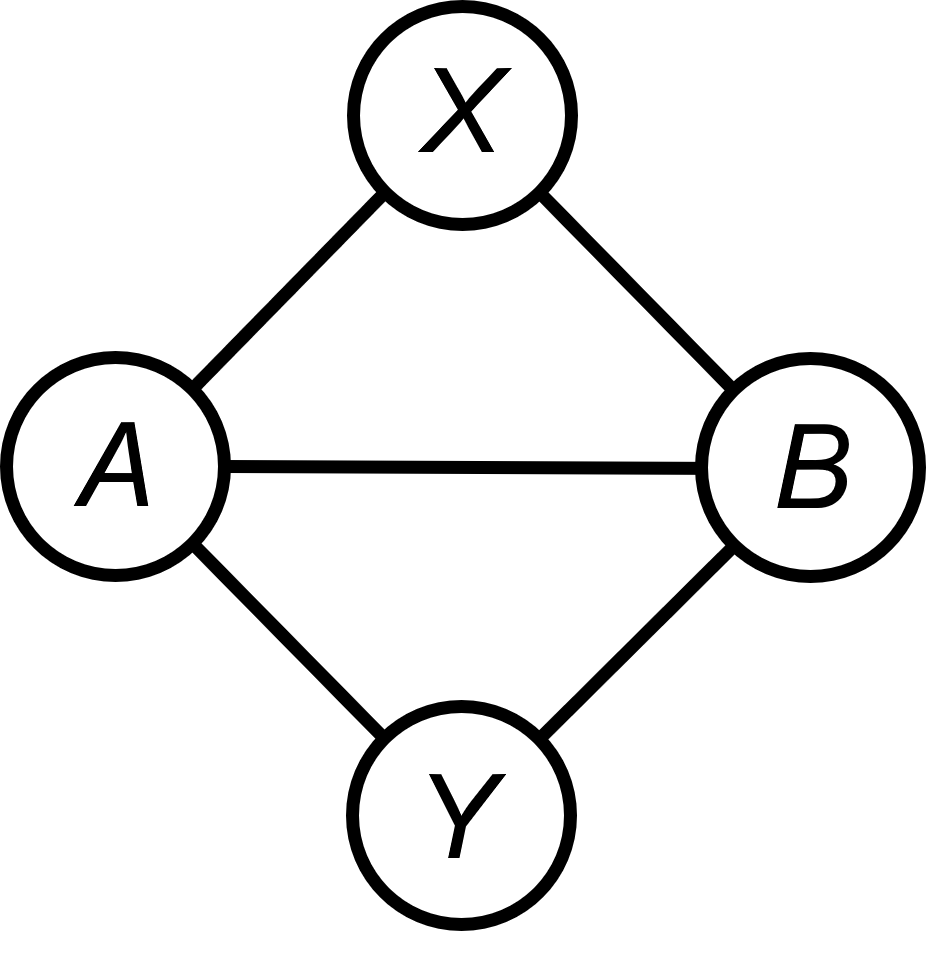}
			%\caption{fig1}
		\end{minipage}%
	}%
	\hspace{0.05\linewidth}
	\subfloat[${\cal G}_1$ \label{fig:intro2-2}]{
		\begin{minipage}[t]{0.15\linewidth}
			\centering
			\includegraphics[width=0.8\linewidth]{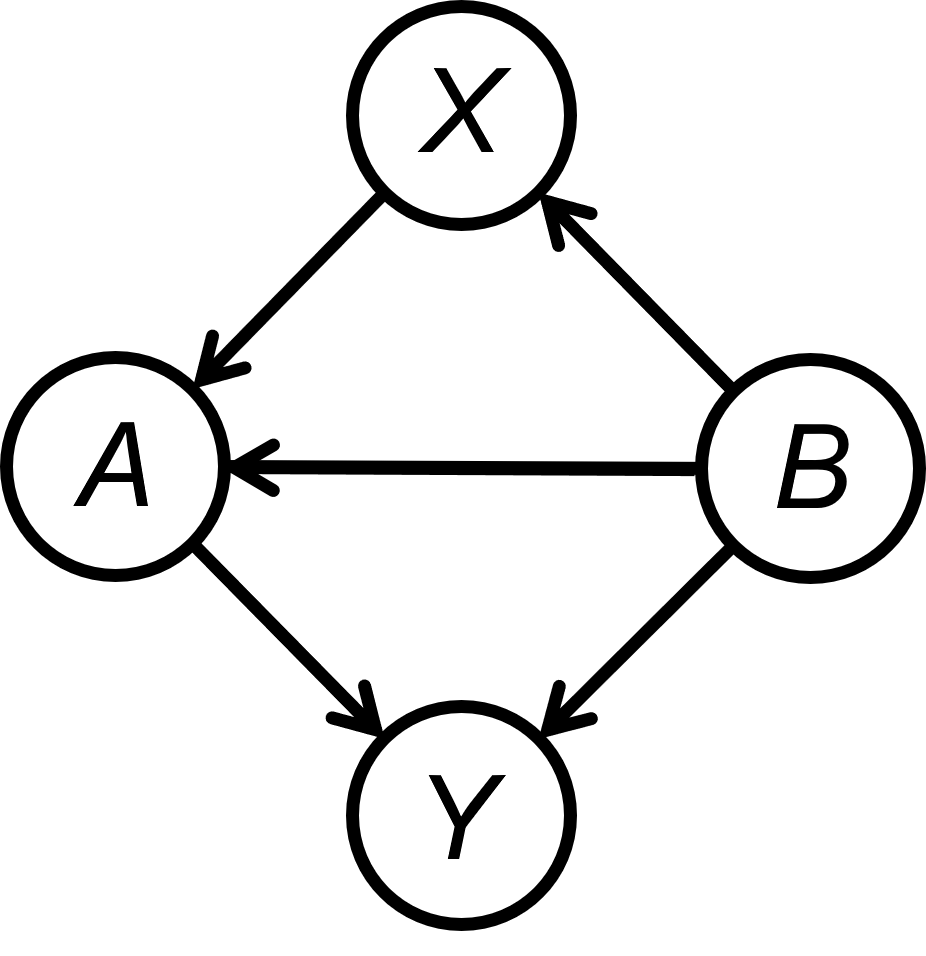}
			%\caption{fig1}
		\end{minipage}%
	}%
	\hspace{0.05\linewidth}
	\subfloat[${\cal G}_2$ \label{fig:intro2-3}]{
		\begin{minipage}[t]{0.15\linewidth}
			\centering
			\includegraphics[width=0.8\linewidth]{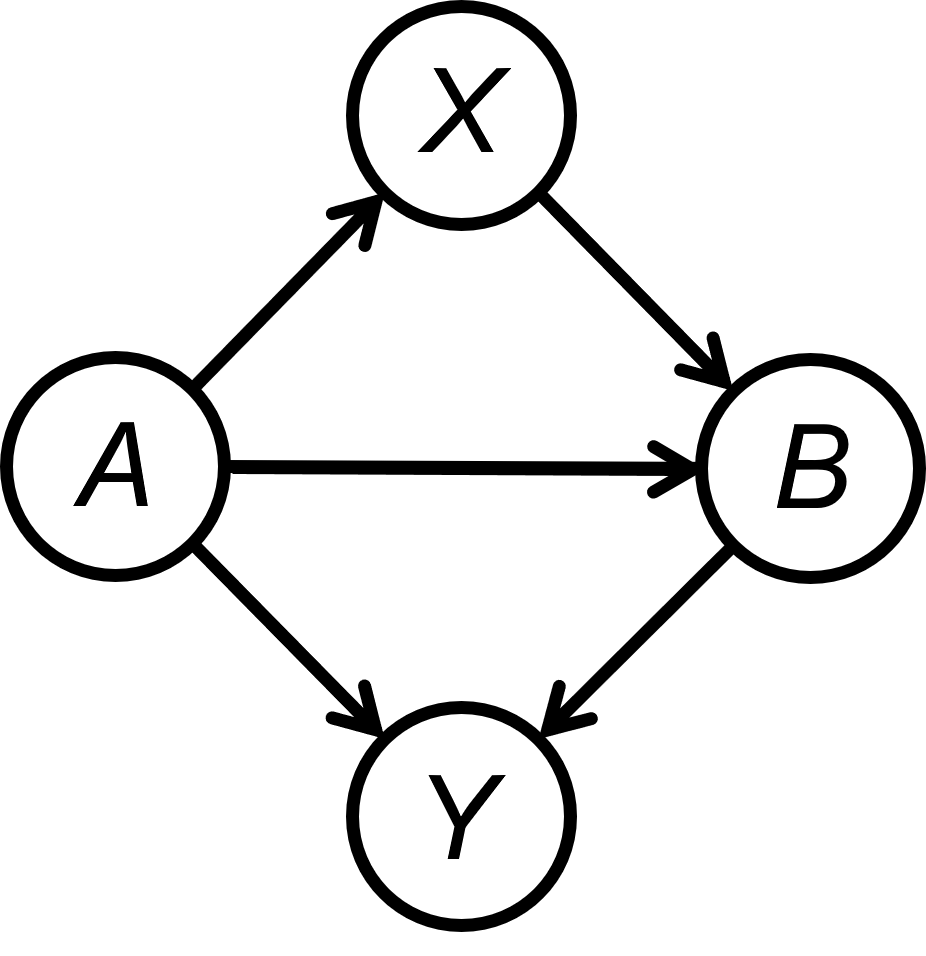}
			%\caption{fig1}
		\end{minipage}%
	}%
	\hspace{0.05\linewidth}
	\subfloat[${\cal G}_3$ \label{fig:intro2-4}]{
		\begin{minipage}[t]{0.15\linewidth}
			\centering
			\includegraphics[width=0.8 \linewidth]{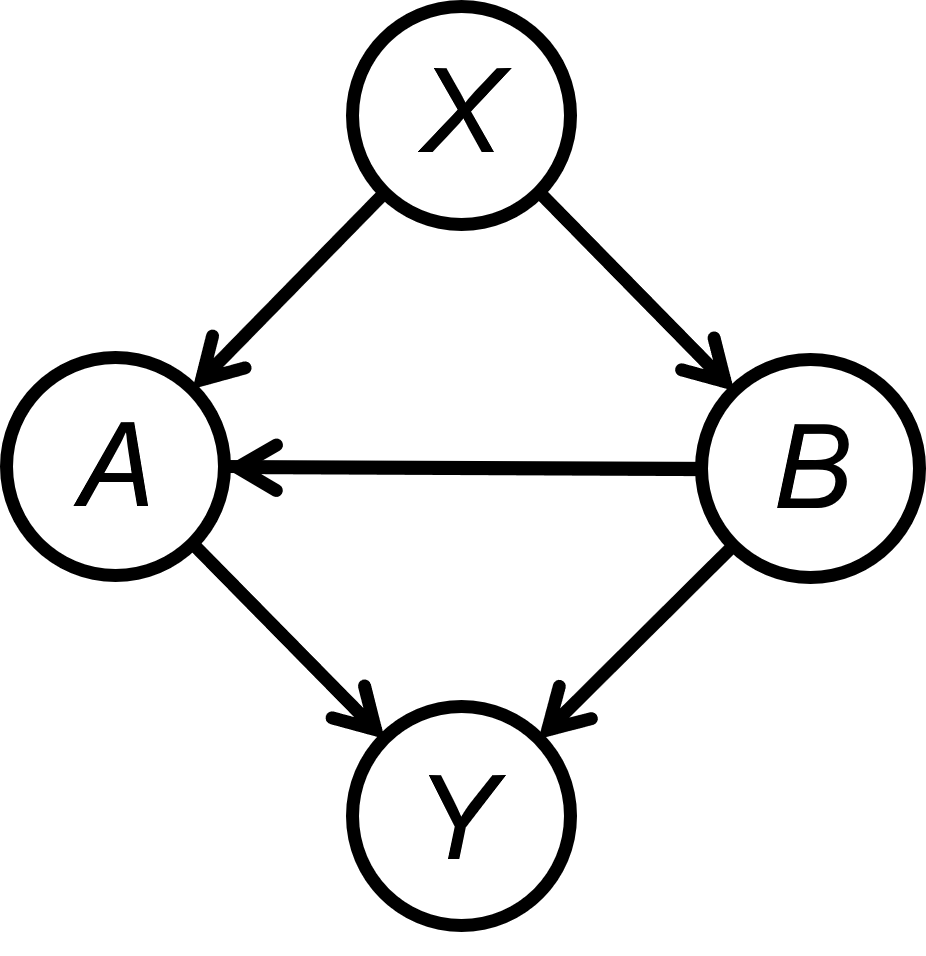}
			%\caption{fig1}
		\end{minipage}%
	}%

	% \hspace{0.05\linewidth}
	\subfloat[${\cal G}_4$ \label{fig:intro2-5}]{
		\begin{minipage}[t]{0.15\linewidth}
			\centering
			\includegraphics[width=0.8 \linewidth]{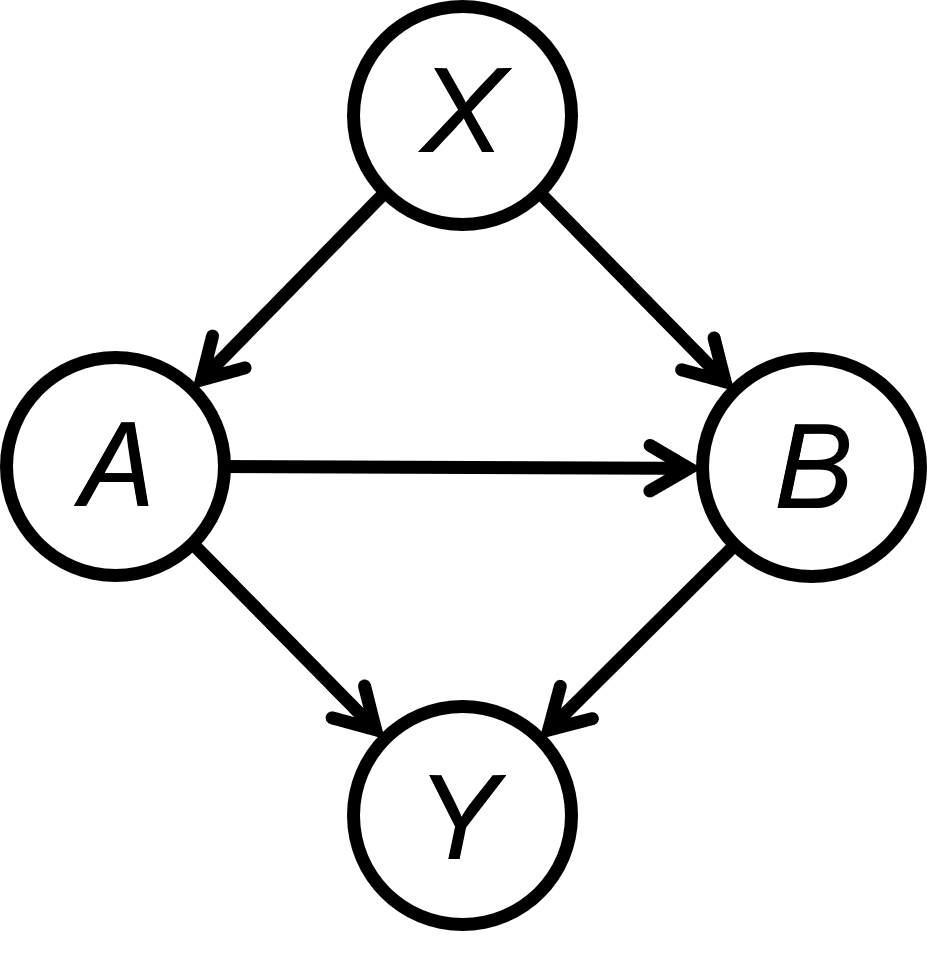}
			%\caption{fig1}
		\end{minipage}%
	}%
	\hspace{0.05\linewidth}
	\subfloat[MPDAG ${\cal H}$ \label{fig:intro2-6}]{
		\begin{minipage}[t]{0.15\linewidth}
			\centering
			\includegraphics[width=0.8 \linewidth]{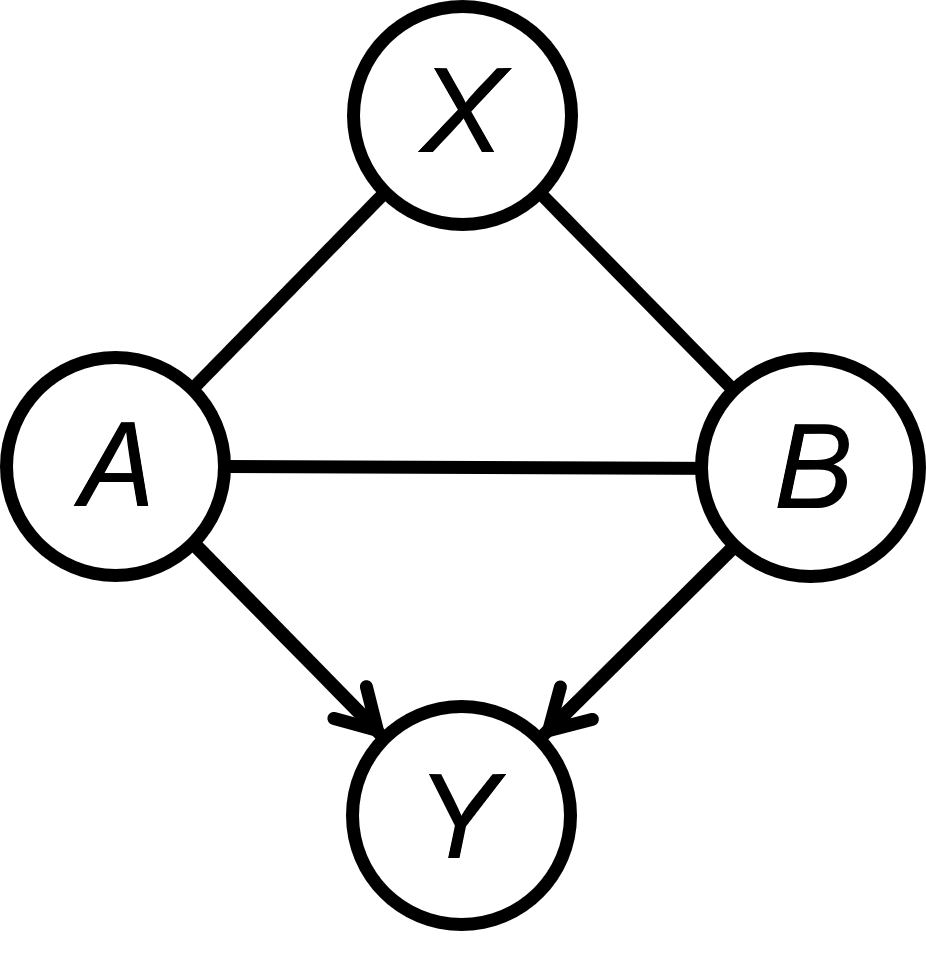}
			%\caption{fig1}
		\end{minipage}%
	}%
	\hspace{0.05\linewidth}
	\subfloat[${\cal G}_5$ \label{fig:intro2-7}]{
		\begin{minipage}[t]{0.15\linewidth}
			\centering
			\includegraphics[width=0.8 \linewidth]{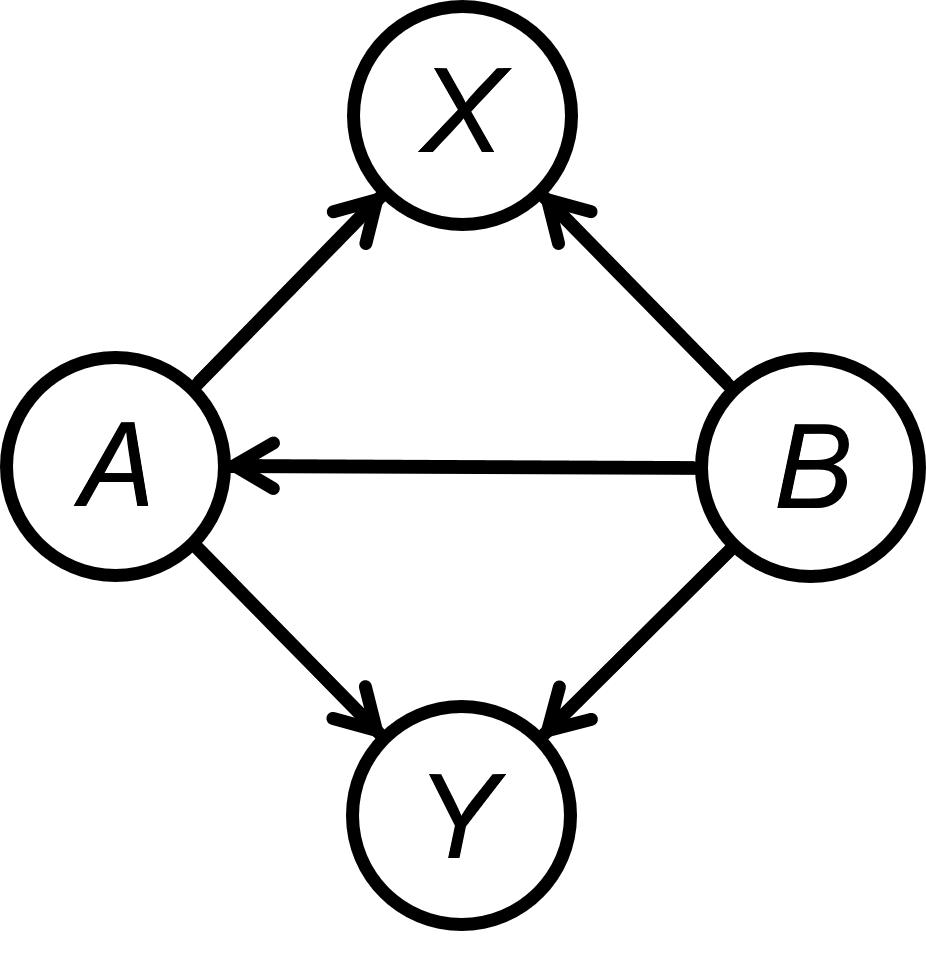}
			%\caption{fig1}
		\end{minipage}%
	}%
	\hspace{0.05\linewidth}
	\subfloat[${\cal G}_6$ \label{fig:intro2-8}]{
		\begin{minipage}[t]{0.15\linewidth}
			\centering
			\includegraphics[width=0.8 \linewidth]{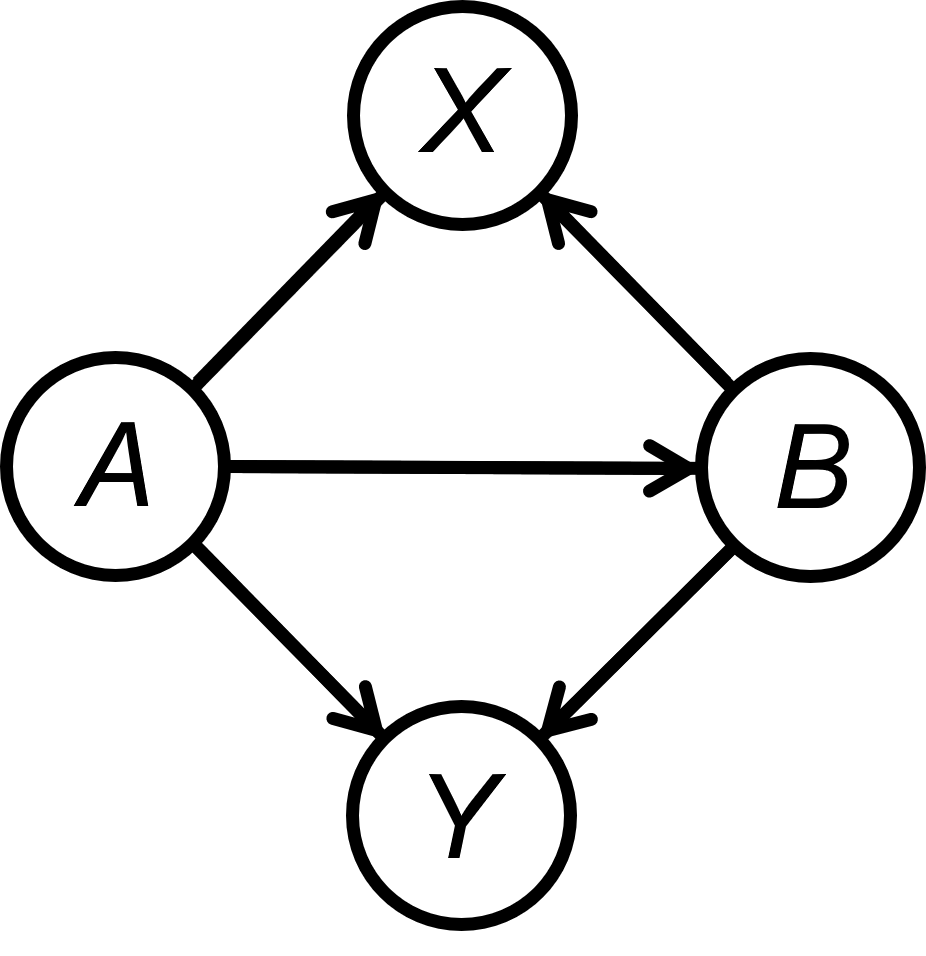}
			%\caption{fig1}
		\end{minipage}%
	}%
	
	\caption{An example of a causal MPDAG. Let ${\cal B}=\{ X\dashrightarrow Y\}$ and  $\cal H$ be the  MPDAG of $[{\cal G}^*,\cal B]$.  $[{\cal G}^*,\cal B]$ consists of the DAGs from ${\cal G}_1$ to ${\cal G}_4$, while  $[{\cal H}]$ consists of the DAGs from ${\cal G}_1$ to ${\cal G}_6$, which indicates that $[{\cal G}^*,{\cal B}]\subsetneq[\cal H]$.}
	\label{fig:intro_ancestral_fail}
	% \vspace{-2em}
\end{figure}

\begin{definition}[Fully Informative MPDAG]\label{def:fullyinfo}
	A causal MPDAG   $\cal H$ is fully informative with respect to a restricted Markov equivalence class if the set $[\cal H]$ is identical to the restricted Markov equivalence class.
	% $[\mathcal{G}^*, {\cal B}]$ if  $[\cal H]=[\mathcal{G}^*, {\cal B}]$.
\end{definition}

When  $\cal H$ is   fully informative   with respect to $[\mathcal{G}^*, {\cal B}]$,
$\cal H$  can represent $[\mathcal{G}^*, {\cal B}]$ exactly.   When $\cal B$ only contains direct causal constraints,~\citet{meek1995causal} proved that the MPDAG of $[\mathcal{G}^*, {\cal B}]$ is fully informative, and the MPDAG can be constructed in polynomial time using Meek's rules.~\citet{fang2020bgida} further showed that  any non-ancestral causal  constraint  can be represented equivalently by some direct causal constraints and the corresponding fully informative $\cal H$ can also be constructed from $\mathcal{G}^*$ and ${\cal B}$  in polynomial time.

\subsection{Intervention Calculus}

In order to obtain the effect of an intervention on a response variable,~\citet{pearl2009causality} employed the notion of \emph{do-operator} to formulate the post-intervention distribution as follows: given a DAG $\cal G$ over the vertex set $\mathbf{V}=\{X_1, . . . ,X_{n}\}$ and $\mathbf{X}\subseteq \mathbf{V}$,
\begin{eqnarray}\label{leq2}
	f( \mathbf{v} \,|\, do(\mathbf{X}=\mathbf{x}))
	=\left\{
	\begin{array}{cr}
		\prod\limits_{X_i\in\mathbf{V}\setminus\mathbf{X}} f(x_i \,|\, pa(x_i, {\cal G}))|_{\mathbf{X}=\mathbf{x}},& \mathrm{if}\ \mathbf{v}|_{\mathbf{X}}=\mathbf{x}, \\
		0,& \mathrm{otherwise}.
	\end{array}
	\right.
\end{eqnarray}
Here, $f( \mathbf{v} \,|\, do(\mathbf{X}=\mathbf{x}))$ (or $f( \mathbf{v} \,|\, do(\mathbf{x}))$ for short) is the post-intervention distribution over $\mathbf{V}$ after intervening on $\mathbf{X}$, by forcing $\mathbf{X}$ to equal $\mathbf{x}$; $\mathbf{v}$ is an instantiation of $\mathbf{V}$; $\mathbf{v}|_{\mathbf{X}}=\mathbf{x}$ means the value of $\mathbf{X}$ in the instantiation $\mathbf{v}$ equals $\mathbf{x}$. The post-intervention distribution $f(\mathbf{y} \,|\, do(\mathbf{x}))$ is defined by integrating out all variables other than $\mathbf{Y}$ in $f( \mathbf{v} \,|\, do(\mathbf{x}))$. Given a treatment set $\mathbf{X}$ and a response set $\mathbf{Y}$, if there exists an $\mathbf{x}\neq \mathbf{x}'$ such that $f(\mathbf{y} \,|\, do(\mathbf{x}))\neq f(\mathbf{y} \,|\, do(\mathbf{x}'))$, then $\mathbf{X}$ has a causal effect on $\mathbf{Y}$ \citep{pearl2009causality}. Following the notion of~\citet{pearl2009causality}, we simply use  $f( \mathbf{y} \,|\, do(\mathbf{x}))$ to represent the causal effect of $\mathbf{X}$ on $\mathbf{Y}$.

Given  the underlying  causal DAG, the post-intervention distribution can be calculated from observational distribution by using a number of criteria. For example, the post-intervention distribution of a single response $Y\notin pa(X, {\cal G})$ after intervening on a single treatment $X$ can be calculated by
\begin{eqnarray}\label{eq:backdoor}
	\begin{aligned}
		f(y \,|\, do(x)) = \int f(y \,|\, X=x, pa(X, {\cal G})=u ) f(u) du.
	\end{aligned}
\end{eqnarray}
If $Y\in pa(X, \cal G)$, then $f(y \,|\, do(x))=f(y \,|\, do(x'))$ for any two instantiations $x,x'$ of $X$. Equation (\ref{eq:backdoor}) is a special case of the {backdoor adjustment}~\citep{pearl1995causal, pearl2009causality}, and $pa(X, \cal G)$ is a backdoor adjustment set.
However, if the underlying DAG is not fully known, $f( \mathbf{v} \,|\, do(\mathbf{x}))$ may not be \emph{identifiable}~\citep{pearl2009causality}. Recently, the identifiability of a causal effect given an MPDAG has been studied \citep{perkovic2017interpreting,Perkovic2020mpdag}.~\citet{Perkovic2020mpdag} proved that $f( \mathbf{v} \,|\, do(\mathbf{x}))$ is identifiable if and only if every proper possibly causal path from $\mathbf{X}$ to $\mathbf{Y}$ starts with a directed edge in the given MPDAG.

If a causal effect is not identifiable, we can use the IDA framework to estimate all possible causal effects. The original IDA enumerates all possible causal effects of a single treatment $X$ on a single response $Y$  given a CPDAG  by listing all possible parental sets of $X$ and adjusting for each of them. To decide whether a set of variables is possible to be the parents of $X$,~\citet[Lemma~3.1]{maathuis2009estimating} provided a locally valid orientation rule.
Recently,~\citet[Theorem~1]{fang2020bgida} extended the locally valid orientation rule to MPDAGs and proposed a fully local extension of IDA to deal with direct causal and non-ancestral causal constraints. For multiple interventions,~\citet{nandy2017estimating} proposed the joint-IDA. Compared with IDA, this extension is semi-local, which uses Meek's rules to check the validity of each candidate parental set. However, Meek's rules are global in the sense that they require an entire PDAG as input.~\citet{perkovic2017interpreting} further extended the joint-IDA to MPDAGs, and the algorithm is called the semi-local IDA. The recent work on efficient adjustment~\citep[see, for example,][]{2019arXiv190702435H} also motivates other extensions of IDA, such as the works of~\citet{Witte2020efficient, liu2020cida, Guo2020minimal}.

% Algorithm \ref{semi-local-ida} shows the schema.

% \begin{algorithm}[!t]
	% 	\caption{The semi-local IDA algorithm}
	% 	\label{semi-local-ida}
	% 	\begin{algorithmic}[1]
		% 		\REQUIRE
		% 		A CPDAG $\mathcal{G}^*$, a consistent direct causal information set $\mathcal{B}_d$, a target variable $Y$.
		% 		\ENSURE
		% 		$\{\Theta_X\}_{X\in \textbf{V}}$, where $\Theta_X$ stores all possible causal effects of $X$ on $Y$.
		% 		\STATE {Construct the maximal PDAG $\cal H$ from $\mathcal{G}^*$ and $\mathcal{B}_d$ using Meek's criteria,}
		% 		\FOR{each variable $X\in \textbf{V}$}
		% 		\STATE{set $\Theta_X= \varnothing$,}
		% 		\FOR{each $\mathbf{S}\subset sib(X, {\cal H})$}
		% 		\STATE{orient $\mathbf{S}\rightarrow X$ and $X \rightarrow sib(X, {\cal H})\setminus \mathbf{S}$ in ${\cal H}$, and denote the resulting graph by ${\cal H}_{\mathbf{S}\rightarrow X}$}
		% 		\STATE{using Meek's criteria to check whether ${\cal H}_{\mathbf{S}\rightarrow X}$ is consistent with $\mathcal{G}^*$,}
		% 		\IF{${\cal H}_{\mathbf{S}\rightarrow X}$ is consistent with $\mathcal{G}^*$}
		% 		\STATE estimate the causal effect of $X$ on $Y$ by adjusting for $\mathbf{S}\cup pa(X, {\cal H})$, and add the causal effect to $\Theta_X$,
		% 		\ENDIF
		% 		\ENDFOR
		% 		\ENDFOR
		% 		\STATE{\textbf{return} $\{\Theta_X\}_{X\in \textbf{V}}$.}
		% 	\end{algorithmic}
	% \end{algorithm}

In Section~\ref{sec:sec:cha}, we study the graphical characterization of causal MPDAGs and their minimal representation. Section~\ref{sec:rep} then introduces direct causal clauses and demonstrates how to use them to represent pairwise causal constraints. Algorithms for checking the consistency and equivalence of pairwise causal constraints, as well as for constructing the decomposed causal MPDAG and residual direct causal clauses, are presented in Section~\ref{sec:algorithms}. In Section~\ref{sec:causal}, we focus on the identifiability of causal effects and the methods for locally or semi-locally estimating all possible causal effects, with simulations also provided. Additional algorithms and detailed proofs are given in the appendices.

\section{A Graphical Characterization of Causal MPDAGs}\label{sec:sec:cha}
% As shown in the previous sections, MPDAGs are important in representing causal background knowledge, especially when they are fully informative.
% Recall that $[{\cal H}]$ consists of all DAGs that have the same skeleton and v-structures as $\cal H$ and have every directed edge in $ \cal H $. Following the work of \citet{fang2020bgida}, we call $[{\cal H}]$ the restricted Markov equivalence class represented by $\cal H$.
% However, despite the definition of MPDAG, the graphical properties of an MPDAG remains unclear. Therefore,

% {In this section, we  study   the sufficient and necessary  conditions for a partially directed graph to be a causal MPDAG, as well as the minimal representation of a causal MPDAG. MPDAGs are graphical tools used to represent direct and non-ancestral causal constraints, and play a crucial role in identifying causal effects with pairwise causal constraints, as demonstrated in Section~\ref{sec:sec:id}. Before giving the  main results in Theorem \ref{the:MPDAG} and Theorem \ref{thm:MSPrep}, we remark that the results in this section are not directly
	%invoked in the subsequent sections of this paper. Readers who are primarily interested in the general representation may choose to skip this section and begin with Section~\ref{sec:rep}.}

In this section, we study the necessary and sufficient conditions for a partially directed graph to be a causal MPDAG, as well as the minimal representation of a causal MPDAG. MPDAGs serve as graphical tools for representing direct and non-ancestral causal constraints, and they play a crucial role in identifying causal effects under pairwise causal constraints, as demonstrated in Section~\ref{sec:sec:id}. Before presenting the main results in Theorems \ref{the:MPDAG} and \ref{thm:MSPrep}, we note that the results in this section are not directly used in the subsequent sections. Readers primarily interested in the general representation may choose to skip this section and proceed directly to Section~\ref{sec:rep}.

We first introduce   two  concepts related to  partially directed graphs.
%, which can be viewed as a partial extension of the works of~\citet{andersson1997characterization} and~\citet{hauser2012characterization}. For simplicity, we call $[{\cal H}]$ the restricted Markov equivalence class represented by $\cal H$.

%In the rest of this section, we assume that $[\cal H]\neq \varnothing$ unless otherwise stated. Below, we introduce a new concept of B-component.
% Since different DAGs in the same restricted Markov equivalence class determine the same set of conditional independence statements, in some statistical procedures, such as model selection, it is beneficial and efficient to treat a restricted Markov equivalence class as a single model. However, we still do not fully understand the specific characteristics of the structure of this kind of graph. In this section, we give necessary and sufficient conditions for a graph to be an MPDAG representing a non-empty restricted Markov equivalence class. We begin by introducing a new concept.

% need to be proved
% \begin{definition}[B-component]\label{Bcomp}
	% \color{red}
	% 	Given a partially directed graph $ \cal G $, a B-component $ {\cal C}^b $ of $ \cal G $ is an induced subgraph of $ \cal G $ over the vertices which form a maximal unoriented connected subset of $\textbf{V}(\mathcal{G})$ in $\cal G$.
	% \end{definition}

\begin{definition}[B-component]\label{Bcomp}
	%\color{red}
	Given a partially directed graph $ \cal G $, a B-component $ {\cal C}^b $ of $ \cal G $ is an induced subgraph of $ \cal G $ over the vertices which are connected by at least one undirected path in $\cal G$.
\end{definition}

{The letter ``B" in ``B-component" comes from ``{\bf B}ucket". We note that, the definition of a B-component is related to \emph{Bucket} defined in~\citet{Perkovic2020mpdag}. A bucket is a maximal undirected connected subset of the node set of an MPDAG~\citep{Perkovic2020mpdag}. In an MPDAG, the vertex set of a B-component corresponds to a bucket and the induced subgraph over a bucket is a B-component. However, in contrast to buckets, which are defined solely on MPDAGs, B-components are defined for general partially directed graphs.}
%B-component is similar to the concept of bucket given in \citet[][Definition~3.3]{Perkovic2020mpdag}, except that  the former  is an induced subgraph and the latter  is a subset of vertices.
We also remark that the B-component generalizes the concept of a chain component in a chain graph. In fact, for a chain graph, the definition of a B-component degenerates to that of a chain component. However, unlike chain components, a B-component may contain both directed and undirected edges. For example, the MPDAG $ \cal H $ shown in Figure \ref{MPDAGexample} has three B-components: the induced subgraphs of $ \cal H $ over $ \{ A \}, \{ E \} $, and $ \{ B, C, D \} $. The directed edges in a partially directed graph can be divided  into two parts: the edges between two B-components (such as $ A \rightarrow B, A \rightarrow E $ in Figure \ref{MPDAGexample}), and the edges within a B-component (such as $ C \rightarrow B $ in Figure \ref{MPDAGexample}).

\begin{figure}[!t]
	% \vspace{-1em}
	\centering
	\subfloat[ A causal MPDAG $ \cal H $ \label{MPDAGexample}]{
		\begin{minipage}[t]{0.3\linewidth}
			\centering
			\includegraphics[width=0.45\linewidth]{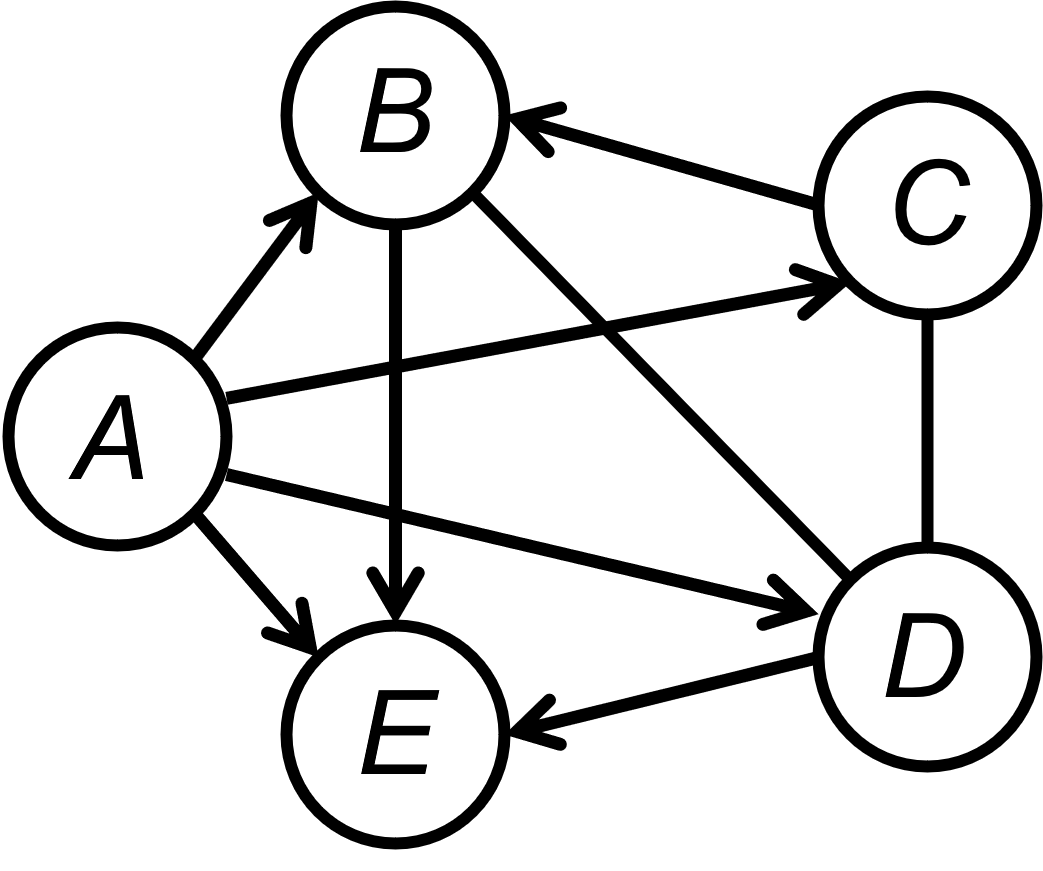}
			%\caption{fig1}
		\end{minipage}%
	}%
	\hspace{0.02\linewidth}
	\subfloat[ The chain skeleton $ {\cal H}_c $ of $ \cal H $ \label{convertedMPDAG}]{
		\begin{minipage}[t]{0.3\linewidth}
			\centering
			\includegraphics[width=0.45\linewidth]{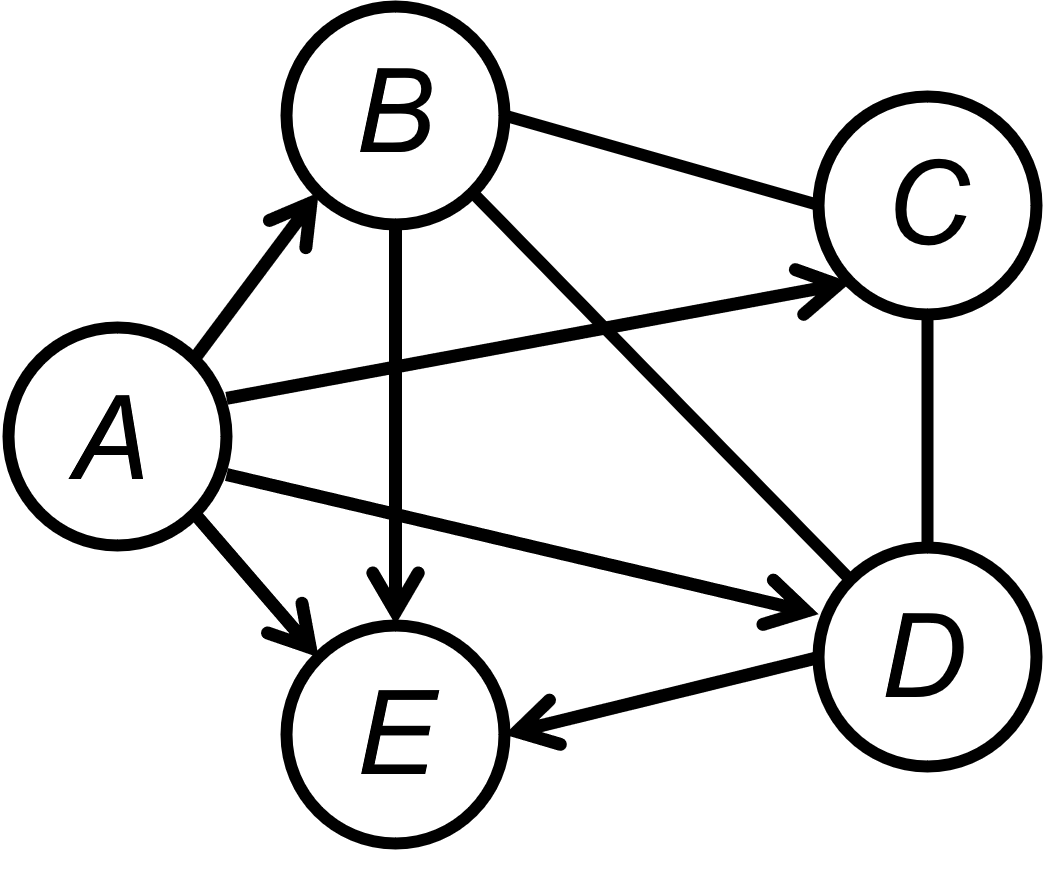}
			%\caption{fig1}
		\end{minipage}%
	}%
	\hspace{0.02\linewidth}
	\subfloat[A DAG represented by $\cal H$ \label{DAGinMPDAG}]{
		\begin{minipage}[t]{0.3\linewidth}
			\centering
			\includegraphics[width=0.45\linewidth]{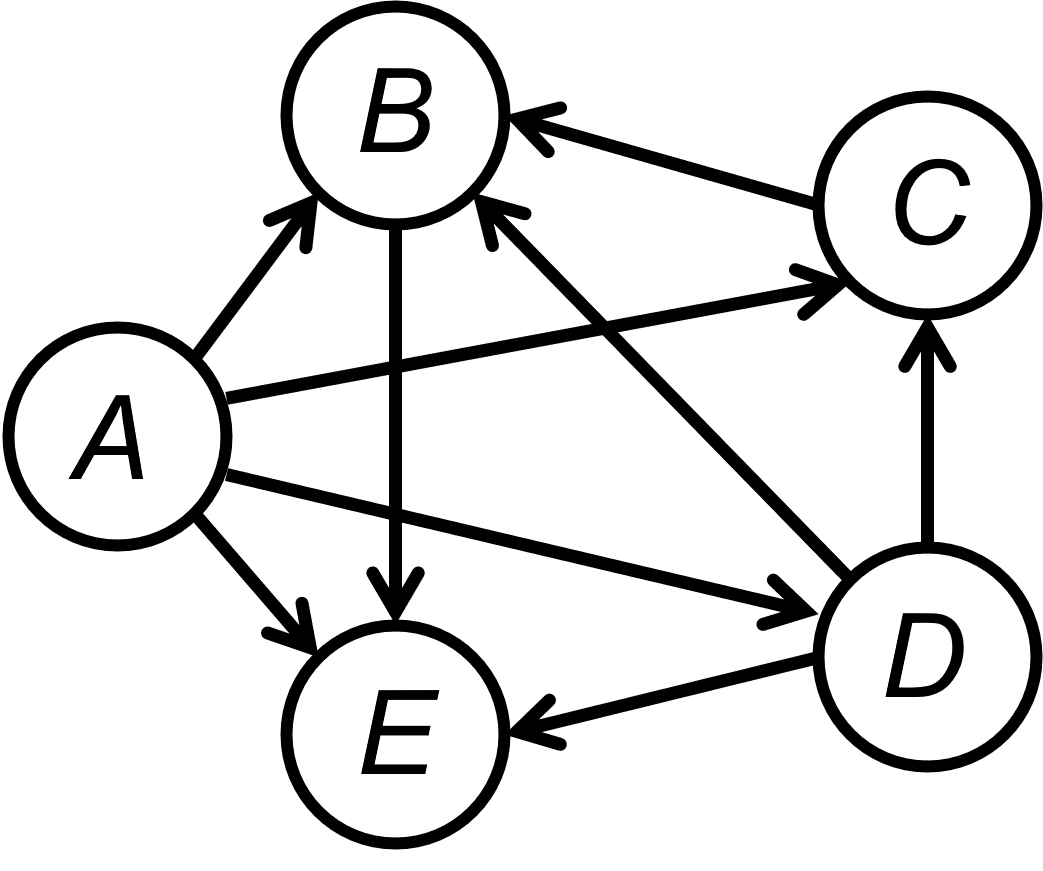}
			%\caption{fig1}
		\end{minipage}%
	}%
	\caption{Examples to illustrate the graphical characterization of causal MPDAGs.}
	\label{fig:MPDAG}
	% \vspace{-1em}
\end{figure}

\begin{definition}[Chain Skeleton]\label{chainskel}
	Given a partially directed graph $ \cal G $, the chain skeleton of $\cal G$, denoted by ${\cal G}_c$,  is the   graph  obtained from $\cal G$ by  removing  arrowheads of all directed edges in every B-component of $\cal G$.
\end{definition}

According to the definition of a B-component, all undirected edges of $\cal G$ appear in  B-components  of $\cal G$, so the undirected  subgraph of ${\cal G}_c$ is a union of the skeletons of  the B-components  of $\cal G$.
Figure \ref{convertedMPDAG} displays the chain skeleton of $ \cal H $ illustrated in Figure~\ref{MPDAGexample}. In  Figure \ref{convertedMPDAG}, the induced subgraph of the chain skeleton ${\cal H}_c$ over $\{B,C,D\}$ is  undirected.

Theorem \ref{the:MPDAG}
provides sufficient and necessary conditions for a partially directed graph $ \mathcal{H} = (\mathbf{V}, \mathbf{E}) $ to be a causal MPDAG.

\begin{theorem}\label{the:MPDAG}
	A partially directed graph $ \mathcal{H} = (\mathbf{V}, \mathbf{E}) $ is a causal MPDAG if and only if $\mathcal{H}$ satisfies the following conditions.
	\begin{enumerate}
		\item[(i)] The chain skeleton $ {\cal H}_c $ of $\mathcal{H}$ is a chain graph.
		\item[(ii)] The skeleton of each B-component of $ {\cal H}$ is chordal.
		\item[(iii)] The vertices in the same B-component have the same parents in $ {\cal H}_c $.
		\item[(iv)] For any directed edge $ X \rightarrow Y $ in any B-component ${\cal C}^b$ of $ \cal H $, $ pa(X, {\cal H}) \subseteq pa(Y, {\cal H}) \setminus \{X\} $ and $ adj(Y, {\cal C}^b) \setminus \{X\} \subseteq adj(X, {\cal C}^b) $.
	\end{enumerate}
\end{theorem}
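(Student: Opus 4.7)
The plan is to prove the two directions of the biconditional separately, with sufficiency being the main technical content. For the necessity direction, suppose $\mathcal{H}$ is the MPDAG of some $[\mathcal{G}^*, \mathcal{B}]$. Since every directed edge of $\mathcal{G}^*$ retains its orientation in $\mathcal{H}$, any undirected path in $\mathcal{H}$ is also an undirected path in $\mathcal{G}^*$, so each B-component of $\mathcal{H}$ sits inside a single chain component of $\mathcal{G}^*$; chordality of chain components then gives (ii). For (iii), I would apply Meek's rules along an undirected path inside a B-component: given an external edge $W \to Y$ and an undirected edge $Y - W_1$, the MPDAG property forbids R1 from firing, so $W$ must be adjacent to $W_1$, and R2 then rules out $W_1 \to W$ (otherwise $W_1 \to W \to Y$ together with $W_1 - Y$ would force $W_1 \to Y$), leaving $W \to W_1$; iterating propagates the external parent across the whole B-component. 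Condition (i) is then a consequence of (iii) combined with acyclicity, since a partially directed cycle in $\mathcal{H}_c$ containing a directed edge would, via (iii), lift to a directed cycle in $\mathcal{H}$. The parent-inclusion half of (iv) follows from Meek R2 applied to $W \to X \to Y$, and the neighbor-inclusion half follows from a short case analysis combining (iii) with R1, R3, and R4.

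For the sufficiency direction, given $\mathcal{H}$ satisfying (i)-(iv), I would exhibit an explicit witness $(\mathcal{G}^*, \mathcal{B})$. First, I construct a DAG extension $\mathcal{D} \in [\mathcal{H}]$ by orienting the remaining undirected edges inside each B-component $\mathcal{C}^b$: chordality (ii) supplies a perfect elimination ordering, and condition (iv) ensures the existing internal directed edges are compatible with it, because the neighbor-inclusion clause induces a natural partial order in which parents precede children. No new v-structures arise, since any incoming arrowhead at an endpoint is shielded either by (iii) (for external parents, which are shared across the entire B-component) or by the neighbor-inclusion half of (iv) (for internal parents, which share every B-component neighbor of their child). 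I then take $\mathcal{G}^*$ to be the CPDAG of $\mathcal{D}$ and let $\mathcal{B}$ consist of the direct causal constraints associated with the directed edges of $\mathcal{H}$, so that $\mathcal{D} \in [\mathcal{G}^*, \mathcal{B}]$ and the restricted equivalence class is nonempty.

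The main obstacle is proving that the MPDAG of $[\mathcal{G}^*, \mathcal{B}]$ is precisely $\mathcal{H}$. One inclusion is immediate, since the directed edges of $\mathcal{H}$ are in $\mathcal{B}$ and are therefore forced in the MPDAG. For the reverse inclusion, each undirected edge $A - B$ of $\mathcal{H}$ must admit two DAGs in $[\mathcal{G}^*, \mathcal{B}]$ with opposite orientations of $A - B$. My approach is to locally flip $A - B$ within its B-component, using (ii) to justify a chordal swap and using (iv) to show that the flip does not cascade via Meek's rules to reorient any edge that $\mathcal{H}$ leaves undirected. The delicate part is verifying that the flipped DAG still respects every constraint in $\mathcal{B}$ and preserves the v-structures of $\mathcal{G}^*$; this is exactly where condition (iv) does most of the work, since the neighbor-inclusion clause is what confines the effect of the flip to within the B-component and protects the other directed edges from being disturbed.
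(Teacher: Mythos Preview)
Your necessity argument tracks the paper's closely and is fine.  For sufficiency, however, you are taking an unnecessarily laborious route.  The paper's key observation is that one should first verify directly that $\mathcal{H}$ is itself a maximal PDAG, i.e.\ that $\mathcal{H}$ is acyclic and closed under all four of Meek's rules; each closure check is a short case analysis using (iii) and (iv).  Once this is done, Meek's completeness theorem does all the remaining work: with $\mathcal{G}^*$ the CPDAG of your DAG extension $\mathcal{D}$ and $\mathcal{B}_d = \mathbf{E}_d(\mathcal{H}) \setminus \mathbf{E}_d(\mathcal{G}^*)$, the MPDAG of $[\mathcal{G}^*, \mathcal{B}_d]$ is obtained by adding $\mathcal{B}_d$ to $\mathcal{G}^*$ and closing under Meek, which simply returns $\mathcal{H}$.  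Your proposed ``local flip'' argument---showing that each undirected edge of $\mathcal{H}$ admits both orientations in some DAG of $[\mathcal{G}^*,\mathcal{B}]$---is essentially re-proving Meek's completeness inside this specific setting, and the ``delicate part'' you flag (that flips do not cascade into the directed edges of $\mathcal{H}$) dissolves once closure under Meek's rules is checked up front.

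There is also a small gap you do not address: before constructing $\mathcal{D}$ you need $\mathcal{H}$ to have no directed cycles.  Condition (i) only rules out directed cycles that cross B-components; a directed cycle lying entirely within one B-component is not excluded by (i) alone.  The paper handles this with a short argument using the parent-inclusion half of (iv): given a shortest directed cycle $X_1\to X_2\to\cdots\to X_n\to X_1$ in a B-component, either $n=3$ and $X_3\in pa(X_1,\mathcal{H})\setminus pa(X_2,\mathcal{H})$ contradicts (iv), or $n>3$ and (iv) gives $X_n\to X_2$, producing a shorter cycle.
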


Comparing to the graphical characterizations of  essential graphs \citep{andersson1997characterization} and   intervention essential graphs \citep{hauser2012characterization},  the conditions in Theorem  \ref{the:MPDAG} are weaker since these two types of graphs are also  causal MPDAGs. That is, these conditions are necessary but not sufficient for a graph  to be an essential graph  or an intervention essential graph.

In Theorem \ref{the:MPDAG}, condition (i) states a global
characteristic of the partially directed graph $\cal H$, that is, there are no partially directed circles in the chain skeleton ${\cal H}_c$ of  $\cal H$. The last three conditions   characterize the graphical structure related to B-components of $\cal H$. Condition (ii) states that the undirected induced subgraphs of ${\cal H}_c$ are chordal, and condition (iii) shows that a vertex out of a B-component is either a parent of all vertices  in the B-component, or not a parent of any vertex  in the B-component. Condition (iv) indicates that the neighbor and parental sets of the two endpoints of a directed edge in a B-component must satisfy some inclusion relations.

%The following result shows that B-component of a causal MPDAG is a chordal graph.

% need to be modified

In Section \ref{sec:sec:interpretbg},  a causal MPDAG is defined as an MPDAG that can represent a  restricted Markov equivalence class induced by a CPDAG $\mathcal{G}^*$ and a pairwise causal constraint set ${\cal B}$.  Below, we first show that such a CPDAG $\mathcal{G}^*$ is unique.

\begin{proposition}\label{nonemptyMPDAG}
	Given a causal MPDAG $\cal H$, any restricted Markov equivalence class that can be represented by $\cal H$ is induced by the same CPDAG ${\cal G}^*$ and some pairwise causal constraint set. Moreover, there exists a direct causal constraint set ${\cal B}_d$ such that $[{\cal H}]=[{\cal G}^*, {\cal B}_d]$.
\end{proposition}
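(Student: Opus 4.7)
The plan is to construct the required CPDAG ${\cal G}^*$ and direct causal constraint set ${\cal B}_d$ explicitly from $\cal H$ itself, and then verify the two claims (uniqueness of ${\cal G}^*$ and equality $[{\cal H}]=[{\cal G}^*,{\cal B}_d]$) by exploiting the fact that a Markov equivalence class is determined by its skeleton and v-structures.

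First I would define ${\cal G}^*$ to be the CPDAG of the Markov equivalence class whose members are the DAGs having the same skeleton and the same v-structures as $\cal H$. Such a CPDAG exists because $\cal H$ is, by hypothesis, a causal MPDAG: by Definition~\ref{interpretbg:mpdagresclass}, $\cal H$ arises as the MPDAG of some non-empty $[\widetilde{\cal G}^*,\widetilde{\cal B}]$, so $\cal H$ inherits its skeleton and v-structures from $\widetilde{\cal G}^*$, and any DAG in $[\widetilde{\cal G}^*,\widetilde{\cal B}]$ witnesses that a DAG with this skeleton and these v-structures actually exists. I would then define ${\cal B}_d := \{\, X\to Y \,:\, X\to Y\in {\bf E}_d({\cal H})\,\}$.

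Next I would prove $[{\cal H}]=[{\cal G}^*,{\cal B}_d]$ by two inclusions. For $[{\cal H}]\subseteq[{\cal G}^*,{\cal B}_d]$: any DAG in $[{\cal H}]$ has, by definition, the skeleton and v-structures of $\cal H$ and hence of ${\cal G}^*$, so it lies in $[{\cal G}^*]$; it also contains every directed edge of $\cal H$, so it satisfies ${\cal B}_d$. For the reverse inclusion, any DAG ${\cal G}\in[{\cal G}^*,{\cal B}_d]$ is equivalent to ${\cal G}^*$ (so it inherits the skeleton and v-structures of ${\cal G}^*$, which coincide with those of $\cal H$) and, because it satisfies ${\cal B}_d$, contains every directed edge of $\cal H$; hence ${\cal G}\in[{\cal H}]$. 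Note that $[{\cal G}^*,{\cal B}_d]$ is non-empty, because the witnessing DAG from $[\widetilde{\cal G}^*,\widetilde{\cal B}]$ in the previous paragraph must, by Definition~\ref{interpretbg:mpdagresclass}, contain every directed edge of $\cal H$ and therefore lies in $[{\cal G}^*,{\cal B}_d]$.

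Finally, for uniqueness of ${\cal G}^*$, suppose $\cal H$ is also the MPDAG of $[\widehat{\cal G}^*,\widehat{\cal B}]$ for some CPDAG $\widehat{\cal G}^*$ and pairwise constraint set $\widehat{\cal B}$. By Definition~\ref{interpretbg:mpdagresclass}, $\cal H$ shares its skeleton and v-structures with $\widehat{\cal G}^*$, and by construction it also shares them with ${\cal G}^*$; since a CPDAG is uniquely determined by its skeleton together with its v-structures, $\widehat{\cal G}^*={\cal G}^*$. The main obstacle I anticipate is mostly a bookkeeping one: making sure that the equivalence between ``satisfies ${\cal B}_d$'' and ``contains every directed edge of $\cal H$'' is used consistently in both inclusions, and that non-emptiness of $[{\cal G}^*,{\cal B}_d]$ is justified directly from the assumed causal-MPDAG status of $\cal H$ rather than tacitly assumed.
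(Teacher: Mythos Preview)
Your proposal is correct and follows essentially the same approach as the paper: both arguments hinge on the fact that Definition~\ref{interpretbg:mpdagresclass} forces $\cal H$ to share its skeleton and v-structures with any inducing CPDAG (which yields uniqueness), and both take ${\cal B}_d$ to be the directed edges of $\cal H$. The only cosmetic differences are that the paper chooses ${\cal B}_d=\mathbf{E}_d({\cal H})\setminus\mathbf{E}_d({\cal G}^*)$ rather than all of $\mathbf{E}_d({\cal H})$, and that it appeals to Meek's completeness to conclude $[{\cal H}]=[{\cal G}^*,{\cal B}_d]$, whereas you verify the two set inclusions directly from the definition of $[\cal H]$; your route is slightly more elementary and equally valid.
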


%From Proposition \ref{nonemptyMPDAG}, given  a causal MPDAG,  we can  obtain  the corresponding CPADG   and a direct causal background knowledge set  that can generate the restricted Markov equivalence class represented by the MPDAG.

% Proposition \ref{nonemptyMPDAG}  shows that,  for a given causal MPDAG $\cal H$,  we  can always find a unique  CPDAG and a (possibly non-unique) direct causal constraint set which together induce $[\cal H]$. In the fol

% In the rest of this section, we will show that there also exists  a unique direct causal knowledge set that is a minimal generator  defined as follows.

Following Proposition~\ref{nonemptyMPDAG}, we introduce the notions of a generator and a minimal generator.

\begin{definition}[Generator and Minimal Generator]
	Let $ \cal H $ be a causal MPDAG and $ {\cal G}^* $ be the unique CPDAG of all restricted Markov equivalence classes that can be represented by $ \cal H $. A direct causal constraint set (or equivalently, a set of directed edges) $ \cal A $ is called a generator of $ \cal H $  if $[{\cal H }]=[{\cal G}^*, {\cal A }]$. A generator $ \cal A $  is called minimal if the number of direct causal constraints in $ \cal A $ is less than or equal to that in any other generator of $\cal H$.
\end{definition}

%\begin{definition}[Minimal Generator]
%	Let $ \cal H $ be a causal MPDAG, and $ {\cal G}^* $ be the CPDAG representing the Markov equivalence class containing the DAGs in $[{\cal H }]$. A set of direct causal constraints (or equivalently, a set of directed edges) $ \cal A $ is called a generator of $ \cal H $  if $[{\cal H }]=[{\cal G}^*, {\cal A }]$. A generator $ \cal A $  is called minimal if every generator of $\cal H$ is a super set of $ \cal A $.
%\end{definition}

Proposition \ref{nonemptyMPDAG}  shows that every causal MPDAG has a generator. Below we will show that the minimal generator is unique. A new concept called M-strongly protected is required.

%The minimal generator of a causal MPDAG consists of the minimum number of direct causal constraints to reconstruct the MPDAG from a CPDAG. According to the definition of minimal generator, given a causal MPDAG $\cal H$, two minimal generators of $\cal H$ are equal since they are super sets of each other. To identify the unique minimal generator, we introduce a concept of m-strongly protected as follows.

% Similar to the concept of strongly protected defined for essential graphs and intervention essential graphs \citet{andersson1997characterization,hauser2012characterization},  we introduce a concept of m-strongly protected to characterize the possible configurations in which a directed edge could occur in a   causal MPDAG.

\begin{definition}[M-Strongly Protected]\label{def:strong}
	Let $ \cal H $ be a causal MPDAG. A directed edge $ X \rightarrow Y $ in $ \cal H $ is M-strongly protected  if $ X \rightarrow Y $ occurs in at least one of  the  five configurations in Figure \ref{fig:MSPfive}  as an induced graph of $ \cal H $.
\end{definition}
\begin{figure}[!h]
	% \vspace{-1em}
	\centering
	\subfloat[ \label{MSP-1}]{
		\begin{minipage}[t]{0.1\linewidth}
			\centering
			\includegraphics[width=0.9\linewidth]{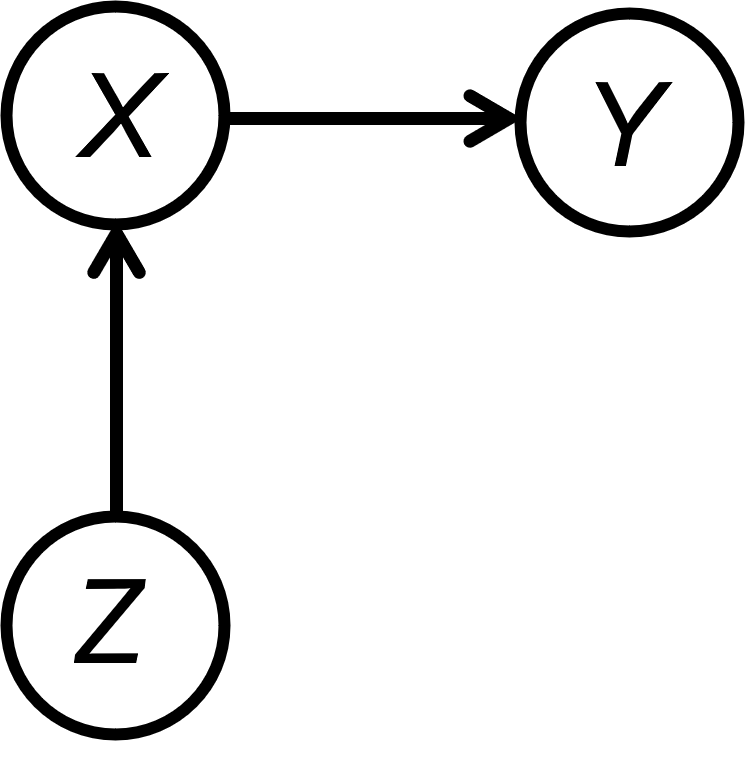}
			%\caption{fig1}
		\end{minipage}%
	}%
	\hspace{0.08\linewidth}
	\subfloat[  \label{MSP-2}]{
		\begin{minipage}[t]{0.1\linewidth}
			\centering
			\includegraphics[width=0.9\linewidth]{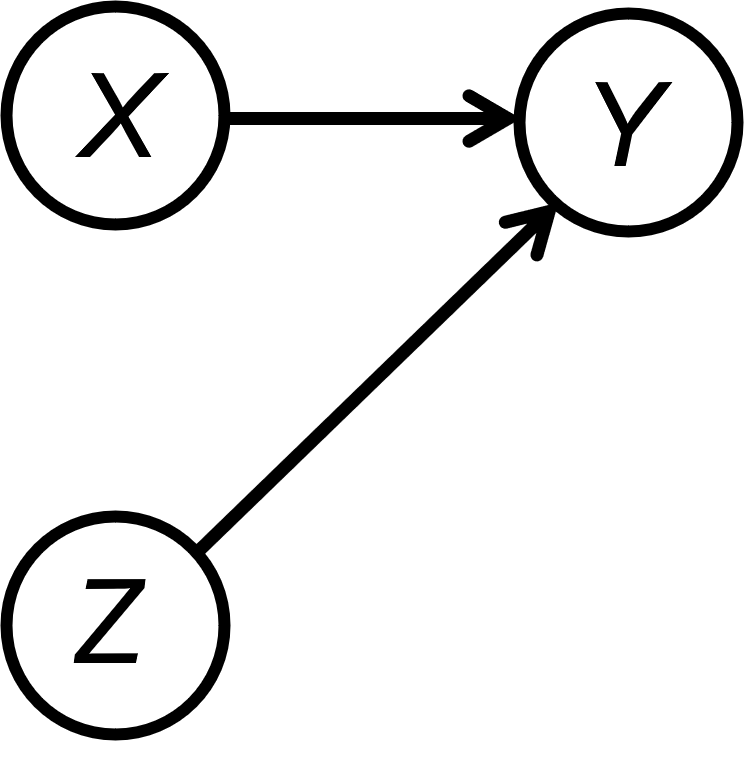}
			%\caption{fig1}
		\end{minipage}%
	}%
	\hspace{0.08\linewidth}
	\subfloat[  \label{MSP-3}]{
		\begin{minipage}[t]{0.1\linewidth}
			\centering
			\includegraphics[width=0.9\linewidth]{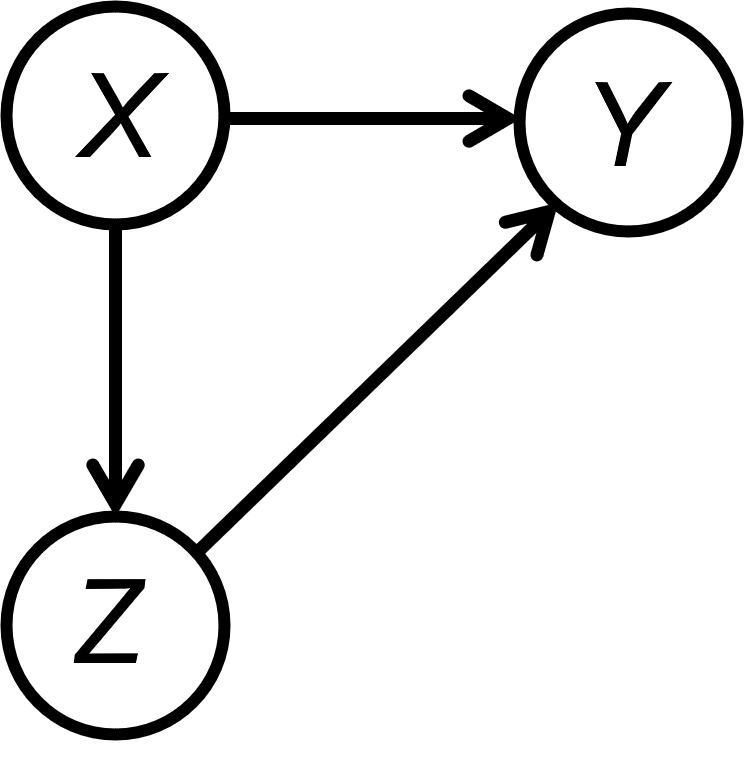}
			%\caption{fig1}
		\end{minipage}%
	}%
	\hspace{0.08\linewidth}
	\subfloat[ \label{MSP-4}]{
		\begin{minipage}[t]{0.1\linewidth}
			\centering
			\includegraphics[width=0.9\linewidth]{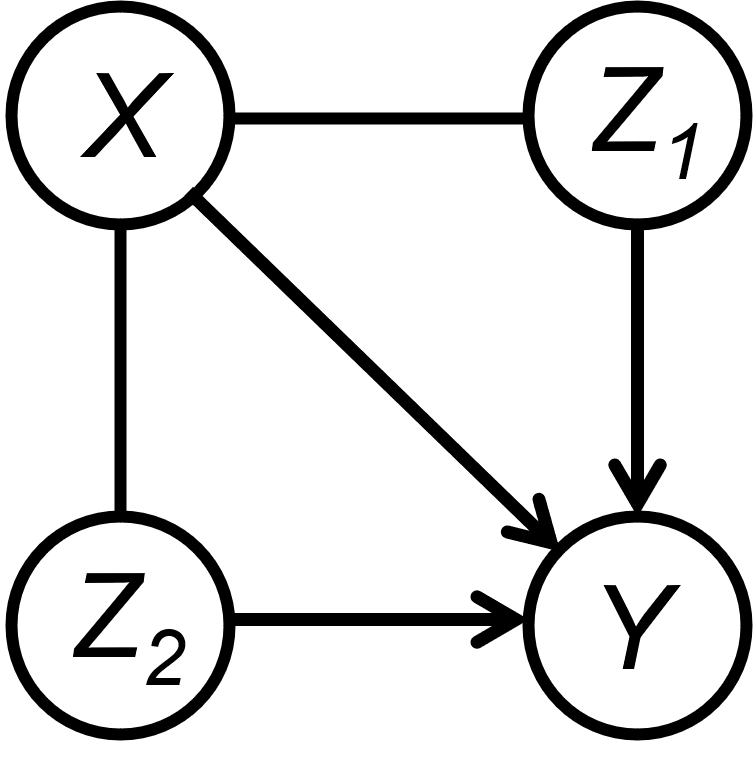}
			%\caption{fig1}
		\end{minipage}%
	}%
	\hspace{0.08\linewidth}
	\subfloat[ \label{MSP-5}]{
		\begin{minipage}[t]{0.1\linewidth}
			\centering
			\includegraphics[width=0.9\linewidth]{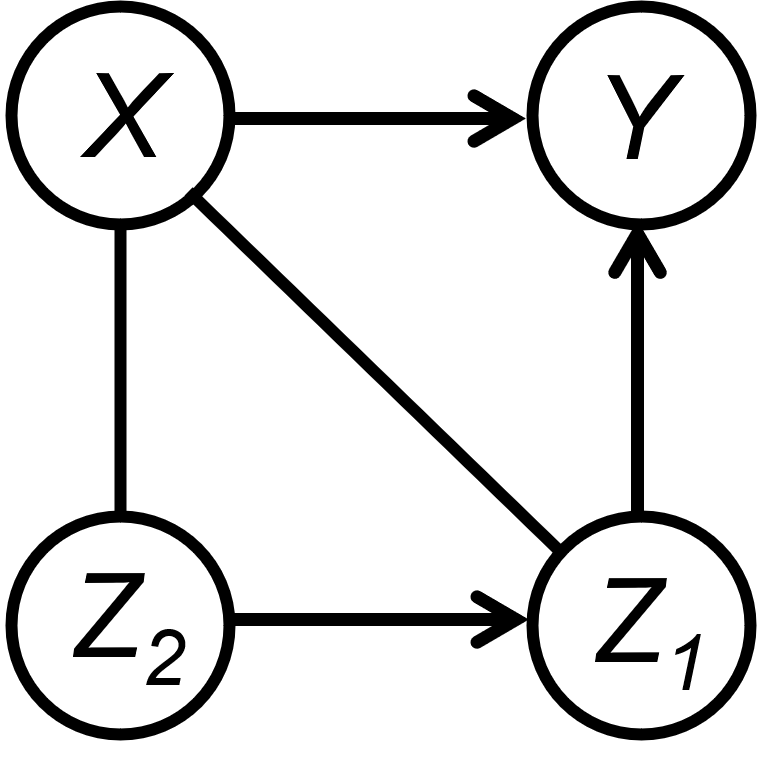}
			%\caption{fig1}
		\end{minipage}%
	}%
	\caption{The five configurations of M-strongly protected edges. }\label{fig:MSPfive}
	\label{fig:MSP}
	% \vspace{-1em}
\end{figure}

%This concept  is similar to that of  strongly protected defined for essential graphs and intervention essential graphs.\citet{andersson1997characterization} proved that every directed edge in an essential graph is strongly protected. \citet{hauser2012characterization} also provided a similar result  for intervention essential graphs.

%The minimal generator of a causal MPDAG consists of the minimum number of direct causal constraints to reconstruct the MPDAG from a CPDAG.

The following proposition establishes the uniqueness of the minimal generator.
% In some problems, given an MPDAG, we may need to find its corresponding background knowledge set, but in fact, there may be many equivalent background knowledge sets. Therefore, we can try to find a minimal direct causal background knowledge set, that is, a direct causal background knowledge set satisfying that none of its proper subsets is equivalent to itself. A formal definition and a new useful concept are introduced in the following.

\begin{proposition}\label{thm:MSP}
	Given a causal MPDAG $ \cal H $, a set of directed edges $ \cal A $ is a minimal generator of $ \cal H $ if and only if $ \cal A $ is the set of directed edges which are not M-strongly protected in $ \cal H $. Moreover, the minimal generator of $ \cal H $ is unique.
\end{proposition}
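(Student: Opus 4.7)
The plan is to prove two complementary statements: (a) $\mathcal{A}^*:=\{X\to Y\in\mathbf{E}_d(\mathcal{H}):X\to Y \text{ is not M-strongly protected in }\mathcal{H}\}$ is a generator of $\mathcal{H}$, and (b) every generator of $\mathcal{H}$ must contain $\mathcal{A}^*$. Together (a) and (b) establish the ``if and only if'' part of the statement and force any minimum-cardinality generator to coincide with $\mathcal{A}^*$, which yields uniqueness for free.

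For (a), I would start from the unique CPDAG $\mathcal{G}^*$ underlying $\mathcal{H}$ (guaranteed by Proposition~\ref{nonemptyMPDAG}), impose the orientations listed in $\mathcal{A}^*$, and close the resulting partially directed graph under Meek's rules. The remaining task is to show that every directed edge of $\mathcal{H}$ that is neither already in $\mathcal{G}^*$ nor in $\mathcal{A}^*$---that is, every M-strongly protected edge---is forced during this closure. I would go through the five configurations in Figure~\ref{fig:MSPfive} and match each one to a v-structure already present in $\mathcal{G}^*$ or to an application of one of Meek's rules R1--R3, handling the within-B-component configuration via Theorem~\ref{the:MPDAG}(iv). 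An induction on the order in which the rules fire then recovers all M-strongly protected edges, so the closure equals $\mathcal{H}$.

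For (b), I would argue the contrapositive: assume $X\to Y\in\mathcal{A}^*\setminus\mathcal{A}$ for some candidate generator $\mathcal{A}$, and derive a contradiction. Running Meek's rules on $\mathcal{G}^*$ together with the orientations in $\mathcal{A}$, either the edge between $X$ and $Y$ remains undirected in the closure---in which case the closure cannot equal $\mathcal{H}$, contradicting that $\mathcal{A}$ is a generator---or some Meek rule fires to orient it as $X\to Y$. In the latter case the local pattern that triggered the rule is an instance of one of the five configurations of Figure~\ref{fig:MSPfive}, making $X\to Y$ M-strongly protected in $\mathcal{H}$ and again contradicting the assumption. The hard part will be pinning down this correspondence between Meek's rules (together with v-structures and the B-component constraints) and the five configurations; the MPDAG-specific configuration inside a B-component is the delicate case, where I expect to use the inclusions $adj(Y,\mathcal{C}^b)\subseteq adj(X,\mathcal{C}^b)$ and $pa(X,\mathcal{H})\subseteq pa(Y,\mathcal{H})\setminus\{X\}$ from Theorem~\ref{the:MPDAG}(iv) to rule out any other mechanism by which an orientation could be forced inside a B-component.
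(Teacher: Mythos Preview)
Your part~(b) is correct and is essentially the paper's necessity argument: if a generator $\mathcal{A}$ omits some non-M-strongly-protected edge $X\to Y$, then $X\to Y$ must be recovered by a v-structure or a Meek rule, and the local pattern that fires is literally one of the five configurations, contradicting $X\to Y\in\mathcal{A}^*$.

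Part~(a) is where your plan diverges from the paper and has a genuine gap. You propose to close $\mathcal{G}^*\cup\mathcal{A}^*$ under Meek's rules and argue ``by induction on the order in which the rules fire'' that every M-strongly protected edge is recovered. But the five configurations are induced subgraphs of the \emph{final} graph $\mathcal{H}$, and the trigger edges in a configuration (for instance $X\to Z$ and $Z\to Y$ in configuration~(c), or $W_2\to W_3$ and $W_3\to Y$ in configuration~(e)) may themselves be M-strongly protected and not yet oriented in the intermediate PDAG. You give no well-founded order on which to induct and no argument ruling out circular dependencies among such edges. A side remark: configuration~(e) is exactly the Rule~4 pattern, so you need R1--R4, not R1--R3; and Theorem~\ref{the:MPDAG}(iv) is a structural constraint on $\mathcal{H}$, not an orientation rule that fires during the closure, so it is unclear how it would drive your induction.

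The paper avoids this difficulty by a different mechanism: it forward-references the orientation-component characterisation (Corollary~\ref{coro:maximaloc}) from Section~\ref{sec:sec:algo_mpdag}. For each $X\to Y$ to be recovered it takes the maximal orientation component $\mathcal{U}$ for $Y$ with respect to $\mathbf{E}_d(\mathcal{H})$ and shows that $\mathcal{U}$ remains an orientation component for $Y$ with respect to $\mathcal{A}^*$ alone. The circularity issue reappears only in one isolated case, when a would-be potential leaf node $Z\neq Y$ in $\mathcal{U}$ has an outgoing edge $Z\to W$ that is M-strongly protected solely via configuration~(c); there the paper chases $W\leftarrow W_2\leftarrow W_3\leftarrow\cdots$ inside $adj(Z,\mathcal{U})$ until finiteness forces a directed cycle in $\mathcal{H}$. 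That chase is tractable precisely because the orientation-component framework has already eliminated configurations~(a) and~(e) at that point (they would contradict either simpliciality of $Z$ or maximality of $\mathcal{U}$). In your global induction all of (a), (c), (e) remain live simultaneously, and no such clean acyclicity argument is available without substantially more work.
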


Proposition  \ref{thm:MSP} also provides a method to find the minimal generator of a given causal MPDAG.  For example, consider the causal MPDAG $\cal H$ shown in Figure~\ref{MPDAGexample}, the four directed edges $A\to B, A\to E, B\to E$ and $D\to E$ are M-strongly protected and the other three directed edges $A\to C, A\to D$ and $C\to B$   are not, so  ${\cal A}=\{A\to C, A\to D, C\to B\}$ is the unique minimal generator of $\cal H$. In summary, we have the following theorem.

\begin{theorem}\label{thm:MSPrep}
	Let $\cal H$ be a causal MPDAG. Then, there exists a unique CPDAG ${\cal G}^*$ and a unique minimal generator ${\cal B}_m$  such that $[{\cal H}]=[{\cal G}^*, {\cal B}_m]$.
\end{theorem}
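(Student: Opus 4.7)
The plan is to derive Theorem~\ref{thm:MSPrep} as an almost immediate corollary of the two preceding results, namely Proposition~\ref{nonemptyMPDAG} and Proposition~\ref{thm:MSP}, which together supply every piece of the statement. The only real work is to observe that the two uniqueness statements concern orthogonal objects (the CPDAG vs.\ the generator) and so can be combined without any compatibility argument.

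First I would invoke Proposition~\ref{nonemptyMPDAG} on the given causal MPDAG $\cal H$. This yields a CPDAG ${\cal G}^*$ that is unique among CPDAGs such that the restricted Markov equivalence class represented by $\cal H$ is induced by $\cal G^*$ together with some pairwise causal constraint set; the same proposition also guarantees that at least one direct causal constraint set ${\cal B}_d$ exists with $[{\cal H}]=[{\cal G}^*,{\cal B}_d]$, so the collection of generators of $\cal H$ is nonempty.

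Second I would apply Proposition~\ref{thm:MSP}, which identifies the minimal generator of $\cal H$ with the set of directed edges of $\cal H$ that are not M-strongly protected, and asserts that this minimal generator is unique. Since M-strong protection (Definition~\ref{def:strong}) is a purely structural property of $\cal H$, the set
\[
{\cal B}_m \;=\; \{\, X\to Y \in \mathbf{E}_d({\cal H}) : X\to Y \text{ is not M-strongly protected in } {\cal H}\,\}
\]
is determined by $\cal H$ alone; combined with the CPDAG ${\cal G}^*$ from the previous step, this gives $[{\cal H}]=[{\cal G}^*,{\cal B}_m]$ by the defining property of a (minimal) generator.

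Because ${\cal G}^*$ is unique by Proposition~\ref{nonemptyMPDAG} and ${\cal B}_m$ is unique by Proposition~\ref{thm:MSP}, the pair $({\cal G}^*,{\cal B}_m)$ is unique, which is exactly the content of Theorem~\ref{thm:MSPrep}. There is no genuine obstacle here: all technical difficulty has already been absorbed into Proposition~\ref{thm:MSP}, whose proof must verify that removing any non-M-strongly-protected edge from the generator still recovers $[{\cal H}]$ while no smaller set suffices. In the proof of Theorem~\ref{thm:MSPrep} itself, I would simply state the two-step reduction and cite the relevant propositions.
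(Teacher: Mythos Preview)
Your proposal is correct and matches the paper's own proof, which simply states that the conclusion follows directly from Propositions~\ref{nonemptyMPDAG} and~\ref{thm:MSP}. You have articulated the same two-step reduction with slightly more detail, but the approach is identical.
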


%\begin{remark}
The first four configurations of M-strongly protected edges in Definition \ref{def:strong} are exactly the  configurations of strongly protected defined for essential graphs.~\citet{andersson1997characterization} proved that every directed edge in an essential graph is strongly protected. Therefore, strongly protected can be used to figure out the directed edges that can be learned from observational data. In contrast, from Proposition \ref{thm:MSP} and Theorem \ref{thm:MSPrep}, in a  causal MPDAG, a directed edge can be learned from observational data, or can be  inferred from the minimal generator if the directed edge is M-strongly protected.

\section{A Unified Representation of Pairwise Causal Constraints}\label{sec:rep}

%In this section, we study the representation  of causal background knowledge.

% With this tool, we further investigate the consistency of causal background knowledge, the construction of MPDAGs, and the applications of causal background knowledge in causal inference in the follow-up sections.
%follow-up sections.
%Utilizing direct causal clauses, we further investigate the problem of checking consistency, constructing MPDAGs, fully informativeness in the follow-up sections.
Let  ${\mathcal G}^*$ be a CPDAG and  $\cal B$ be a set of pairwise causal constraints  consistent  with ${\mathcal G}^*$.  As discussed in Section \ref{sec:sec:interpretbg}, a causal MPDAG $\cal H$ of $[{\mathcal G}^*,\cal B]$ may not be fully informative when  $\cal B$ contains ancestral causal constraints. That is, $[{\mathcal G}^*,\cal B]$ cannot be represented exactly by any causal MPDAG. In this section, we first introduce a new representation, called direct causal clauses, which can be used to represent all types of pairwise causal constraints, and discuss the consistency and equivalence of direct causal clauses. Then, in Section \ref{sec:sec:info}, we show that any pairwise causal background knowledge set can be equivalently decomposed into a causal MPDAG plus a minimal residual set of direct causal clauses, and prove sufficient and necessary conditions for a causal MPDAG $\cal H$ of $[{\mathcal G}^*,\cal B]$ to be fully informative. Finally, Section~\ref{sec:sec:non-pair} briefly discusses the representation of non-pairwise causal background knowledge.

% , and then  give a  representation of restricted Markov equivalence induced by a CPDAG  and a set of consistent pairwise causal constraints based on the corresponding  causal MPDAGs and a minimal set of direct causal clauses.

\subsection{Direct Causal Clauses}\label{sec:sec:clause}

%In this section, we develop a non-graphical representation method called direct causal clauses to uniformly represent direct, ancestral  and non-ancestral causal constraints.
In this section, we develop a non-graphical tool called direct causal clauses, which can uniformly represent direct, ancestral, and non-ancestral causal constraints.

% We begin by introducing the the concept of direct causal clause.
%We first introduce some important concepts.
% Given a CPDAG $\mathcal{G}^*$, two background knowledge sets $\mathcal{B}_1$ and $\mathcal{B}_2$ are \emph{equivalent} with respect to $\mathcal{G}^*$ if $[\mathcal{G}^*, \mathcal{B}_1]=[\mathcal{G}^*, \mathcal{B}_2]$~\citep[][Definition~1]{fang2020bgida}.
% Intuitively, two background knowledge sets are equivalent if and only if they put the same constraints on an equivalence class. Note that, the equivalence of background knowledge depends on $\mathcal{G}^*$. Generally, two equivalent background knowledge sets with respect to one CPDAG may not be equivalent anymore with respect to another CPDAG.

\begin{definition}[Direct Causal Clause]\label{def:dcc}
	A   direct causal clause (DCC for short) $\kappa$, also denoted by $\kappa_t\tor\kappa_h$, over a variable set $\bf V$ is a proposition saying that $\kappa_t$  is a direct cause of   at least one variable  in   $\kappa_h$, where ${\kappa_t}\in {\bf V}$ is called  the tail of $\kappa$,  and ${\kappa_h}\subseteq{\bf V}$ satisfying $\kappa_t\notin \kappa_h$ is the head set  whose elements are called the heads of $\kappa$.
	
	% where $\kappa_t $ is the tail of $\kappa_t$ is called the tail and $D\in\mathbf{D}$ is called the head of the DCC.
	%Let $X$ be a variable and $\mathbf{D}$ be a variable set such that $X \notin \mathbf{D}$. A DCC , denoted by $X\tor\mathbf{D}$, is a proposition saying $X$ is a direct cause of   at least one variable  in $\mathbf{D}$. Moreover, $X$ is called the tail and $D\in\mathbf{D}$ is called the head of the DCC.
\end{definition}

When  the head set $\kappa_h$ of a DCC $\kappa$  is a singleton set, say $\kappa_h=\{D\}$, $\kappa_t\tor\kappa_h$ is equivalent to the proposition that $\kappa_t$ is a direct cause of $D$, denoted by $\kappa_t\to D$.
For ease of presentation, we will use  $\kappa_t\tor D$ or  $ \kappa_t \to {D}$ as a shorthand for $\kappa_t\tor \{D\}$, and we will use them interchangeably throughout this paper.    Given a DAG $\mathcal{G}$ and a DCC $\kappa$  over the same variable set $\mathbf{V}$,  we say that $\kappa$  holds  for $\cal G$ and $\cal G$ satisfies $\kappa$ if $\kappa_h \cap  ch({\kappa}_t, {\cal G})\neq \varnothing$.

% For any DAG $\cal G$  over  $\mathbf V$, we have the following two propositions.

\begin{proposition}\label{fact1}
	
	For any DAG $\cal G$  over  $\mathbf V$, we have that {(i)}
	a DCC $\kappa$ over $\mathbf V$ with $\kappa_h=\varnothing$ never holds for $\cal G$, and {(ii)}
	for any DCC $\kappa$ over $\mathbf V$,  $\kappa \iff   \bigvee_{D\in \kappa_h}  (\kappa_t \to D)$ for $\cal G$.
\end{proposition}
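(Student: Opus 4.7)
The plan is to treat both parts as direct unpackings of the definition "$\kappa$ holds for $\cal G$ iff $\kappa_h \cap ch(\kappa_t, {\cal G}) \neq \varnothing$" given immediately above the proposition statement. No machinery from Section~\ref{sec:sec:cha} is needed; the argument is purely definitional plus elementary set theory/propositional logic.

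For part~(i), I would simply observe that if $\kappa_h = \varnothing$, then $\kappa_h \cap ch(\kappa_t, {\cal G}) = \varnothing \cap ch(\kappa_t, {\cal G}) = \varnothing$, so the defining non-emptiness condition fails for every DAG $\cal G$. Thus $\kappa$ never holds.

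For part~(ii), I would rewrite the non-emptiness of the intersection as an existential: $\kappa_h \cap ch(\kappa_t, {\cal G}) \neq \varnothing$ is equivalent to $\exists\, D \in \kappa_h$ with $D \in ch(\kappa_t, {\cal G})$. By the definition of $ch(\cdot,\cdot)$ from Section~\ref{sec:sec:graphical}, the condition $D \in ch(\kappa_t, {\cal G})$ is the proposition $\kappa_t \to D$ in $\cal G$. Hence the existential is exactly the disjunction $\bigvee_{D \in \kappa_h}(\kappa_t \to D)$, which gives the biconditional. The one subtlety is the empty-head case: when $\kappa_h = \varnothing$, the disjunction on the right is the empty disjunction, conventionally false, which agrees with~(i); I would note this briefly so part~(ii) is genuinely valid for all DCCs, not just those with non-empty head set.

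Since every step is a rewrite of the definition, there is no real obstacle. The only thing worth being careful about is being explicit that "$\kappa_t \to D$ in $\cal G$" and "$D \in ch(\kappa_t,{\cal G})$" are the same statement under the conventions established in Section~\ref{sec:sec:graphical}, so that the logical equivalence in~(ii) is unambiguous.
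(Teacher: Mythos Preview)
Your proposal is correct and takes essentially the same approach as the paper, which simply states that both claims hold by the definition of DCC and omits the details. Your write-up is in fact more thorough than the paper's, since you explicitly handle the empty-disjunction convention that makes part~(ii) consistent with part~(i).
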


%Let $\mathbf{V}$ be a variable set such that $\{\kappa_t\}\cup \kappa_h\subseteq \mathbf{V}$. Given a DAG $\mathcal{G}$ over $\mathbf{V}$, we say that $X\tor\mathbf{D}$ holds for $\mathcal{G}$ and $\mathcal{G}$ satisfies $X\tor\mathbf{D}$, if $\mathbf{D}\subseteq adj(X, \mathcal{G})$ and $X\in pa(\mathbf{D}, \mathcal{G})$.  Under the condition that $\mathbf{D}\subseteq adj(X, \mathcal{G})$, we have,
%\[X \tor \mathbf{D}\iff \bigvee_{D\in\mathbf{D}} X\tor \{D\} \iff \bigvee_{D\in\mathbf{D}} X\to D.\]

The first statement of Proposition \ref{fact1}  naturally holds since for any $\kappa$ with $\kappa_h=\varnothing$, $\kappa_h \cap  ch({\kappa}_t, {\cal G})= \varnothing$, no matter whether $ch({\kappa}_t, {\cal G})=\varnothing$ or not. Even if for a singleton graph ${\cal G}=(\kappa_t, \varnothing)$ containing $\kappa_t$ only, $\kappa_t\tor \varnothing$ does not hold for ${\cal G}$. Proposition \ref{fact1} also shows  that a DCC $\kappa$ holds for $\cal G$  if and only if there exists at least one variable $D\in \kappa_h$ such that $\kappa_t\to D$  holds for $\cal G$. It implies that a DCC is a disjunction of  direct causal constraints.

To demonstrate that DCCs can represent pairwise causal constraints, we introduce the concept of a critical set as follows.

\begin{definition}[Critical Set]\label{critical}
	Let $\mathcal{G}^*$ be  a causal MPDAG, and let $X$ and $Y$ be two distinct vertices in $\mathcal{G}^*$. The critical set of $X$ with respect to $Y$ in $\mathcal{G}^*$, denoted by $\mathbf{C}_{XY}(\mathcal{G}^*)$, consists of all neighbors of $X$ lying on at least one chordless partially directed path or chordless undirected path from $X$ to $Y$.
\end{definition}
% ~\citep[][Definition~1]{fang2020bgida}.

This concept was first introduced for CPDAGs by~\citet{fang2020bgida}.  As an example, consider the CPDAG shown in Figure~\ref{fig:1-1}. The critical set of $A$ with respect to $Y$ consists of $B$ and $C$, as $A-B\to Y$ and $A-C\to Y$ are two chordless partially directed path from $A$ to $Y$. On the other hand, variable $X$ is not in $\mathbf{C}_{AY}(\mathcal{G}^*)$, since $A-X-C\to Y$ has a chord $A-C$ and $A-X-B\to Y$ has a chord $A-B$. Similarly, the critical set of $D$ with respect to $Y$ is $\{A,X\}$, for $D-A-B\to Y$ and $D-X-C\to Y$ are chordless.

%It can be shown that,

\begin{theorem}
	\label{thm:nbr_set_const}	
	Let $\mathcal{G}^*$ be a CPDAG, $X, Y \in \mathbf{V}(\mathcal{G}^*)$, and $\mathcal{G}\in [\mathcal{G}^*]$. Denote by $\mathbf{C}_{XY}(\mathcal{G}^*)$ the critical set of $X$ with respect to $Y$ in $\mathcal{G}^*$. Then, we have:
	\begin{enumerate}
		\item[(i)]  $X$ is a direct cause of $Y$ in $\mathcal{G}$ if and only if $X\tor Y$ holds for $\mathcal{G}$.
		\item[(ii)]   $X$ is a cause of $Y$ in $\mathcal{G}$ if and only if $X \tor \mathbf{C}_{XY}(\mathcal{G}^*)$ holds for $\mathcal{G}$.
		\item[(iii)]   $X$ is not a cause of $Y$ in $\mathcal{G}$ if and only if $C\tor X$ holds for $\mathcal{G}$ for every $C\in\mathbf{C}_{XY}(\mathcal{G}^*)$.
	\end{enumerate}
\end{theorem}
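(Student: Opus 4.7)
My plan is to handle the three parts in sequence, concentrating the real work in part (ii) and deriving (i) and (iii) by direct unfolding and contrapositive respectively. Part (i) follows immediately from Definition~\ref{def:dcc}: with singleton head $\{Y\}$, the DCC $X \tor Y$ holds for $\mathcal{G}$ iff $\{Y\} \cap ch(X, \mathcal{G}) \neq \varnothing$, which is exactly $X \to Y$ in $\mathcal{G}$.

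For the only-if direction of part (ii), I would start from a directed path $X = Y_0 \to Y_1 \to \cdots \to Y_m = Y$ in $\mathcal{G}$ and lift it to a partially directed path in $\mathcal{G}^*$: each edge is either directed or undirected in $\mathcal{G}^*$, never reversed, since directed edges of $\mathcal{G}^*$ are preserved in every DAG of $[\mathcal{G}^*]$. If this lifted path admits a chord $Y_i \sim Y_j$ with $j > i+1$, acyclicity of $\mathcal{G}$ rules out $Y_j \to Y_i$ in $\mathcal{G}$, so the chord is either $Y_i \to Y_j$ or $Y_i - Y_j$ in $\mathcal{G}^*$, and shortcutting yields a strictly shorter partially directed path from $X$ to $Y$. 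Iterating terminates in a chordless partially directed path whose first neighbor $C$ of $X$ lies in $\mathbf{C}_{XY}(\mathcal{G}^*)$ by definition. Since $C = Y_j$ for some $j$ on the original directed $\mathcal{G}$-path, $C \in de(X, \mathcal{G})$, and acyclicity forces $X \to C$ in $\mathcal{G}$, proving $X \tor \mathbf{C}_{XY}(\mathcal{G}^*)$.

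For the if direction, assume $X \to C$ in $\mathcal{G}$ for some $C \in \mathbf{C}_{XY}(\mathcal{G}^*)$ and take a witnessing chordless partially directed path $\pi^*: X - C - Z_1 - \cdots - Z_k = Y$ in $\mathcal{G}^*$. I would induct along $\pi^*$: suppose the previous edge is already oriented $W \to U$ in $\mathcal{G}$, and consider the next edge $U - V$ on $\pi^*$. If $U \to V$ is already directed in $\mathcal{G}^*$, it transfers to $\mathcal{G}$ unchanged; otherwise $U - V$ is undirected in $\mathcal{G}^*$, chordlessness of $\pi^*$ makes $W$ and $V$ non-adjacent, and the alternative orientation $V \to U$ would create a v-structure $W \to U \leftarrow V$ absent from $\mathcal{G}^*$, forcing $U \to V$ in $\mathcal{G}$. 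Propagating along $\pi^*$ to $Y$ exhibits a directed path from $X$ to $Y$ in $\mathcal{G}$.

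Part (iii) then follows from (ii) by a short contrapositive: each $C \in \mathbf{C}_{XY}(\mathcal{G}^*)$ is a neighbor of $X$ on a partially directed path from $X$ to $Y$, so the edge $X - C$ in $\mathcal{G}^*$ is never $C \to X$, and in any $\mathcal{G} \in [\mathcal{G}^*]$ it orients as either $X \to C$ or $C \to X$. Hence $X$ has no child in $\mathbf{C}_{XY}(\mathcal{G}^*)$ iff every such $C$ satisfies $C \to X$ in $\mathcal{G}$, i.e., $C \tor X$ holds for $\mathcal{G}$; combined with (ii) this yields (iii). The main obstacle is the chord-removal step in the only-if direction of (ii): one must argue that a chord between two forward-ordered vertices of a directed $\mathcal{G}$-path cannot point backward in $\mathcal{G}^*$, which rests on the preservation of directed edges across $[\mathcal{G}^*]$ together with acyclicity in $\mathcal{G}$; the remainder is standard orientation propagation via the absence of new v-structures.
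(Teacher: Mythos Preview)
Your proof is correct and self-contained, but it takes a genuinely different route from the paper's. The paper's argument is essentially a citation: statement~(i) is immediate, statement~(iii) is obtained by invoking \citet[Lemma~2]{fang2020bgida}, and statement~(ii) is then read off as the logical negation of~(iii). No chord-removal or orientation-propagation is carried out explicitly.

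By contrast, you prove~(ii) from scratch. Your only-if direction lifts a directed $X$--$Y$ path in $\mathcal{G}$ to a partially directed path in $\mathcal{G}^*$, shortens it to a chordless one by repeatedly shortcutting along chords (using acyclicity in $\mathcal{G}$ to rule out backward-pointing chords), and reads the first neighbor off as a member of $\mathbf{C}_{XY}(\mathcal{G}^*)$ that is a child of $X$ in $\mathcal{G}$. Your if direction propagates the orientation $X\to C$ along a witnessing chordless partially directed path by the standard no-new-v-structure argument; the sub-case where the previous edge is already directed in $\mathcal{G}^*$ is handled implicitly since Meek's Rule~1 would then already have forced the next edge to be directed in $\mathcal{G}^*$. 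Finally you recover~(iii) from~(ii) by contrapositive, using that every $C\in\mathbf{C}_{XY}(\mathcal{G}^*)$ is adjacent to $X$ and that the first edge of a partially directed path from $X$ cannot be $C\to X$ in $\mathcal{G}^*$.

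What each approach buys: the paper's proof is a two-line reduction to existing literature, appropriate given that Theorem~\ref{thm:nbr_set_const} is stated as an extension of that prior lemma. Your proof is longer but stands on its own; it effectively reproves the cited lemma and makes transparent the two graphical mechanisms (chord elimination in one direction, v-structure preservation in the other) that drive the result. A minor presentational point: your path notation ``$X - C - Z_1 - \cdots$'' suggests all edges are undirected, whereas the path is only partially directed; this does not affect the argument.
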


Theorem~\ref{thm:nbr_set_const} extends Lemma~2 in~\citet{fang2020bgida} to all pairwise causal constraints.
%The definition of critical set, which extends Definition~2 in \citet{fang2020bgida}, is given as follows.
Analogue to Definition~\ref{interpretbg:reseqclassdef}, we can define the restricted Markov equivalence class $[\mathcal{G}^*, \mathcal{K}]$ induced by a CPDAG $\mathcal{G}^*$ and a set $\mathcal{K}$ of DCCs over ${\bf V}(\mathcal{G}^*)$, as the subset of $[{\cal G^*}]$ in which every DAG satisfies all DCCs in $\mathcal{K}$. Likewise, the MPDAG of a non-empty $[\mathcal{G}^*, \mathcal{K}]$ can also be defined analogously to Definition~\ref{interpretbg:mpdagresclass}.
Given a CPDAG and a set of pairwise causal constraints, Theorem~\ref{thm:nbr_set_const} proves that there is a set of DCCs which induces the same restricted Markov equivalence class as the original constraints. %Note that, the converse may not be true. That is, given an arbitrary set of DCCs, there may not exist an equivalent set of pairwise causal constraints with respect to a given CPDAG. Since

From  Theorem~\ref{thm:nbr_set_const}, the global path constraints  are transformed to the local ones that only put constraints on the edges between $X$ and its neighbors. A polynomial-time algorithm proposed by  \citet[Algorithm~2]{fang2021local} can be used to   find critical sets,  so   we can efficiently obtain the equivalent DCCs from a given CPDAG and pairwise causal constraints.

{We remark that any pairwise causal constraint can be equivalently transformed into a set of DCCs, but not vice versa. That is, not every set of DCCs can be translated back into pairwise causal constraints. An illustrative example is provided in Figure~\ref{fig:not}, where the DCC $B \tor \{A, C\}$ is not equivalent to any combination of pairwise causal constraints. This example demonstrates that DCCs can encode more information than pairwise constraints. Assuming that a set of DCCs can be translated back into pairwise causal constraints, Appendix~\ref{sec:sec:from} presents a method for obtaining an equivalent set of pairwise causal constraints.}

\begin{figure}[!t]
	%\vspace{-1em}
	\centering	
	\includegraphics[width=0.25\linewidth]{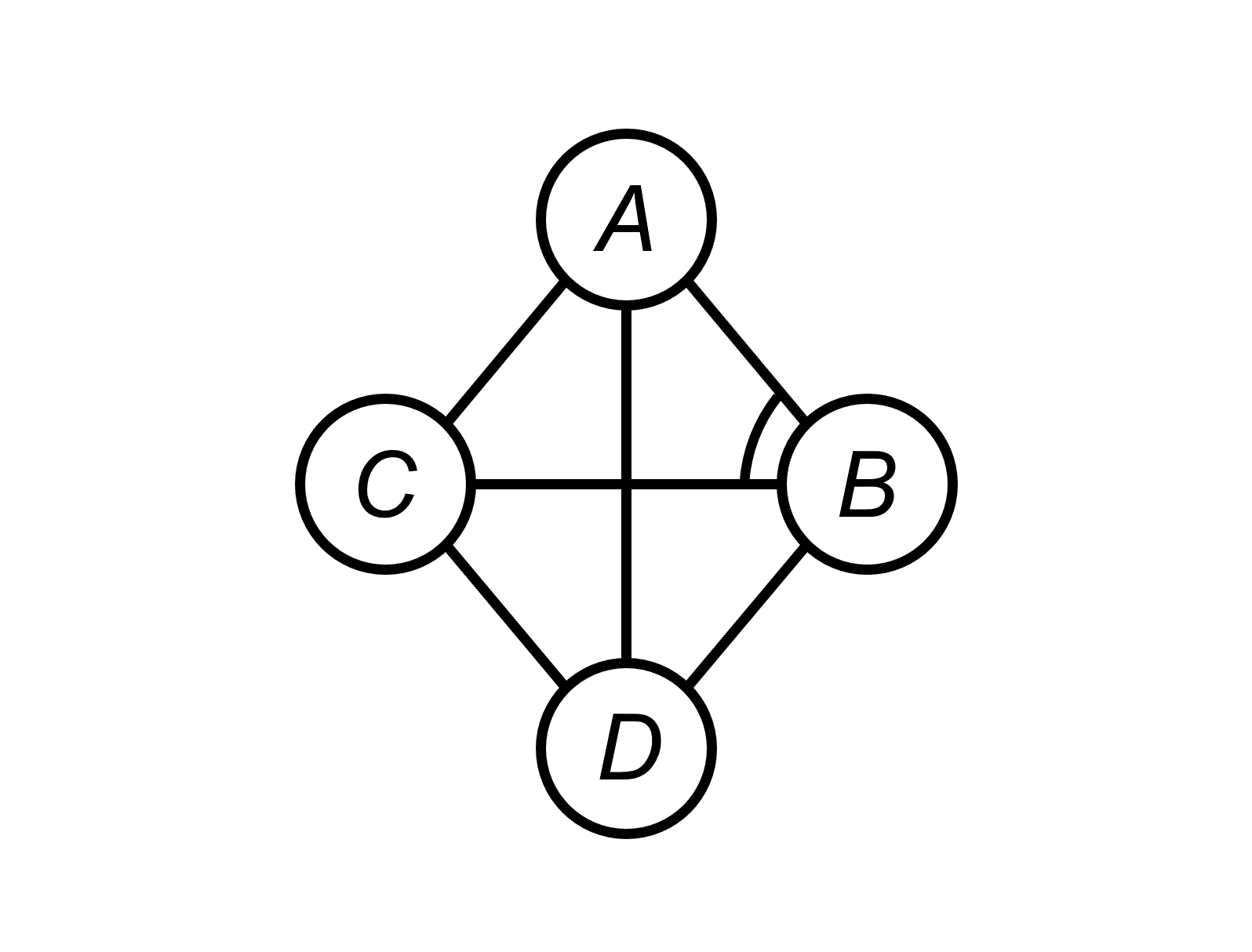}
	\caption{A CPDAG over $\{A,B,C,D\}$ and a DCC $B \tor \{A, C\}$, which is visualized by an arc. This example shows that not every DCC can be translated back into pairwise causal constraints. }
	\label{fig:not}
	%\vspace{-1em}
\end{figure}

\begin{example}
	\label{ex:cbk_vs_nbr}
	Consider the CPDAG ${\cal G}^*$ shown in Figure~\ref{fig:1-1}. Recall that $\mathbf{C}_{AY}(\mathcal{G}^*)=\{B, C\}$ and $\mathbf{C}_{DY}(\mathcal{G}^*)=\{A, X\}$. Suppose that $A\longarrownot\dashrightarrow Y$ holds for the underlying DAG, then by Theorem~\ref{thm:nbr_set_const}, $B\tor A$ and $C\tor A$ holds. Since $B\tor A \iff B\to A$ and $C\tor A \iff  C\to A$, we can represent $A\longarrownot\dashrightarrow Y$ graphically by orienting $B \to A$ and $C\to A$ in ${\cal G}^*$. Furthermore, if $D \dashrightarrow Y$ holds, then by Theorem~\ref{thm:nbr_set_const} we equivalently have $D\tor \{A, X\}$, meaning that $D$ is a direct cause of $A$ or $X$ in the underlying DAG. Figures \ref{fig:1-4} to \ref{fig:1-6} enumerate three possible orientations of the edges $D-A$ and $D-X$ in ${\cal G}^*$ with the constraint $D\tor \{A, X\}$. For any DAG in $[{\cal G}^*, D\dashrightarrow Y]$,  the edge orientations of $D-A$ and $D-X$ must be one of the three possibilities shown in Figures \ref{fig:1-4} to \ref{fig:1-6}. Conversely, every DAG in $[{\cal G}^*]$ whose edges between $D$ and $\{A, X\}$ are identical to one of the configurations shown in Figures \ref{fig:1-4} to \ref{fig:1-6} must satisfy the constraint $D\dashrightarrow Y$.
\end{example}

\begin{figure}[!t]
	%\vspace{-1em}
	\centering
	\subfloat[CPDAG ${\cal G}^*$ \label{fig:1-1}]{
		\begin{minipage}[t]{0.23\linewidth}
			\centering
			\includegraphics[width=1\linewidth]{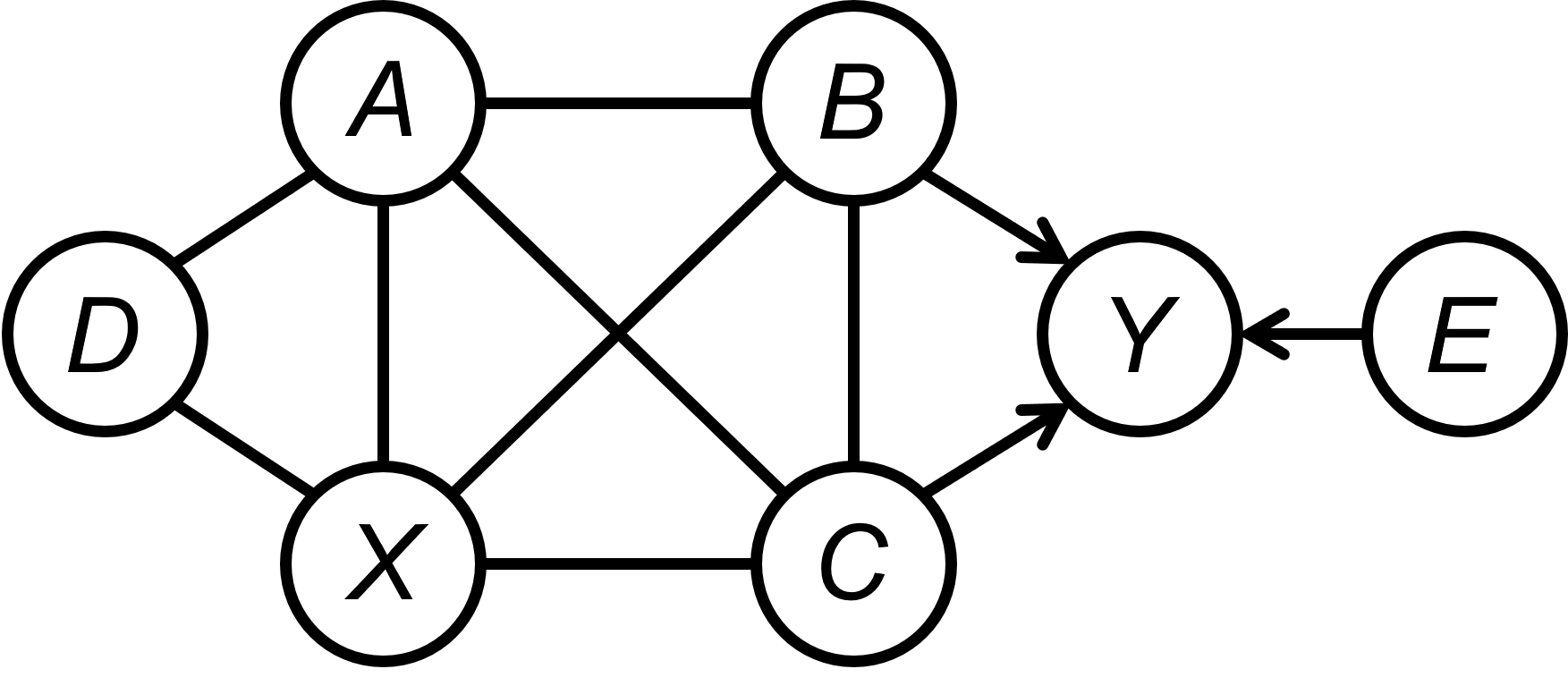}
			%\caption{fig1}
		\end{minipage}%
	}%
	\hspace{0.1\linewidth}
	\subfloat[$B\tor A$ and $C\tor A$ \label{fig:1-2}]{
		\begin{minipage}[t]{0.23\linewidth}
			\centering
			\includegraphics[width=1\linewidth]{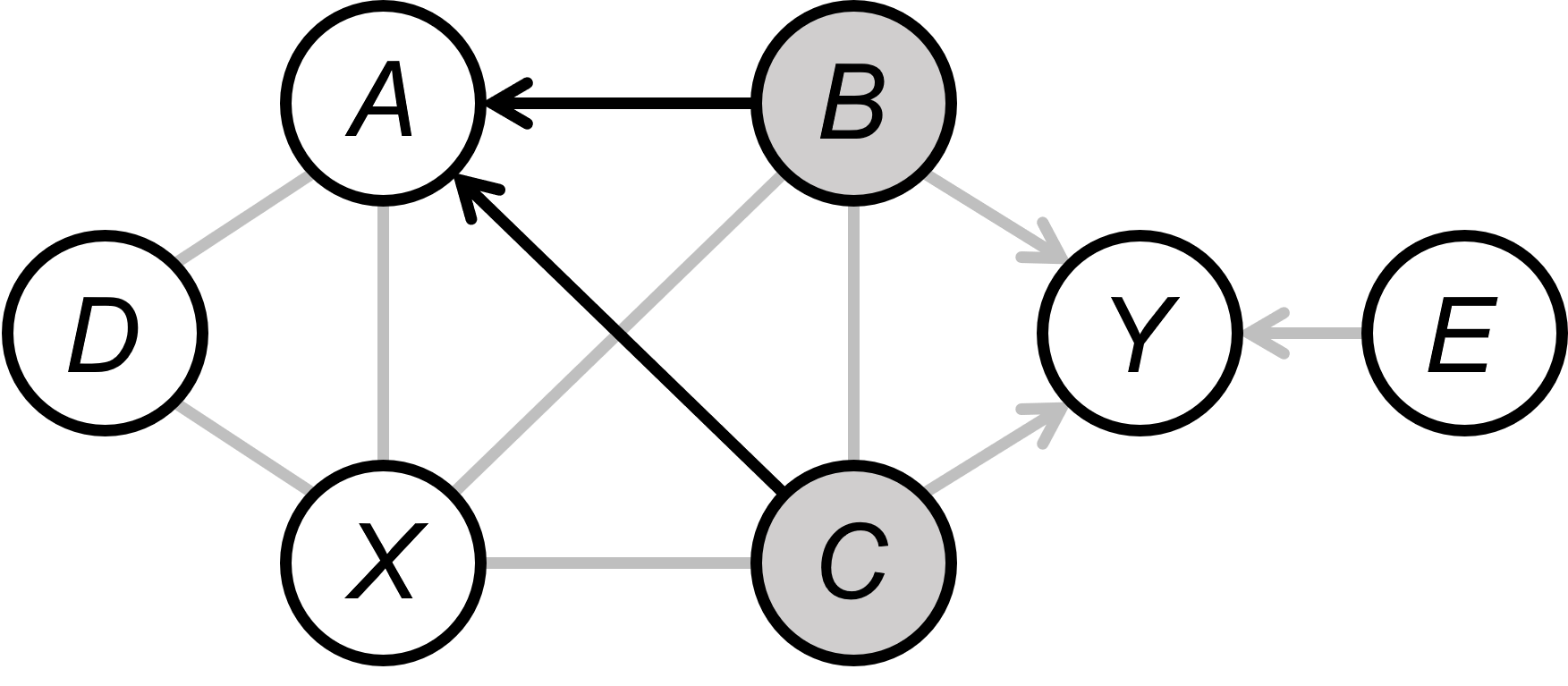}
			%\caption{fig1}
		\end{minipage}%
	}%
	\hspace{0.1\linewidth}
	\subfloat[$D\tor \{A, X\}$ \label{fig:1-3}]{
		\begin{minipage}[t]{0.23\linewidth}
			\centering
			\includegraphics[width=1\linewidth]{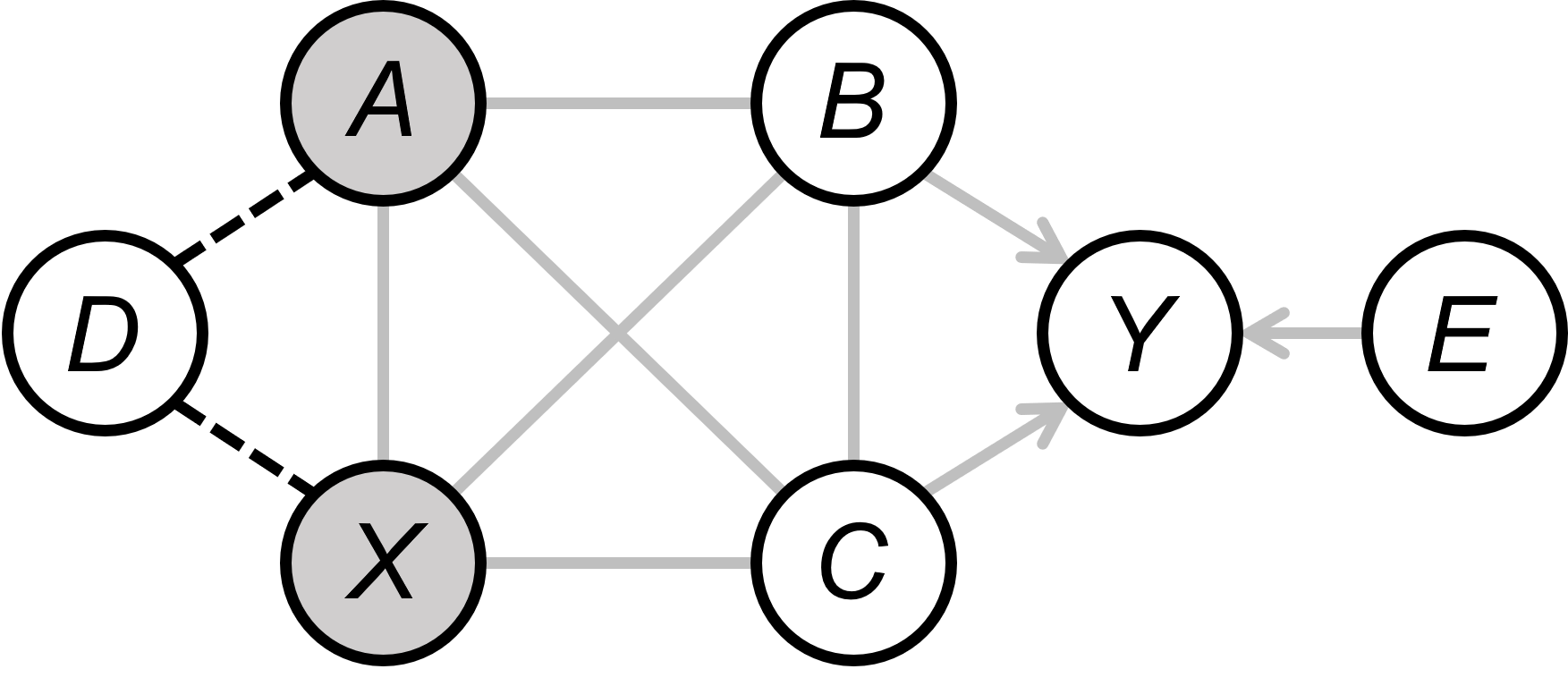}
			%\caption{fig1}
		\end{minipage}%
	}%
	
	\subfloat[$D\to A$ and $D\to X$ \label{fig:1-4}]{
		\begin{minipage}[t]{0.23\linewidth}
			\centering
			\includegraphics[width=1\linewidth]{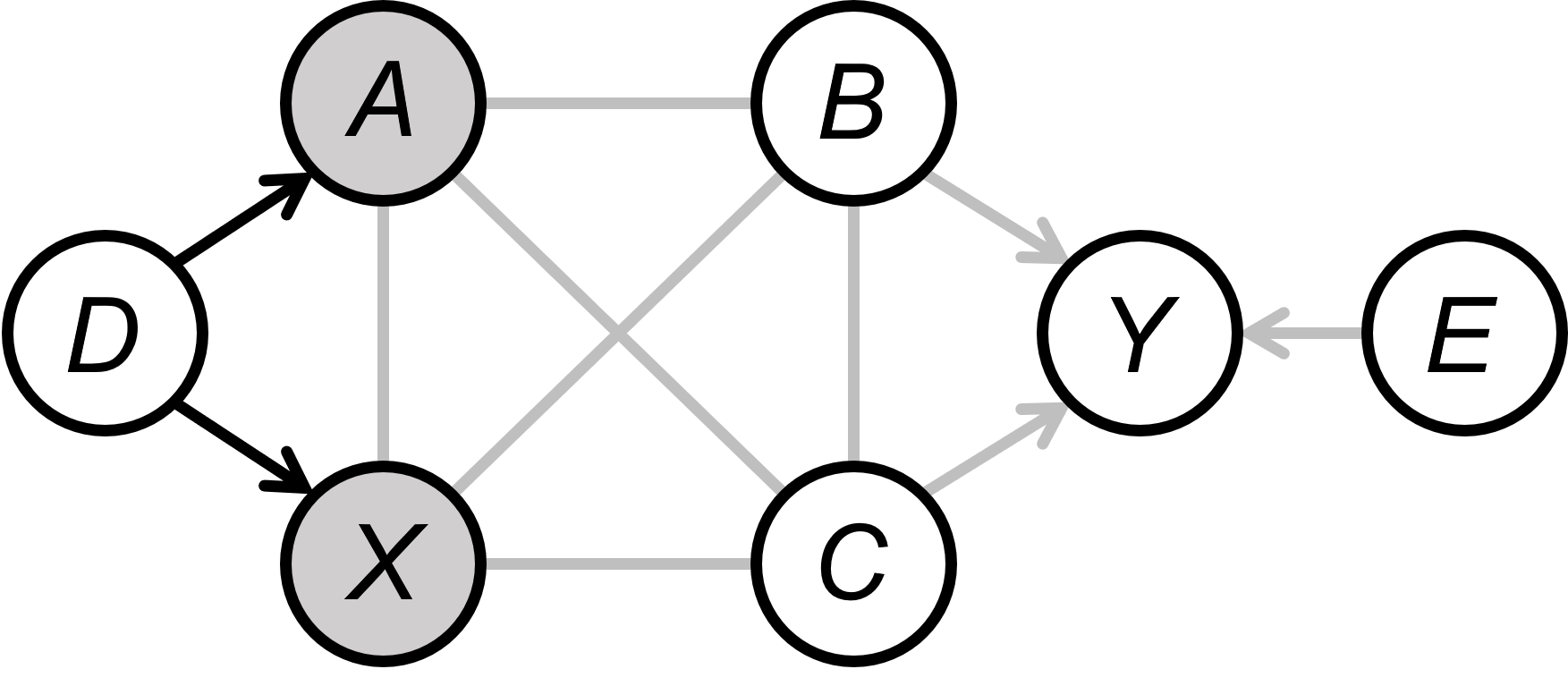}
			%\caption{fig1}
		\end{minipage}%
	}%
	\hspace{0.1\linewidth}
	\subfloat[$D\to A$ and $X\to D$ \label{fig:1-5}]{
		\begin{minipage}[t]{0.23\linewidth}
			\centering
			\includegraphics[width=1\linewidth]{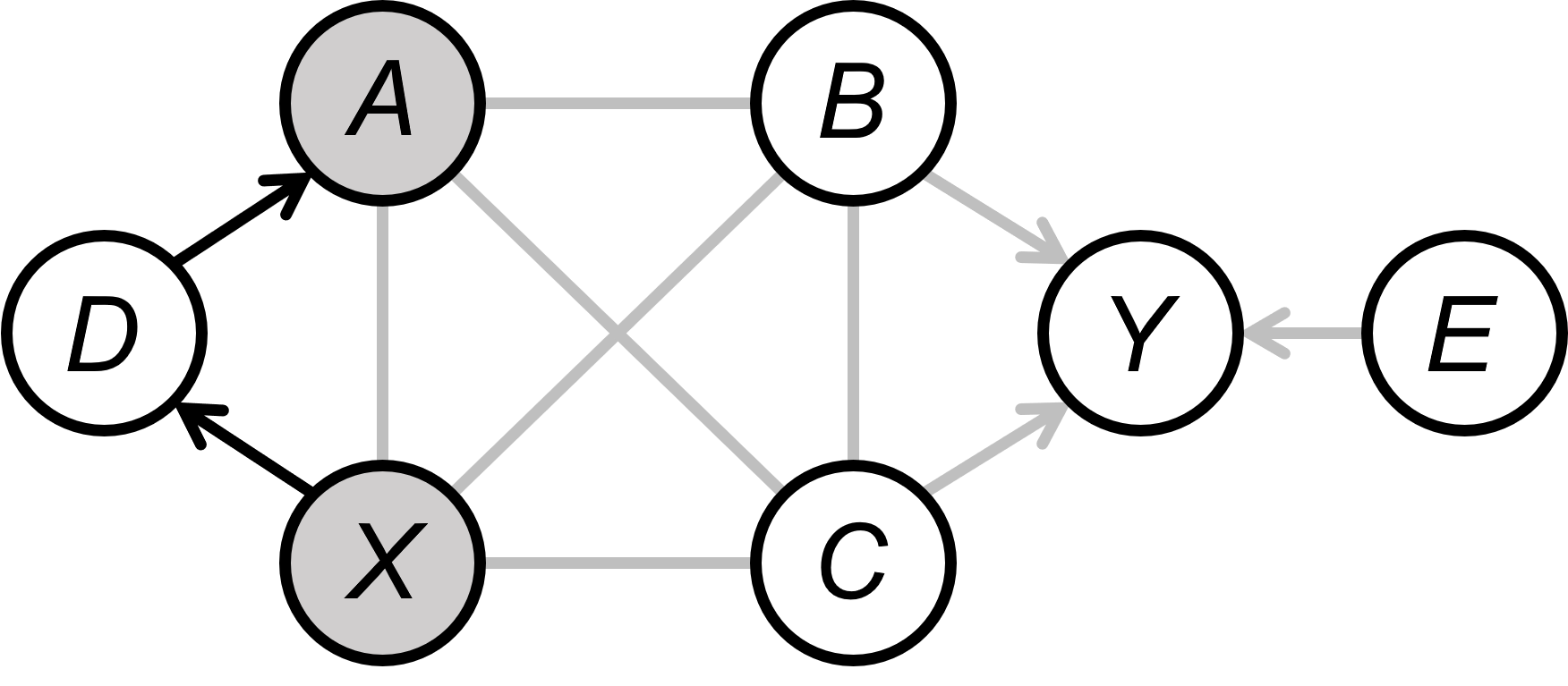}
			%\caption{fig1}
		\end{minipage}%
	}%
	\hspace{0.1\linewidth}
	\subfloat[$A\to D$ and $D\to X$ \label{fig:1-6}]{
		\begin{minipage}[t]{0.23\linewidth}
			\centering
			\includegraphics[width=1\linewidth]{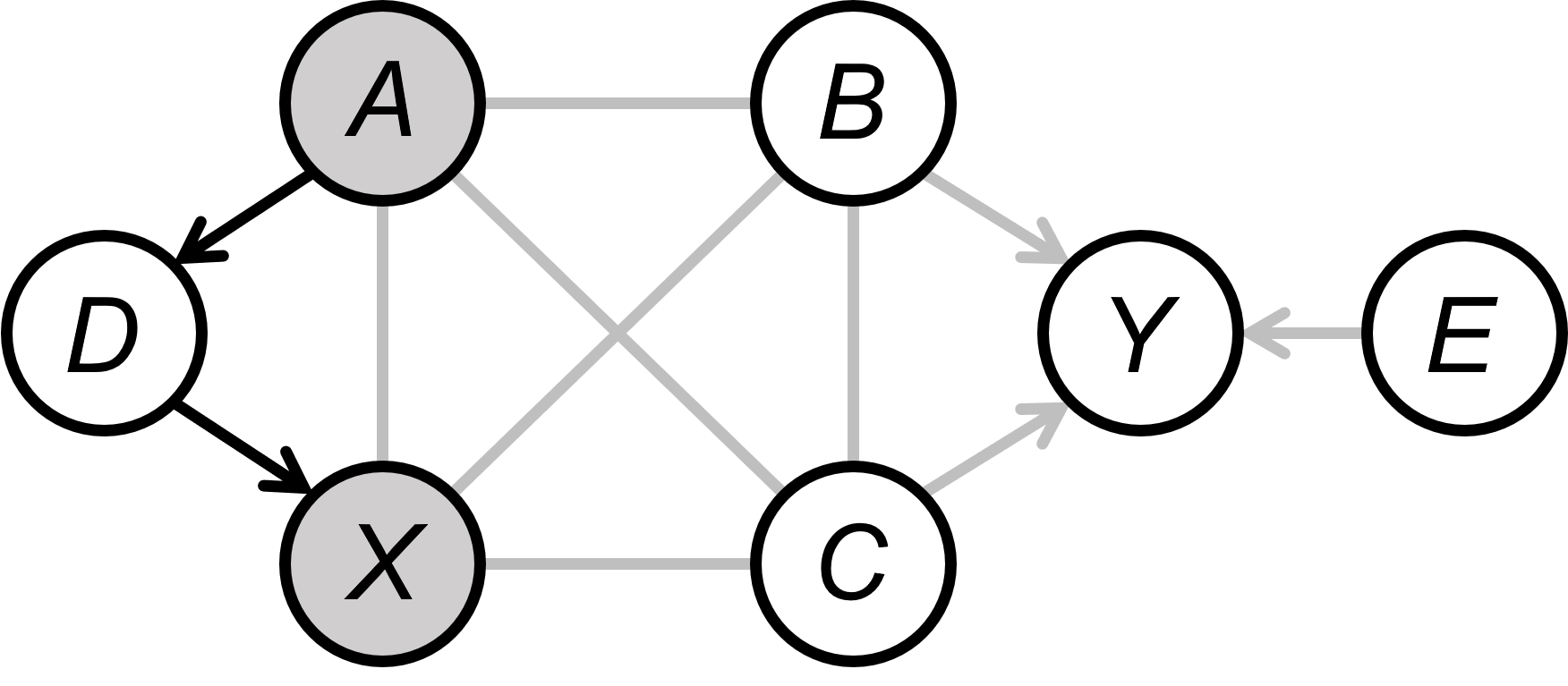}
			%\caption{fig1}
		\end{minipage}%
	}%
	\caption{An illustration of a critical set and  Theorem~\ref{thm:nbr_set_const}. Figure~\ref{fig:1-1} shows a CPDAG. Figures~\ref{fig:1-2} and \ref{fig:1-3} show the equivalent DCCs to the pairwise causal background knowledge $A\longarrownot\dashrightarrow Y$ and $D\dashrightarrow Y$, respectively. Figures \ref{fig:1-4} to \ref{fig:1-6} enumerate three possible orientations of the edges $D-A$ and $D-X$ in ${\cal G}^*$ with the constraint $D\tor \{A,X\}$.}
	\label{fig:thm1}
	%\vspace{-1.5em}
\end{figure}

Below, we discuss the consistency of a DCC set and the equivalence of two DCC sets with respect to a given CPDAG. We first introduce the definition of consistency as follows.

\begin{definition}[Consistency] \label{def:consistency}
	Given a CPDAG $\mathcal{G}^*$ and a set $\mathcal{K}$ of DCCs over ${\bf V}(\mathcal{G}^*)$, $\mathcal{K}$ is consistent with $\mathcal{G}^*$ if $[\mathcal{G}^*, \mathcal{K}]\neq \varnothing$.
\end{definition}

Following Definition \ref{def:consistency}, a  set $\mathcal{K}$ of DCCs over ${\bf V}(\mathcal{G}^*)$ is consistent with $\mathcal{G}^*$ if and only if there exists at least one equivalent DAG ${\cal G}\in [{\cal G}^*]$ which satisfies all clauses in
$\cal K$.  Below, we give a definition of equivalence for two sets of DCCs, which allows us to discuss the consistency of a DCC set via its equivalent reduced form.

%For a DCC $\kappa$, we have

% \begin{proposition}[Unit consistency]\label{pro:unitconsistency}
	%   a  DCC  $\kappa$ is consistent with ${\cal G}^*$ if and only if $\kappa_h\cap (sib(\kappa_t,\mathcal{G}^*)\cup ch(\kappa_t,\mathcal{G}^*))\neq \varnothing$.
	% \end{proposition}

% Proposition \ref{pro:unitconsistency} gives the consistency condition for a unit DCC.  Below, we give a definition of equivalence of two sets of  DCCs such that we can discuss the consistency of a set of  DCCs via its equivalent reduced  form.

\begin{definition}[Equivalence] \label{def:equiv}
	Given a CPDAG $\mathcal{G}^*$,  two sets $\mathcal{K}_1$ and $\mathcal{K}_2$ of DCCs  over ${\bf V}(\mathcal{G}^*)$  are equivalent with respect to $\mathcal{G}^*$ if $[\mathcal{G}^*, \mathcal{K}_1]=[\mathcal{G}^*, \mathcal{K}_2]$.
\end{definition}

Let $\kappa:= \kappa_t\tor \kappa_h$ be a DCC. Denoting  $[\mathcal{G}^*, \{\kappa\}]$   by $[\mathcal{G}^*, \kappa]$    for convenience and assuming that $\kappa$ is over ${\bf V}(\mathcal{G}^*)$, the following proposition gives several  equivalent reduced forms of $\kappa$.

\begin{proposition}\label{fact3} For any CPDAG $\mathcal{G}^*$ and any direct causal clause $\kappa:= \kappa_t\tor \kappa_h$, we have that {(i)}  $[\mathcal{G}^*, \kappa]=[\mathcal{G}^*]$ if $ch(\kappa_t,\mathcal{G}^*)\cap \kappa_h\neq \varnothing $,
	{(ii)} $[\mathcal{G}^*, \kappa]=[\mathcal{G}^*,\kappa_t\tor (\kappa_h\cap adj(\kappa_t,\mathcal{G}^*))]$,
	and {(iii)} $[\mathcal{G}^*, \kappa]=[\mathcal{G}^*,\kappa_t\tor (\kappa_h\setminus pa(\kappa_t,\mathcal{G}^*)) ]$.
\end{proposition}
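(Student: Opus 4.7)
The plan is to reduce all three claims to the disjunctive decomposition $\kappa \iff \bigvee_{D\in \kappa_h} (\kappa_t \to D)$ provided by Proposition~\ref{fact1}, and then identify which individual disjuncts are either forced to be true or forced to be false on every DAG in $[\mathcal{G}^*]$. The two invariants of a Markov equivalence class I would lean on throughout are: (a) every $\mathcal{G} \in [\mathcal{G}^*]$ has the same skeleton as $\mathcal{G}^*$, and (b) every directed edge of $\mathcal{G}^*$ appears with the same orientation in every $\mathcal{G} \in [\mathcal{G}^*]$ (by Definition~\ref{cpdag}). Once these are in place, each part becomes a bookkeeping step that removes vacuous or trivially satisfied disjuncts from $\kappa_h$.

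For (i), if some $D \in ch(\kappa_t, \mathcal{G}^*) \cap \kappa_h$ exists, then $\kappa_t \to D$ is a directed edge of $\mathcal{G}^*$, so by (b) it appears in every $\mathcal{G} \in [\mathcal{G}^*]$. The corresponding disjunct in the decomposition of $\kappa$ is therefore true in every such $\mathcal{G}$, which forces $\kappa$ itself to hold for every $\mathcal{G} \in [\mathcal{G}^*]$, giving $[\mathcal{G}^*, \kappa] = [\mathcal{G}^*]$. For (ii), take any $D \in \kappa_h \setminus adj(\kappa_t, \mathcal{G}^*)$. By (a), $D$ is non-adjacent to $\kappa_t$ in every $\mathcal{G} \in [\mathcal{G}^*]$, so the disjunct $\kappa_t \to D$ is identically false on $[\mathcal{G}^*]$; dropping all such vacuous disjuncts replaces $\kappa_h$ by $\kappa_h \cap adj(\kappa_t, \mathcal{G}^*)$ without changing the set of DAGs that satisfy the disjunction. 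For (iii), take any $D \in \kappa_h \cap pa(\kappa_t, \mathcal{G}^*)$. The directed edge $D \to \kappa_t$ lies in $\mathcal{G}^*$ and hence, by (b), in every $\mathcal{G} \in [\mathcal{G}^*]$; acyclicity of $\mathcal{G}$ then rules out $\kappa_t \to D$, so the disjunct $\kappa_t \to D$ is again identically false on $[\mathcal{G}^*]$ and can be dropped, replacing $\kappa_h$ by $\kappa_h \setminus pa(\kappa_t, \mathcal{G}^*)$.

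There is no real obstacle: the three statements decompose into essentially independent observations about individual disjuncts, and the whole argument is a one-line appeal to Proposition~\ref{fact1} combined with the CPDAG invariants and DAG acyclicity. The only bit of care needed is to verify that the reduced head sets in (ii) and (iii) are still legal heads in the sense of Definition~\ref{def:dcc}, i.e.\ that $\kappa_t$ remains outside them; this is automatic, since $\kappa_t \notin \kappa_h$ by hypothesis and the operations only remove elements from $\kappa_h$.
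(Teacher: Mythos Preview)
Your argument is correct and is essentially the same as the paper's: the paper simply says the proof ``follows directly from the definition of DCC,'' and the paragraph following the proposition in the main text spells out exactly the three observations you make about which disjuncts are forced true or forced false across $[\mathcal{G}^*]$. Your use of Proposition~\ref{fact1} is just a restatement of that definition, so there is no meaningful difference in approach.
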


In Proposition \ref{fact3}, the first result holds since every ${\cal G}\in [\mathcal{G}^*]$ satisfies $ \kappa$ when $ch(\kappa_t,\mathcal{G}^*) \cap \kappa_h\neq \varnothing $. Therefore, a DCC with $ch(\kappa_t,\mathcal{G}^*)\cap \kappa_h\neq \varnothing $ is redundant for ${\cal G}^*$.
For any ${\cal G}\in [\mathcal{G}^*]$, $\cal G$ satisfies $\kappa$ if and only if there exists at least a variable $D\in \kappa_h$ such that $\kappa_t\to D$ appears in $\cal G$. If such a variable $D$ exists, it must be adjacent to $\kappa_t$, so  the second equation  holds. Similarly, the third equation  holds since $D$ must not a direct cause of $\kappa_t$. %As a result, an equivalent reduced  form of a  DCC $\kappa:=  \kappa_t\tor \kappa_h$ is   $\kappa_t\tor (\kappa_h \cap ( sib(\kappa_t,\mathcal{G}^*)\cup ch(\kappa_t,\mathcal{G}^*)))$.
Consequently, we give a reduce form of a set of DCCs $\cal K$ in Definition \ref{def:reducedf}.

% When $\mathbf{D}=\{D\}$ is a singleton set,  $X\tor\mathbf{D}$ is equivalent to the proposition that the direct edge $ X\to {D}$ appears in $\cal G$.
%  For ease of presentation, we will use  $X\tor D$ or  $ X\to {D}$  to replace $X\tor \{D\}$  and use them interchangeably  in the following paper.

%Below, we discuss the consistency of a   set of causal clauses $\cal K$ 　and the equivalence of two  sets of causal clauses ${\cal K}_1$ and ${\cal K}_2$ given  a CPDAG ${\cal G}^*$.

%Using DCCs, we can provide a condition under which the pairwise causal constraints are consistent with a CPDAG. By Theorem~\ref{thm:nbr_set_const}, it suffices to consider the consistency of DCCs, which is defined as follows.
\begin{definition}[Reduced Form]\label{def:reducedf}
	Given a CPDAG  ${\cal G}^*$ and a set  of DCCs $\cal K$ over ${\bf V}(\mathcal{G}^*)$, a reduced form of $\cal K$ with respect to ${\cal G}^*$, denoted by ${\cal K}(\mathcal{G}^*)$, is defined as follows.
	\begin{equation}\label{eq:n_u}
		{\cal K}(\mathcal{G}^*)\coloneqq\{\kappa_t\tor \left(\kappa_h \cap sib(\kappa_t, \mathcal{G}^*)\right)  \mid  \kappa \in {\cal K} \;\text{and}\; \kappa_h\cap ch(\kappa_t, \mathcal{G}^*)=\varnothing\}.
	\end{equation}
	Specifically, ${\cal K}(\mathcal{G}^*)\coloneqq\varnothing$ if ${\cal K} = \varnothing$.
\end{definition}

%\begin{definition}[Skeleton Condition, Non-Parental Condition, and Validity] \label{def:consistency}
%  Let $\mathcal{G}^*$ be a CPDAG and $\cal K$ be a set of DCCs. We say that $\cal K$ satisfies the skeleton condition with respect to $\mathcal{G}^*$ if $\mathbf{D}\subseteq adj(X, \mathcal{G}^*)$ for any $X\tor \mathbf{D}\in{\cal K}$, and that $\cal K$ satisfies the non-parental condition with respect to $\mathcal{G}^*$ if $\mathbf{D}\nsubseteq pa(X, \mathcal{G}^*)$ for any $X\tor \mathbf{D}\in{\cal K}$. If both conditions hold for $\cal K$ with respect to $\mathcal{G}^*$, then $\cal K$ is called valid with respect to $\mathcal{G}^*$.
%\end{definition}

%It is clear that any $\cal K$ consistent with $\mathcal{G}^*$ is valid. Given a valid $\cal K$ with respect to $\mathcal{G}^*$, $X\tor \mathbf{D}\in{\cal K}$ is equivalent to $X\tor \mathbf{D}\setminus pa(X, \mathcal{G}^*) $ with respect to $\mathcal{G}^*$ since $X$ is definitely not a direct cause of its parents.

\begin{proposition}[Equivalent Reduced Form]\label{pro:reducedf}
	Given a CPDAG $\mathcal{G}^*$ and a set of DCCs $\cal K$ over ${\bf V}(\mathcal{G}^*)$, we have that $\cal K$ is equivalent to ${\cal K}(\mathcal{G}^*)$ defined in Equation (\ref{eq:n_u})  with respect to $\mathcal{G}^*$.
\end{proposition}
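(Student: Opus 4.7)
The plan is to reduce the proposition to a clause-by-clause consequence of Proposition~\ref{fact3}. First I would note the obvious identity
\[
[\mathcal{G}^*, \mathcal{K}] \;=\; \bigcap_{\kappa\in\mathcal{K}} [\mathcal{G}^*, \kappa],
\]
which holds because a DAG in $[\mathcal{G}^*]$ satisfies $\mathcal{K}$ iff it satisfies every single DCC in $\mathcal{K}$; the same identity holds for $\mathcal{K}(\mathcal{G}^*)$. Thus it is enough to match the two intersections, and this can be done DCC by DCC.

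Next I would split $\mathcal{K}$ into two disjoint groups according to the defining condition of Equation~(\ref{eq:n_u}). For any $\kappa\in\mathcal{K}$ with $\kappa_h\cap ch(\kappa_t,\mathcal{G}^*)\neq\varnothing$, Proposition~\ref{fact3}(i) gives $[\mathcal{G}^*,\kappa]=[\mathcal{G}^*]$, so $\kappa$ is a vacuous constraint and is correctly discarded by the definition of $\mathcal{K}(\mathcal{G}^*)$. For any $\kappa\in\mathcal{K}$ with $\kappa_h\cap ch(\kappa_t,\mathcal{G}^*)=\varnothing$, I would chain Proposition~\ref{fact3}(ii) and then Proposition~\ref{fact3}(iii) to obtain
\[
[\mathcal{G}^*,\kappa] \;=\; \bigl[\mathcal{G}^*,\,\kappa_t\tor\bigl((\kappa_h\cap adj(\kappa_t,\mathcal{G}^*))\setminus pa(\kappa_t,\mathcal{G}^*)\bigr)\bigr].
\]

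The small computational step that remains is to simplify the head of this reduced clause. Since $\mathcal{G}^*$ is a CPDAG, the adjacency set decomposes disjointly as $adj(\kappa_t,\mathcal{G}^*)=pa(\kappa_t,\mathcal{G}^*)\,\dot\cup\,ch(\kappa_t,\mathcal{G}^*)\,\dot\cup\,sib(\kappa_t,\mathcal{G}^*)$, so
\[
(\kappa_h\cap adj(\kappa_t,\mathcal{G}^*))\setminus pa(\kappa_t,\mathcal{G}^*) \;=\; \kappa_h\cap\bigl(ch(\kappa_t,\mathcal{G}^*)\cup sib(\kappa_t,\mathcal{G}^*)\bigr),
\]
which, under the standing assumption $\kappa_h\cap ch(\kappa_t,\mathcal{G}^*)=\varnothing$, collapses exactly to $\kappa_h\cap sib(\kappa_t,\mathcal{G}^*)$, the head prescribed by Equation~(\ref{eq:n_u}).

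Putting the two groups back together, the surviving clauses in $\mathcal{K}$ match the clauses of $\mathcal{K}(\mathcal{G}^*)$ one-to-one with equal induced restricted equivalence classes, while the discarded clauses contribute the full set $[\mathcal{G}^*]$ and are absorbed by the intersection; this yields $[\mathcal{G}^*,\mathcal{K}]=[\mathcal{G}^*,\mathcal{K}(\mathcal{G}^*)]$, i.e., equivalence in the sense of Definition~\ref{def:equiv}. I do not expect any real obstacle here: the statement is essentially a bookkeeping corollary of Proposition~\ref{fact3}, and the only point worth stating explicitly is the disjoint decomposition of $adj(\kappa_t,\mathcal{G}^*)$ in a CPDAG, which is what makes the two successive reductions leave precisely the siblings of $\kappa_t$ intersected with $\kappa_h$.
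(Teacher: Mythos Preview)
Your proposal is correct and is precisely the intended argument: the paper's own proof of this proposition is the single line ``The proof follows from Proposition~\ref{fact3},'' and you have simply spelled out that derivation in full, using parts (i), (ii), (iii) of Proposition~\ref{fact3} together with the disjoint decomposition $adj(\kappa_t,\mathcal{G}^*)=pa\cup ch\cup sib$ in a CPDAG.
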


Proposition \ref{pro:reducedf} shows that the reduced form of  $\cal K$  is equivalent to $\cal K$ with respect to ${\cal G}^*$.  Below, we define a subset  of $\cal K$ restricted on an undirected induced subgraph of ${\cal G}^*$.

\begin{definition}[Restriction Subset]\label{def:restrict}
	Suppose that $\mathcal{U}$ is an undirected induced subgraph of a CPDAG $\mathcal{G}^*$ over $\mathbf{V}(\mathcal{U})\subseteq {\bf V}(\mathcal{G}^*)$, and $\cal K$ is a set of DCCs over ${\bf V}(\mathcal{G}^*)$. The restriction subset of $\cal K$   on $\mathcal{U}$ is defined by
	%    Given an undirected induced subgraph $\mathcal{U}$ of a CPDAG $\mathcal{G}^*$ over $\mathbf{V}(\mathcal{U})\subseteq {\bf V}(\mathcal{G}^*)$, and a set of DCCs $\cal K$ over ${\bf V}(\mathcal{G}^*)$, the restriction subset of $\cal K$   on $\mathcal{U}$ is defined by
	\begin{equation}\label{eq:n_uu}
		{\cal K}(\mathcal{U})\coloneqq\{\kappa\in {\cal K}(\mathcal{G}^*) \mid \{\kappa_t\}\cup\kappa_h\subseteq \mathbf{V}(\mathcal{U})\}.
	\end{equation}
\end{definition}

It can be seen that ${\cal K}(\mathcal{G}^*)={\cal K}(\mathcal{G}^*_u)$. Basically, ${\cal K}(\mathcal{U})$ consists of all clauses in ${\cal K}(\mathcal{G}^*)$ whose tail and heads are all in $\mathcal{U}$. The following concept of a potential leaf node is of key importance in checking consistency of a set of DCCs.

\begin{definition}[Potential Leaf Node]\label{def:pln}
	Let $\mathcal{G}^*$ be a CPDAG and $\cal K$ be a set of DCCs over ${\bf V}(\mathcal{G}^*)$. Given an undirected induced subgraph $\mathcal{U}$ of $\mathcal{G}^*$ and a vertex $X$ in $\mathcal{U}$, $X$ is called a potential leaf node in $\mathcal{U}$ with respect to $\cal K$ and $\mathcal{G}^*$, if $X$ is a simplicial vertex in $\mathcal{U}$ and $X$ is not the tail of any clause in ${\cal K}(\mathcal{U})$.
\end{definition}

We note that if $\mathcal{U}=(\{X\}, \varnothing)$ only contains a singleton $X$, then $X$ is trivially a potential leaf node in  $\mathcal{U}$ with respect to any $\cal K$ that does not contain any DCC of the form $X\tor\varnothing$. A leaf node of a directed graph is a vertex that has no child. Analogously, a potential leaf node defined above is a vertex that may have no child in at least one ${\cal G}$ in $[\mathcal{G}^*, {\cal K}]$ (see Lemma~\ref{lem:app-pln} in Appendix for more details). Now, we present the sufficient and necessary condition for a set of DCCs $\cal K$ to be consistent with ${\cal G}^*$.

\begin{theorem}\label{thm:consistency}
	Let $\mathcal{G}^*$ be a CPDAG and $\cal K$ be a set of DCCs over ${\bf V}(\mathcal{G}^*)$. Then, the following two statements are equivalent.
	\begin{enumerate}
		\item[(i)]  $\cal K$ is consistent with $\mathcal{G}^*$.
		\item[(ii)]  {\rm (Potential-leaf-node condition)} Any connected undirected induced subgraph of $\mathcal{G}^*$ has a potential leaf node with respect to ${\cal K}$ and $\mathcal{G}^*$.
	\end{enumerate}
\end{theorem}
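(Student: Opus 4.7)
The plan is to prove each implication separately, leveraging Proposition~\ref{pro:reducedf} to replace $\mathcal{K}$ by its reduced form $\mathcal{K}(\mathcal{G}^*)$. A key preliminary observation is that within any chain component of $\mathcal{G}^*$ all edges are undirected (otherwise a partially directed cycle would arise in the chain graph $\mathcal{G}^*$), and that every reduced DCC has tail and heads confined to a single chain component (since its heads lie in $sib(\kappa_t,\mathcal{G}^*)$).

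For $(i)\Rightarrow(ii)$, I would fix $\mathcal{G}\in[\mathcal{G}^*,\mathcal{K}]$ and a connected undirected induced subgraph $\mathcal{U}$. The preliminary observation implies $\mathcal{U}$ sits inside a single chain component and no directed edge of $\mathcal{G}^*$ connects two vertices of $\mathbf{V}(\mathcal{U})$, so $\mathcal{U}$ is a connected chordal undirected graph whose inherited orientation in $\mathcal{G}$ has no v-structures (a new v-structure would violate $\mathcal{G}\in[\mathcal{G}^*]$). Taking any sink $X$ of this inherited DAG, its parents are exactly $adj(X,\mathcal{U})$, which must be pairwise adjacent to avoid a v-structure at $X$; hence $X$ is simplicial in $\mathcal{U}$. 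If some $\kappa\in\mathcal{K}(\mathcal{U})$ had tail $X$, its reduced head set would lie in $sib(X,\mathcal{G}^*)\cap\mathbf{V}(\mathcal{U})=adj(X,\mathcal{U})$, but none of those vertices is a child of $X$ in $\mathcal{G}$, contradicting that $\mathcal{G}$ satisfies $\kappa$. So $X$ is a potential leaf node of $\mathcal{U}$.

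For $(ii)\Rightarrow(i)$, since the reduced DCCs decompose across chain components, I would orient each chain component $\mathcal{U}$ independently and then glue with the pre-existing directed edges of $\mathcal{G}^*$. Within $\mathcal{U}$, I would induct on $|\mathbf{V}(\mathcal{U})|$: pick a potential leaf node $X$, orient every edge of $\mathcal{U}$ at $X$ as pointing into $X$, and pass to $\mathcal{U}'=\mathcal{U}(\mathbf{V}(\mathcal{U})\setminus\{X\})$. Simpliciality of $X$ keeps $\mathcal{U}'$ connected (any path through $X$ can be shortcut via the clique $adj(X,\mathcal{U})$) and chordal, and the potential-leaf-node hypothesis passes unchanged to subgraphs of $\mathcal{U}'$ since the restriction $\mathcal{K}(\mathcal{W})$ is identical whenever $X\notin\mathbf{V}(\mathcal{W})$. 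The inductive orientation of $\mathcal{U}'$, pasted with the into-$X$ orientation, is v-structure-free on $\mathcal{U}$ (no new v-structure at $X$ since its parents form a clique), satisfies clauses with $X$ in the head (because $\kappa_t\in adj(X,\mathcal{U})$ is now oriented $\kappa_t\to X$), satisfies clauses not involving $X$ by induction, and has no clause with tail $X$. Gluing across chain components preserves acyclicity (the chain-component DAG structure of $\mathcal{G}^*$ is respected and each intra-component orientation is a DAG) and introduces no v-structures (by Meek's R1, any configuration $A\to B-C$ in $\mathcal{G}^*$ already forces $A$ and $C$ to be adjacent).

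The main obstacle I anticipate is the bookkeeping in the inductive step---specifically, verifying that simplicial removal preserves connectedness and chordality of $\mathcal{U}$, and confirming that clauses with $X$ in the head are automatically satisfied after orienting edges into $X$. Both hinge on the reduction: heads of reduced DCCs lie in $sib(\kappa_t,\mathcal{G}^*)$, so a clause containing $X$ as a head forces $\kappa_t$ to be a $\mathcal{U}$-neighbor of $X$, which is precisely the configuration our orientation turns into $\kappa_t\to X$.
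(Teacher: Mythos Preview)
Your proposal is correct and follows essentially the same approach as the paper's proof: for $(i)\Rightarrow(ii)$ both you and the paper take a sink of the induced DAG over $\mathbf{V}(\mathcal{U})$ and argue it is simplicial (no new v-structure) and not the tail of any clause in $\mathcal{K}(\mathcal{U})$; for $(ii)\Rightarrow(i)$ both reduce to chain components and build the orientation by successively peeling off potential leaf nodes (the paper phrases this as constructing a PEO, you phrase it as an induction on $|\mathbf{V}(\mathcal{U})|$, but these are the same argument). The paper packages the necessity step and the chain-component reduction into separate lemmas, whereas you inline them, and your explicit check that removing a simplicial vertex keeps $\mathcal{U}'$ connected is something the paper simply asserts (``$\mathcal{C}_2$ is clearly connected''), but none of this constitutes a genuine difference in strategy.
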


The proof of Theorem~\ref{thm:consistency} also motivates a polynomial-time algorithm for checking consistency of a set of DCCs. The details are given in Section~\ref{sec:sec:algoconsistent}. The potential-leaf-node condition given in Theorem~\ref{thm:consistency} is similar to the fact that any induced subgraph of a DAG has a leaf node. Below we give an example to demonstrate this result.

\begin{example} %[continued]
	Recall that in Example \ref{ex:cbk_vs_nbr} we show that, with respect to the CPDAG ${\cal G}^*$ (Figure \ref{fig:1-1}), $A\longarrownot\dashrightarrow Y$ is equivalent to $C \tor A$ and $B \tor A$, and $D\dashrightarrow Y$ is equivalent to $D\tor \{A, X\}$. Suppose that we have both $A\longarrownot\dashrightarrow Y$ and $D\dashrightarrow Y$, then the equivalent DCCs $\cal K$ consists of $C \tor A$, $B \tor A$ and $D \tor \{A, X\}$. However, since $\{D, A, X, B\}$ induces an undirected subgraph where none of the vertices is a potential leaf node with respect to $\cal K$ and ${\cal G}^*$, $\cal K$ is inconsistent by Theorem \ref{thm:consistency}.
\end{example}

Next, we give    sufficient and necessary conditions under which two DCC sets, ${\cal K}_1$ and ${\cal K}_2$, are equivalent with respect to a CPDAG $\mathcal{G}^*$. We will show that these conditions can be expressed both in terms of consistency and redundancy, where the latter is defined as follows.

% The consistency of DCCs can also be similarly defined.
% Moreover, we can prove the equivalence of causal background knowledge and DCCs. More precisely,
%\begin{definition}[Equivalency] \label{def:eqbg}
%    Given a CPDAG $\mathcal{G}^*$, a set of DCCs (or pairwise causal constraints) $\mathcal{N}_1$ is equivalent to another set of DCCs (or pairwise causal constraints) $\mathcal{N}_2$, if $[\mathcal{G}^*, \mathcal{N}_1]=[\mathcal{G}^*, \mathcal{N}_2]$.
%\end{definition}

\begin{definition}[Redundancy] \label{def:eqbg}
	Given a restricted Markov equivalence class $[{\cal G}^*,{\cal K}]$ induced by a CPDAG ${\cal G}^*$ and a set ${\cal K}$ of DCCs  over ${\bf V}(\mathcal{G}^*)$, a DCC $\kappa$  over ${\bf V}(\mathcal{G}^*)$ is   redundant with respect to $[{\cal G}^*,{\cal K}]$ if  $[{\cal G}^*,{\cal K}]= [{\cal G}^*,{\cal K}\cup \{\kappa\}]$. {A set ${\cal K}$ of DCCs is redundant with respect to a CPDAG ${\cal G}^*$ if there exists at least one $\kappa\in{\cal K}$ that is redundant with respect to $[{\cal G}^*,{\cal K}\setminus\{\kappa\}]$. Otherwise, the set ${\cal K}$ is non-redundant.}
\end{definition}

{
	In the remainder of the paper, when the context is clear, we may occasionally omit ${\cal G}^*$ and simply say that “a DCC $\kappa$ is redundant with respect to ${\cal K}$” for brevity. If $\kappa_t\to s$ appears in ${\cal G}^*$ for some $s\in\kappa_h$, then $\kappa$ is redundant with respect to any DCC set. If $\cal K$ is inconsistent with ${\cal G}^*$, then any DCC $\kappa$ is redundant with respect to $[{\cal G}^*,{\cal K}]$, as $[{\cal G}^*,{\cal K}]= [{\cal G}^*,{\cal K}\cup \{\kappa\}]=\varnothing$. However, an inconsistent DCC set ${\cal K}$ itself may not be redundant.
}

According to Definition \ref{def:eqbg}, a DCC $\kappa$ is  redundant with respect to  $[{\cal G}^*,{\cal K}]$ if and only if  $\kappa$ holds for all DAGs in  $[{\cal G}^*,{\cal K}]$. With this concept, the following Theorem~\ref{thm:equi_dcc} discusses the equivalence of two sets of DCCs.

\begin{theorem}\label{thm:equi_dcc}
	Given a CPDAG $\mathcal{G}^*$ and two sets of DCCs ${\cal K}_1$ and ${\cal K}_2$ over ${\bf V}(\mathcal{G}^*)$, the following statements are equivalent.
	\begin{enumerate}
		\item[(i)]   ${\cal K}_1$ and ${\cal K}_2$ are equivalent given $\mathcal{G}^*$.
		\item[(ii)]   Every DCC  in ${\cal K}_1$, if exists, is redundant with respect to $[\mathcal{G}^*, {\cal K}_2]$, and every DCC in ${\cal K}_2$, if exists, is redundant with respect to $[\mathcal{G}^*, {\cal K}_1]$.
		\item[(iii)]  For every $\kappa\in {\cal K}_1$,  $\cup_{D\in {\kappa_h}}\{D\to \kappa_t\}\cup {\cal K}_2$ is not consistent with ${\cal G}^*$, and for every $\kappa\in {\cal K}_2$,  $\cup_{D\in {\kappa_h}}\{D\to \kappa_t\}\cup {\cal K}_1$ is not consistent with ${\cal G}^*$.
	\end{enumerate}
\end{theorem}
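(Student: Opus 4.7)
The plan is to prove the three statements equivalent by establishing $(i)\Leftrightarrow(ii)$ and $(ii)\Leftrightarrow(iii)$ in turn. I would begin with a normalization step: apply Proposition~\ref{pro:reducedf} to replace $\mathcal{K}_1$ and $\mathcal{K}_2$ by their reduced forms $\mathcal{K}_1(\mathcal{G}^*)$ and $\mathcal{K}_2(\mathcal{G}^*)$, so that every DCC $\kappa=\kappa_t\tor\kappa_h$ in play satisfies $\kappa_h\subseteq sib(\kappa_t,\mathcal{G}^*)$. This ensures each direct causal constraint $\{D\to\kappa_t\}$ appearing in (iii) is individually consistent with $\mathcal{G}^*$ and sets up the dichotomy used in the main step below.

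For $(i)\Leftrightarrow(ii)$, I would simply unwind Definitions~\ref{def:equiv} and~\ref{def:eqbg}. For $(i)\Rightarrow(ii)$: if $[\mathcal{G}^*,\mathcal{K}_1]=[\mathcal{G}^*,\mathcal{K}_2]$, then any $\kappa\in\mathcal{K}_1$ is satisfied by every DAG in $[\mathcal{G}^*,\mathcal{K}_2]=[\mathcal{G}^*,\mathcal{K}_1]$, so $[\mathcal{G}^*,\mathcal{K}_2\cup\{\kappa\}]=[\mathcal{G}^*,\mathcal{K}_2]$ and $\kappa$ is redundant; the symmetric argument handles $\kappa\in\mathcal{K}_2$. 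For $(ii)\Rightarrow(i)$: redundancy of every $\kappa\in\mathcal{K}_1$ with respect to $[\mathcal{G}^*,\mathcal{K}_2]$ gives $[\mathcal{G}^*,\mathcal{K}_2]=[\mathcal{G}^*,\mathcal{K}_2\cup\mathcal{K}_1]\subseteq[\mathcal{G}^*,\mathcal{K}_1]$, and the reverse inclusion follows by symmetry.

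For $(ii)\Leftrightarrow(iii)$, the central observation is: once $\kappa_h\subseteq sib(\kappa_t,\mathcal{G}^*)$, each edge $\kappa_t - D$ with $D\in\kappa_h$ is undirected in $\mathcal{G}^*$, and hence in any $\mathcal{G}\in[\mathcal{G}^*]$ it is oriented as exactly one of $\kappa_t\to D$ or $D\to\kappa_t$. It follows that $\mathcal{G}$ fails $\kappa$ (i.e., $\kappa_h\cap ch(\kappa_t,\mathcal{G})=\varnothing$) if and only if $D\to\kappa_t$ holds in $\mathcal{G}$ for \emph{every} $D\in\kappa_h$. Therefore $\kappa\in\mathcal{K}_1$ is redundant with respect to $[\mathcal{G}^*,\mathcal{K}_2]$, meaning no $\mathcal{G}\in[\mathcal{G}^*,\mathcal{K}_2]$ fails $\kappa$, if and only if no $\mathcal{G}\in[\mathcal{G}^*,\mathcal{K}_2]$ simultaneously satisfies every constraint in $\bigcup_{D\in\kappa_h}\{D\to\kappa_t\}$, which is precisely $[\mathcal{G}^*,\mathcal{K}_2\cup\bigcup_{D\in\kappa_h}\{D\to\kappa_t\}]=\varnothing$ from (iii); the $\kappa\in\mathcal{K}_2$ half is symmetric.

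The step I expect to be the main obstacle is the ``iff'' inside the central observation, specifically the direction that failing $\kappa$ forces $D\to\kappa_t$ for every $D\in\kappa_h$ simultaneously. This dichotomy hinges on the reduced-form assumption $\kappa_h\subseteq sib(\kappa_t,\mathcal{G}^*)$: a head in $ch(\kappa_t,\mathcal{G}^*)$ would make $\kappa$ hold automatically, and a head outside $adj(\kappa_t,\mathcal{G}^*)$ would have no $\kappa_t - D$ edge to orient. Proposition~\ref{pro:reducedf} is what allows me to discard such heads at the outset, so the genuine technical content is verifying that this reduction is harmless for all three statements---a routine check once the equivalence $[\mathcal{G}^*,\mathcal{K}]=[\mathcal{G}^*,\mathcal{K}(\mathcal{G}^*)]$ is in hand.
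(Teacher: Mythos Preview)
Your core argument is the paper's: both turn on showing that $\kappa$ is redundant with respect to $[\mathcal{G}^*,\mathcal{K}]$ if and only if $\bigcup_{D\in\kappa_h}\{D\to\kappa_t\}\cup\mathcal{K}$ is inconsistent with $\mathcal{G}^*$. But the step you singled out as ``the main obstacle'' and then called routine is in fact where the argument breaks. The reduction is \emph{not} harmless for statement~(iii). Take $\mathcal{G}^*$ with a single undirected edge $X-Z$ and an isolated vertex $Y$; let $\mathcal{K}_1=\{X\tor\{Y,Z\}\}$ and $\mathcal{K}_2=\varnothing$. Then $[\mathcal{G}^*,\mathcal{K}_1]$ contains only the DAG with $X\to Z$, while $[\mathcal{G}^*,\mathcal{K}_2]=[\mathcal{G}^*]$ contains two DAGs, so (i) and (ii) fail. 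Yet (iii) holds for the original $\mathcal{K}_1$: the set $\{Y\to X,\,Z\to X\}\cup\mathcal{K}_2$ is inconsistent simply because $Y\notin adj(X,\mathcal{G}^*)$ makes $Y\to X$ impossible in every DAG of $[\mathcal{G}^*]$. After reduction, $\mathcal{K}_1(\mathcal{G}^*)=\{X\tor\{Z\}\}$ and $\{Z\to X\}$ \emph{is} consistent, so (iii) for the reduced set disagrees with (iii) for the original.

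The root cause is precisely what your dichotomy needs but does not have for arbitrary $\kappa$: a head $D\notin adj(\kappa_t,\mathcal{G}^*)$ makes $D\to\kappa_t$ vacuously inconsistent in (iii) while contributing nothing to redundancy in (ii). The paper's own proof shares this weak link---its chain uses the step $\forall\,\mathcal{G},\ ch(\kappa_t,\mathcal{G})\cap\kappa_h\neq\varnothing\Longleftrightarrow\forall\,\mathcal{G},\ \kappa_h\nsubseteq pa(\kappa_t,\mathcal{G})$, which fails for such $D$---so as literally stated the theorem appears to require the extra hypothesis $\kappa_h\subseteq adj(\kappa_t,\mathcal{G}^*)$ for every $\kappa\in\mathcal{K}_1\cup\mathcal{K}_2$. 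This is automatic for the DCCs produced by Theorem~\ref{thm:nbr_set_const} and is exactly the reduced-form condition; under that hypothesis your argument and the paper's coincide and both go through.
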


Theorem~\ref{thm:equi_dcc} builds the relations among equivalence, redundancy and consistency. Moreover, using the third statement of Theorem~\ref{thm:equi_dcc}, we can determine whether two sets of DCCs are equivalent by checking the consistency of a series of DCC sets.

{
	
	We end this subsection by showing that two equivalent non-redundant consistent DCC sets must have the same number of DCCs. The following definition serves as a prerequisite.
	
	\begin{definition}[Minimal Redundancy]\label{def:minimal_dcc}
		Given a restricted Markov equivalence class %
		\linebreak $[{\cal G}^*,{\cal K}]$ induced by a CPDAG ${\cal G}^*$ and a consistent set ${\cal K}$ of DCCs over ${\bf V}(\mathcal{G}^*)$, a DCC $\kappa$  over ${\bf V}(\mathcal{G}^*)$ is minimally redundant with respect to $[{\cal G}^*,{\cal K}]$ if $\kappa$ is redundant with respect to $[{\cal G}^*,{\cal K}]$, but either $|\kappa_h| = 1$, or for any proper subset ${\bf s} \subsetneq \kappa_h$, the DCC $\kappa_t \tor {\bf s}$ is not redundant with respect to $[{\cal G}^*, {\cal K}]$.
	\end{definition}
	
	With this definition, we give the following theorem that summarizes key properties of   equivalent non-redundant consistent DCC sets over a CPDAG.

	%\begin{theorem}\label{thm:equal_card}
	%	Suppose that ${\cal G}^*$ is a CPDAG and ${\cal K}$, ${\cal K}'$ are two equivalent non-redundant DCC sets over ${\bf V}(\mathcal{G}^*)$. Then,
	%	\begin{enumerate}
		%	    \item[(i)] $|{\cal K}|=|{\cal K}'|$, and
		%	    \item[(ii)] The set of tails of the DCCs in ${\cal K}$ is identical to that of  ${\cal K}'$. That is,
		%	    \[\{\tau_t\mid \tau\in{\cal K}\}=\{\tau'_t\mid \tau'\in{\cal K}'\}.\]
		%	\end{enumerate}
	%\end{theorem}

	\begin{theorem}\label{thm:equal_card}
		Suppose that ${\cal G}^*$ is a CPDAG and ${\cal K}$ is a non-redundant consistent DCC set over ${\bf V}(\mathcal{G}^*)$. Then:
		\begin{enumerate}
			\item[(i)] There exists a unique non-redundant DCC set ${\cal K}'$ over ${\bf V}(\mathcal{G}^*)$ such that ${\cal K}$ is equivalent to ${\cal K}'$ and $\kappa'$ is minimally redundant with respect to $[{\cal G}^*,{\cal K}]$ for any $\kappa'\in {\cal K}'$. Moreover, for every $\kappa\in{\cal K}$, there exists a unique $\kappa'\in{\cal K}'$ such that $\kappa'_t=\kappa_t$ and $\kappa'_h\subseteq\kappa_h$; likewise, for every $\kappa'\in{\cal K}'$, there exists a unique $\kappa\in{\cal K}$ such that $\kappa'_t=\kappa_t$ and $\kappa'_h\subseteq\kappa_h$.
			\item[(ii)] For every non-redundant DCC set ${\cal K}'$ over ${\bf V}(\mathcal{G}^*)$ which is equivalent to ${\cal K}$, it holds that $|{\cal K}|=|{\cal K}'|$ and $\{\kappa_t\mid \kappa\in{\cal K}\}=\{\kappa'_t\mid \kappa'\in{\cal K}'\}$.
		\end{enumerate}
	\end{theorem}

	%\begin{theorem}\label{thm:equal_card} Let ${\cal G}^*$ be a CPDAG, and let ${\cal K}$ be a
	%non-redundant consistent DCC set over ${\bf V}(\mathcal{G}^*)$. Then:
	
	%\begin{enumerate}
	% \item[(i)] \textbf{Uniqueness of Minimal Representation:}
	
	%   There exists a unique non-redundant DCC set ${\cal K}'$ equivalent to ${\cal K}$, where each $\kappa' \in {\cal K}'$ is minimally redundant relative to $[{\cal G}^*,{\cal K}]$. Specifically:
	%\begin{itemize}
	%  \item     For every $\kappa \in {\cal K}$, there is a unique $\kappa' \in {\cal K}'$ such that $\kappa'_t = \kappa_t$ and $\kappa'_h \subseteq \kappa_h$.
	% \item    Conversely, for every $\kappa' \in {\cal K}'$, there is a unique $\kappa \in {\cal K}$ satisfying $\kappa'_t = \kappa_t$ and $\kappa'_h \subseteq \kappa_h$.
	%\end{itemize}
	%  \item[(ii)] \textbf{Invariance of Size and Tails:}
	
	%  For any non-redundant DCC set ${\cal K}''$ equivalent to ${\cal K}$, the following hold:
	%  \begin{itemize}
		%    \item $|{\cal K}| = |{\cal K}''|$.
		%   \item The set of tails $\{\kappa_t \mid \kappa \in {\cal K}\}$ is identical to $\{\kappa''_t \mid \kappa'' \in {\cal K}''\}$.
		%  \end{itemize}
	
	%\end{enumerate}
	%\end{theorem}

	By definition, a non-redundant DCC set is subset-minimal, meaning that none of its subsets can be inferred from the others. Furthermore, Theorem~\ref{thm:equal_card} establishes that a non-redundant DCC set is also cardinality-minimal, implying that the cardinality of the DCC set is the smallest among all equivalent DCC sets. Finally, the first statement of Theorem~\ref{thm:equal_card} guarantees the existence and uniqueness of a non-redundant and equivalent DCC set that is element-wise head-minimal, meaning that every DCC in the set has the smallest possible head.
}

\subsection{Equivalent Decomposition of Pairwise Causal Constraints}\label{sec:sec:info}
{Let $\mathcal{G}^*$ be a CPDAG, $\cal B$ be a set of consistent pairwise causal constraints, Theorem~\ref{thm:nbr_set_const} proves that $\cal B$ can be equivalently represented by a set of DCCs $\cal K$. Let $\cal H$ be the MPDAG of $[\mathcal{G}^*, {\cal B}]$. By Definition~\ref{interpretbg:mpdagresclass} of MPDAGs, $\cal H$ represents all common direct causal relations shared by all DAGs in  $[\mathcal{G}^*,{\cal B}]$. Due to the equivalence, $[\mathcal{G}^*, {\cal B}]=[\mathcal{G}^*, {\cal K}]=[{\cal H}, {\cal K}]$, where $[{\cal H}, {\cal K}]$ denotes the subset of $[{\cal H}]$ consisting of DAGs satisfying all clauses in ${\cal K}$. However, given ${\cal H}$, some of the DCCs in ${\cal K}$ may become redundant. The following result proves that any set of consistent pairwise causal constraints can be equivalently decomposed into an MPDAG and a non-redundant residual set of DCCs.}

%Combining Theorem~\ref{thm:nbr_set_const}, Definition~\ref{interpretbg:mpdagresclass} and Theorem~\ref{thm:equal_card}, we have an equivalent decomposition of pairwise causal constraints.

%\begin{proposition}[Equivalent Decomposition]\label{prop:eqdecomp}
%    Let $\mathcal{G}^*$ be a CPDAG, $\cal B$ be a set of consistent pairwise causal constraints, and $\cal H$ be the MPDAG of $[\mathcal{G}^*, {\cal B}]$. Then, there exists a    set of DCCs ${\cal R}$ such that $[\mathcal{G}^*,\cal B]=[{\cal H}, {\cal R}]$, where $[{\cal H}, {\cal R}]$ denotes the subset of $[{\cal H}]$ consisting of DAGs satisfying all clauses in ${\cal R}$, and $\cal R$ is minimal in the sense that   $[\mathcal{G}^*,\cal B]\subsetneq [{\cal H}, {\cal R}']$ for any   ${\cal R}' \subsetneq {\cal R}$.
%\end{proposition}

{
	\begin{theorem}[Equivalent Decomposition]\label{prop:eqdecomp}
		Let $\mathcal{G}^*$ be a CPDAG, $\cal B$ be a set of consistent pairwise causal constraints, and $\cal H$ be the MPDAG of $[\mathcal{G}^*, {\cal B}]$. Then:	
		\begin{enumerate}
			\item[(i)] There exists a DCC set ${\cal R}$ such that $[\mathcal{G}^*,\cal B]=[{\cal H}, {\cal R}]$, where none of the DCC $\kappa\in{\cal R}$ is redundant with respect to ${\bf E}_d({\cal H})\cup\left({\cal R}\setminus\{\kappa\}\right)$.
			
			\item[(ii)] Every DCC set ${\cal R}$ such that $[\mathcal{G}^*, \cal B] = [{\cal H}, {\cal R}]$ and none of the DCC $\kappa\in{\cal R}$ is redundant with respect to ${\bf E}_d({\cal H})\cup\left({\cal R}\setminus\{\kappa\}\right)$ contains the same number of DCCs.
			
			\item[(iii)] There exists a unique DCC set ${\cal R}^*$ such that (1) $[\mathcal{G}^*, \cal B] = [{\cal H}, {\cal R}^*]$, (2) none of the DCC $\kappa\in{\cal R}^*$ is redundant with respect to ${\bf E}_d({\cal H})\cup\left({\cal R}^*\setminus\{\kappa\}\right)$, and (3) every $\kappa\in {\cal R}^*$, if ${\cal R}^*\neq\varnothing$, is minimally redundant with respect to $[{\cal G}^*,{\cal B}]$.
			
			\item[(iv)] For any DCC set ${\cal R}$ such that $[\mathcal{G}^*,\cal B]=[{\cal H}, {\cal R}]$ and none of the DCC $\kappa\in{\cal R}$ is redundant with respect to ${\bf E}_d({\cal H})\cup\left({\cal R}\setminus\{\kappa\}\right)$, ${\cal R}\neq\varnothing$ if and only if ${\cal R}^*\neq\varnothing$, and when ${\cal R}\neq\varnothing$, it holds that, (1) for every $\kappa\in{\cal R}$, there exists a unique $\kappa^*\in{\cal R}^*$ such that $\kappa^*_t=\kappa_t$ and $\kappa^*_h\subseteq\kappa_h$, and (2) for every $\kappa^*\in{\cal R}^*$, there exists a unique $\kappa\in{\cal R}$ such that $\kappa^*_t=\kappa_t$ and $\kappa^*_h\subseteq\kappa_h$. Here, ${\cal R}^*$ is the unique DCC set defined in (iii).
		\end{enumerate}
		
	\end{theorem}
}

%	\begin{enumerate}
	%	\item[(i)] there exists a non-redundant DCC set ${\cal R}$ such that $[\mathcal{G}^*,\cal B]=[{\cal H}, {\cal R}]$, where $[{\cal H}, {\cal R}]$ denotes the subset of $[{\cal H}]$ consisting of DAGs satisfying all clauses in ${\cal R}$,
	%	\item[(ii)] every non-redundant DCC set ${\cal R}$ such that $[\mathcal{G}^*,\cal B]=[{\cal H}, {\cal R}]$ has the same number of DCCs and the same set of tails, and
	%	\item[(iii)] there exists a unique non-redundant DCC set ${\cal R}$ such that $[\mathcal{G}^*,\cal B]=[{\cal H}, {\cal R}]$ and $\gamma$ is minimally redundant with respect to $[{\cal G}^*,{\cal B}]$ for any $\gamma\in {\cal R}$.
	%\end{enumerate}
	%any directed edge in the MPDAG of $[\mathcal{G}^*, {\cal R}]$ is also in $\cal H$.
	
	% The goal of this subsection is to find a minimal subset ${\cal K}'$ of ${\cal K}$ that is also a residual set. That is, $\mathbf{E}_d({\cal H})\cup {\cal K}'$ is equivalent to $\mathbf{E}_d({\cal H})\cup {\cal K}$ with respect to ${\cal G}^*$. Here, "minimal" means that there does not exists another ${\cal K}''\subsetneq{\cal K}'$ which is still a residual set. Such a minimal subset can be found in a sequential manner. Given a CPDAG ${\cal G}^*$ and a set of consistent DCCs $\cal K$, a DCC $X\tor \mathbf{D}$ is called \emph{redundant} given $\cal K$ and ${\cal G}^*$ if $X\tor \mathbf{D}$ holds for all DAGs in $[{\cal G}^*, {\cal K}]$. Observing that
	
	{Given a CPDAG, Theorem~\ref{prop:eqdecomp} indicates that any set of pairwise causal constraints can be equivalently decomposed into the MPDAG of the induced restricted Markov equivalence class (which is unique), and a residual set of DCCs in which every DCC is non-redundant with respect to the other DCCs and the directed edges in the MPDAG. Although the decomposed residual sets of DCCs are not unique, they all contain the same number of DCCs, which means they are all minimal in terms of cardinality. Moreover, the third statement of Theorem~\ref{prop:eqdecomp} establishes the existence and uniqueness of an element-wise head-minimal residual set of DCCs where all DCCs are minimally redundant, meaning that each DCC in the set contains the minimum possible number of heads, thereby generalizing Theorem~\ref{thm:MSPrep} in Section~\ref{sec:sec:cha}. Finally, the last statement characterizes the relationship between the element-wise head-minimal residual set and the other residual sets.}
	
	Note that, since the MPDAG $\cal H$ contains all direct causal edges that appear in all DAGs in $[{\cal G}^*,{\cal B}]$, every directed edge in the MPDAG of $[\mathcal{G}^*, {\cal R}]$ is also in $\cal H$. That is, ${\cal R}$ cannot bring more directed causal edges other than those in $\cal H$.

	%Moreover, all the decomposed non-redundant residual set of DCCs have the same number of DCCs, and thus every residual DCC set is not only subset-minimal but also cardinality-minimal.
	
	%unique subset-minimal, cardinality-minimal and element-wise minimal residual set of DCCs, in which every DCC has more than one head. Moreover, since the MPDAG $\cal H$ contains all direct causal edges that appear in all DAGs in $[{\cal G}^*,{\cal B}]$, any directed edge in the MPDAG of $[\mathcal{G}^*, {\cal R}]$ is also in $\cal H$. That is, ${\cal R}$ cannot bring more directed causal edges other than those in $\cal H$.

	When the residual set of DCCs is empty, the MPDAG of the induced restricted Markov equivalence class is fully informative. As mentioned in Section~\ref{sec:sec:interpretbg}, a sufficient condition that guarantees the emptiness of a residual set is when $\cal B$ only contains direct and non-ancestral causal constraints. Yet, this condition is not necessary, as shown in the following example.
	
	\begin{figure}[!t]
		%\vspace{-1em}
		\centering
		\subfloat[A CPDAG ${\cal G}^*$ \label{fig:fi1}]{
			\begin{minipage}[t]{0.28\linewidth}
				\centering
				\includegraphics[width=0.63\linewidth]{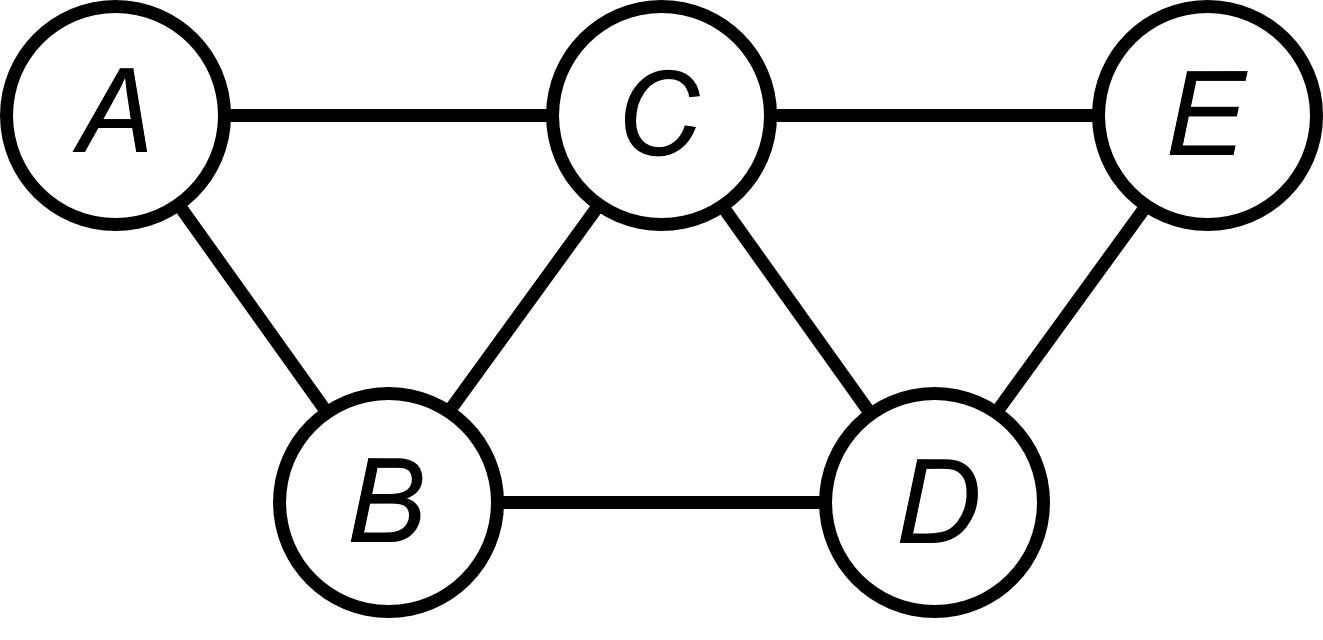}
				%\caption{fig1}
			\end{minipage}%
		}%
		\hspace{0.01\linewidth}
		\subfloat[${\cal G}^*$ and $\cal K$ \label{fig:fi2}]{
			\begin{minipage}[t]{0.28\linewidth}
				\centering
				\includegraphics[width=0.63\linewidth]{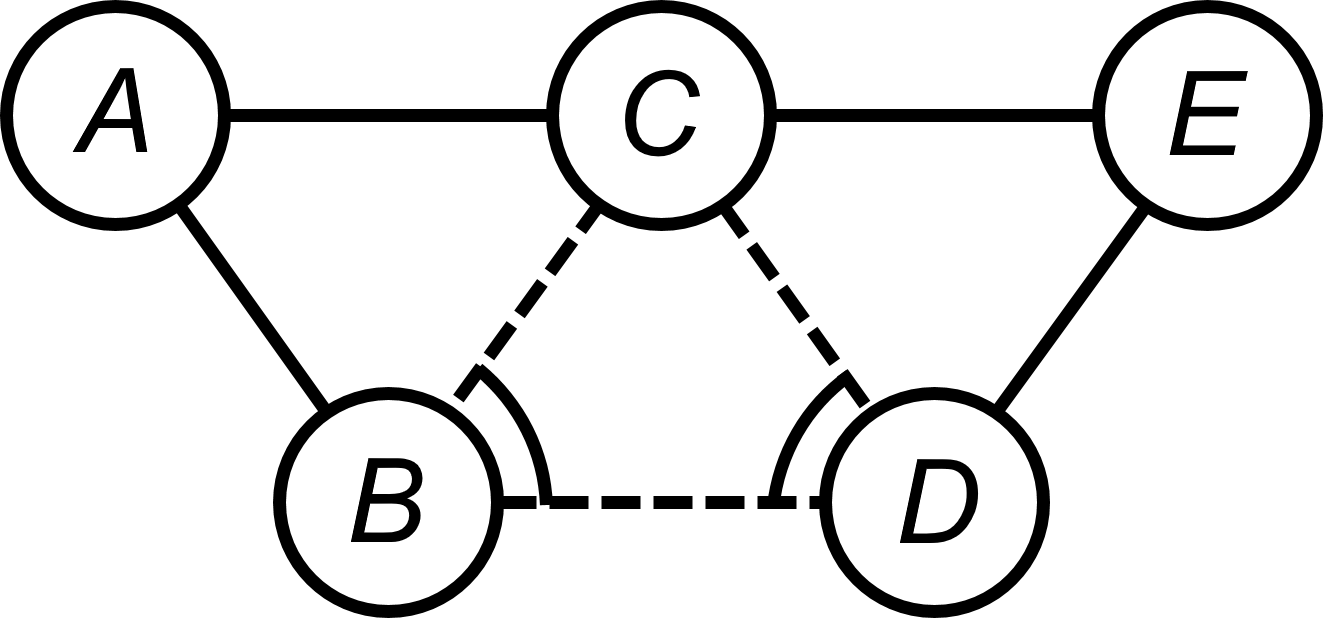}
				%\caption{fig1}
			\end{minipage}%
		}%
		\hspace{0.01\linewidth}
		\subfloat[The MPDAG $\cal H$  \label{fig:fi3}]{
			\begin{minipage}[t]{0.28\linewidth}
				\centering
				\includegraphics[width=0.63\linewidth]{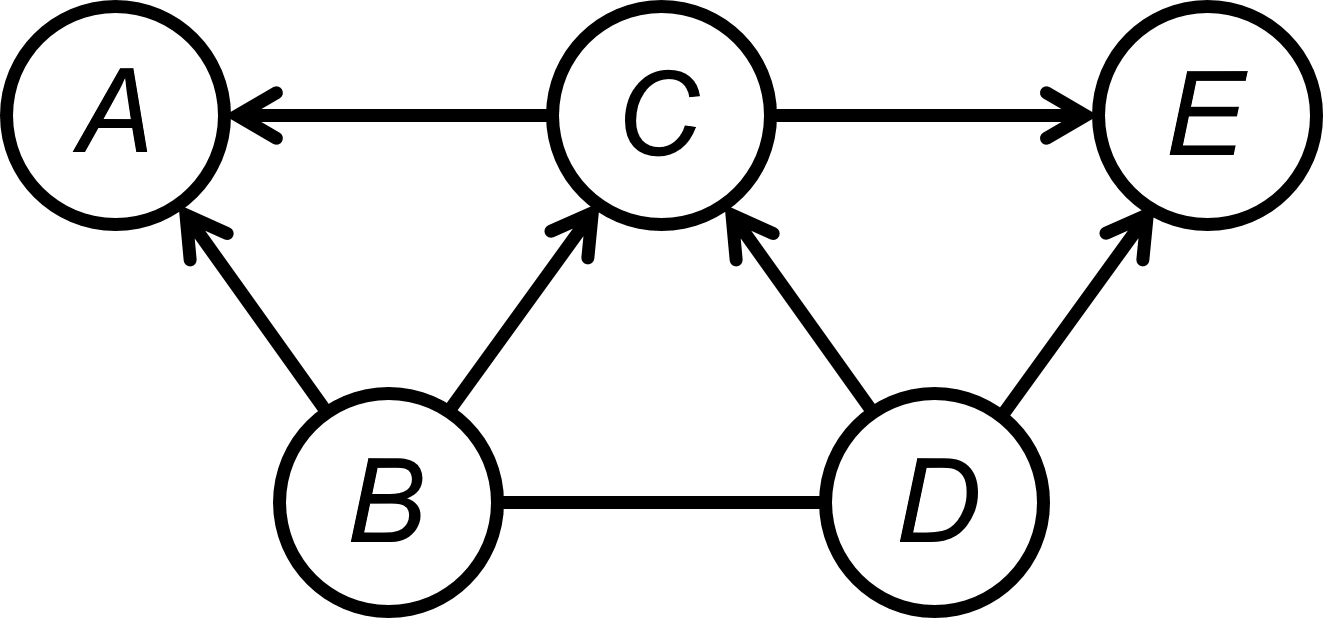}
				%\caption{fig1}
			\end{minipage}%
		}%
		
		\caption{An example of a fully informative MPDAG.}
		\label{fig:fully_informative}
		%\vspace{-2em}
	\end{figure}

	\begin{example}
		Figure~\ref{fig:fi1} shows a CPDAG ${\cal G}^*$. Consider two ancestral causal constraints $B\dashrightarrow E$ and $D\dashrightarrow A$. By Theorems~\ref{thm:nbr_set_const} and \ref{prop:eqdecomp},  $B\dashrightarrow E$ and $D\dashrightarrow A$ are equivalent to $B\tor\{C,D\}$ and $D\tor\{B,C\}$, respectively, which are visualized by arcs in Figure~\ref{fig:fi2}. Let ${\cal K}=\{B\tor\{C,D\}, D\tor\{B,C\}\}$.  We first find the MPDAG $\cal H$ of $[{\cal G}^*, {\cal K}]$. If there is a DAG ${\cal G}\in[{\cal G}^*, {\cal K}]$ where $C\to B$ is in $\cal G$, then by the constraint $B\tor\{C,D\}$, $B \to D$ is in $\cal G$. Likewise, by the constraint $D\tor\{B,C\}$, $D\to C$ is in $\cal G$. However, $C\to B\to D\to C$ is a directed cycle. Therefore, every DAG in $[{\cal G}^*, {\cal K}]$ should have $B\to C$, implying that $B\to C$ is in $\cal H$. Similarly, $D\to C$  is in $\cal H$. Finally, by Meek's rules we obtain the MPDAG shown in Figure~\ref{fig:fi3}. Notice that $[\cal H]$ only contains two DAGs: one has $B\to D$ and the other has $D\to B$. Since both DAGs satisfy the constraints $B\dashrightarrow E$ and $D\dashrightarrow A$, $[\cal H] \subseteq [\mathcal{G}^*, {\cal B}]$. On the other hand, as mentioned in Section~\ref{sec:sec:interpretbg} that $[\mathcal{G}^*, {\cal B}] \subseteq [\cal H]$, it holds that $[\cal H] = [\mathcal{G}^*, {\cal B}]$. Therefore, $\cal H$ is fully informative.
	\end{example}

	% To prove a sufficient and necessary condition under which an MPDAG is fully informative, we need extend the definition of fully informativeness first.
	
	% \begin{defbis}{def:fullyinfo}[Fully Informative MPDAG]
		% The MPDAG $\cal H$ of a non-empty induced Markov equivalence class $[{\cal G}^*, {\cal K}]$ is called fully informative if $[\cal H]=[{\cal G}^*, {\cal K}]$.
		% \end{defbis}

	Below, we present the necessary and sufficient conditions for an MPDAG to be fully informative.

	\begin{theorem}\label{thm:h_equals_n}
		Suppose that ${\cal G}^*$ is a CPDAG, ${\cal K}$ is a set of consistent DCCs, and $\cal H$ is the MPDAG of $[{\cal G}^*, {\cal K}]$. Then, the following statements are equivalent.
		\begin{enumerate}
			\item[(i)]  $\cal H$ is fully informative with respect to ${\cal K}$ and ${\cal G}^*$.
			\item[(ii)] {$[{\cal H}, \varnothing]=[{\cal G}^*, {\cal B}]$.}
			\item[(iii)]  For any $ \kappa \in {\cal K}$, either $\kappa_h \cap sib(\kappa_t, {\cal H})$ induces an incomplete subgraph of ${\cal H}$, or $\kappa_h \cap ch(\kappa_t, {\cal H})\neq\varnothing$.
		\end{enumerate}
	\end{theorem}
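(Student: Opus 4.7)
\textbf{Proof plan for Theorem \ref{thm:h_equals_n}.} Since $[\mathcal{G}^*, \mathcal{K}] \subseteq [\mathcal{H}]$ always holds (as observed after Definition~\ref{interpretbg:mpdagresclass}, with the analogous statement for DCCs noted after Theorem~\ref{thm:nbr_set_const}), $\mathcal{H}$ is fully informative if and only if every DAG in $[\mathcal{H}]$ satisfies every $\kappa \in \mathcal{K}$. I will prove the two implications of the biconditional separately.

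For $(ii) \Rightarrow (i)$, I fix an arbitrary $\mathcal{G} \in [\mathcal{H}]$ and $\kappa \in \mathcal{K}$ and verify that $\kappa_h \cap ch(\kappa_t, \mathcal{G}) \neq \varnothing$. In the easy case $\kappa_h \cap ch(\kappa_t, \mathcal{H}) \neq \varnothing$, and the intersection remains nonempty in $\mathcal{G}$ since $\mathcal{G}$ retains every directed edge of $\mathcal{H}$. Otherwise, condition (ii) forces $\mathbf{S} := \kappa_h \cap sib(\kappa_t, \mathcal{H})$ to induce an incomplete subgraph of $\mathcal{H}$, so I pick $D_1, D_2 \in \mathbf{S}$ nonadjacent in $\mathcal{H}$ (and hence in $\mathcal{G}$, which shares the skeleton). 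The undirected edges $D_1 - \kappa_t$ and $D_2 - \kappa_t$ get oriented in $\mathcal{G}$; if both pointed into $\kappa_t$, then $(D_1, \kappa_t, D_2)$ would be a v-structure present in $\mathcal{G}$ but absent from $\mathcal{H}$, contradicting $\mathcal{G} \in [\mathcal{H}]$. Hence at least one of $D_1, D_2$ lies in $ch(\kappa_t, \mathcal{G}) \cap \kappa_h$.

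For $(i) \Rightarrow (ii)$, I argue by contrapositive: suppose some $\kappa \in \mathcal{K}$ satisfies $\kappa_h \cap ch(\kappa_t, \mathcal{H}) = \varnothing$ and has $\mathbf{S} := \kappa_h \cap sib(\kappa_t, \mathcal{H})$ inducing a complete subgraph of $\mathcal{H}$, and I construct $\mathcal{G} \in [\mathcal{H}]$ that violates $\kappa$. The target is a DAG in $[\mathcal{H}]$ in which each edge $\kappa_t - D$ for $D \in \mathbf{S}$ is oriented as $D \to \kappa_t$. Granting such $\mathcal{G}$, every $D \in \kappa_h$ falls into one of four disjoint cases in $\mathcal{H}$: it is a parent of $\kappa_t$ (still a parent in $\mathcal{G}$), it lies in $\mathbf{S}$ (a parent by construction), it is nonadjacent to $\kappa_t$ (still nonadjacent in $\mathcal{G}$), or it is a child of $\kappa_t$ (excluded by assumption). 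In every case $D \notin ch(\kappa_t, \mathcal{G})$, so $\mathcal{G} \in [\mathcal{H}] \setminus [\mathcal{G}^*, \mathcal{K}]$ witnesses non-informativeness. The main obstacle is establishing existence of such $\mathcal{G}$, which I phrase as a local orientation lemma: for any MPDAG $\mathcal{H}$, vertex $v$, and clique $\mathbf{S} \subseteq sib(v, \mathcal{H})$, some $\mathcal{G} \in [\mathcal{H}]$ has $\mathbf{S} \subseteq pa(v, \mathcal{G})$. I would derive this by extending the locally valid orientation rule for MPDAGs (\citet[Theorem~1]{fang2020bgida}, which in turn extends Lemma~3.1 of \citet{maathuis2009estimating}); the key structural inputs are the chordal skeleton of the B-component containing $\kappa_t$ guaranteed by Theorem~\ref{the:MPDAG}(ii) and the clique property of $\mathbf{S}$, which together rule out new v-structures at $\kappa_t$ and allow the partial orientation $\{D \to \kappa_t : D \in \mathbf{S}\}$ to extend consistently across the remaining undirected edges without introducing directed cycles or new v-structures.
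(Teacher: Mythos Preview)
Your proposal is correct and mirrors the paper's proof almost exactly: the $(ii)\Rightarrow(i)$ direction uses the same v-structure/Meek's-rule-1 argument on two nonadjacent heads, and the $(i)\Rightarrow(ii)$ direction hinges on precisely the local orientation lemma you isolate---that a clique $\mathbf{S}\subseteq sib(v,\mathcal{H})$ can always be made into parents of $v$ in some $\mathcal{G}\in[\mathcal{H}]$---which the paper states and proves as a separate lemma (Lemma~\ref{lem:app:clique}) using \citet[Theorem~1]{fang2020bgida} in exactly the way you suggest. The only thing to be aware of is that establishing this lemma is not entirely routine: the paper's argument iteratively enlarges $\mathbf{S}$ to a maximal clique and handles the obstruction that some $C\in sib(v,\mathcal{H})\setminus\mathbf{S}$ may have $C\to S$ in $\mathcal{H}$, so your sketch there would need to be fleshed out along those lines.
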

	
	%A DCC holding for all DAGs in $[\mathcal{H}]$ is called \emph{redundant} given $\mathcal{H}$. Basically, fully informativeness indicates that all DCCs in $\cal K$ are redundant given the MPDAG.
	We note that, in Section~\ref{sec:sec:algo_mpdag}, we give a polynomial-time algorithm to find the MPDAG given a CPDAG ${\cal G}^*$ and a set $\cal K$ of DCCs. With the found MPDAG, the condition in statement (iii) can also be verified in polynomial-time.

	\begin{example}
		\label{ex:not_fi}
		Figure~\ref{fig:nfi-1} shows a CPDAG ${\cal G}^*$ and a set of DCCs $\cal K$ consisting of $D\tor \{E, B\}$, $E\tor \{A, C\}$, $E\tor \{B, F\}$ and $G\tor \{B, H\}$, and Figure~\ref{fig:nfi-2} shows the MPDAG of $[{\cal G}^*, \cal K]$. (How to find this MPDAG is left to Example~\ref{ex:moc}, Section~\ref{sec:sec:algo_mpdag}). Since $E \to A$, $E \to F$ and $G\to H$ are in the MPDAG, $E\tor \{A, C\}$, $E\tor \{B, F\}$ and $G\tor \{B, H\}$ are redundant given $\cal H$. However, $D\tor \{B, E\}$ is not redundant, as $\{B, E\}$ induces a complete subgraph of $\cal H$. Therefore, in this example, the MPDAG $\cal H$ is not fully informative.
	\end{example}
	
	\begin{figure}[!h]
		% \vspace{-3em}
		\centering
		\subfloat[${\cal G}^*$ and $\cal K$ \label{fig:nfi-1}]{
			\begin{minipage}[t]{0.31\linewidth}
				\centering
				\includegraphics[width=0.7\linewidth]{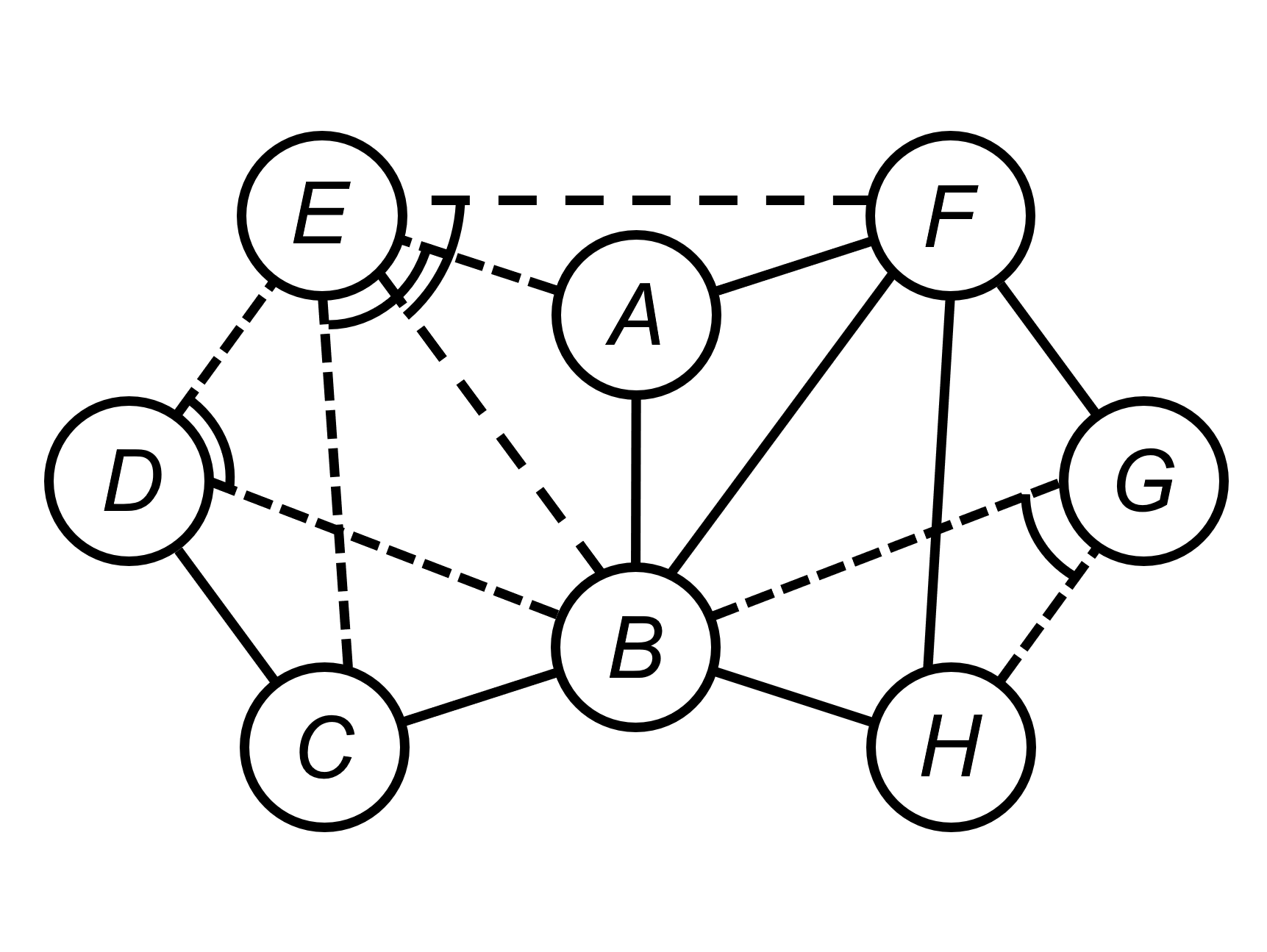}
				%\caption{fig1}
			\end{minipage}%
		}%
		\hspace{0.01\linewidth}
		\subfloat[The MPDAG $\cal H$  \label{fig:nfi-2}]{
			\begin{minipage}[t]{0.31\linewidth}
				\centering
				\includegraphics[width=0.7\linewidth]{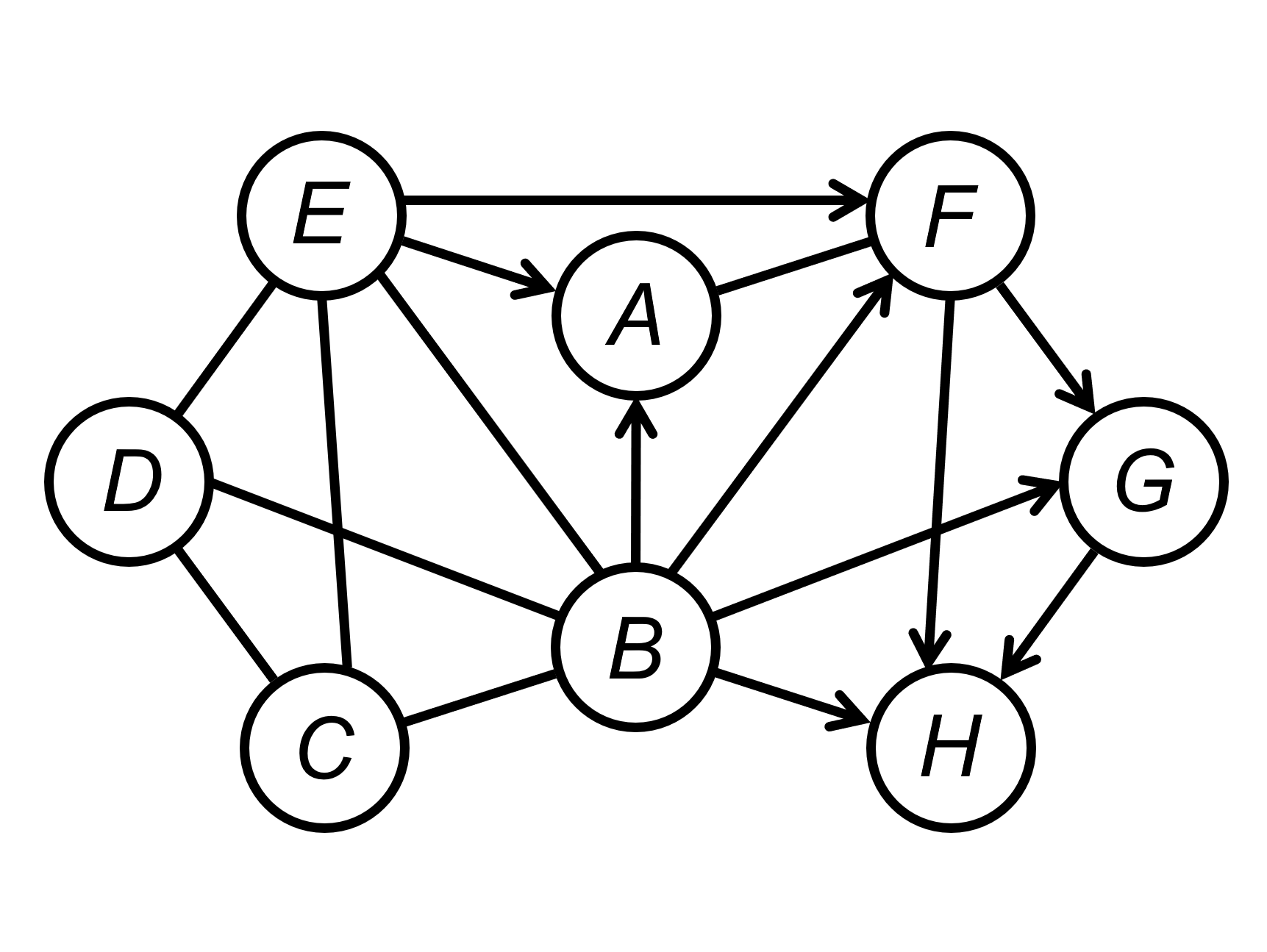}
				%\caption{fig1}
			\end{minipage}%
		}%
		\hspace{0.01\linewidth}
		\subfloat[$D\tor \{B, E\}$ is not redundant  \label{fig:nfi-3}]{
			\begin{minipage}[t]{0.31\linewidth}
				\centering
				\includegraphics[width=0.7\linewidth]{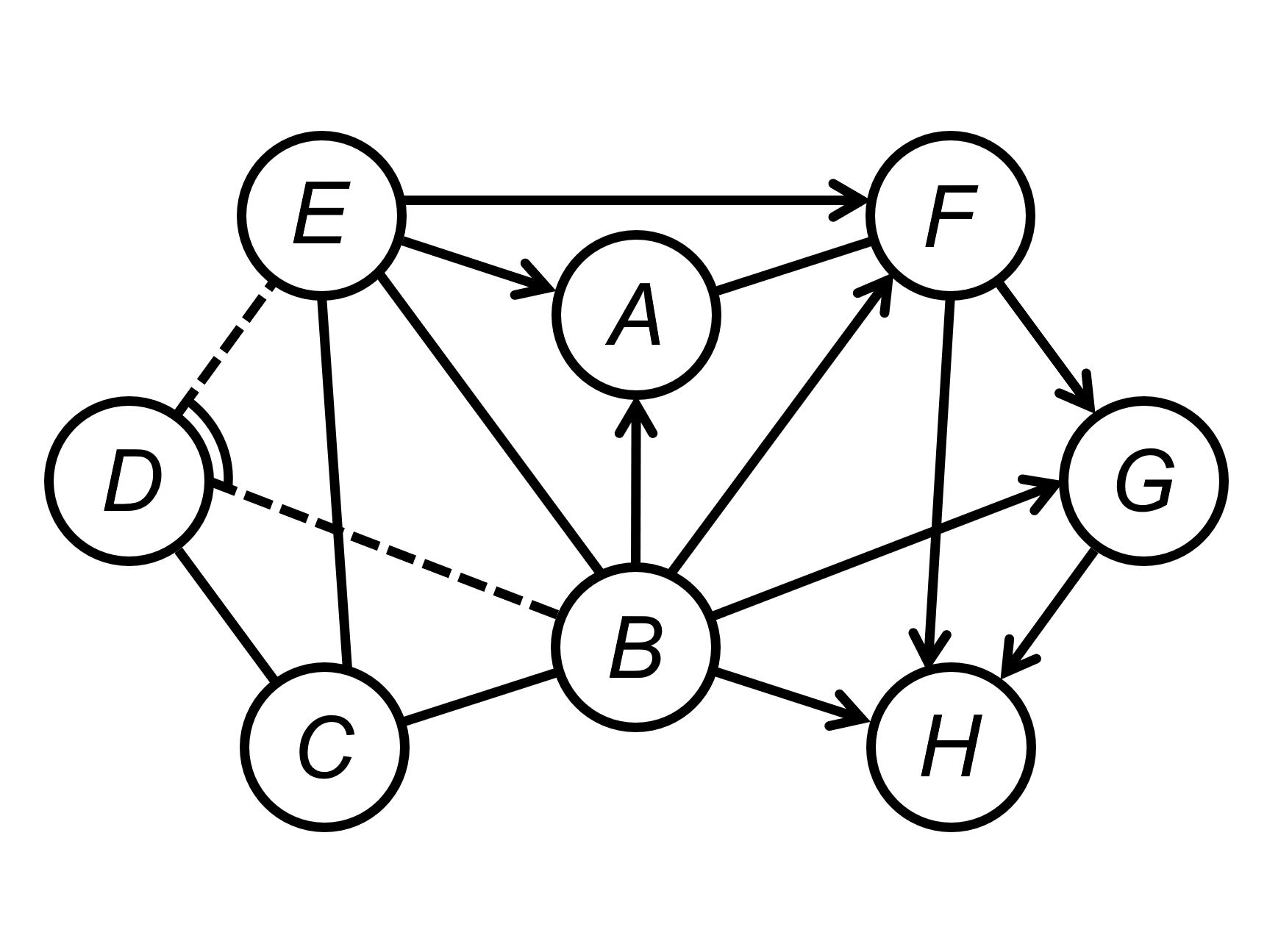}
				%\caption{fig1}
			\end{minipage}%
		}%
		
		\caption{An MPDAG which is not fully informative.}
		\label{fig:not_fi}
		% \vspace{-1em}
	\end{figure}

	% \begin{example}
		% 	Recall that in Example \ref{ex:moc} we find the maximal orientation component for $A$, as shown in Figure \ref{fig:3-5}. Similarly, using Algorithm \ref{algo:moc} we can find the maximal orientation components for $F$, $G$ and $H$ separately. The results are shown in Figures \ref{fig:4-1} to \ref{fig:4-3}, respectively, and the MPDAG is given in Figure \ref{fig:4-4}. Since $E \to A$, $E \to F$ and $G\to H$ are invariant , $E\tor \{A, C\}$, $E\tor \{B, F\}$ and $G\tor \{B, H\}$ are redundant given the induced MPDAG $\cal H$. However, $D\tor \{B, E\}$ is not redundant, as $\{B, E\}$ induces a complete subgraph of $\cal H$. Therefore, in this example, the induced MPDAG is not fully informative.
		% \end{example}
	
	Notice that, given a DCC $\kappa$, if $\kappa_h\cap sib(\kappa_t, {\cal H})$ induces an incomplete subgraph of ${\cal H}$, then $\kappa_h\cap sib(\kappa_t, {\cal G}^*)$ induces an incomplete subgraph of ${\cal G}^*$. Conversely, if $\kappa_h\cap sib(\kappa_t, {\cal G}^*)$ induces an incomplete subgraph of ${\cal G}^*$, then by Rule 1 of Meek's rules, $\kappa $ holds for all DAGs in $[{\cal G}^*]$, and thus $\kappa $ is redundant. For a given DCC set $\cal K$, removing the above redundant DCCs from the equivalent reduced form ${\cal K}({{\cal G}^*})$, we define a subset of ${\cal K}({{\cal G}^*})$ as follows.
	\begin{equation}\label{eq:n_u^c}
		{\cal K}^c(\mathcal{G}^*_u)\coloneqq\{ \kappa \mid  \kappa\in {\cal K}(\mathcal{G}^*_u) \text{ and the induced  subgraph }   \mathcal{G}^*({\kappa_h})   \text{ is complete}\}.
	\end{equation}
	Based on the above argument, we have the following corollary.
	
	\begin{corollary}\label{coro:fullyinfo}
		Let $\mathcal{H}$ be an MPDAG representing the restricted Markov equivalence class induced by a CPDAG ${\cal G}^*$ and a set $\cal K$ of consistent DCCs, then $[\cal H]=[{\cal G}^*, {\cal K}]$ if and only if for any $\kappa \in {\cal K}^c(\mathcal{G}^*_u)$,  $\kappa_h \cap ch(\kappa_t, {\cal H})\neq\varnothing$ holds.
	\end{corollary}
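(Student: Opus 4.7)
The plan is to reduce Corollary~\ref{coro:fullyinfo} to Theorem~\ref{thm:h_equals_n} in two steps: first replace $\cal K$ by an equivalent reduced set from which universally satisfied clauses have been dropped, then unpack what condition~(ii) of that theorem becomes on the surviving clauses.

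First, I would invoke Proposition~\ref{pro:reducedf} to replace $\cal K$ by its equivalent reduced form ${\cal K}(\mathcal{G}^*) = {\cal K}(\mathcal{G}^*_u)$; this alters neither $[\mathcal{G}^*, {\cal K}]$ nor the MPDAG $\cal H$, and by Definition~\ref{def:reducedf} every $\kappa \in {\cal K}(\mathcal{G}^*_u)$ already satisfies $\kappa_h \subseteq sib(\kappa_t, \mathcal{G}^*)$. The key lemma I would then establish is that every $\kappa \in {\cal K}(\mathcal{G}^*_u)\setminus {\cal K}^c(\mathcal{G}^*_u)$ is redundant with respect to $[\mathcal{G}^*]$ itself: if $\mathcal{G}^*(\kappa_h)$ is incomplete, choose non-adjacent $D_1, D_2 \in \kappa_h$; in every DAG $\mathcal{G} \in [\mathcal{G}^*]$ the undirected edges $\kappa_t - D_1$ and $\kappa_t - D_2$ must be oriented, and orienting both as $D_i \to \kappa_t$ would produce a new v-structure $D_1 \to \kappa_t \leftarrow D_2$ absent from $\mathcal{G}^*$, contradicting Markov equivalence. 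Hence at least one of these edges is $\kappa_t \to D_i$ in $\mathcal{G}$, so $\kappa$ holds for $\mathcal{G}$. Discarding these universally satisfied clauses yields $[\mathcal{G}^*, {\cal K}] = [\mathcal{G}^*, {\cal K}^c(\mathcal{G}^*_u)]$, with the same MPDAG $\cal H$.

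Second, I would apply Theorem~\ref{thm:h_equals_n} to ${\cal K}^c(\mathcal{G}^*_u)$: $\cal H$ is fully informative if and only if for every $\kappa \in {\cal K}^c(\mathcal{G}^*_u)$, either $\kappa_h \cap sib(\kappa_t, \mathcal{H})$ induces an incomplete subgraph of $\cal H$, or $\kappa_h \cap ch(\kappa_t, \mathcal{H}) \neq \varnothing$. However, by the definition of ${\cal K}^c(\mathcal{G}^*_u)$ the induced subgraph $\mathcal{G}^*(\kappa_h)$ is complete, and since $\cal H$ shares its skeleton with $\mathcal{G}^*$, any subset of $\kappa_h$, in particular $\kappa_h \cap sib(\kappa_t, \mathcal{H})$, induces a complete subgraph of $\cal H$. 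The first disjunct therefore always fails, and the biconditional collapses to $\kappa_h \cap ch(\kappa_t, \mathcal{H}) \neq \varnothing$, which is precisely the condition stated in the corollary.

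The main obstacle, although not lengthy, is the redundancy argument in the first step, which relies on the no-new-v-structure property of Markov equivalence applied simultaneously to two sibling edges of $\kappa_t$. Once that is in hand, the rest of the proof is a clean unpacking of the reduced-form and complete-subgraph definitions against Theorem~\ref{thm:h_equals_n}.
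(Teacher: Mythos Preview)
Your proposal is correct and matches the paper's argument essentially step for step. The paper's proof simply cites Theorem~\ref{thm:h_equals_n} and Equation~\eqref{eq:n_u^c}, but the paragraph preceding the corollary spells out exactly your two observations: clauses in ${\cal K}(\mathcal{G}^*_u)\setminus{\cal K}^c(\mathcal{G}^*_u)$ are redundant (the paper phrases it via Rule~1 of Meek's rules, you via the no-new-v-structure property of Markov equivalence, which are two sides of the same coin), and for clauses in ${\cal K}^c(\mathcal{G}^*_u)$ the completeness of $\mathcal{G}^*(\kappa_h)$ forces the first disjunct in Theorem~\ref{thm:h_equals_n}(ii) to fail.
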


	%\begin{example}
	%	{\color{red} TODO: an example to illustrate how to reduce a set}
	%\end{example}

	%  that is, the final outputted minimal subset is depend on the order of the removed redundant clauses.
	
	% Using an irreducible subset to represent the original neighbor set constraints is indeed compact, but in many situations, reducible set is more powerful. An important example is using an MPDAG to represent a set of direct causal background knowledge. In most cases, the set of direct causal background knowledge implied by an MPDAG is reducible. Nevertheless, as an MPDAG encodes all invariant direct causal relations in a restricted Markov equivalence class, the capacity outweighs the irreducibility.
	
	\subsection{Non-Pairwise Causal Background Knowledge}\label{sec:sec:non-pair}
	
	To end this section, we present some discussions on non-pairwise causal background knowledge. Non-pairwise causal background knowledge is also common in practice. For example, \emph{tiered background knowledge} is non-pairwise. Suppose that $\mathbf{T}\coloneqq\{\mathbf{V}_1, \mathbf{V}_2,...,\mathbf{V}_n\}$ defines a disjoint partition of the variable set $\mathbf{V}$. The tiered background knowledge is a proposition saying that for all $X_i\in \mathbf{V}_i$ and $X_j\in \mathbf{V}_j$ such that $1\leq i\leq j\leq n$, either $X_i\dashrightarrow X_j$ or $X_i$ is not adjacent to $X_j$~\citep{Andrews2020tiered}. Nonetheless, most non-pairwise causal background knowledge can be viewed as Boolean combinations of pairwise causal background knowledge. For instance, tiered background knowledge can be interpreted and denoted by
	\[\mathrm{tbk}^{\mathbf{T}}\coloneqq\bigwedge_{\substack{X_i\in \mathbf{V}_i,\, X_j\in \mathbf{V}_j, \\ 1\leq i\leq j\leq n}} X_i\dashrightarrow X_j \vee \left(X_i\nrightarrow X_j \wedge X_j\nrightarrow X_i\right).\]
	Moreover, given a CPDAG ${\cal G}^*$, as $X_i\nrightarrow X_j \wedge X_j\nrightarrow X_i$ holds for all DAGs in $[{\cal G}^*]$ if and only if $X_i\notin adj(X_j, {\cal G}^*)$, we have
	\[\mathrm{tbk}^{\mathbf{T}} \iff \bigwedge_{\substack{X_i\in \mathbf{V}_i,\, X_j\in \mathbf{V}_j, \\ X_i \in adj(X_j, {\cal G}^*),\, 1\leq i\leq j\leq n}} X_i\to X_j.\]
	Thus, the results in the main text can be extended to tiered background knowledge.
	
	Unfortunately, directly extending our results to an arbitrary Boolean combination of pairwise causal background knowledge is difficult. Nevertheless, a possible solution exists. Let ${\cal B}$ be a set of Boolean combinations of pairwise causal background knowledge. It is clear that a DAG satisfies all constraints in  ${\cal B}$ if and only if $\wedge_{b\,\in\,{\cal B}}\,b$ holds for the DAG. Since for any $b\in{\cal B}$, $b$ is a Boolean combination of pairwise causal background knowledge, $\wedge_{b\,\in\,{\cal B}}\,b$ can be reformulated into its disjunctive normal form,
	\begin{equation}\label{eq:logic}
		\bigwedge_{b\,\in\,{\cal B}}b = \bigvee_{i\,\in\, I} \left(\bigwedge_{j\,\in\, J} c_{ij}\right),
	\end{equation}
	where $I,J$ are finite indicator sets, and each $c_{i,j}$ is a  direct, non-ancestral or ancestral causal constraint. For each $i\in I$, let ${\cal B}_i=\{c_{ij}\mid j\in J\}$,  we can then extend our results to each ${\cal B}_i$ and combine them together. For example, to check the consistency of $\wedge_{b\,\in\,{\cal B}}\,b$, it suffices to show that there exists a consistent ${\cal B}_i$; to construct the MPDAG representing $[{\cal G}^*, {\cal B}]$, one needs first construct the MPDAG for each ${\cal B}_i$, then subtract all common direct causal relations from those MPDAGs.
	
	We remark that the bottleneck of the above solution is the computation of Equation~\eqref{eq:logic}. Generally, the disjunctive normal form is not unique, and the reformulation may be exponential in complexity. How to efficiently represent Boolean combinations of pairwise causal background knowledge is regarded as future work.
	
	\section{Polynomial-Time Algorithms}\label{sec:algorithms}
	
	In this section, we present algorithms for checking the consistency and equivalence of DCCs, and for finding the MPDAG and the minimal residual sets of DCCs given a CPDAG and a set of consistent pairwise causal constraints. All proposed algorithms run in polynomial time.
	
	\subsection{Algorithms for Checking Consistency and Equivalence}\label{sec:sec:algoconsistent}
	
	Recall that Theorem~\ref{thm:consistency} provides a sufficient and necessary condition for consistency. The proof of Theorem~\ref{thm:consistency} also motivates an algorithm for checking consistency. Algorithm~\ref{algo:consistency-peo} shows the schema. The inputs of Algorithm~\ref{algo:consistency-peo} are a CPDAG $\mathcal{G}^*$ and a set $\cal K$ of DCCs  over   ${\bf V}(\mathcal{G}^*)$. It first initializes ${\cal U}$ by $\mathcal{G}^*_u$, and then sequentially removes potential leaf nodes from ${\cal U}$ in lines $2$-$6$, until no more potential leaf node can be found.
	% The correctness of Algorithm~\ref{algo:consistency-peo} follows from the proof of Theorem~\ref{thm:consistency}, which is provided in Appendix~\ref{app:proof:consistency}.
	
	\begin{algorithm}[!t]
		\caption{Checking the consistency of a DCC set.}
		\label{algo:consistency-peo}
		\begin{algorithmic}[1]
			\REQUIRE
			A CPDAG $\mathcal{G}^*$ and a set $\cal K$ of DCCs  over ${\bf V}(\mathcal{G}^*)$.
			\ENSURE
			A Boolean value indicating whether  $\cal K$ is consistent with $\mathcal{G}^*$.	
			\STATE {Set ${\cal U}=\mathcal{G}^*_u$ and compute ${\cal K}({\cal U})$ according to Equation~\eqref{eq:n_u} and Equation~\eqref{eq:n_uu}.}
			\WHILE {${\cal U}$ has a potential leaf node with respect to $\cal K$ and $\mathcal{G}^*$,}
			\STATE {Find a potential leaf node in ${\cal U}$ with respect to $\cal K$ and $\mathcal{G}^*$ and denote it by $Y$.}
			\STATE {Update ${\cal U}$ by removing $Y$ and the edges connected to $Y$.}
			\STATE {Update ${\cal K}({\cal U})$ by removing all clauses whose heads contain $Y$.}
			\ENDWHILE
			\IF {${\cal U}$ is an empty graph,}
			\RETURN {$\textrm{True}$.}
			\ENDIF				
			\RETURN {$\textrm{False}$.}
		\end{algorithmic}
	\end{algorithm}
	
	{The complexity of computing ${\cal K}({\cal U})$ is bounded by $O(|{\cal K}|\cdot|{\bf V}(\mathcal{G}^*)|^2)$. Moreover, algorithm~\ref{algo:consistency-peo} runs the while-loop at most $|{\bf V}(\mathcal{G}^*)|$ times. Every time it runs the while-loop, it checks for each vertex in ${\cal U}$ whether the vertex  is simplicial and not the tail of any clause in ${\cal K}(\mathcal{U})$. The complexity of the former is bounded by $O(|{\bf V}(\mathcal{G}^*)|^3)$, and the complexity of the later is bounded by $O(|{\cal K}|\cdot|{\bf V}(\mathcal{G}^*)|)$. The complexity of removing $Y$, as well as the clauses whose heads contain $Y$, is bounded by $O(|{\bf V}(\mathcal{G}^*)|)$ and $O(|{\cal K}|\cdot|{\bf V}(\mathcal{G}^*)|)$, respectively. Therefore, the complexity of Algorithm~\ref{algo:consistency-peo} is upper bounded by $O(|{\bf V}(\mathcal{G}^*)|^4+|{\cal K}|\cdot|{\bf V}(\mathcal{G}^*)|^2)$, which is polynomial to both $|{\bf V}(\mathcal{G}^*)|$ and $|{\cal K}|$.}
	
	Algorithm~\ref{algo:consistency-peo} is related to the perfect elimination ordering (PEO) of a chordal graph~\citep{blair1993chordal, maathuis2009estimating}. In fact, when $\cal K$ is consistent, the ordering of the removal potential leaf nodes is a PEO of the chordal graph $\mathcal{G}^*_u$. PEOs are important to construct Markov equivalent DAGs. We refer the interested readers to Appendix~\ref{app:proof:pre} for more details. We also remark that, when a pairwise causal background knowledge set contains only direct causal constraints, Algorithm~\ref{algo:consistency-peo} degenerates to the algorithm proposed by~\citet{dor1992simple}.

	\begin{example}[continued]\label{ex:consistency}
		We next use Algorithm \ref{algo:consistency-peo} to check the consistency of ${\cal K}=\{C \tor A, B \tor A, D \tor \{A, X\}\}$ with respect to ${\cal G}^*$ illustrated in Figure \ref{fig:1-1}. We first set ${\cal U}=\mathcal{G}^*_u$, which is the induced subgraph of ${\cal G}^*$ over $\{A,B,C,D,X\}$. With respect to ${\cal K}$, however, ${\cal U}$ has no potential leaf node, meaning that the while-loop (lines $2$-$6$) is not triggered. As ${\cal U}$ is not empty, Algorithm \ref{algo:consistency-peo} returns {\rm False}.
	\end{example}
	
	% We remark that, if the input DCCs of  Algorithm \ref{algo:consistency-peo} are derived from causal background knowledge, then the first three lines of Algorithm \ref{algo:consistency-peo} can be skipped, since by Proposition~\ref{thm:nbr_set_const}, such clauses are definitely valid.
	
	In some circumstances, pairwise causal background knowledge is not obtained all at once. Therefore, we also need an approach to sequentially check consistency. That is, given a CPDAG ${\cal G}^*$ and a consistent pairwise causal constraint set ${\cal B}$, we want to determine whether a newly obtained pairwise causal constraint set is consistent with ${\cal G}^*$ together with ${\cal B}$. This issue will be investigated in
	Appendix~\ref{sec:sec:seq}.
	
	Finally, with Algorithm~\ref{algo:consistency-peo} and Theorem~\ref{thm:equi_dcc}, we can check the equivalence of two DCC sets, as shown in Algorithm~\ref{algo:eq}. Note that, to accelerate the procedure, we first check the consistency of ${\cal K}_1$ and ${\cal K}_2$ separately. If neither ${\cal K}_1$ nor ${\cal K}_2$ is consistent with $\mathcal{G}^*$ then they are equivalent; if one of them is consistent with $\mathcal{G}^*$ but the other is not, then they are not equivalent.
	
	\begin{algorithm}[!t]
		\caption{Checking the equivalence of two DCC sets.}
		\label{algo:eq}
		\begin{algorithmic}[1]
			\REQUIRE
			A CPDAG $\mathcal{G}^*$ and two sets ${\cal K}_1$ and ${\cal K}_2$ of DCCs  over ${\bf V}(\mathcal{G}^*)$.
			\ENSURE
			A Boolean value indicating whether ${\cal K}_1$ is equivalent to ${\cal K}_2$ given $\mathcal{G}^*$.
			\IF {both ${\cal K}_1$ and ${\cal K}_2$ are consistent with $\mathcal{G}^*$, }
			\FOR {$\kappa \in {\cal K}_1$,}
			\IF { $\cup_{D\in {\kappa_h}}\{D\to \kappa_t\}\cup {\cal K}_2$ is consistent with ${\cal G}^*$,}
			\RETURN {$\textrm{False}$.}
			\ENDIF
			\ENDFOR
			\FOR {$\kappa \in {\cal K}_2$,}
			\IF { $\cup_{D\in {\kappa_h}}\{D\to \kappa_t\}\cup {\cal K}_1$ is consistent with ${\cal G}^*$,}
			\RETURN {$\textrm{False}$.}
			\ENDIF
			\ENDFOR
			\RETURN {$\textrm{True}$.}
			\ELSIF	{neither ${\cal K}_1$ nor ${\cal K}_2$ is consistent with $\mathcal{G}^*$,}
			\RETURN {$\textrm{True}$.}
			\ELSE
			\RETURN {$\textrm{False}$.}
			\ENDIF
		\end{algorithmic}
	\end{algorithm}
	
	%\begin{algorithm}[!h]
	%	\caption{Checking the equivalence of two DCC sets.}
	%	\label{algo:eq}
	%	\begin{algorithmic}[1]
		%		\REQUIRE
		%		A CPDAG $\mathcal{G}^*$ and two sets ${\cal K}_1$ and ${\cal K}_2$ of DCCs  over ${\bf V}(\mathcal{G}^*)$.
		%		\ENSURE
		%		A Boolean value indicating whether ${\cal K}_1$ is equivalent to ${\cal K}_2$ given $\mathcal{G}^*$.
		%		\FOR {$\kappa \in {\cal K}_1$,}
		%		\IF { $\cup_{D\in {\kappa_h}}\{D\to \kappa_t\}\cup {\cal K}_2$ is consistent with ${\cal G}^*$}
		%		\RETURN {$\textrm{False}$.}
		%		\ENDIF
		%		\ENDFOR
		%		\FOR {$\kappa \in {\cal K}_2$,}
		%		\IF { $\cup_{D\in {\kappa_h}}\{D\to \kappa_t\}\cup {\cal K}_1$ is consistent with ${\cal G}^*$}
		%		\RETURN {$\textrm{False}$.}
		%		\ENDIF
		%		\ENDFOR
		%		\RETURN {$\textrm{True}$.}
		%	\end{algorithmic}
	%\end{algorithm}
	
	\subsection{ Algorithms for Finding  MPDAGs and   Minimal DCC Sets}\label{sec:sec:algo_mpdag}
	We first discuss how to find the MPDAG of $[{\cal G}^*, {\cal K}]$ induced by a CPDAG ${\cal G}^*$ and a set ${\cal K}$ of  DCCs consistent with ${\cal G}^*$. By the definition of an MPDAG, it suffices to find all common direct causal relations shared by all DAGs in $[{\cal G}^*, {\cal K}]$. A new concept is needed before proceeding.
	
	\begin{definition}[Orientation Component]
		Given a CPDAG $\mathcal{G}^*$ and a DCC set $\cal K$ consistent with $\mathcal{G}^*$. With respect to $\cal K$ and $\mathcal{G}^*$, a connected undirected induced subgraph $\mathcal{U}$ of $\mathcal{G}^*$ is called an orientation component for a vertex $X$ if $X$ is the only potential leaf node in $\mathcal{U}$.
	\end{definition}

	\begin{example}
		\label{ex:demo_concepts}
		An orientation component is illustrated by this example.
		Figure~\ref{fig:2-1} shows a CPDAG ${\cal G}^*$. Consider ${\cal K}=\{A\tor \{X, B, D\}, B\tor \{X, A\}, B\tor \{X, C, Y\}, X\tor \{B, C\} \}$. Since $\{X, B, D\}\cap sib(A, {\cal G}^*)=\{X, B\}$ and $ \{X, C, Y\}\cap ch(B,{\cal G}^*)\neq\varnothing$, by Equation~(\ref{eq:n_u}), ${\cal K}(\mathcal{G}^*_u)$ consists of $A\tor \{X, B\}$, $B\tor \{X, A\}$ and $X\tor \{B, C\}$, which are visualized by arcs in Figure~\ref{fig:2-2}. First, consider the undirected induced subgraph ${\cal U}$ over $\{A, B, X\}$. By definition, ${\cal K}(\mathcal{U})$ consists of $A\tor \{X, B\}$ and $B\tor \{X, A\}$. Therefore, there is only one potential leaf node in $\mathcal{U}$, namely $X$, and thus $\mathcal{U}$ is an orientation component for $X$. Next, consider the undirected induced subgraph $\cal U$ over $\{X, B, C\}$. ${\cal K}(\mathcal{U})$ has only one constraint $X\tor \{B, C\}$, hence $\mathcal{U}$ has two potential leaf nodes $B$ and $C$. Finally, consider the entire undirected subgraph ${\cal G}^*_u$. ${\cal G}^*_u$ has only one potential leaf node $C$, thus ${\cal G}^*_u$ is an orientation component for $C$.
	\end{example}
	
	\begin{figure}[!h]
		% \vspace{-2em}
		\centering
		\subfloat[CPDAG ${\cal G}^*$ \label{fig:2-1}]{
			\begin{minipage}[t]{0.44\linewidth}
				\centering
				\includegraphics[width=0.52\linewidth]{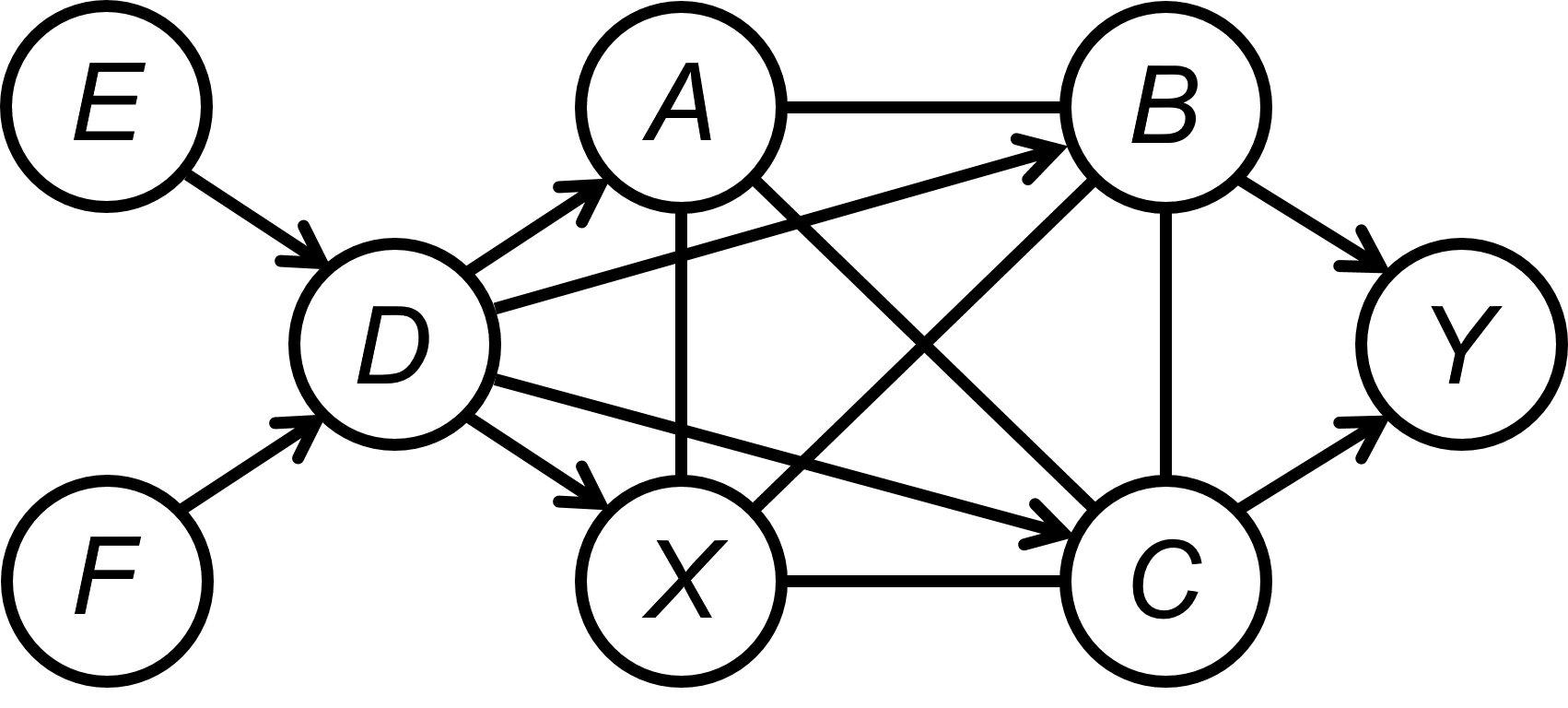}
				%\caption{fig1}
			\end{minipage}%
		}%
		\hspace{0.01\linewidth}
		\subfloat[${\cal G}^*$ with DCCs \label{fig:2-2}]{
			\begin{minipage}[t]{0.44\linewidth}
				\centering
				\includegraphics[width=0.52\linewidth]{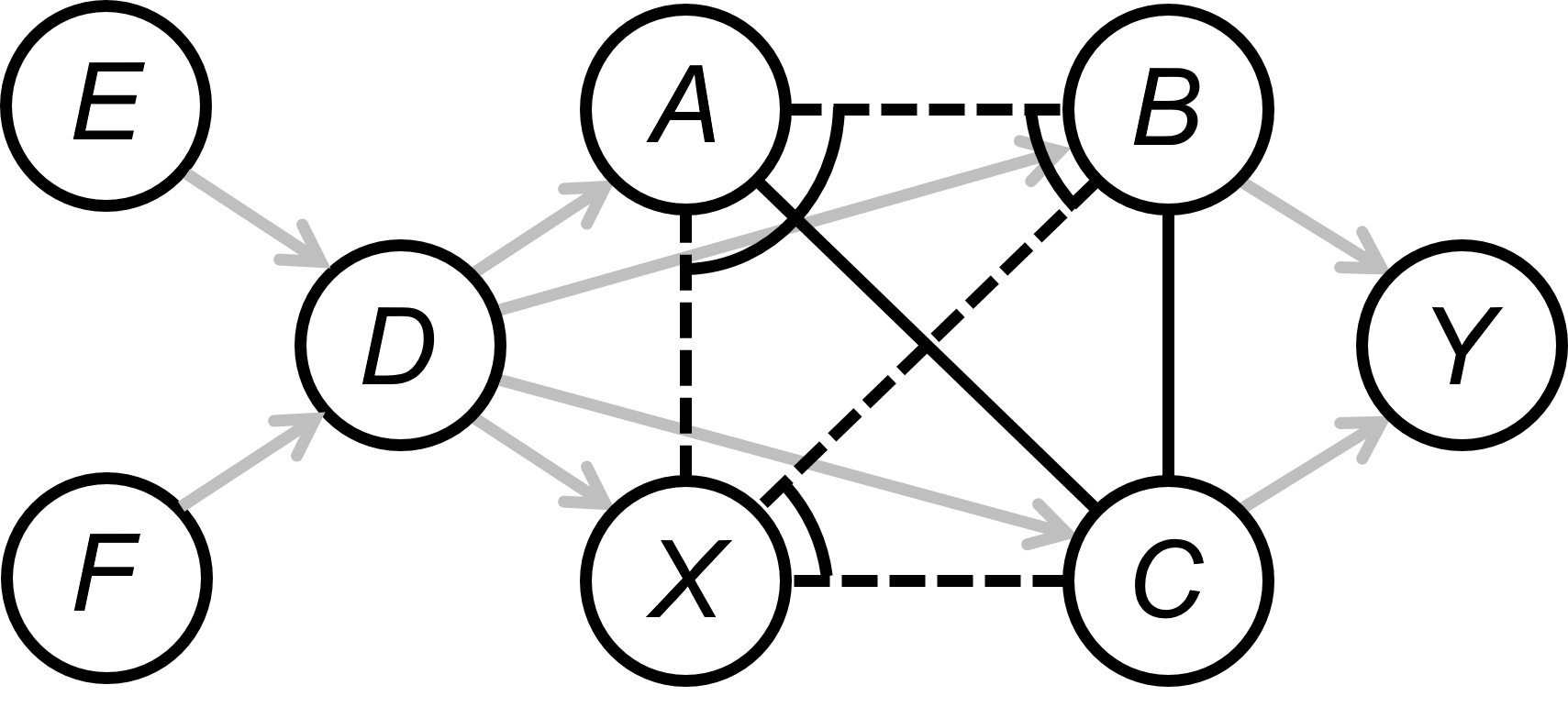}
				%\caption{fig1}
			\end{minipage}%
		}%
		
		\hspace{0.02\linewidth}
		\subfloat[$X\to B$ results a directed cycle \label{fig:2-3}]{
			\begin{minipage}[t]{0.22\linewidth}
				\centering
				\includegraphics[width=0.45\linewidth]{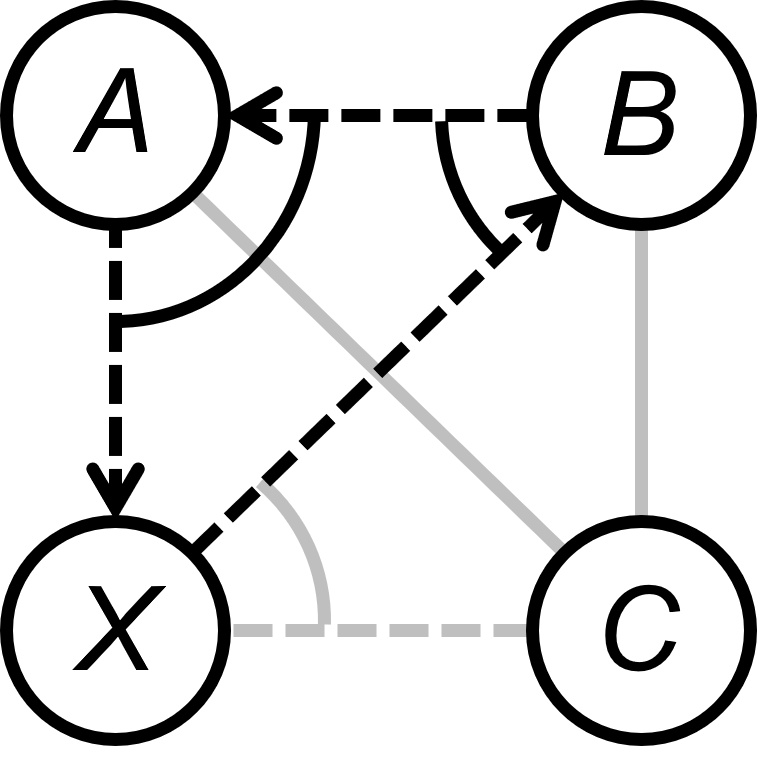}
				%\caption{fig1}
			\end{minipage}%
		}%
		\hspace{0.01\linewidth}
		\subfloat[$X\to A$ results a directed cycle \label{fig:2-4}]{
			\begin{minipage}[t]{0.22\linewidth}
				\centering
				\includegraphics[width=0.45\linewidth]{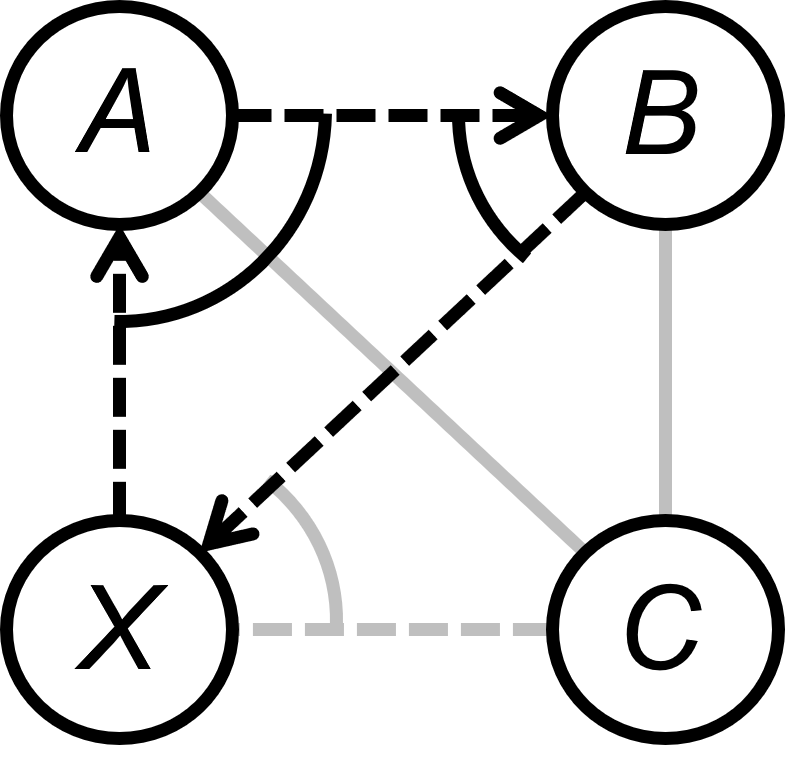}
				%\caption{fig1}
			\end{minipage}%
		}%
		\hspace{0.01\linewidth}
		\subfloat[$X \to C$ is in the MPDAG  \label{fig:2-5}]{
			\begin{minipage}[t]{0.22\linewidth}
				\centering
				\includegraphics[width=0.45\linewidth]{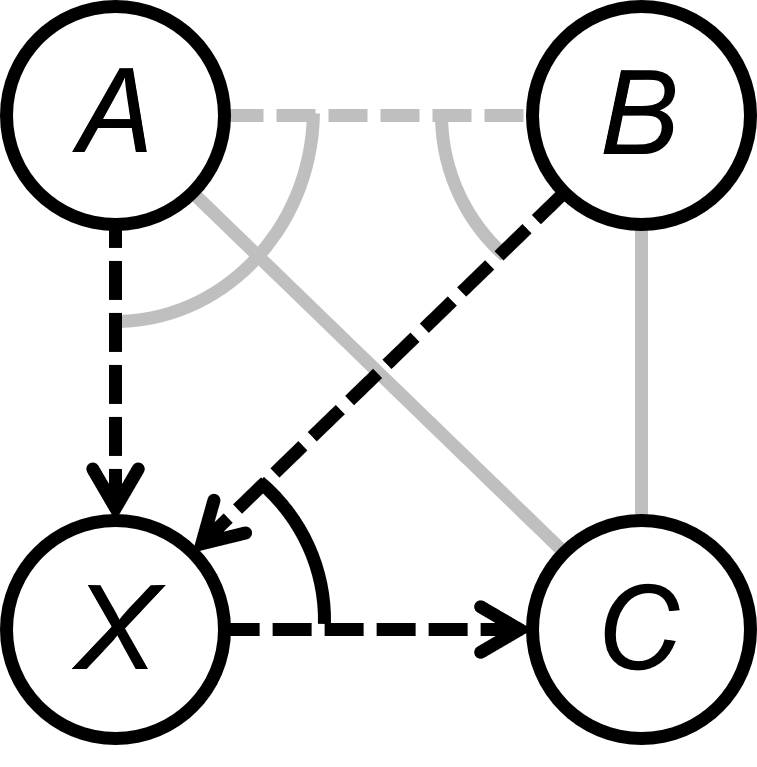}
				%\caption{fig1}
			\end{minipage}%
		}%
		\hspace{0.01\linewidth}
		\subfloat[The founded directed edges in the MPDAG \label{fig:2-6}]{
			\begin{minipage}[t]{0.22\linewidth}
				\centering
				\includegraphics[width=0.45\linewidth]{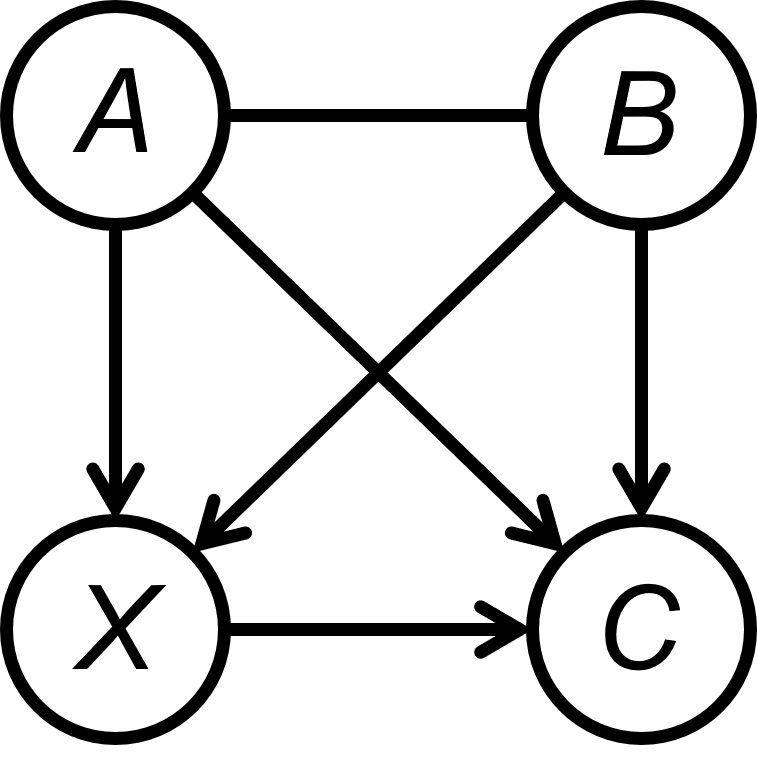}
				%\caption{fig1}
			\end{minipage}%
		}%
		
		\caption{An illustration of an orientation component is given by this example.}
		\label{fig:inv_heads}
		% \vspace{-1em}
	\end{figure}
	
	The following proposition shows that an orientation component  can be used to  identify some direct causal relations.
	
	\begin{proposition}
		\label{prop:suff_oc}
		Let $\mathcal{G}^*$ be a CPDAG and $\cal K$ be a set of DCCs consistent with $\mathcal{G}^*$. For any orientation component $\mathcal{U}$ for $X$ with respect to $\cal K$ and $\mathcal{G}^*$ with $adj(X,\mathcal{U})\neq\varnothing$, all variables in  $adj(X,\mathcal{U})$ are direct causes of  $X$  in every DAG  in  $[\mathcal{G}^*, {\cal K}]$.
	\end{proposition}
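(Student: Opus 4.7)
The plan is to fix an arbitrary ${\cal G}\in[\mathcal{G}^*,{\cal K}]$ and prove the stronger statement that $X$ is a sink of the induced sub-DAG ${\cal G}({\bf V}(\mathcal{U}))$. The conclusion of the proposition then follows immediately: every $Y\in adj(X,\mathcal{U})$ corresponds to an edge of ${\cal G}({\bf V}(\mathcal{U}))$ incident to $X$, and since $X$ is a sink of this sub-DAG, the edge must be oriented as $Y\to X$ in $\cal G$, so $Y$ is a direct cause of $X$.

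To establish that $X$ is a sink, I would pick an arbitrary sink $Z$ of ${\cal G}({\bf V}(\mathcal{U}))$ (one exists because every nonempty DAG has a sink) and show that $Z$ is a potential leaf node of $\mathcal{U}$ with respect to $\cal K$ and $\mathcal{G}^*$. Since $\mathcal{U}$ is an orientation component for $X$, $X$ is by definition the \emph{only} potential leaf node in $\mathcal{U}$, forcing $Z=X$.

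Verifying the potential-leaf-node property then breaks into two pieces. First, I would show $Z$ is simplicial in $\mathcal{U}$: since $Z$ is a sink, every $A\in adj(Z,\mathcal{U})$ satisfies $A\to Z$ in $\cal G$; if two such neighbors $A,B$ were nonadjacent in $\mathcal{G}^*$, then $A\to Z\leftarrow B$ would be a v-structure of $\cal G$ absent from $\mathcal{G}^*$, violating Markov equivalence. Because $A,B,Z$ all lie in a single chain component of $\mathcal{G}^*$, the forced edge between $A$ and $B$ in $\mathcal{G}^*$ must itself be undirected, so $A$ and $B$ are adjacent within the undirected induced subgraph $\mathcal{U}$, making $adj(Z,\mathcal{U})$ complete. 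Second, I would show $Z$ is not the tail of any clause in ${\cal K}(\mathcal{U})$: if some $\kappa\in {\cal K}(\mathcal{U})$ had $\kappa_t=Z$, then by the definitions of reduced form and restriction subset, $\kappa_h\subseteq sib(Z,\mathcal{G}^*)\cap{\bf V}(\mathcal{U})=adj(Z,\mathcal{U})$; by Proposition~\ref{pro:reducedf} (equivalent reduced form), $\cal G$ satisfies every clause of ${\cal K}({\cal G}^*)\supseteq{\cal K}(\mathcal{U})$, so some $D\in\kappa_h$ must satisfy $Z\to D$ in $\cal G$. But $D\in adj(Z,\mathcal{U})$ means that edge lies in $\mathcal{U}$ and is oriented out of $Z$, contradicting that $Z$ is a sink.

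The only subtle step is the interplay between Markov equivalence (shared skeleton and v-structures) and the chain-component structure of the CPDAG, which is what upgrades ``$A$ and $B$ are forced to be adjacent in $\mathcal{G}^*$'' to ``$A$ and $B$ are adjacent in the undirected subgraph $\mathcal{U}$''. Once that is in hand, the rest is the familiar ``sink of a sub-DAG is simplicial'' fact from chordal graph theory, adapted here to also respect the DCC-based second requirement for a potential leaf node.
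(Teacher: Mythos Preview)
Your proposal is correct and follows essentially the same approach as the paper: the paper packages the claim ``every leaf node of ${\cal G}(\mathbf{V}(\mathcal{U}))$ is a potential leaf node of $\mathcal{U}$'' as a separate lemma (Lemma~\ref{lem:app-pln}) and then invokes it, whereas you inline that lemma's proof directly. The simpliciality argument via forbidden v-structures and the tail argument via DCC satisfaction are identical to the paper's, including the use of the chain-graph property of $\mathcal{G}^*$ to conclude that the undirected induced subgraph $\mathcal{U}$ coincides with the full induced subgraph of $\mathcal{G}^*$ on $\mathbf{V}(\mathcal{U})$.
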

	
	Together with the directed edges in the CPDAG, the orientation components can completely characterize the common direct causal relations given a CPDAG and a set of  DCCs.
	
	% if $X$ is the only potential leaf node in a connected undirected subgraph $\mathcal{U}$, then every adjacent vertex of $X$ in $\mathcal{U}$ should be a direct cause of $X$ in all restricted Markov equivalence DAG.
	
	\begin{theorem}
		\label{thm:mpdag}
		Let $\mathcal{G}^*$ be a CPDAG and $\cal K$ be a set of DCCs consistent with $\mathcal{G}^*$, then $X\rightarrow Y$ is in every DAG in $[\mathcal{G}^*, {\cal K}]$ if and only if  $X\rightarrow Y$ appears in $\mathcal{G}^*$ or there exists an orientation component for $Y$ containing $X$ with respect to $\cal K$ and $\mathcal{G}^*$.
		% Let $\cal H$ be the MPDAG representing the restricted Markov equivalence class induced by a CPDAG $\mathcal{G}^*$ and consistent DCCs $\cal K$. Then, for any directed edge $X\rightarrow Y$ in $\cal H$, $X\rightarrow Y$ is either in $\mathcal{G}^*$, or there exists an orientation component for $Y$ containing $X$ with respect to $\cal K$ and $\mathcal{G}^*$.
	\end{theorem}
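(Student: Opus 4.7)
The plan is a standard biconditional proof. The forward ($\Leftarrow$) direction follows directly from Proposition~\ref{prop:suff_oc} and the semantics of a CPDAG; the reverse ($\Rightarrow$) direction I will prove by contrapositive, constructing a concrete DAG in $[\mathcal{G}^*, \mathcal{K}]$ in which $X\to Y$ fails by running a greedy perfect-elimination-ordering (PEO) construction underlying Algorithm~\ref{algo:consistency-peo}.

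For ($\Leftarrow$), if $X\to Y$ is an edge of $\mathcal{G}^*$ then it appears in every DAG in $[\mathcal{G}^*]\supseteq [\mathcal{G}^*, \mathcal{K}]$. Otherwise, let $\mathcal{U}$ be an orientation component for $Y$ containing $X$. Since $\mathcal{U}$ is an induced subgraph of $\mathcal{G}^*_u$, the edge between $X$ and $Y$ (which must be the undirected $X-Y$ in $\mathcal{G}^*$, since otherwise $X\to Y$ would be in $\mathcal{G}^*$ or $X,Y$ would not be adjacent at all) is present in $\mathcal{U}$, so $X\in adj(Y,\mathcal{U})$. Proposition~\ref{prop:suff_oc} applied to $\mathcal{U}$ with distinguished vertex $Y$ then yields $X\to Y$ in every DAG in $[\mathcal{G}^*, \mathcal{K}]$.

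For ($\Rightarrow$), assume $X\to Y\notin \mathcal{G}^*$ and that no orientation component for $Y$ contains $X$; I will exhibit a DAG $\mathcal{G}\in [\mathcal{G}^*, \mathcal{K}]$ with $Y\to X$, contradicting the hypothesis. Consistency of $\mathcal{K}$ together with adjacency of $X,Y$ in every DAG force $X-Y$ to be an undirected edge of $\mathcal{G}^*$. Let $\mathcal{C}$ be the connected component of $\mathcal{G}^*_u$ containing $X$ and $Y$. I build $\mathcal{G}$ greedily through the PEO-style schema of Algorithm~\ref{algo:consistency-peo}, iteratively removing a potential leaf node $Z$ of the current subgraph and orienting its remaining undirected edges as $\cdot\to Z$. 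Maintaining a connected undirected induced subgraph $\mathcal{V}\subseteq\mathcal{G}^*$ containing both $X$ and $Y$ (initially $\mathcal{V}=\mathcal{C}$), I apply the following step: if $X$ is a potential leaf node of $\mathcal{V}$, remove $X$, which orients $X-Y$ as $Y\to X$; otherwise $\mathcal{V}$ contains $X$ and by hypothesis is not an orientation component for $Y$, so either $Y$ is not a potential leaf node in $\mathcal{V}$ or $\mathcal{V}$ has a second potential leaf node besides $Y$, and since Theorem~\ref{thm:consistency} guarantees that at least one potential leaf node exists, I may pick one $Z\neq Y$ and update $\mathcal{V}$ to the connected component of $\mathcal{V}\setminus\{Z\}$ containing $X$, which still contains $Y$ because the edge $X-Y$ prevents their separation. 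Strict decrease of $|\mathcal{V}|$ in the latter branch forces termination; when $\mathcal{V}$ has shrunk to $\{X,Y\}$, consistency again provides a potential leaf node, and if $X$ is not one then $Y$ is the unique potential leaf node, making $\{X,Y\}$ itself an orientation component for $Y$ containing $X$ and contradicting the hypothesis. Once $X$ has been removed with $Y$ still present, I extend the sequence arbitrarily to a full PEO of the rest of $\mathcal{G}^*_u$ (possible by consistency) to obtain a DAG $\mathcal{G}\in [\mathcal{G}^*, \mathcal{K}]$ in which $Y\to X$.

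The main obstacle I expect is rigorously tying the greedy removal to a DAG that actually lies in $[\mathcal{G}^*, \mathcal{K}]$. Two auxiliaries must be checked: that any sequence of potential-leaf removals with the sink-orientation rule yields a DAG in $[\mathcal{G}^*]$ (no new v-structures are introduced because each removed vertex is simplicial in its current subgraph, and acyclicity holds by the removal order); and that every $\kappa\in\mathcal{K}$ is satisfied in that DAG (using that whenever a clause $\kappa\in\mathcal{K}(\mathcal{V})$ is present, $\kappa_t$ cannot be a potential leaf of $\mathcal{V}$, so some head $D\in\kappa_h$ survives until $\kappa_t$ is removed and thereby becomes a child of $\kappa_t$). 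These auxiliaries essentially reprise the soundness direction of Theorem~\ref{thm:consistency} and should be the chief technical burden of the argument.
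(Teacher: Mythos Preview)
Your plan is sound and gives a correct proof, but it takes a different route from the paper. For $(\Rightarrow)$ the paper does not build a DAG explicitly. After disposing of $Y\to X\in\mathcal{G}^*$, it first splits off the sub-case where some connected undirected induced subgraph containing $X,Y$ is an orientation component for $X$ (then $Y\to X$ lies in $\mathcal{H}$ by Proposition~\ref{prop:suff_oc}); in the remaining symmetric situation it verifies the potential-leaf-node condition of Theorem~\ref{thm:consistency} for the augmented set $\mathcal{K}\cup\{X\to Y\}$ by a short case analysis on each $\mathcal{U}$, obtaining by symmetry a DAG with $X\to Y$ and another with $Y\to X$. Your argument avoids the extra case split entirely: using only the hypothesis ``no orientation component for $Y$ contains $X$'', you greedily run the PEO schema while never removing $Y$, and the hypothesis guarantees a legal choice $Z\neq Y$ at every step until $X$ itself becomes removable. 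Both proofs rest on Theorem~\ref{thm:consistency}; the paper's abstract verification buys the symmetric conclusion that $X-Y$ stays undirected in $\mathcal{H}$, while your constructive version yields an explicit witness DAG with $Y\to X$, which is all the theorem needs.

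One slip to correct when you write this out: in your auxiliary for clause satisfaction the direction is reversed. In the sink-removal orientation, the vertex removed \emph{first} is the child and the one remaining is the parent. Since $\kappa_t$ is blocked from removal while $\kappa\in\mathcal{K}(\mathcal{V})$, some head $D\in\kappa_h$ must be removed \emph{before} $\kappa_t$; at that moment $\kappa_t$ is still present, so the edge is oriented $\kappa_t\to D$ and $\kappa$ is satisfied. Your sentence ``some head $D\in\kappa_h$ survives until $\kappa_t$ is removed and thereby becomes a child of $\kappa_t$'' describes the opposite order and would make $D$ a parent of $\kappa_t$.
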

	
	Graphically, Theorem~\ref{thm:mpdag} shows all possible sources of the directed edges in an MPDAG. Consider Figure~\ref{fig:2-2} as an example. As discussed in Example \ref{ex:demo_concepts}, the undirected induced subgraph over $\{A,B,X\}$ and $\{A,B,X,C\}$ are orientation components for $X$ and $C$ respectively. If $X \to B$ is in a DAG ${\cal G}\in [\mathcal{G}^*, {\cal K}]$ (Figure \ref{fig:2-3}), then due to the constraint $B\tor \{X, A\}$ we must have that $B \to A$ is in ${\cal G}$. Again, due to the constraint $A\tor \{X, B\}$, $A\to X$ is in ${\cal G}$. However, $X\to B \to A \to X$ is a directed cycle, meaning that $X \to B$ is not in any DAG ${\cal G}\in [\mathcal{G}^*, {\cal K}]$, and thus $B\to X$ should be in the MPDAG. Similarly, $A\to X$ is in the MPDAG (Figure \ref{fig:2-4}). Since $B\to X$ and $X\tor \{B, C\}$, $X \to C$ is also in the MPDAG (Figure \ref{fig:2-5}). Finally, applying Meek's rules results $A \to C$ and $B \to C$ (Figure \ref{fig:2-6}).
	
	% We remark that the soundness of Rules 1, 2 and 4 of Meek's rules for constructing an MPDAG can also be explained by Theorem~\ref{thm:mpdag}. For example, Rule 1 of Meeks rules proves that $X\to Y - Z$ implies $Y \to Z$. From the perspective of orientation component, $Z$ is the only potential leaf node of $X - Y - Z$, since $X$ has a child and $Y$ is not simplicial. Note that, Rule 3 cannot be explained by Theorem~\ref{thm:mpdag} as it cannot be triggered when constructing an MPDAG, as the left-hand side of Rule 3 has a v-structure. We refer the readers to Appendix for more details.
	
	Theorem~\ref{thm:mpdag} provides an intuitive method to  find the MPDAG by finding all orientation components for all variables, but this method is time-consuming as the number of orientation components for a variable may be very large. The following proposition  gives a clue to improve  this method by finding the exact orientation component of interest.
	
	\begin{proposition}\label{prop:union_of_oc}
		Let $\mathcal{G}^*$ be a CPDAG and $\cal K$ be a set of DCCs consistent with $\mathcal{G}^*$. Suppose that with respect to $\cal K$ and $\mathcal{G}^*$, $\mathcal{U}_1$ and $\mathcal{U}_2$ are two orientation components for the same vertex $X$, then the undirected induced subgraph of $\mathcal{G}^*$ over $\mathbf{V}(\mathcal{U}_1)\cup\mathbf{V}(\mathcal{U}_2)$ is also an orientation component for $X$.
	\end{proposition}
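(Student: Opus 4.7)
The plan is to verify that the undirected induced subgraph $\mathcal{U}$ of $\mathcal{G}^*$ over $\mathbf{V}(\mathcal{U}_1)\cup\mathbf{V}(\mathcal{U}_2)$ fulfills the three ingredients of an orientation component for $X$ with respect to $\cal K$ and $\mathcal{G}^*$: it is a connected undirected induced subgraph of $\mathcal{G}^*$, $X$ is a potential leaf node in $\mathcal{U}$, and no other vertex of $\mathcal{U}$ is. Connectedness is immediate since $X$ belongs to both $\mathcal{U}_1$ and $\mathcal{U}_2$ (it is their unique potential leaf node) and each $\mathcal{U}_i$ is connected, so the union is connected through $X$. Moreover $adj(X,\mathcal{U})=adj(X,\mathcal{U}_1)\cup adj(X,\mathcal{U}_2)$ since $\mathcal{U}$, $\mathcal{U}_1$, $\mathcal{U}_2$ are all induced subgraphs of $\mathcal{G}^*_u$.

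To show that $X$ is a potential leaf node in $\mathcal{U}$, first verify simpliciality. Pick distinct $Y_1, Y_2 \in adj(X, \mathcal{U})$. If both lie in $adj(X, \mathcal{U}_i)$ for a common $i$, the simpliciality of $X$ in $\mathcal{U}_i$ gives that $Y_1$ and $Y_2$ are adjacent. The delicate case is $Y_1\in \mathbf{V}(\mathcal{U}_1)\setminus\mathbf{V}(\mathcal{U}_2)$ and $Y_2\in \mathbf{V}(\mathcal{U}_2)\setminus\mathbf{V}(\mathcal{U}_1)$: here both edges $Y_1-X$ and $Y_2-X$ are undirected in $\mathcal{G}^*$, while Proposition \ref{prop:suff_oc} applied to each $\mathcal{U}_i$ yields $Y_1\to X$ and $Y_2\to X$ in every DAG in the nonempty class $[\mathcal{G}^*,{\cal K}]$. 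If $Y_1$ and $Y_2$ were non-adjacent in $\mathcal{G}^*$, then $(Y_1, X, Y_2)$ would be a v-structure in every such DAG and would therefore have to appear as directed in the CPDAG $\mathcal{G}^*$, contradicting the undirectedness of the two edges. Hence $Y_1$ and $Y_2$ are adjacent in $\mathcal{G}^*_u$, and the adjacency descends to $\mathcal{U}$. Next, I verify that $X$ is not the tail of any clause in ${\cal K}(\mathcal{U})$. If some $\kappa\in{\cal K}(\mathcal{U})$ had $\kappa_t=X$, then by Equations \eqref{eq:n_u} and \eqref{eq:n_uu} the head set is nonempty (by consistency of $\cal K$) and satisfies $\kappa_h \subseteq sib(X,\mathcal{G}^*)\cap\mathbf{V}(\mathcal{U}) = adj(X,\mathcal{U}_1)\cup adj(X,\mathcal{U}_2)$. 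Proposition \ref{prop:suff_oc} then forces each $D\in\kappa_h$ to be a parent of $X$ in every DAG in $[\mathcal{G}^*,{\cal K}]$, so $\kappa$ is violated in every such DAG, contradicting consistency.

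For uniqueness, suppose toward contradiction that some $X'\in\mathbf{V}(\mathcal{U})\setminus\{X\}$ is also a potential leaf node in $\mathcal{U}$, and without loss of generality $X'\in\mathbf{V}(\mathcal{U}_1)$. I will transport the potential-leaf property back to $\mathcal{U}_1$: simpliciality passes to the induced subgraph $\mathcal{U}_1$ since $adj(X',\mathcal{U}_1)\subseteq adj(X',\mathcal{U})$ and any subset of a clique is a clique; and any clause in ${\cal K}(\mathcal{U}_1)$ with tail $X'$ automatically belongs to ${\cal K}(\mathcal{U})$ because $\mathbf{V}(\mathcal{U}_1)\subseteq \mathbf{V}(\mathcal{U})$, contradicting the assumption that $X'$ is not a tail of any clause in ${\cal K}(\mathcal{U})$. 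Thus $X'$ is a second potential leaf node of $\mathcal{U}_1$ distinct from $X$, contradicting the assumption that $\mathcal{U}_1$ is an orientation component for $X$.

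I expect the simpliciality step in the mixed case $Y_1\in\mathbf{V}(\mathcal{U}_1)\setminus\mathbf{V}(\mathcal{U}_2)$, $Y_2\in\mathbf{V}(\mathcal{U}_2)\setminus\mathbf{V}(\mathcal{U}_1)$ to be the main obstacle, because the required adjacency of $Y_1$ and $Y_2$ is invisible from the combinatorics of $\mathcal{U}_1$ and $\mathcal{U}_2$ alone and must be extracted indirectly through Proposition \ref{prop:suff_oc} together with v-structure preservation in the CPDAG. The verification of the tail condition and of uniqueness, by contrast, is essentially bookkeeping with the definitions of reduced form and restriction subset.
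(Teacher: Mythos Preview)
Your proof is correct, but takes a different route from the paper's for the key step of showing that $X$ is the unique potential leaf node in the union $\mathcal{U}_{12}$. The uniqueness part of your argument (no $X'\neq X$ can be a potential leaf node in $\mathcal{U}_{12}$) is essentially the contrapositive of what the paper does: the paper shows directly that every $Y\in\mathbf{V}(\mathcal{U}_i)\setminus\{X\}$ fails to be a potential leaf node in $\mathcal{U}_{12}$ because non-simpliciality and the tail condition both propagate from $\mathcal{U}_i$ up to $\mathcal{U}_{12}$. Where the two proofs genuinely diverge is in handling $X$ itself. You verify head-on that $X$ is a potential leaf node in $\mathcal{U}_{12}$, and for simpliciality in the mixed case you invoke Proposition~\ref{prop:suff_oc} together with the v-structure constraint in the CPDAG; for the tail condition you again use Proposition~\ref{prop:suff_oc} to derive a contradiction with consistency. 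The paper avoids both checks entirely: having shown that no vertex other than $X$ is a potential leaf node, it simply appeals to Theorem~\ref{thm:consistency} (consistency of $\cal K$ forces every connected undirected induced subgraph to have \emph{some} potential leaf node), so $X$ must be it. The paper's approach is shorter and sidesteps the delicate simpliciality argument you flagged; your approach is more constructive and does not need the potential-leaf-node characterization of consistency, trading that for a dependence on Proposition~\ref{prop:suff_oc}.
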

	
	In some literature, the undirected induced subgraph of $\mathcal{G}^*$ over $\mathbf{V}(\mathcal{U}_1)\cup\mathbf{V}(\mathcal{U}_2)$ is called the union of $\mathcal{U}_1$ and $\mathcal{U}_2$. Thus, Proposition \ref{prop:union_of_oc} indicates that the union of two orientation components for $X$ is still an orientation component for $X$. This result motivates the definition of a maximal orientation component.
	
	\begin{definition}[Maximal Orientation Component, MOC]
		\label{def:moc}
		Given a CPDAG $\mathcal{G}^*$ and a set $\cal K$ of DCCs consistent with $\mathcal{G}^*$, with respect to $\cal K$ and $\mathcal{G}^*$, an orientation component for a variable $X$ is called maximal if every orientation component for $X$ is its induced subgraph.
	\end{definition}
	
	Based on Definition~\ref{def:moc} and Theorem~\ref{thm:mpdag}, we have the following corollary.

	\begin{corollary}\label{coro:maximaloc}
		Let $\mathcal{G}^*$ be a CPDAG and $\cal K$ be a set of DCCs consistent with $\mathcal{G}^*$, then $X\rightarrow Y$ is in every DAG in $[\mathcal{G}^*, {\cal K}]$ if and only if $X\rightarrow Y$ is in $\mathcal{G}^*$ or the maximal orientation component for $Y$ with respect to $\cal K$ and $\mathcal{G}^*$ contains $X$.
	\end{corollary}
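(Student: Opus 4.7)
The plan is to deduce Corollary~\ref{coro:maximaloc} as a short consequence of Theorem~\ref{thm:mpdag} together with the union-closure property of orientation components given in Proposition~\ref{prop:union_of_oc}. Theorem~\ref{thm:mpdag} already characterises the directed edges of the MPDAG of $[\mathcal{G}^*, \mathcal{K}]$ in terms of the existence of \emph{some} orientation component for $Y$ containing $X$; the only thing this corollary adds is that one may, without loss of generality, restrict attention to the \emph{maximal} such component. So the bulk of the work reduces to (a) establishing that a maximal orientation component for $Y$ is well-defined whenever any orientation component for $Y$ exists, and (b) doing the two easy directions of the biconditional.

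First I would settle existence and uniqueness of the maximal orientation component. Fix $Y\in \mathbf{V}(\mathcal{G}^*)$ and suppose that the collection $\{\mathcal{U}_1,\dots,\mathcal{U}_k\}$ of orientation components for $Y$ with respect to $\mathcal{K}$ and $\mathcal{G}^*$ is non-empty. The collection is finite because $\mathbf{V}(\mathcal{G}^*)$ is finite. Applying Proposition~\ref{prop:union_of_oc} inductively on pairs, the undirected induced subgraph $\mathcal{M}$ of $\mathcal{G}^*$ over $\bigcup_{i=1}^{k} \mathbf{V}(\mathcal{U}_i)$ is itself an orientation component for $Y$. By construction every $\mathcal{U}_i$ is an induced subgraph of $\mathcal{M}$, so $\mathcal{M}$ is maximal in the sense of Definition~\ref{def:moc}. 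Uniqueness is then immediate: if $\mathcal{M}'$ were another maximal orientation component for $Y$, then by maximality we would have both $\mathcal{M}\subseteq \mathcal{M}'$ and $\mathcal{M}'\subseteq \mathcal{M}$ as induced subgraphs of $\mathcal{G}^*$, forcing $\mathcal{M}=\mathcal{M}'$.

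Next I would prove the biconditional. For the ``if'' direction, suppose $X\rightarrow Y$ is already in $\mathcal{G}^*$, or the maximal orientation component $\mathcal{M}$ for $Y$ exists and contains $X$. In the second case $\mathcal{M}$ is itself an orientation component for $Y$ containing $X$, so Theorem~\ref{thm:mpdag} immediately yields that $X\rightarrow Y$ appears in every DAG in $[\mathcal{G}^*, \mathcal{K}]$. For the ``only if'' direction, assume $X\rightarrow Y$ appears in every DAG in $[\mathcal{G}^*, \mathcal{K}]$. By Theorem~\ref{thm:mpdag}, either $X\rightarrow Y$ is already in $\mathcal{G}^*$, or there exists some orientation component $\mathcal{U}$ for $Y$ containing $X$; in the latter case the maximal orientation component $\mathcal{M}$ exists by the argument above, and since $\mathcal{U}$ is an induced subgraph of $\mathcal{M}$, the vertex $X$ lies in $\mathcal{M}$ as well.

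I do not anticipate any serious obstacle in this proof: the whole argument is essentially bookkeeping on top of Theorem~\ref{thm:mpdag}. The only point that deserves care is the inductive extension of Proposition~\ref{prop:union_of_oc} from pairs of orientation components to finite unions, but this is a direct induction using the fact that the union of two orientation components is again an orientation component, which may be re-unioned with the next one, and so on.
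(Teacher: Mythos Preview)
Your proposal is correct and matches the paper's own proof, which simply states that the conclusion follows directly from Proposition~\ref{prop:union_of_oc} and Theorem~\ref{thm:mpdag}. Your write-up merely unpacks the details (existence and uniqueness of the maximal orientation component via an inductive application of Proposition~\ref{prop:union_of_oc}, followed by the two directions of the biconditional from Theorem~\ref{thm:mpdag}) that the paper leaves implicit.
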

	
	% By Corollary~\ref{coro:maximaloc}, to identify the common direct causal relations, it only needs to find the maximal orientation component for each variable. In Section \ref{sec:sec:sec:mpdag}, we develop a polynomial time algorithm to achieve this goal.
	
	% \subsection{A Polynomial-Time Algorithm to find MPDAGs} \label{sec:sec:sec:mpdag}
	
	\begin{algorithm}[!t]
		\caption{Finding the maximal orientation component for a vertex.}
		\label{algo:moc}
		\begin{algorithmic}[1]
			\REQUIRE
			A CPDAG $\mathcal{G}^*$, a set $\cal K$ of DCCs consistent with $\mathcal{G}^*$, and a vertex $X$ in $\mathcal{G}^*$.
			\ENSURE
			The maximal orientation component for $X$ with respect to $\cal K$ and $\mathcal{G}^*$.	
			\STATE {Set ${\cal U}_m$ to be the chain component containing $X$, and compute ${\cal K}_m={\cal K}({\cal U}_m)$.}
			\WHILE {${\cal U}_m$ is not an orientation component for $X$,}
			\STATE {Find a potential leaf node in ${\cal U}_m$ and denote it by $Y$.}
			\STATE {Update ${\cal U}_m$ by removing $Y$ and the edges connected to $Y$.}
			\STATE {Update ${\cal K}_m$ by removing all clauses whose heads contain $Y$.}
			\ENDWHILE				
			\RETURN {${\cal U}_m$.}
		\end{algorithmic}
	\end{algorithm}
	
	Corollary~\ref{coro:maximaloc} yields a method to find the MPDAG of a restricted Markov equivalence class. The key step is to find the maximal orientation component for a variable, whose procedure is given in Algorithm \ref{algo:moc}. Algorithm \ref{algo:moc} is a generalization of Algorithm~\ref{algo:consistency-peo}. The first step of Algorithm \ref{algo:moc} is to set ${\cal U}_m$ to be the chain component containing $X$, and compute ${\cal K}_m={\cal K}({\cal U}_m)$. This is because the maximal orientation component for $X$ is an induced subgraph of the chain component to $X$ belongs.
	% to construct ${\cal K}(\mathcal{G}^*_u)$ from $\mathcal{G}^*$ and $\cal K$ according to Equation~(\ref{eq:n_u}). Since ${\cal K}(\mathcal{G}^*_u)$ is equivalent to ${\cal K}$ when $\cal K$ is consistent, dealing with ${\cal K}(\mathcal{G}^*_u)$ would be easier as it may contain fewer constraints.
	If  ${\cal U}_m$ is an orientation component for $X$, then it is definitely maximal. Otherwise,
	% we set ${\cal K}_m={\cal K}(\mathcal{G}^*_u)$ and ${\cal U}_m=\mathcal{G}^*_u$, and
	the while-loop begins. Since $X$ is not the only potential leaf node in ${\cal U}_m$, based on Theorem \ref{thm:consistency}, there must be another potential leaf node $Y$ in ${\cal U}_m$. We then remove $Y$ and the edges connected to $Y$. The resulting graph is the induced subgraph of ${\cal U}_m$ over $\mathbf{V}({\cal U}_m)\setminus\{Y\}$, and we still denote it by ${\cal U}_m$. Finally, we remove from ${\cal K}_m$ those clauses whose heads include $Y$. This is equivalent to setting ${\cal K}_m={\cal K}_m({\cal U}_m)$. The while-loop ends when ${\cal U}_m$ is an orientation component for $X$. Note that, since in each loop we remove one vertex, ${\cal U}_m$ will eventually become an orientation component for $X$ in the finite number of loops. The correctness of Algorithm \ref{algo:moc} is guaranteed by the following theorem.

	\begin{theorem}
		\label{thm:moc}
		The outputted undirected graph of Algorithm~\ref{algo:moc} is identical to the maximal orientation component for $X$ with respect to $\cal K$ and $\mathcal{G}^*$.
	\end{theorem}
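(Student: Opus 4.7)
The strategy is to verify two claims: first, the algorithm terminates with $\mathcal{U}_m$ being an orientation component for $X$; second, this orientation component is maximal.

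For the first claim, the while-loop strictly decreases $|\mathbf{V}(\mathcal{U}_m)|$ at each iteration and hence terminates, and the loop's exit condition directly yields that the output is an orientation component for $X$. Two invariants need to be maintained during the loop: that $\mathcal{U}_m$ is a connected undirected induced subgraph of $\mathcal{G}^*_u$, and that $\mathcal{K}_m = \mathcal{K}(\mathcal{U}_m)$ in the sense of Definition~\ref{def:restrict}. The first invariant follows from the standard fact that removing a simplicial vertex from a connected chordal graph yields a connected chordal graph (any path through the removed vertex $Y$ can be short-circuited using the clique formed by $Y$'s neighbors). The second is immediate from the observation that a potential leaf node $Y$ is never a tail in $\mathcal{K}(\mathcal{U}_m)$, so removing clauses whose heads contain $Y$ coincides with restricting to $\mathbf{V}(\mathcal{U}_m)\setminus\{Y\}$. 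We also need that whenever the loop continues, a potential leaf node distinct from $X$ can be chosen: by Theorem~\ref{thm:consistency} applied to $\mathcal{U}_m$ (which inherits consistency from $\mathcal{K}$ and $\mathcal{G}^*$), some potential leaf node exists, and if the only one were $X$, then $\mathcal{U}_m$ would already be an orientation component for $X$, contradicting the loop condition.

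For the second claim, let $\mathcal{U}_m^{(0)}, \mathcal{U}_m^{(1)}, \ldots, \mathcal{U}_m^{(k)}$ denote the sequence of graphs produced by the algorithm, and let $\mathcal{U}^*$ be an arbitrary orientation component for $X$ with respect to $\mathcal{K}$ and $\mathcal{G}^*$. The plan is to prove by induction on $i$ that $\mathbf{V}(\mathcal{U}^*) \subseteq \mathbf{V}(\mathcal{U}_m^{(i)})$ for every $i$, which combined with Proposition~\ref{prop:union_of_oc} gives the maximality claim, since the collection of orientation components for $X$ is closed under union and thus admits a unique maximal element. The base case holds because $\mathcal{U}_m^{(0)}$ is the chain component containing $X$ and $\mathcal{U}^*$ is a connected undirected induced subgraph of $\mathcal{G}^*$ passing through $X$. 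For the inductive step, suppose $\mathbf{V}(\mathcal{U}^*) \subseteq \mathbf{V}(\mathcal{U}_m^{(i)})$ and let $Y \neq X$ be the potential leaf node removed at the next step. Assume for contradiction that $Y \in \mathbf{V}(\mathcal{U}^*)$; the idea is to show $Y$ is a potential leaf node of $\mathcal{U}^*$: the neighbors of $Y$ in $\mathcal{U}^*$ are a subset of those in $\mathcal{U}_m^{(i)}$ and are pairwise adjacent there by simpliciality, hence pairwise adjacent in $\mathcal{U}^*$ as well; also, since $\mathcal{K}(\mathcal{U}^*) \subseteq \mathcal{K}(\mathcal{U}_m^{(i)})$ by definition of the restriction subset, $Y$ is not a tail of any clause in $\mathcal{K}(\mathcal{U}^*)$. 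This contradicts the assumption that $X$ is the only potential leaf node in $\mathcal{U}^*$.

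The main technical obstacle is carefully justifying the passage from ``simplicial and free-of-tails in $\mathcal{U}_m^{(i)}$'' to the same property in $\mathcal{U}^*$, and maintaining the invariant $\mathcal{K}_m = \mathcal{K}(\mathcal{U}_m)$ across the update in line~5 of the algorithm. Both points hinge on the monotonicity of the restriction operation, namely that $\mathcal{K}(\mathcal{U}') \subseteq \mathcal{K}(\mathcal{U})$ whenever $\mathbf{V}(\mathcal{U}') \subseteq \mathbf{V}(\mathcal{U})$. Once this monotonicity is laid out, the argument reduces to a clean induction combined with Proposition~\ref{prop:union_of_oc} to conclude that the algorithm's output coincides with the unique maximal orientation component for $X$.
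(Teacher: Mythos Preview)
Your proof is correct and follows essentially the same approach as the paper: the core argument for maximality---that a potential leaf node $Y$ removed from $\mathcal{U}_m^{(i)}$ would also be a potential leaf node in any orientation component $\mathcal{U}^*$ containing it, via the monotonicity $\mathcal{K}(\mathcal{U}^*)\subseteq\mathcal{K}(\mathcal{U}_m^{(i)})$ and the inheritance of simpliciality---is identical to the paper's, which phrases it as a contradiction on the first removed vertex lying in $\mathcal{U}_{\mathrm{true}}$ rather than as an induction over the loop iterations. Your version is somewhat more careful about the invariants (connectedness after removing a simplicial vertex, and $\mathcal{K}_m=\mathcal{K}(\mathcal{U}_m)$ across line~5), which the paper dismisses with ``it is easy to see.''
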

	
	{Similar to Algorithm~\ref{algo:consistency-peo}, the complexity of Algorithm \ref{algo:moc} is upper bounded by $O(|{\bf V}(\mathcal{G}^*)|^4+|{\cal K}|\cdot|{\bf V}(\mathcal{G}^*)|^2)$.} Applying Algorithm \ref{algo:moc} to each vertex separately, we can find the MPDAG of a restricted Markov equivalence class based on Corollary~\ref{coro:maximaloc}, as shown by the following example.
	
	% it is straightforward to develop an algorithm to find the MPDAG of the restricted Markov equivalence class induced by a CPDAG and DCCs. In fact, one only needs to apply Algorithm \ref{algo:moc} to each vertex separately, and use Corollary~\ref{coro:maximaloc} to construct an MPDAG. To better illustrate the procedure, we present the following example, and the detailed algorithm is omitted for space reasons.

	\begin{figure}[!t]
		\vspace{-0.5em}
		\centering
		\subfloat[${\cal G}^*$ and $\cal K$ \label{fig:3-1}]{
			\begin{minipage}[t]{0.29\linewidth}
				\centering
				\includegraphics[width=0.7\linewidth]{fig/fig3-1.png}
				%\caption{fig1}
			\end{minipage}%
		}%
		\hspace{0.03\linewidth}
		\subfloat[After removing $C$  \label{fig:3-2}]{
			\begin{minipage}[t]{0.29\linewidth}
				\centering
				\includegraphics[width=0.7\linewidth]{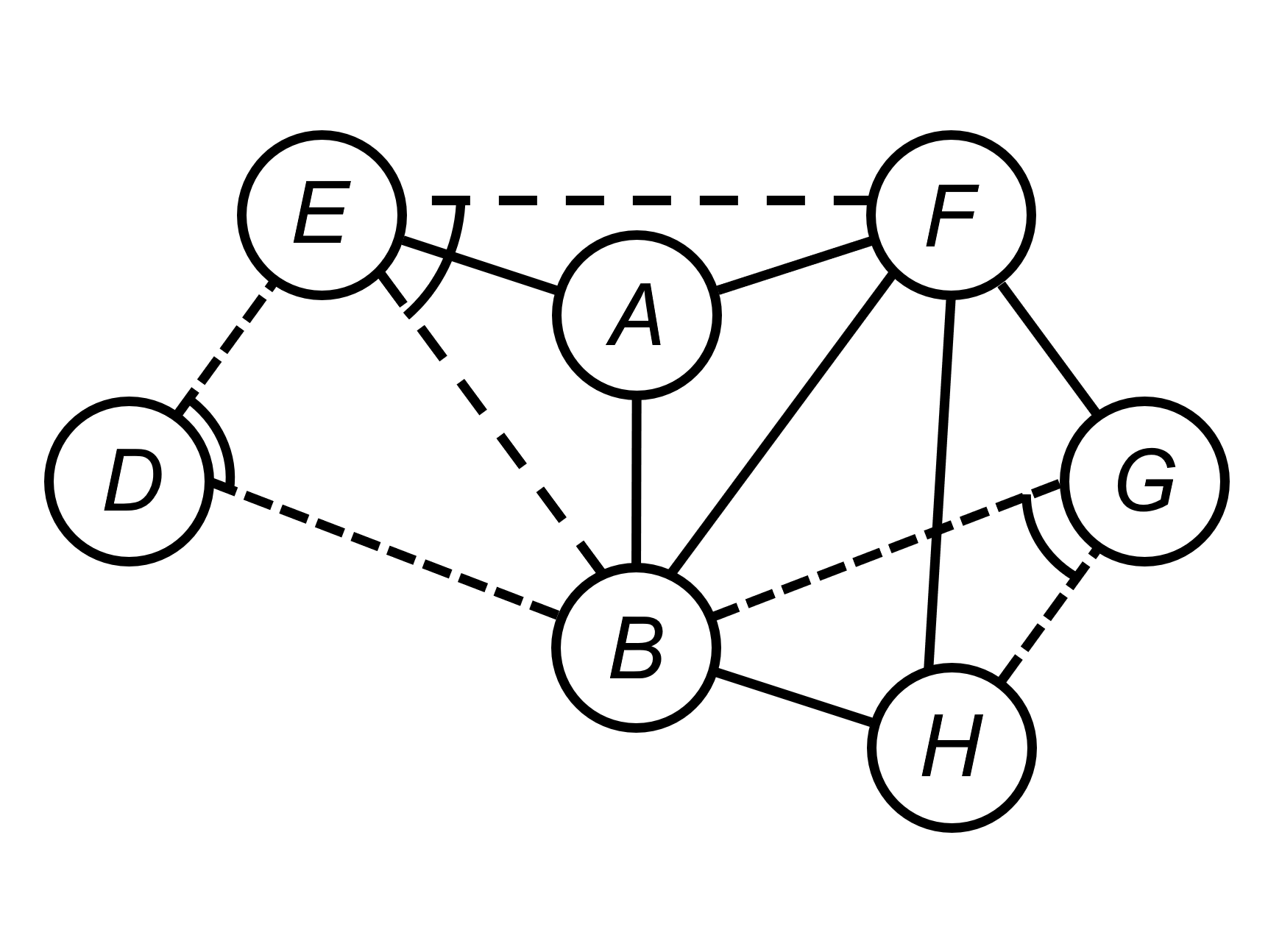}
				%\caption{fig1}
			\end{minipage}%
		}%
		\hspace{0.03\linewidth}
		\subfloat[After removing $H$   \label{fig:3-3}]{
			\begin{minipage}[t]{0.29\linewidth}
				\centering
				\includegraphics[width=0.7\linewidth]{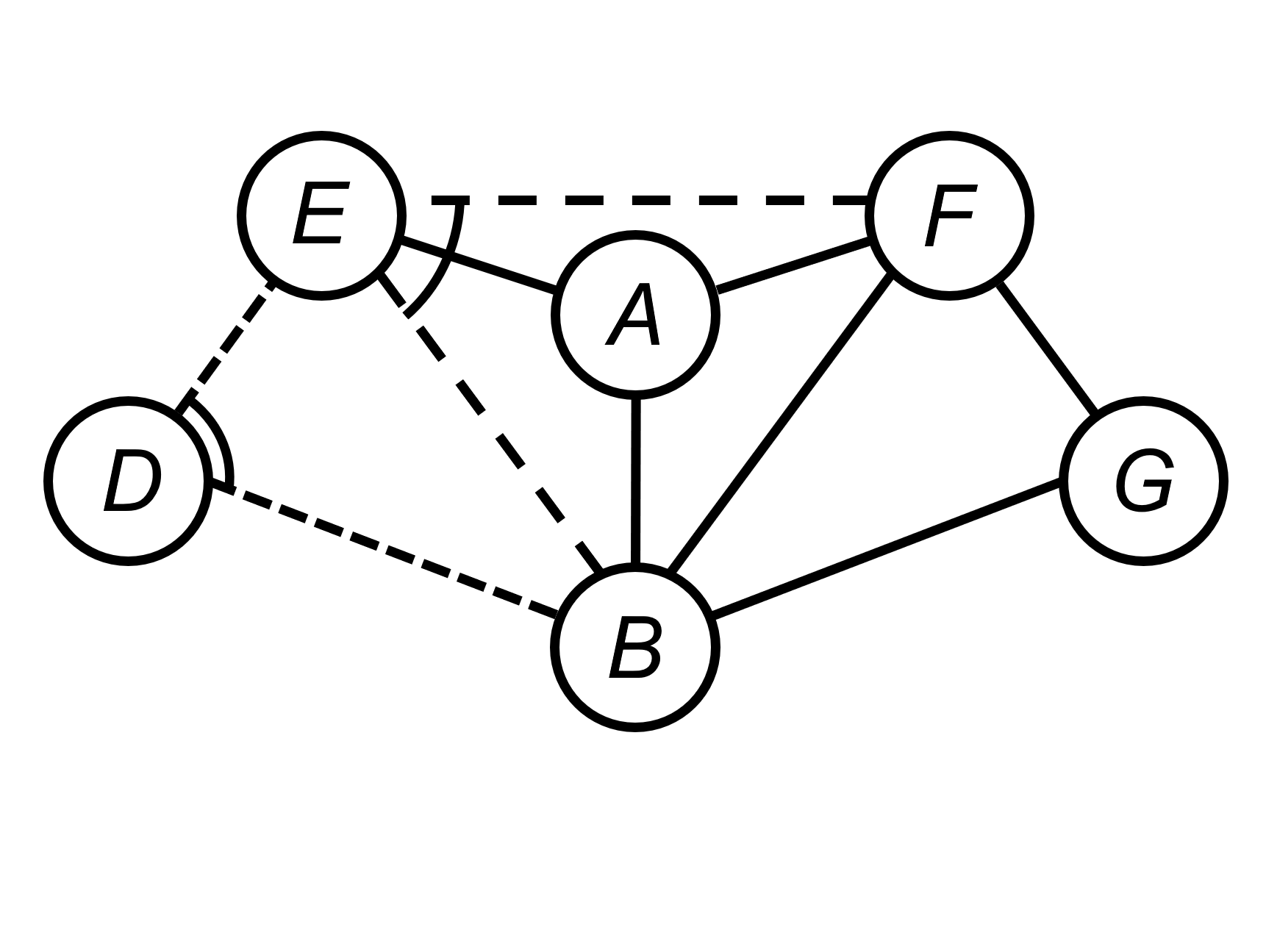}
				%\caption{fig1}
			\end{minipage}%
		}%
		
		% \hspace{0.01\linewidth}
		\subfloat[After removing $G$  \label{fig:3-4}]{
			\begin{minipage}[t]{0.29\linewidth}
				\centering
				\includegraphics[width=0.7\linewidth]{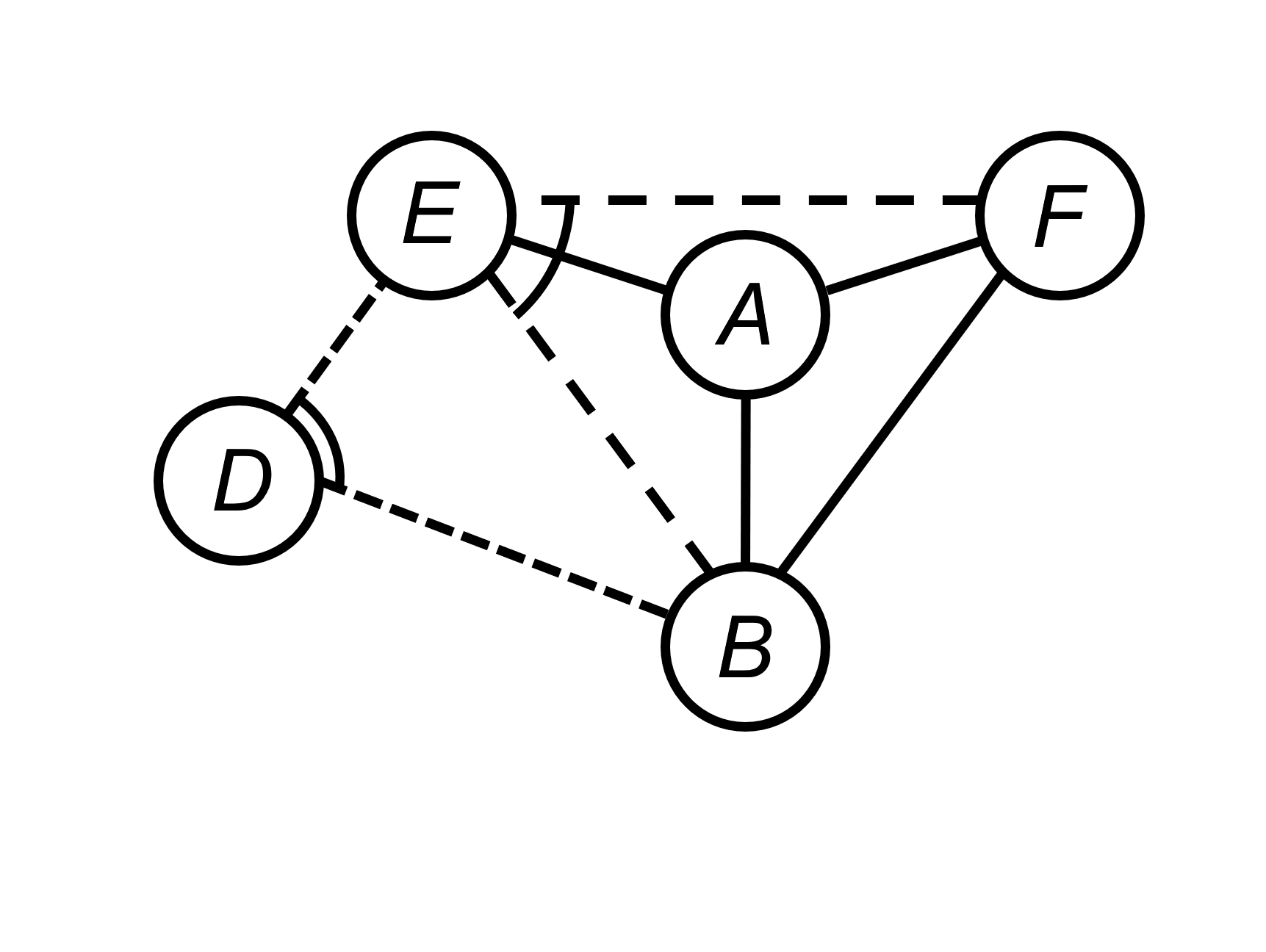}
				%\caption{fig1}
			\end{minipage}%
		}%
		\hspace{0.03\linewidth}
		\subfloat[The MOC for $A$  \label{fig:3-5}]{
			\begin{minipage}[t]{0.29\linewidth}
				\centering
				\includegraphics[width=0.7\linewidth]{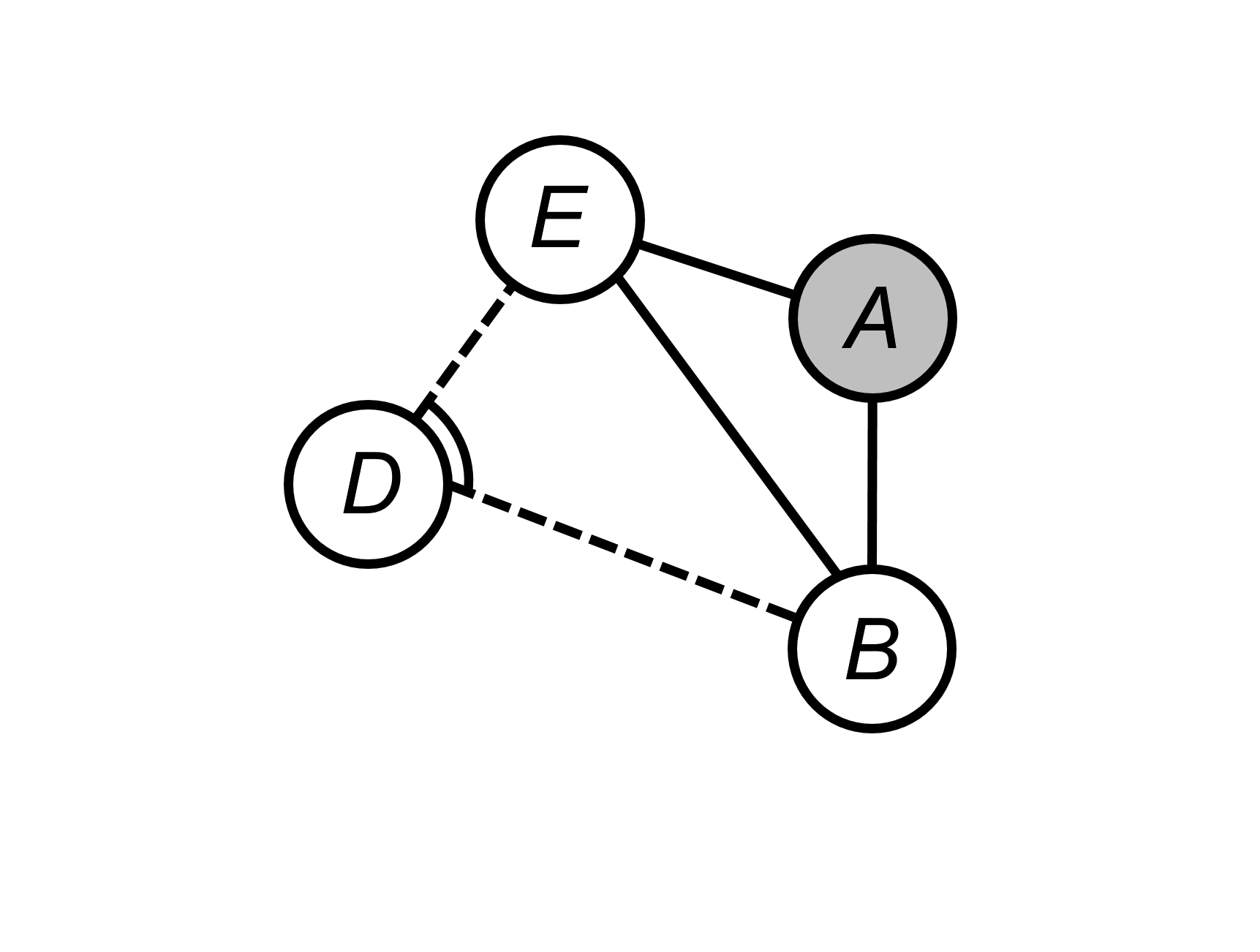}
				%\caption{fig1}
			\end{minipage}%
		}%
		\hspace{0.03\linewidth}
		\subfloat[The MOC for $F$  \label{fig:4-1}]{
			\begin{minipage}[t]{0.29\linewidth}
				\centering
				\includegraphics[width=0.7\linewidth]{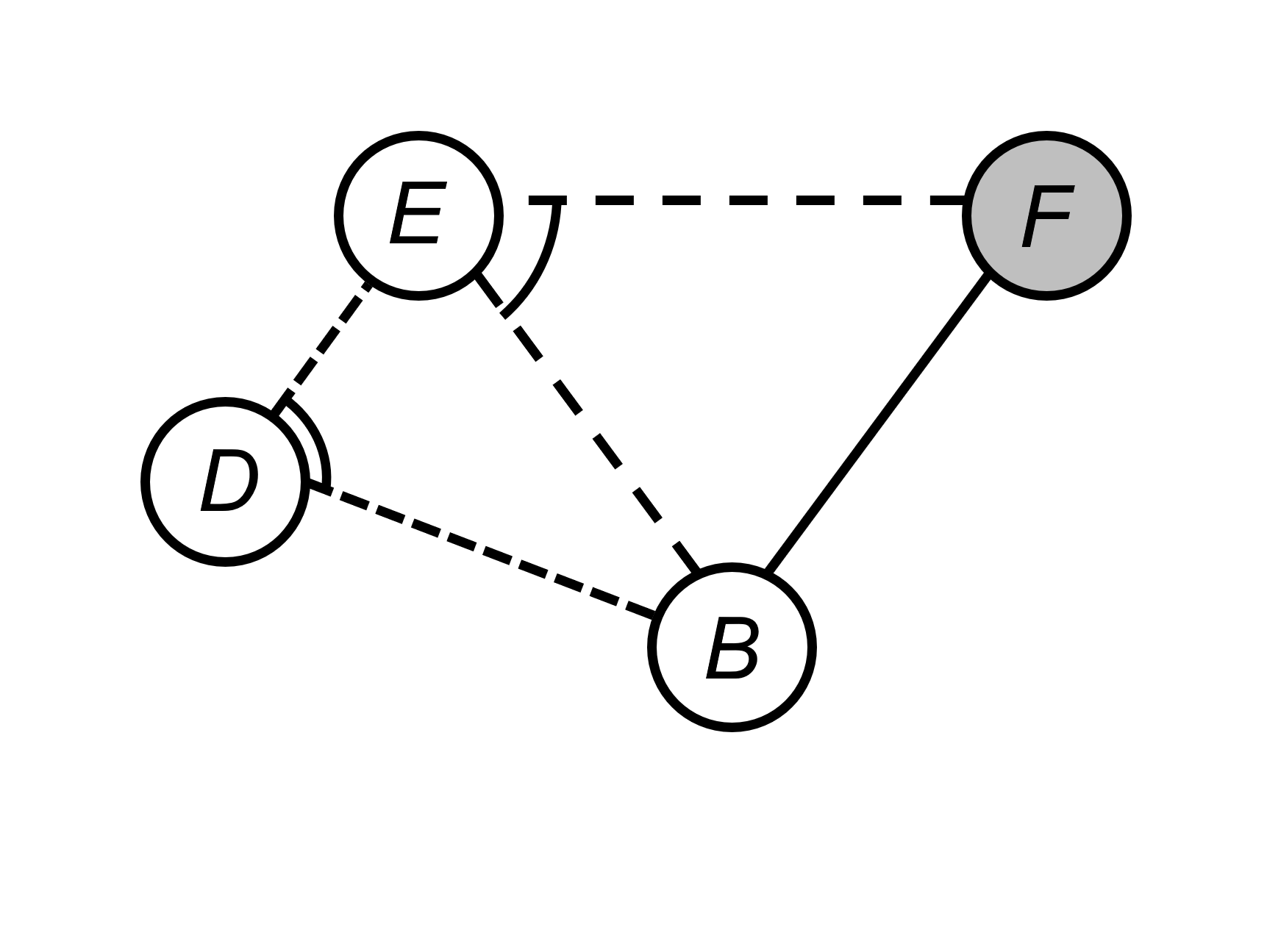}
				%\caption{fig1}
			\end{minipage}%
		}%

		\subfloat[The MOC for $G$    \label{fig:4-2}]{
			\begin{minipage}[t]{0.29\linewidth}
				\centering
				\includegraphics[width=0.7\linewidth]{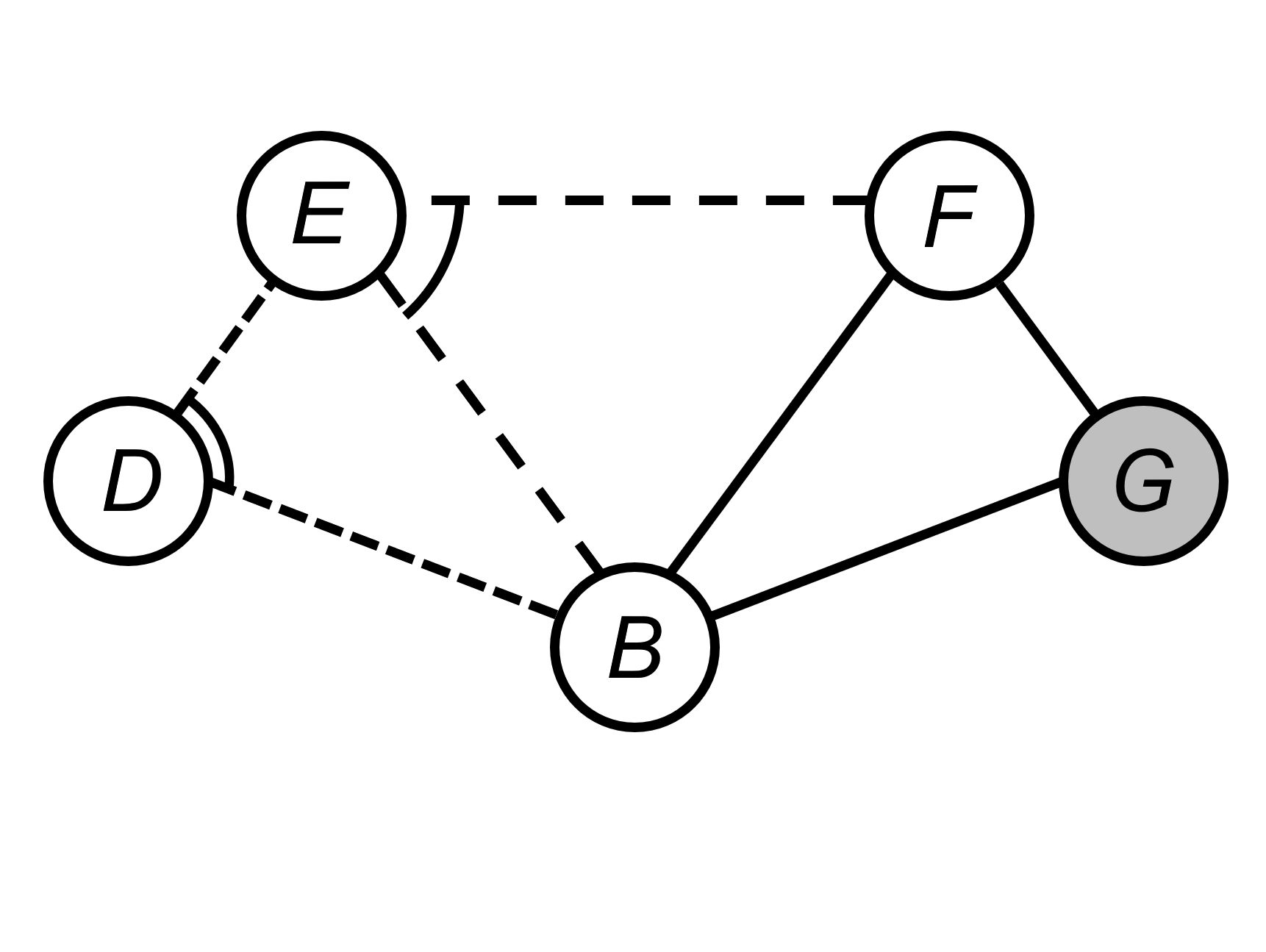}
				%\caption{fig1}
			\end{minipage}%
		}%
		\hspace{0.03\linewidth}
		\subfloat[The MOC for $H$   \label{fig:4-3}]{
			\begin{minipage}[t]{0.29\linewidth}
				\centering
				\includegraphics[width=0.7\linewidth]{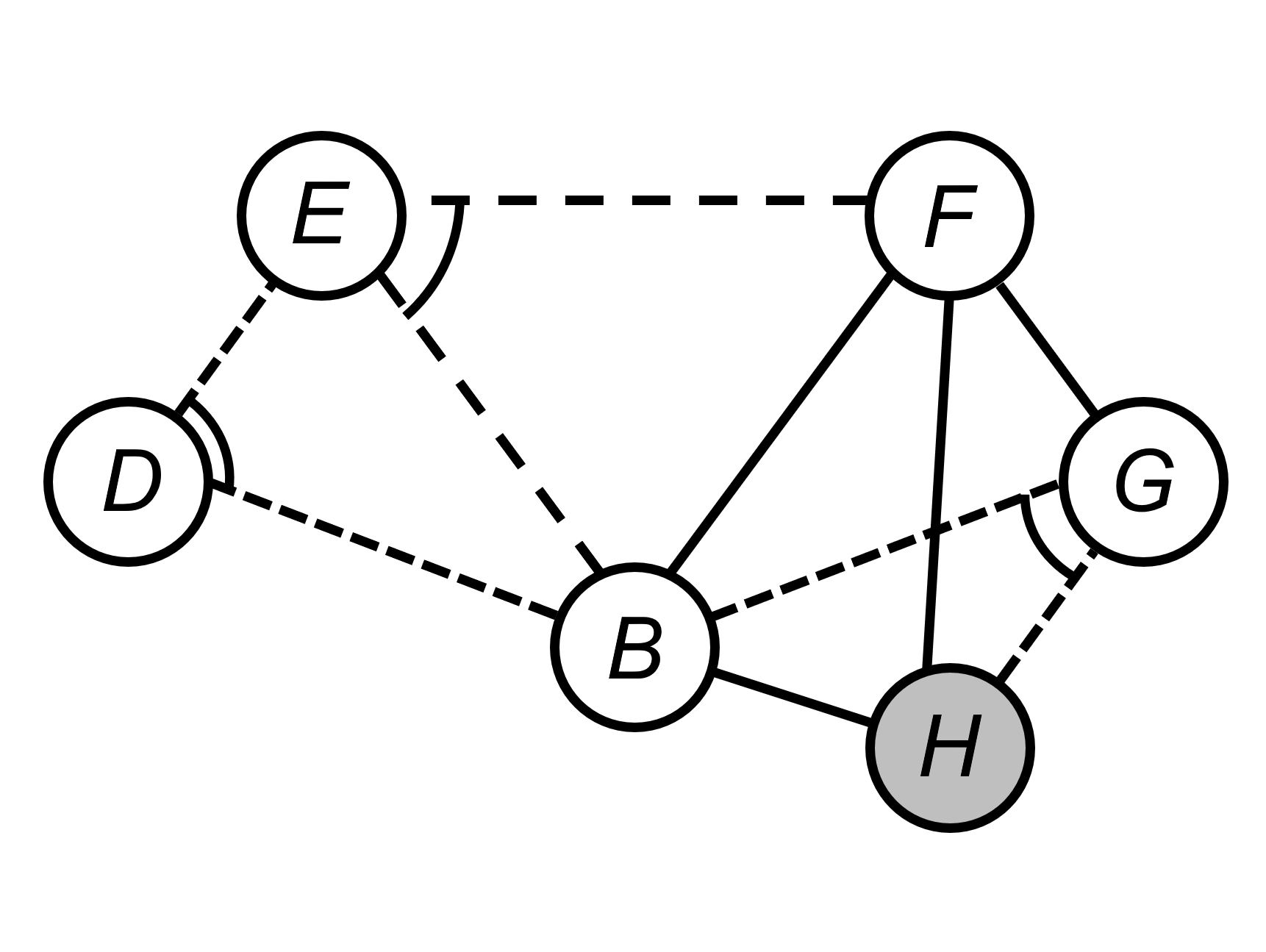}
				%\caption{fig1}
			\end{minipage}%
		}%
		\hspace{0.03\linewidth}
		\subfloat[The MPDAG $\cal H$  \label{fig:4-4}]{
			\begin{minipage}[t]{0.29\linewidth}
				\centering
				\includegraphics[width=0.7\linewidth]{fig/fig4-5.png}
				%\caption{fig1}
			\end{minipage}%
		}%
		
		\caption{An illustrative example to show how to use Algorithm~\ref{algo:moc} to find the MPDAG.}
		\label{fig:find_moc}
		% \vspace{-0.5em}
	\end{figure}

	\begin{example}
		\label{ex:moc}
		Figure~\ref{fig:3-1} shows a CPDAG ${\cal G}^*$ and a set of DCCs $\cal K$ consisting of $D\tor \{E, B\}$, $E\tor \{A, C\}$, $E\tor \{B, F\}$ and $G\tor \{B, H\}$. We first show how to use Algorithm \ref{algo:moc} to find the maximal orientation component for $A$. Note that, since ${\cal G}^*$ is undirected, ${\cal G}^*_u={\cal G}^*$ and  ${\cal K}(\mathcal{G}^*_u)={\cal K}$. As $C,A,H$ are potential leaf nodes in $\mathcal{G}^*_u$, $\mathcal{G}^*_u$ is not an orientation component for $A$, which triggers the while-loop. In the first loop, we remove $C$ and the edges connected to $C$ first, resulting the undirected graph ${\cal U}_m$ shown in Figure \ref{fig:3-2}, and then remove $E\tor \{A, C\}$ from ${\cal K}$ as it has a head not in ${\cal U}_m$. The remaining clauses are $D\tor \{E, B\}$, $E\tor \{B, F\}$ and $G\tor\{B, H\}$, which are visualized by arcs in Figure \ref{fig:3-2}. Since $H$ is still a potential leaf node in ${\cal U}_m$, we remove $H$ and the edges connected to $H$ from the current graph, and remove $G\tor \{B, H\}$ from the current set of clauses. The result is shown in Figure \ref{fig:3-3}. In the next two loops, we sequentially remove $G$ (Figure \ref{fig:3-4}) and $F$ (Figure \ref{fig:3-5}). Finally, we have the undirected graph shown in Figure \ref{fig:3-5}, which is the maximal orientation component for $A$. Similarly, using Algorithm \ref{algo:moc} we can find the maximal orientation components for $F$, $G$ and $H$ separately, which are shown in Figures \ref{fig:4-1} to \ref{fig:4-3}, respectively. Note that, The maximal orientation components for the remaining variables are all singleton graphs. Therefore, by Corollary \ref{coro:maximaloc}, $E\to A$, $B \to A$, $E\to F$, $B\to F$, $F\to G$, $B\to G$, $B\to H$, $F\to H$, and $G\to H$ are all and only directed edges in the MPDAG. The resulting MPDAG is given in Figure~\ref{fig:4-4}.
	\end{example}

	{
		\begin{algorithm}[!h]
			\caption{ Finding the MPDAG and the minimal residual set (Decomposing pairwise causal constraints).}
			\label{algo:decomp}
			\begin{algorithmic}[1]
				\REQUIRE
				A CPDAG $\mathcal{G}^*$, a consistent causal constraint set $\cal B$.
				\ENSURE
				The MPDAG of $[\mathcal{G}^*, {\cal B}]$, a cardinality-minimal residual set of DCCs $\cal R$, and the unique element-wise head-minimal residual set of DCCs ${\cal R}^*$.
				\STATE {Construct the equivalent DCCs ${\cal K}$ based on Theorem \ref{thm:nbr_set_const}.}
				\STATE {Let ${\cal H}=\mathcal{G}^*$.}
				\FOR {$X\in \mathbf{V}(\mathcal{G}^*)$,}
				\STATE {Find the maximal orientation component for $X$ according to Algorithm~\ref{algo:moc} and denote it by $\cal U$.}
				\FOR {$Y\in adj(X, {\cal U})$,}
				\STATE {Replace $Y-X$ by $Y\to X$ in ${\cal H}$.}
				\ENDFOR
				\ENDFOR		
				\STATE{Set ${\cal R} = {\cal K}$.}
				\WHILE {${\cal R}\neq\varnothing$,}
				\IF  {there exists a $\kappa\in {\cal R}$ such that $\cup_{D\in \kappa_h}\{D\to \kappa_t\}\cup{\bf E}_d({\cal H}) \cup \left({\cal R}\setminus\{\kappa\}\right)$ is inconsistent with ${\cal G}^*$,}
				\STATE {Set ${\cal R}={\cal R}\setminus\{\kappa\}$.}
				\ELSE
				\STATE {\bf break}
				\ENDIF
				\ENDWHILE
				\STATE{Set ${\cal R}^* = {\cal R}$.}
				\FOR {$\kappa\in {\cal R}^*$}
				\WHILE {there exists a $D\in \kappa_h$ such that $\cup_{D'\in \kappa_h\setminus\{D\}}\{D'\to \kappa_t\}\cup {\cal R}^*$ is inconsistent with ${\cal G}^*$,}
				\STATE {Set $\kappa_h=\kappa_h\setminus\{D\}$.}
				\ENDWHILE
				\ENDFOR 		
				\RETURN {${\cal H}$, ${\cal R}$ and ${\cal R}^*$.}
			\end{algorithmic}
		\end{algorithm}
	}

	{Algorithm~\ref{algo:decomp} summarizes the procedure for decomposing pairwise causal constraints. Lines 2 to 8 describe how to use Algorithm~\ref{algo:moc} to construct the MPDAG. To obtain a cardinality-minimal residual set of DCCs $\cal R$, we iteratively remove redundant DCCs $\kappa$ from $\cal K$, as illustrated in lines 9 to 16. According to Theorem~\ref{thm:equi_dcc}, $\kappa$ is redundant with respect to ${\bf E}_d({\cal H})\cup\left({\cal R}\setminus\{\kappa\}\right)$ if and only if $\cup_{D\in \kappa_h}\{D\to \kappa_t\}\cup{\bf E}_d({\cal H}) \cup \left({\cal R}\setminus\{\kappa\}\right)$ is inconsistent with ${\cal G}^*$. Algorithm~\ref{algo:consistency-peo} can be applied to check such consistency. Lines 17 to 22 detail the procedure for obtaining the unique element-wise head-minimal residual DCC set. Starting from the cardinality-minimal set $\cal R$, we prune the heads of each DCC. Specifically, for each $\kappa \in \cal R$, we check whether there exists a $D \in \kappa_h$ such that $\cup_{D' \in \kappa_h \setminus \{D\}} \{D' \to \kappa_t\} \cup {\cal R}$ is inconsistent with ${\cal G}^{*}$. If so, $\kappa_t \tor \kappa_h \setminus \{D\}$ can be inferred from $\cal R$. Since $\kappa_t \tor \kappa_h \setminus \{D\}$ implies $\kappa_t \tor \kappa_h$, replacing $\kappa$ in $\cal R$ with $\kappa_t \tor \kappa_h \setminus \{D\}$ yields a DCC set equivalent to $\cal R$. We repeat this pruning procedure until no further reduction is possible. After pruning all DCCs in $\cal R$, we obtain an element-wise head-minimal DCC set. The correctness of this procedure is guaranteed by Definition~\ref{def:minimal_dcc} and statements (iii) and (iv) of Theorem~\ref{prop:eqdecomp}.}
	
	According to~\cite{fang2021local}, the complexity of line 1 is at most $O(|{\cal B}|\cdot|{\bf V}(\mathcal{G}^*)|^3)$. Since Algorithm \ref{algo:moc} is applied to each vertex separately to find the MPDAG, the complexity of lines 3 to 8 is upper bounded by $O(|{\bf V}(\mathcal{G}^*)|^5+|{\cal K}|\cdot|{\bf V}(\mathcal{G}^*)|^3)$. Next, the while-loop in lines 10 to 16 runs at most $|{\cal K}|$ times, with each iteration taking at most $O(|{\bf V}(\mathcal{G}^*)|^4+(|{\cal K}|+|{\bf V}(\mathcal{G}^*)|)\cdot|{\bf V}(\mathcal{G}^*)|^2)$ time. Thus, the total complexity of this while-loop is bounded by $O(|{\cal K}|\cdot|{\bf V}(\mathcal{G}^*)|^4+(|{\cal K}|^2+|{\cal K}|\cdot|{\bf V}(\mathcal{G}^*)|)\cdot|{\bf V}(\mathcal{G}^*)|^2)$. Similarly, the for-loop in lines 18 to 22 has complexity bounded by $O(|{\cal K}|\cdot|{\bf V}(\mathcal{G}^*)|^5+(|{\cal K}|^2\cdot|{\bf V}(\mathcal{G}^*)|+|{\cal K}|\cdot|{\bf V}(\mathcal{G}^*)|^2)\cdot|{\bf V}(\mathcal{G}^*)|^2)$. In summary, the overall time complexity of Algorithm~\ref{algo:decomp} is polynomial in $|{\bf V}(\mathcal{G}^*)|$, $|{\cal K}|$ and $|{\cal B}|$. Since the sizes of $\cal B$ and $\cal K$ are both at most quadratic in the number of vertices, the worst-case time complexity of Algorithm~\ref{algo:decomp} is bounded by $O(|{\bf V}(\mathcal{G}^*)|^7)$.

	\section{Causal Inference with  Background Knowledge}\label{sec:causal}
	In this section, we study the causal inference problem when pairwise causal background knowledge is available. It is well-known that the causal effect of a treatment (or multiple treatments) on a response (or multiple responses) may not be identifiable given a CPDAG, while
	additional information in background knowledge could make the causal effect  identifiable or less uncertain.
	In Section~\ref{sec:sec:id}, we study the identification condition of a causal effect under restricted Markov equivalence represented by a CPDAG and a set of DCCs, and in Section~\ref{sec:sec:ida}, when a causal effect is unidentifiable, we further extend the IDA framework to estimate all possible causal effects.
	\subsection{Identifiability}\label{sec:sec:id}
	
	We first present the definition of causal effect identifiability. This definition is a generalization of~ \citet[Definition~3.1]{Perkovic2020mpdag} to the restricted Markov equivalence class induced by a CPDAG and background knowledge.
	
	\begin{definition}[Causal Effect Identifiability]
		Let $\mathcal{G}^*$ be a CPDAG over the vertex set $\mathbf{V}$ and $\cal B$ be a consistent pairwise causal constraint set (or a consistent DCC set). Suppose that $\mathbf{X},\mathbf{Y}\subseteq\mathbf{V}$ are  two disjoint vertex sets. The causal effect of $\mathbf{X}$ on $\mathbf{Y}$ is identifiable from $\mathcal{G}^*$ and $\cal B$ (or in $[\mathcal{G}^*, \cal B]$), if and only if  $f(\mathbf{y} \,|\, do(\mathbf{x}))$ is uniquely computable from any observational distribution Markovian to any ${\cal G} \in [\mathcal{G}^*, {\cal B}]$.
	\end{definition}
	
	% \begin{definition}[Causal Effect Identifiability]
		% 	Let $\mathcal{G}^*$ be a CPDAG over vertex set $\mathbf{V}$ and $\cal B$ be a consistent causal constraint set (or a consistent DCC set). Suppose that $\mathbf{X},\mathbf{Y}\subseteq\mathbf{V}$ are  two disjoint vertex sets. The causal effect of $\mathbf{X}$ on $\mathbf{Y}$ is identifiable from $\mathcal{G}^*$ and $\cal B$ (or in $[\mathcal{G}^*, \cal B]$), if for any two distinct DAGs ${\cal G}_1, {\cal G}_2 \in [\mathcal{G}^*, {\cal B}]$ and observational distribution $f$ Markovian to ${\cal G}_1$ and ${\cal G}_2$, the interventional distributions $f_1(\mathbf{y} \,|\, do(\mathbf{x}))$ and $f_2(\mathbf{y} \,|\, do(\mathbf{x}))$ computed from two causal models $({\cal G}_1, f(\mathbf{V}))$ and $({\cal G}_2, f(\mathbf{V}))$, respectively, are equivalent.
		% \end{definition}
	
	Our main result of identifiability is given below.
	
	% \begin{theorem}\label{thm:id}
		% 	Let $\mathcal{G}^*$ be a CPDAG and $\cal B$ be a consistent pairwise causal constraint set (or a consistent DCC set). Denote by $\cal H$ the MPDAG of $[\mathcal{G}^*, {\cal B}]$. For any two disjoint vertex sets $\mathbf{X},\mathbf{Y}\subseteq\mathbf{V}(\mathcal{G}^*)$, the causal effect of $\mathbf{X}$ on $\mathbf{Y}$ is identifiable from $\mathcal{G}^*$ and $\cal B$ if and only if every proper possibly causal path from $\mathbf{X}$ to $\mathbf{Y}$ starts with a directed edge in $\cal H$.
		% \end{theorem}
	
	\begin{theorem}\label{thm:id}
		Let $\mathcal{G}^*$ be a CPDAG and $\cal B$ be a consistent pairwise causal constraint set (or a consistent DCC set). Denote by $\cal H$ the MPDAG of $[\mathcal{G}^*, {\cal B}]$. For any two disjoint vertex sets $\mathbf{X},\mathbf{Y}\subseteq\mathbf{V}(\mathcal{G}^*)$, the causal effect of $\mathbf{X}$ on $\mathbf{Y}$ is identifiable in $[\mathcal{G}^*, \cal B]$ if and only if it is identifiable in $[\mathcal{H}]$.
	\end{theorem}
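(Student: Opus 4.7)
The plan is to reduce Theorem~\ref{thm:id} to the MPDAG identifiability characterization of~\citet{Perkovic2020mpdag}. By Proposition~\ref{prop:eqdecomp}, we can write $[\mathcal{G}^*,\mathcal{B}]=[\mathcal{H},\mathcal{R}]$ for some residual DCC set $\mathcal{R}$, so in particular $[\mathcal{G}^*,\mathcal{B}]\subseteq[\mathcal{H}]$. Since~\citet{Perkovic2020mpdag} already established that identifiability in $[\mathcal{H}]$ is equivalent to the path condition in the statement, it is enough to prove that identifiability in $[\mathcal{G}^*,\mathcal{B}]$ coincides with identifiability in $[\mathcal{H}]$. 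This explains the ``surprising'' phrasing: the decomposed MPDAG alone determines identifiability, and the residual clauses in $\mathcal{R}$ play no role.

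The $\Leftarrow$ direction (sufficiency) is immediate from the inclusion $[\mathcal{G}^*,\mathcal{B}]\subseteq[\mathcal{H}]$. If every proper possibly causal path from $\mathbf{X}$ to $\mathbf{Y}$ in $\mathcal{H}$ starts with a directed edge, then by~\citet{Perkovic2020mpdag} the causal effect is identifiable in $[\mathcal{H}]$. Consequently, for any two DAGs $\mathcal{G}_1,\mathcal{G}_2\in[\mathcal{G}^*,\mathcal{B}]\subseteq[\mathcal{H}]$ and any observational distribution $f$ Markovian to both, the interventional distributions $f_1(\mathbf{y}\mid do(\mathbf{x}))$ and $f_2(\mathbf{y}\mid do(\mathbf{x}))$ must agree.

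The $\Rightarrow$ direction (necessity) is where the real work lies, and I would prove it by contraposition. Suppose there is a proper possibly causal path $p=(V_0=X,V_1,\ldots,V_k=Y)$ in $\mathcal{H}$ with $X\in\mathbf{X}$, $Y\in\mathbf{Y}$, and $V_0-V_1$ undirected. The goal is to produce two DAGs $\mathcal{G}_1,\mathcal{G}_2\in[\mathcal{G}^*,\mathcal{B}]$ and an observational distribution $f$ on which the two interventional distributions differ. The starting point is that, by Definition~\ref{interpretbg:mpdagresclass}, an undirected edge in $\mathcal{H}$ must have both of its orientations realized somewhere in $[\mathcal{G}^*,\mathcal{B}]$, so $V_1\to X$ appears in some DAG and $X\to V_1$ in another. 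I would then adapt the witness construction that~\citet{Perkovic2020mpdag} use at the level of $[\mathcal{H}]$: take a chain-component consistent extension of the directed part of $\mathcal{H}$ that orients $V_0-V_1$ as $V_1\to X$ (yielding $\mathcal{G}_1$), and a parallel extension orienting $V_0-V_1$ as $X\to V_1$ and moreover directing the remaining undirected edges on $p$ forward into a causal path from $X$ to $Y$ (yielding $\mathcal{G}_2$). Combined with a suitable parameterization of a Gaussian or binary model Markov to both DAGs, this yields $f_1(\mathbf{y}\mid do(\mathbf{x}))\neq f_2(\mathbf{y}\mid do(\mathbf{x}))$.

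The main obstacle is that Perković's construction lives in $[\mathcal{H}]$, whereas we need the two DAGs to satisfy every clause in $\mathcal{R}$ so that they lie in $[\mathcal{H},\mathcal{R}]=[\mathcal{G}^*,\mathcal{B}]$. To handle this, I would prove a lemma asserting that the local orientation choice $V_1\to X$ (respectively $X\to V_1$) can always be completed to a full DAG in $[\mathcal{G}^*,\mathcal{B}]$: starting from an arbitrary $\mathcal{G}\in[\mathcal{G}^*,\mathcal{B}]$ whose orientation of $V_0-V_1$ disagrees with the desired one, apply an edge-reversal argument on a chain component (as in the proofs for CPDAGs via perfect elimination orderings, already invoked around Algorithm~\ref{algo:moc} and Theorem~\ref{thm:consistency}) to flip only the orientation of $V_0-V_1$ while preserving the skeleton, v-structures, acyclicity, and every clause in $\mathcal{R}$; satisfaction of the clauses is preserved because none of them can force the specific orientation of this undirected edge, by Theorem~\ref{thm:mpdag} and the maximality of $\mathcal{H}$. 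This localization lemma is the technical crux; once it is in place, iterating it along the remaining undirected edges of $p$ builds the required $\mathcal{G}_2$, and the differing causal effects follow from a standard density argument on the parameters, completing the contrapositive.
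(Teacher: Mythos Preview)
Your high-level reduction is exactly the paper's: sufficiency is immediate from $[\mathcal{G}^*,\mathcal{B}]\subseteq[\mathcal{H}]$ together with \citet{Perkovic2020mpdag}, and necessity is by contraposition, producing two witness DAGs inside $[\mathcal{G}^*,\mathcal{B}]$ that disagree on $f(\mathbf{y}\mid do(\mathbf{x}))$. The difference, and the gap, is in how you build those witnesses.

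Your ``localization lemma'' as stated does not go through. You argue that because $V_0-V_1$ is undirected in $\mathcal{H}$, flipping that single edge in some $\mathcal{G}\in[\mathcal{G}^*,\mathcal{B}]$ preserves every clause in $\mathcal{R}$. But undirectedness in $\mathcal{H}$ only guarantees that \emph{some} DAG in $[\mathcal{G}^*,\mathcal{B}]$ has each orientation; it does not guarantee that from an arbitrary such DAG you can flip that one edge in isolation. A clause $V_1\tor\{V_0,Z\}$ with $Z\to V_1$ in $\mathcal{G}$ is satisfied via $V_1\to V_0$, and flipping to $V_0\to V_1$ violates it. More seriously, your plan to ``iterate along the remaining undirected edges of $p$'' to force the whole path forward in $\mathcal{G}_2$ has the same defect compounded: each edge may be orientable either way in \emph{some} DAG, but nothing you have written shows all those orientations are simultaneously realizable in a single DAG satisfying $\mathcal{R}$.

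The paper avoids both problems with a different lemma and a path-shortening trick. First it proves (Lemma~\ref{app:lem:out}) that for \emph{any} vertex $W$ there is a DAG in $[\mathcal{G}^*,\mathcal{K}]$ in which every sibling of $W$ in $\mathcal{H}$ is a child of $W$; the proof is a direct application of the potential-leaf-node consistency criterion (Theorem~\ref{thm:consistency}), not an edge-flip. Second, it replaces the offending path by a \emph{shortest} possibly causal subsequence $\pi^*=(X,W,U,\dots,Y)$ with undirected first edge, so that $\pi^*(W,Y)$ is unshielded. Applying the lemma at $W$ gives a DAG with $X\leftarrow W\to U$, and unshieldedness plus Meek's Rule~1 then forces $U\to\cdots\to Y$ automatically---no iteration needed. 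Applying the lemma at $X$ (or at $U$, depending on whether $X$ and $U$ are adjacent) produces the second witness. There is also a case you do not anticipate: when $X$ is adjacent to $U$ and $W-U$ is undirected in $\mathcal{H}$, the two witnesses have $X\to W\leftarrow U$ versus $X\leftarrow W\to U$ (both with $X\to U$), and Perkovi\'c's Gaussian construction does not directly apply; the paper supplies a separate density argument (Lemma~\ref{app:lem:two-special-p1}) for this configuration.
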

	
	% We remark that, Theorem~\ref{thm:id} also holds for DCCs.
	{ Recall that $[\mathcal{G}^*, {\cal B}] \subseteq [\mathcal{H}]$, Theorem~\ref{thm:id} shows that although $\mathcal{H}$ carries less knowledge than ${\cal B}$, it is enough to identify all identifiable causal effects. In other words, the information that cannot be represented by the MPDAG (that is, the residual set of DCCs defined in Section~\ref{sec:sec:info}) contributes nothing to the identifiability.}
	
	By \citet{Perkovic2020mpdag}, the causal effect of $\mathbf{X}$ on $\mathbf{Y}$ is identifiable in $[\mathcal{H}]$ if and only if every proper possibly causal path from $\mathbf{X}$ to $\mathbf{Y}$ in $\cal H$ starts with a directed edge. Therefore, one can graphically identify causal effects by constructing the MPDAG from $\mathcal{G}^*$ and $\cal B$ using Algorithm~\ref{algo:decomp} first.
	
	% Interestingly, the identification condition provided in Theorem~\ref{thm:id} is identical to the one for MPDAGs proposed by~\citet{Perkovic2020mpdag}. That is, the causal effect of $\mathbf{X}$ on $\mathbf{Y}$ is identifiable in $[\mathcal{G}^*, \cal B]$ if and only if it is identifiable in $[\cal H]$, where $\cal H$ is the MPDAG representing $[\mathcal{G}^*, \cal B]$. Recall that in Section~\ref{sec:sec:info} we represent an arbitrary causal constraint set by an MPDAG plus a residual, minimal set of DCCs. Thus, Theorem~\ref{thm:id} implies that only the MPDAG determines the identifiability, and the additional information carried by the residual set
	
	% $[\mathcal{G}^*, \cal B]\subseteq [\cal H]$ in general, Theorem~\ref{thm:id} implies that, although the induced MPDAG carries less information than the given causal background knowledge, it is enough for identifying all identifiable causal effects. In other words, compared with $\cal H$, the additional information carried by $\cal B$ cannot render the unidentifiable effects in $[\cal H]$ identifiable.
	\begin{example}\label{ex:dcc_unid}
		Consider the MPDAG $\cal H$ with two DCCs $A\tor\{B,C\}$ and $C\tor\{A, X\}$ shown in Figure~\ref{fig:bgk-ida-mpdag}. By Theorem~\ref{thm:id}, the causal effect of $X$ on $Y$ is not identifiable, as $X-A\to Y$ and $X-C\to Y$ are possibly causal paths on which the first edges are undirected. To verify this result, we enumerate all possible parental sets of $X$ in Figures~\ref{fig:bgk-ida-1} to \ref{fig:bgk-ida-4}. It can be seen that the causal effect of $X$ on $Y$ is definitely zero in Figure~\ref{fig:bgk-ida-1}, while the causal effects of $X$ on $Y$ are possibly non-zero in Figures~\ref{fig:bgk-ida-2} to \ref{fig:bgk-ida-4}.
		
		\begin{figure}[!h]
			%\vspace{-1em}
			\centering
			\subfloat[\label{fig:bgk-ida-mpdag}]{
				\begin{minipage}[t]{0.17\linewidth}
					\centering
					\includegraphics[width=0.8\linewidth]{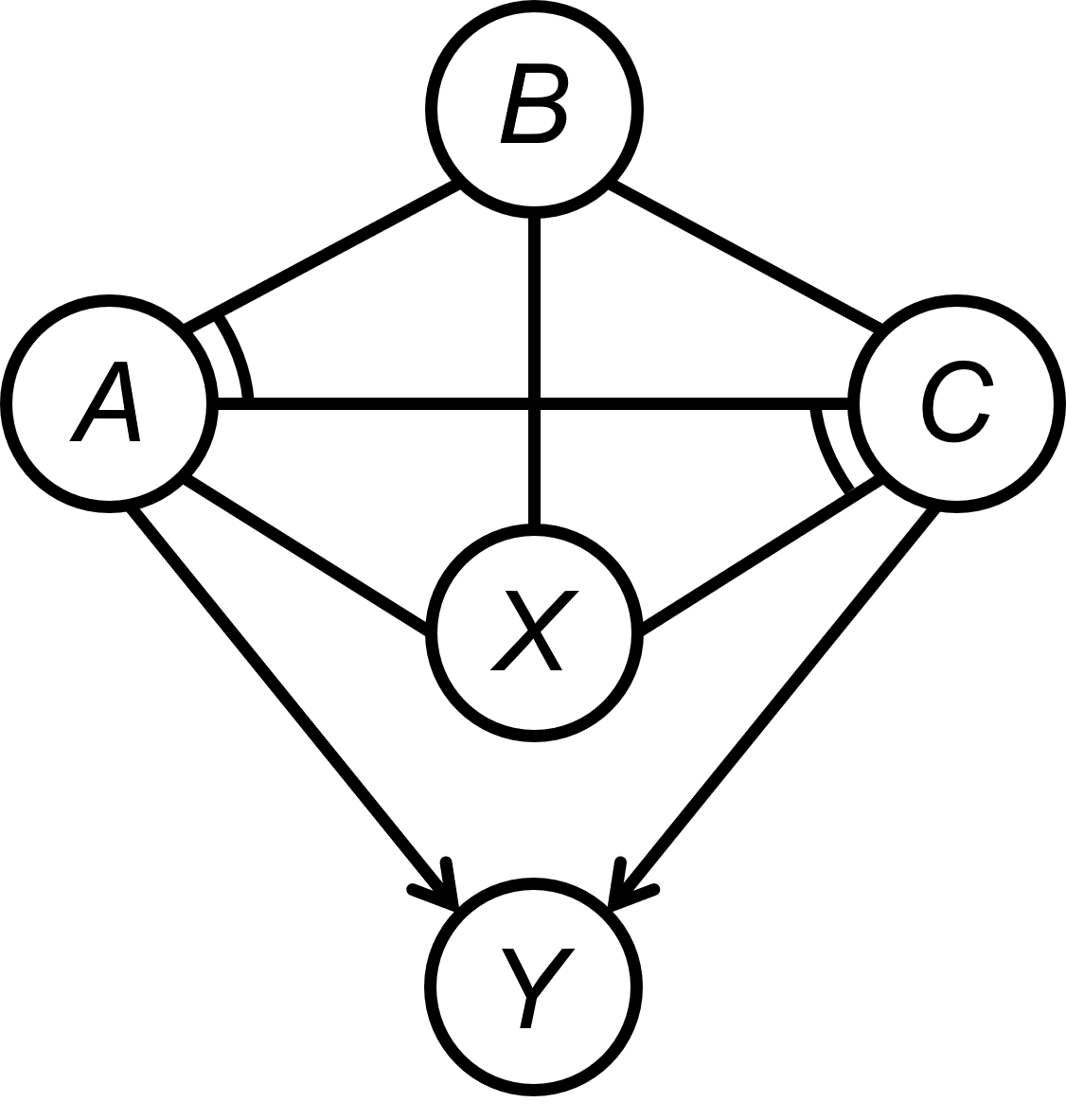}
					%\caption{fig1}
				\end{minipage}%
			}%
			\hspace{0.01\linewidth}
			\subfloat[\label{fig:bgk-ida-1}]{
				\begin{minipage}[t]{0.17\linewidth}
					\centering
					\includegraphics[width=0.8\linewidth]{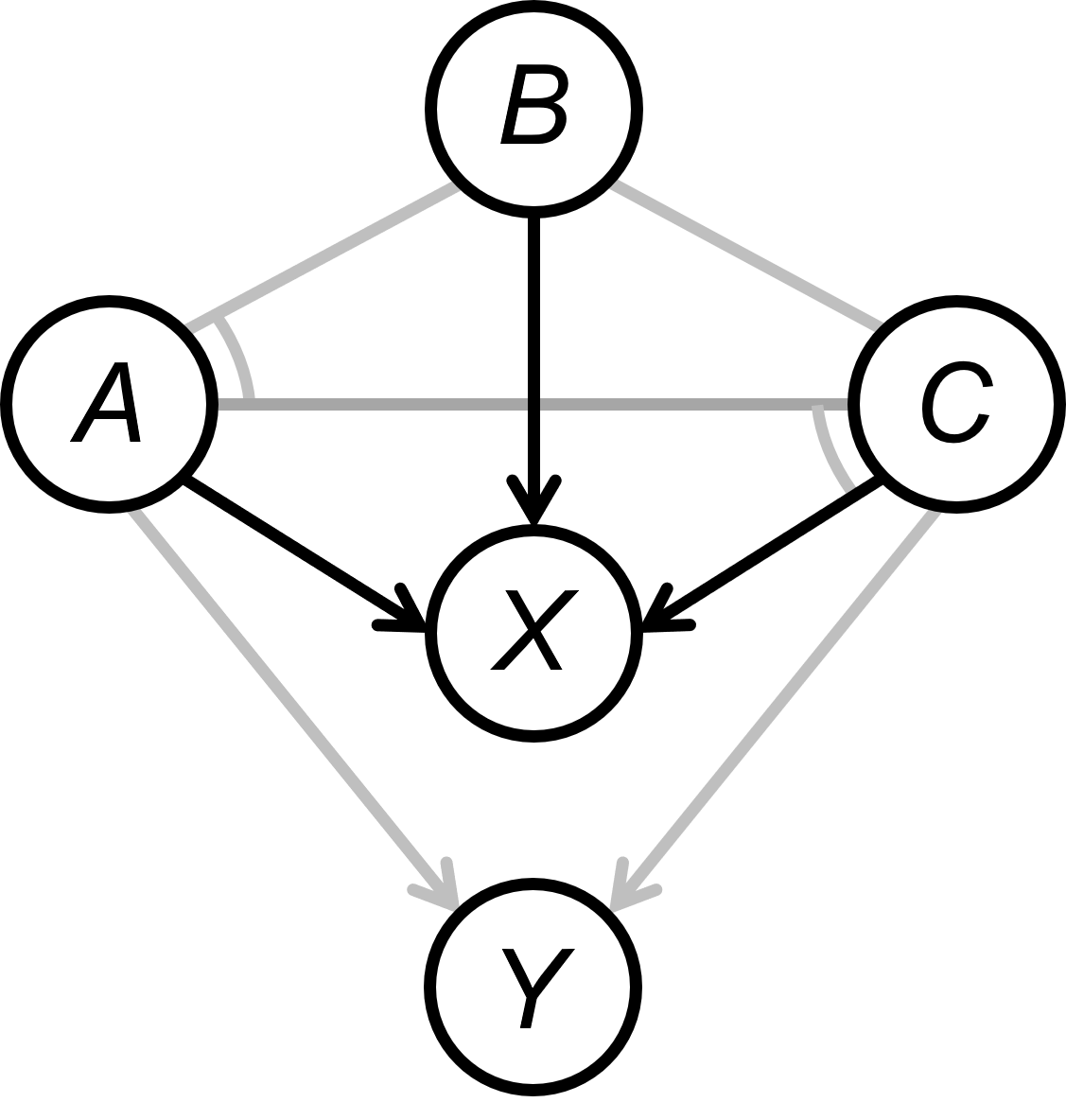}
					%\caption{fig1}
				\end{minipage}%
			}%
			\hspace{0.01\linewidth}
			\subfloat[\label{fig:bgk-ida-2}]{
				\begin{minipage}[t]{0.17\linewidth}
					\centering
					\includegraphics[width=0.8\linewidth]{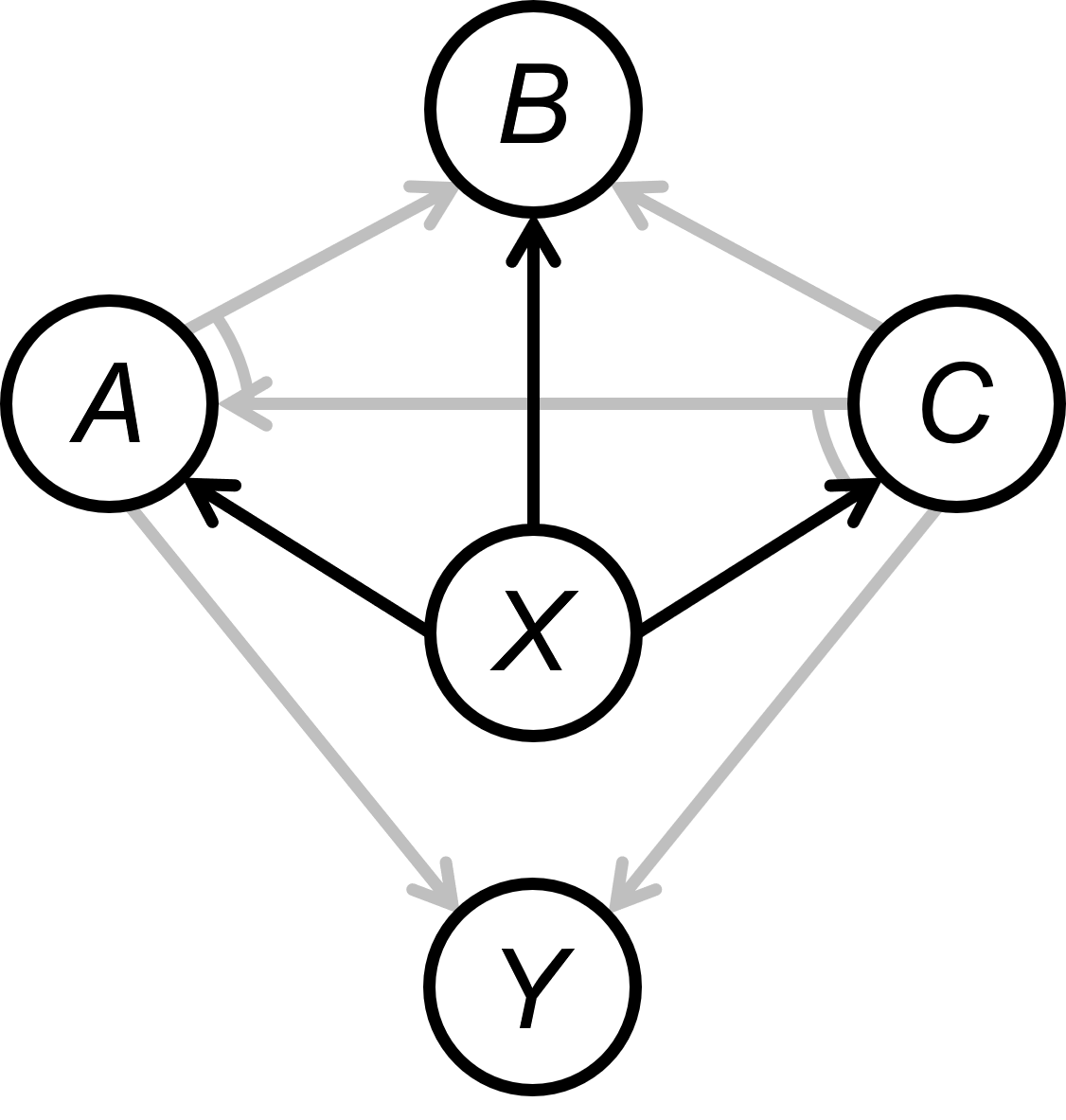}
					%\caption{fig1}
				\end{minipage}%
			}%
			\hspace{0.01\linewidth}
			\subfloat[\label{fig:bgk-ida-3}]{
				\begin{minipage}[t]{0.17\linewidth}
					\centering
					\includegraphics[width=0.8\linewidth]{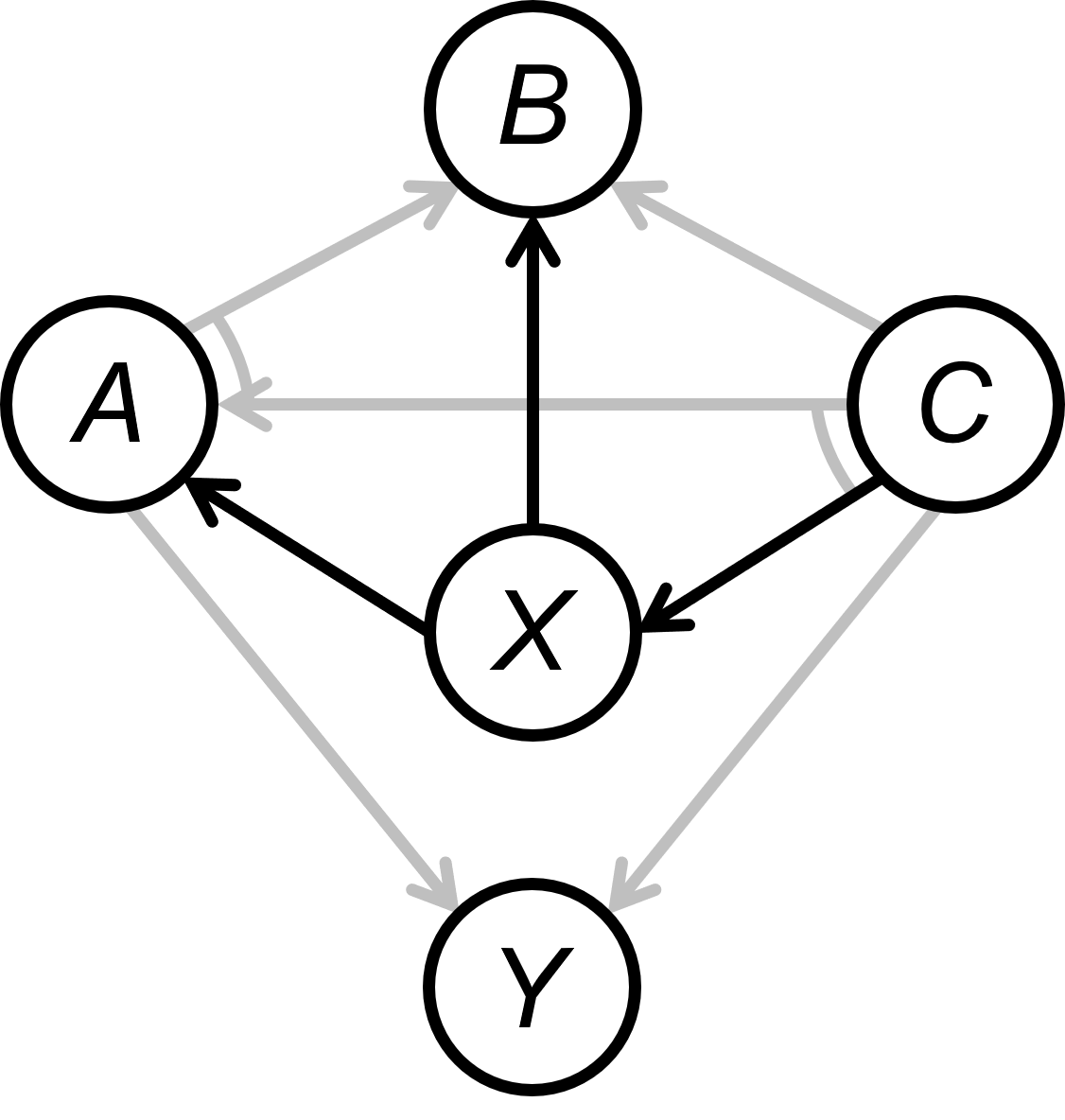}
					%\caption{fig1}
				\end{minipage}%
			}%
			\hspace{0.01\linewidth}
			\subfloat[\label{fig:bgk-ida-4}]{
				\begin{minipage}[t]{0.17\linewidth}
					\centering
					\includegraphics[width=0.8\linewidth]{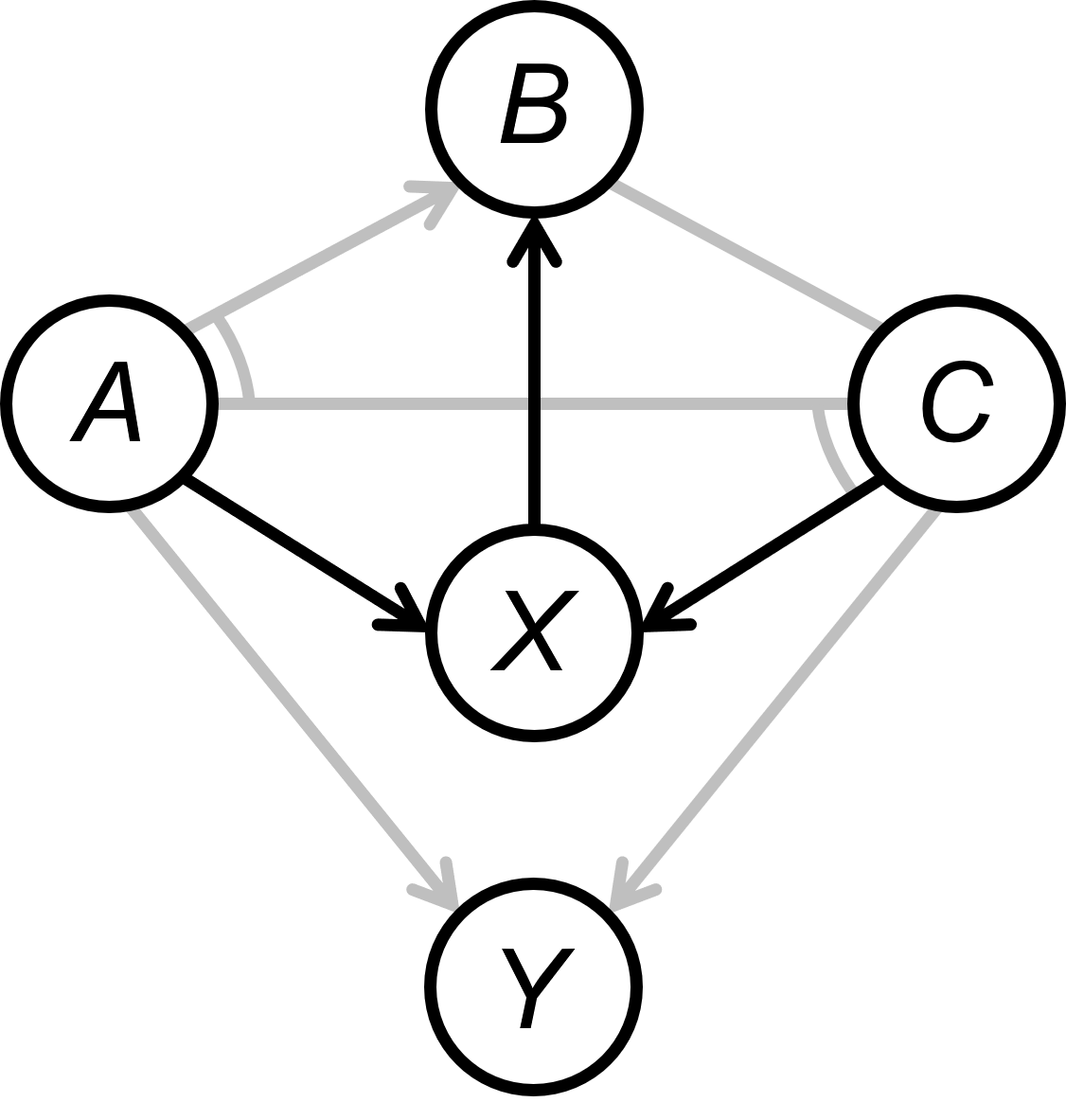}
					%\caption{fig1}
				\end{minipage}%
			}%

			\caption{An MPDAG $\cal H$ with two DCCs (depicted by arcs) is shown in Figure~\ref{fig:bgk-ida-mpdag}, and the four possible parental sets of $X$ are shown in Figures~\ref{fig:bgk-ida-1} to \ref{fig:bgk-ida-4}.}
			\label{fig:possible}
			%\vspace{-1em}
		\end{figure}
	\end{example}
	
	{\citet{Perkovic2020mpdag} proved the following formula to calculate identifiable causal effects in an MPDAG. This formula can be directly applied in our setting as Theorem~\ref{thm:id} shows that the causal effect identification condition for $\mathcal{G}^*$ and $\cal B$ is the same as that for the MPDAG $\cal H$ represents $[\mathcal{G}^*, {\cal B}]$. Suppose that the causal effect of $\bf X$ on $\bf Y$ is identifiable in the MPDAG $\cal H$, then for any distribution $f$ Markovian to $\cal G$, it holds that
		\[f({\bf y}\mid do({\bf x}))=\int \prod_{i=1}^k f({\bf b}_i\mid pa({\bf b}_i, {\cal H})) d{\bf b},\]
		for values $pa({\bf b}_i, {\cal H})$ that are in agreement with $\bf x$, where $({\bf B}_1, ..., {\bf B}_k)=PCO(an({\bf Y}, {\cal H}({\bf V}\setminus {\bf X}))$ and ${\bf B}=an({\bf Y}, {\cal H}({\bf V}\setminus {\bf X}))\setminus {\bf Y}$. The definition of the operator $PCO$ can be found in~\citet[][Algorithm~1]{Perkovic2020mpdag}. This formula generalizes the g-formula of~\citet{robins1986}, the truncated factorization formula of~\citet{pearl2009causality}, or the manipulated density formula of~\citet{spirtes2000causation} to MPDAGs.}

	%Since the causal effect identification condition for $\mathcal{G}^*$ and $\cal B$ is the same as that for an MPDAG, the following formula for estimating identifiable effects in an MPDAG, which is proved by~\citet{Perkovic2020mpdag}, can be directly applied in our setting.
	%\[f({\bf y}\mid do({\bf x}))=\int \prod_{i=1}^k f({\bf b}_i\mid pa({\bf b}_i, {\cal H})) d{\bf b},\]
	%for values $pa({\bf b}_i, {\cal H})$ that are in agreement with $\bf x$,  This formula generates the g-formula of Robins (1986), the truncated factorization formula of Pearl (2009), or the manipulated density formula of Spirtes et al. (2000) to MPDAGs. We refer the readers to \citet{Perkovic2020mpdag} for the detailed algorithm.
	
	The following corollary of  Theorem~\ref{thm:id} provides conditions under which a pairwise causal background knowledge set can definitely increase the number of identifiable effects.
	
	% {\color{green} Comment: What's the difference between non-trivial and  redundant. DCC that is not redundant may not identify new causal effect, as shown in example \ref{ex:dcc_unid}. }  The following corollary of  Theorem~\ref{thm:id} suggests that a non-trivial causal background knowledge set can definitely increase the number of identifiable effects.
	
	\begin{corollary}\label{coro:id}
		Suppose that $\mathcal{G}^*$ is a CPDAG over the vertex set $\mathbf{V}$ and $\cal K$ is a set of DCCs consistent with $\mathcal{G}^*$, then the following two statements are equivalent.
		\begin{enumerate}
			\item[(i)]  At least one unidentifiable effect in $[\mathcal{G}^*]$ becomes identifiable in $[\mathcal{G}^*, {\cal K}]$.
			\item[(ii)]  The DAGs in $[\mathcal{G}^*, {\cal K}]$ have at least one common direct causal relation that is not encoded by a directed edge in $\mathcal{G}^*$.
		\end{enumerate}
		In particular, if $\cal K$ is derived from a consistent pairwise causal background knowledge set $\cal B$ and there is a direct or non-ancestral causal constraint in $\cal B$ which does not hold for all DAGs in $[\mathcal{G}^*]$, or there is an ancestral causal constraint $X\dashrightarrow Y$ in $\cal B$ such that $Y\longarrownot\dashrightarrow X$ does not hold for all DAGs in $\mathcal{G}^*$, then at least one unidentifiable effect in $[\mathcal{G}^*]$ becomes identifiable in $[\mathcal{G}^*, {\cal K}]$.
		% {\color{green}Com: New direct causal relation that not in $G^*$?}
	\end{corollary}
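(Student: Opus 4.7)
The plan is to read both (i) and (ii) through Theorem~\ref{thm:id}, which characterizes identifiability in any restricted Markov equivalence class purely by its MPDAG; applied with $\mathcal{K}=\varnothing$ it also reads off identifiability in $[\mathcal{G}^*]$, whose MPDAG is $\mathcal{G}^*$. For (i) $\Rightarrow$ (ii) I argue contrapositively. If no common direct causal relation in $[\mathcal{G}^*, \mathcal{K}]$ is newly directed beyond $\mathcal{G}^*$, then by Definition~\ref{interpretbg:mpdagresclass} every directed edge of the MPDAG $\mathcal{H}$ of $[\mathcal{G}^*,\mathcal{K}]$ is already directed in $\mathcal{G}^*$. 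Since $\mathcal{H}$ and $\mathcal{G}^*$ share the same skeleton and v-structures, and since every directed edge of $\mathcal{G}^*$ survives in all DAGs of $[\mathcal{G}^*, \mathcal{K}] \subseteq [\mathcal{G}^*]$ and hence is directed in $\mathcal{H}$, the two partially directed graphs must coincide. The identifiability condition of Theorem~\ref{thm:id} for $[\mathcal{G}^*, \mathcal{K}]$ then reduces to that for $[\mathcal{G}^*]$, and no effect can gain identifiability.

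For (ii) $\Rightarrow$ (i), I pick any directed edge $X \to Y$ of $\mathcal{H}$ that is undirected in $\mathcal{G}^*$, and claim that the causal effect of $\{Y\}$ on $\{X\}$ is unidentifiable in $[\mathcal{G}^*]$ but identifiable in $[\mathcal{G}^*, \mathcal{K}]$. In $\mathcal{G}^*$ the one-edge path $Y-X$ is a proper possibly causal path starting with an undirected edge, so Theorem~\ref{thm:id} yields unidentifiability. In $\mathcal{H}$, because the edge $Y \leftarrow X$ is present, every candidate path $\pi = (Y=V_1, V_2, \ldots, V_k = X)$ from $Y$ to $X$ fails the possibly-causal condition at the index pair $(1,k)$: the edge $V_1 \leftarrow V_k$, that is $Y \leftarrow X$, occurs in $\mathcal{H}$. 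Hence $\mathcal{H}$ has no proper possibly causal path from $Y$ to $X$ whatsoever, so the hypothesis of Theorem~\ref{thm:id} is vacuously satisfied and the effect becomes identifiable in $[\mathcal{G}^*, \mathcal{K}]$.

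The ``in particular'' clause follows by exhibiting, in each of the three scenarios, a common direct causal relation in $[\mathcal{G}^*, \mathcal{B}]$ that is not a directed edge of $\mathcal{G}^*$, i.e.\ by establishing (ii). A direct constraint $X \to Y$ in $\mathcal{B}$ that fails in some DAG of $[\mathcal{G}^*]$ is, by definition, not a directed edge of $\mathcal{G}^*$, yet it holds in every DAG of $[\mathcal{G}^*, \mathcal{B}]$. For a non-ancestral constraint $X \longarrownot\dashrightarrow Y$ in $\mathcal{B}$, I rewrite it via Theorem~\ref{thm:nbr_set_const}(iii) as the conjunction of single-head clauses $C \to X$ over $C \in \mathbf{C}_{XY}(\mathcal{G}^*)$; since the constraint fails in some DAG of $[\mathcal{G}^*]$, at least one $C_0 \to X$ fails there, ruling out $C_0 \to X$ as a directed edge of $\mathcal{G}^*$; consistency of $\mathcal{B}$ simultaneously rules out $X \to C_0$ in $\mathcal{G}^*$, leaving the edge $C_0 - X$ undirected in $\mathcal{G}^*$ yet oriented $C_0 \to X$ in every DAG of $[\mathcal{G}^*, \mathcal{B}]$. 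For an ancestral constraint $X \dashrightarrow Y$ in $\mathcal{B}$, acyclicity of each DAG in $[\mathcal{G}^*, \mathcal{B}]$ forces $Y \longarrownot\dashrightarrow X$ throughout $[\mathcal{G}^*, \mathcal{B}]$, and the hypothesis that $Y \longarrownot\dashrightarrow X$ fails somewhere in $[\mathcal{G}^*]$ reduces this case to the non-ancestral one.

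The main obstacle I anticipate is the vacuous-satisfaction step inside (ii) $\Rightarrow$ (i). One must observe that the ``possibly causal'' condition from Section~\ref{sec:sec:graphical} is evaluated at \emph{every} index pair $i<j$ on a path, not only at consecutive positions, so that adding the single directed edge $X \to Y$ to $\mathcal{H}$ simultaneously destroys \emph{every} proper possibly causal path from $Y$ to $X$ rather than merely the direct one. Without this non-local reading of the definition, one would be tempted to expect longer detours from $Y$ to $X$ through undirected siblings to survive and block identifiability.
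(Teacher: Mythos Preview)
Your proof is correct and follows essentially the same route as the paper: both argue (i) $\Rightarrow$ (ii) contrapositively by showing $\mathcal{H}=\mathcal{G}^*$ and invoking Theorem~\ref{thm:id}, and both establish (ii) $\Rightarrow$ (i) by considering the causal effect of $Y$ on $X$ for a newly oriented edge $X\to Y$. The one noteworthy difference is in how identifiability of that effect is justified in $[\mathcal{G}^*,\mathcal{K}]$: the paper simply observes that the effect of $Y$ on $X$ equals $0$ in every DAG of $[\mathcal{G}^*,\mathcal{K}]$ (since $X\to Y$ forces $Y\notin an(X)$), whereas you invoke Theorem~\ref{thm:id} directly and use the non-local clause in the definition of ``possibly causal'' to kill every path from $Y$ to $X$ at the endpoint pair $(1,k)$. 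Your argument is valid under the paper's stated definition and is arguably more systematic, though the paper's zero-effect observation is shorter and avoids the subtlety you flag in your last paragraph. Your handling of the ``in particular'' clause is also correct and slightly more explicit than the paper's, which dispatches the non-ancestral case in one line by citing Theorem~\ref{thm:nbr_set_const}.
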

	
	% As mentioned in Section~\ref{sec:sec:info}, definite ancestral and non-ancestral causal relations are those holds for all DAGs in a Markov equivalence class. Thus, if a piece of causal background knowledge is identified as definite, then it carries no additional information. However, even if a causal background knowledge set contains some pairwise causal constraints not holding for all DAGs in a Markov equivalence class,
	
	% Among the identifiable effects, some can be calculated by adjustment criterion.

	%\[f({\bf y}\mid do({\bf x}))=\int \prod_{i=1}^k f({\bf b}_i\mid pa({\bf b}_i, {\cal H})) d{\bf b},\]
	%for values $pa({\bf b}_i, {\cal H})$ that are in agreement with $\bf x$, where $({\bf B}_1, ..., {\bf B}_k)=PCO(An({\bf Y}, {\cal H}_{V\setminus X}))$ and ${\bf B}=An({\bf Y}, {\cal H}_{V\setminus X})\setminus Y$. This formula generates the g-formula of Robins (1986), the truncated factorization formula of Pearl (2009), or the manipulated density formula of Spirtes et al. (2000) to MPDAGs.
	
	Another approach to calculate causal effect is through adjustment. The following theorem gives a sound and complete adjustment criterion.
	
	% Moreover, as proved by \citet{perkovic2017interpreting} and \citet{Perkovic2020mpdag},
	
	\begin{theorem}\label{thm:adjustment}
		Let $\mathcal{G}^*$ be a CPDAG over the vertex set $\mathbf{V}$ and $\cal B$ be a consistent pairwise causal constraint set (or a consistent DCC set). Denote by $\cal H$ the MPDAG of $[\mathcal{G}^*, {\cal B}]$. For any pairwise disjoint vertex sets $\mathbf{X},\mathbf{Y}, \mathbf{Z}\subseteq\mathbf{V}$, $\mathbf{Z}$ is an adjustment set for $(\mathbf{X},\mathbf{Y})$ with respect to $\mathcal{G}^*$ and $\cal B$ if and only if $\mathbf{Z}$ satisfies the b-adjustment criterion relative to $(\mathbf{X},\mathbf{Y})$ in $\cal H$.
	\end{theorem}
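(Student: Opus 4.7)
The strategy is to reduce Theorem~\ref{thm:adjustment} to the sound and complete b-adjustment criterion for an MPDAG due to Perkovic et al., which states that $\mathbf{Z}$ is an adjustment set for $(\mathbf{X}, \mathbf{Y})$ relative to $[\cal H]$ if and only if $\mathbf{Z}$ satisfies the b-adjustment criterion in $\cal H$. The bridge between $[\mathcal{G}^*, \cal B]$ and $[\cal H]$ is provided by Theorem~\ref{thm:id}: both collections share the same identifiability condition (amenability of $\cal H$), and whenever the effect is identifiable, $f(\mathbf{y} \,|\, do(\mathbf{x}))$ is a single functional of the observational distribution shared across every DAG in $[\cal H]$, hence across every DAG in $[\mathcal{G}^*, \cal B] \subseteq [\cal H]$. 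By Theorem~\ref{thm:nbr_set_const} the result for pairwise causal constraints reduces to the DCC case, so I may assume $\cal B$ is given as a consistent DCC set.

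If $\cal H$ is not amenable, then by Theorem~\ref{thm:id} the causal effect of $\mathbf{X}$ on $\mathbf{Y}$ is not identifiable in $[\mathcal{G}^*, \cal B]$; no adjustment set exists, and the b-adjustment criterion also fails since amenability is one of its requirements, so both sides of the equivalence are false. Assume henceforth that $\cal H$ is amenable. Because $[\mathcal{G}^*, \cal B] \subseteq [\cal H] \subseteq [\mathcal{G}^*]$, all three classes belong to the same Markov equivalence class and therefore share the same set of compatible observational distributions. By Theorem~\ref{thm:id}, the identifiable interventional distribution $f(\mathbf{y} \,|\, do(\mathbf{x}))$ is the same functional of $f$ for every DAG in $[\cal H]$; call it $F[f]$. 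The statement ``for every $\mathcal{G} \in [\mathcal{G}^*, \cal B]$ and every $f$ Markov to $\mathcal{G}$, the identity $\int f(\mathbf{y} \,|\, \mathbf{x}, \mathbf{z}) f(\mathbf{z}) \, d\mathbf{z} = f(\mathbf{y} \,|\, do(\mathbf{x}))$ holds'' is therefore equivalent to the single equation $\int f(\mathbf{y} \,|\, \mathbf{x}, \mathbf{z}) f(\mathbf{z}) \, d\mathbf{z} = F[f]$ for every $f$ Markov to $[\mathcal{G}^*]$. Replacing $[\mathcal{G}^*, \cal B]$ with $[\cal H]$ in the first quantification yields exactly the same single equation, so $\mathbf{Z}$ is an adjustment set with respect to $\mathcal{G}^*$ and $\cal B$ if and only if $\mathbf{Z}$ is an adjustment set relative to $[\cal H]$. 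Applying the known MPDAG characterization to $\cal H$ then completes the proof.

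The main obstacle is the a priori gap between quantifying over $[\mathcal{G}^*, \cal B]$ and over the possibly larger class $[\cal H]$; one might fear that the b-adjustment criterion in $\cal H$ is overly restrictive because it also forces validity on the extra DAGs in $[\cal H] \setminus [\mathcal{G}^*, \cal B]$, so that some $\mathbf{Z}$ adjusts for every member of $[\mathcal{G}^*, \cal B]$ but not for every member of $[\cal H]$. Theorem~\ref{thm:id} closes this gap decisively: once the effect is identifiable in $\cal H$, the value of $f(\mathbf{y} \,|\, do(\mathbf{x}))$ is pinned down by $\cal H$ alone and is independent of which DAG one selects in $[\cal H]$, while the set of admissible observational distributions is already fixed by the common Markov equivalence class. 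The two quantifications therefore produce the same collection of conditions on $\mathbf{Z}$, and the known MPDAG result transfers directly.
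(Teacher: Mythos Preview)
Your argument is correct and takes a genuinely different route from the paper's proof.

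The paper proves necessity by checking each of the three b-adjustment conditions separately. For the forbidden-set condition and the blocking condition it invokes Corollaries~\ref{app:lem:c2} and~\ref{app:lem:c3} (adapted from Lemmas~C.2 and~C.3 of \citet{perkovic2017interpreting}) and then uses Lemma~\ref{app:lem:out}---the fact that for any vertex $W$ there is a DAG in $[\mathcal{G}^*,\mathcal{K}]$ in which every sibling of $W$ in $\mathcal{H}$ becomes a child---to construct an explicit witnessing DAG inside $[\mathcal{G}^*,\mathcal{B}]$ where $\mathbf{Z}$ fails to adjust. Your proof never opens up the b-adjustment criterion at all: you observe that once $\mathcal{H}$ is amenable, identifiability in $[\mathcal{H}]$ pins the interventional distribution down as a single functional $F[f]$ of the observational distribution, and since the family of observational distributions is fixed by the common Markov equivalence class, the predicate ``$\int f(\mathbf{y}\mid \mathbf{x},\mathbf{z})f(\mathbf{z})\,d\mathbf{z}=f(\mathbf{y}\mid do(\mathbf{x}))$ for every DAG in the class and every Markovian $f$'' collapses to the DAG-free predicate ``$\int f(\mathbf{y}\mid \mathbf{x},\mathbf{z})f(\mathbf{z})\,d\mathbf{z}=F[f]$ for every $f$ Markov to $[\mathcal{G}^*]$'', regardless of whether the class is $[\mathcal{G}^*,\mathcal{B}]$ or $[\mathcal{H}]$. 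This makes the theorem an immediate corollary of Theorem~\ref{thm:id} plus \citet[Theorem~4.4]{perkovic2017interpreting}, bypassing Lemma~\ref{app:lem:out} and the associated case analysis entirely. Your route is shorter and more conceptual; the paper's route is more constructive in that it actually exhibits, for each failed b-adjustment condition, a DAG in $[\mathcal{G}^*,\mathcal{B}]$ where adjustment fails.
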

	
	The definitions of the adjustment set and the b-adjustment criterion in Theorem \ref{thm:adjustment} are given below.
	
	\begin{definition}[Adjustment Set]\label{def:adjustment}
		Let $\mathcal{G}^*$ be a CPDAG over the vertex set $\mathbf{V}$ and $\cal B$ be a consistent pairwise  causal constraint set (or a consistent DCC set). Suppose that $\mathbf{X},\mathbf{Y},\mathbf{Z}\subseteq\mathbf{V}$ are pairwise disjoint vertex sets. Then, $\mathbf{Z}$ is called an adjustment set for $(\mathbf{X},\mathbf{Y})$ with respect to $\mathcal{G}^*$ and $\cal B$ if for any DAG ${\cal G} \in [\mathcal{G}^*, {\cal B}]$ and observational distribution $f$ Markovian to ${\cal G}$, the interventional distribution $f(\mathbf{y} \,|\, do(\mathbf{x}))$ can be calculated by
		\begin{eqnarray}\label{leq3}
			f( \mathbf{y} \,|\, do(\mathbf{x}))
			=\left\{
			\begin{array}{cr}
				\int f(\mathbf{y} \,|\, \mathbf{x}, \mathbf{z})f(\mathbf{z})d\mathbf{z}, & \mathrm{if}\ \mathbf{Z}\neq\varnothing, \\
				f( \mathbf{y} \,|\, \mathbf{x}),& \mathrm{otherwise}.
			\end{array}
			\right.
		\end{eqnarray}
	\end{definition}
	
	\begin{definition}[b-Adjustment Criterion,~\citealt{perkovic2017interpreting}]\label{def:b-adjustment}
		Let $\mathbf{X},\mathbf{Y}, \mathbf{Z}\subseteq\mathbf{V}$ be pairwise disjoint vertex sets in an MPDAG $\cal H$, and $\mathrm{Forb}(\mathbf{X},\mathbf{Y}, {\cal H})$ be the set of variables which are possible descendants of some $W\notin\mathbf{X}$ lying on a proper possibly causal path from $\mathbf{X}$ to $\mathbf{Y}$ in $\cal H$. Then, $\mathbf{Z}$ satisfies the b-adjustment criterion relative to $(\mathbf{X},\mathbf{Y})$ in $\cal H$ if: (i) all proper possibly causal paths from $\mathbf{X}$ to $\mathbf{Y}$ start with a directed edge in $\cal H$, (ii) $\mathbf{Z}\cap \mathrm{Forb}(\mathbf{X},\mathbf{Y}, {\cal H})=\varnothing$, and (iii) all proper definite status non-causal paths from $\mathbf{X}$ to $\mathbf{Y}$ are blocked by $\mathbf{Z}$ in ${\cal H}$.
	\end{definition}
	
	Notice that the adjustment criterion in Theorem~\ref{thm:adjustment} is the same as that in  \citet[Theorem~4.4]{perkovic2017interpreting} for $[\cal H]$ induced by an MPDAG $\cal H$. { As discussed by~\cite{Perkovic2020mpdag}, when $\bf X$ or $\bf Y$ are non-singleton sets, there exist examples showing that some causal effects of $\bf X$ on $\bf Y$ cannot be identified using adjustment, meaning that not all causal effects can be identified through adjustment. However, under the condition that both $\bf X$ and $\bf Y$ are singleton sets, the adjustment is sufficient and necessary for identifying causal effects. }
	
	Using Theorem~\ref{thm:adjustment}, the results on optimal adjustment sets for MPDAGs can be naturally extended to general pairwise causal background knowledge.  The details can be found in   \citet{2019arXiv190702435H, Andrea2019}.

	\subsection{Estimating Possible Causal Effects}\label{sec:sec:ida}

	When a causal effect is not identifiable, we can estimate its bounds by enumerating all possible causal effects. Based on the proposed Algorithm~\ref{algo:consistency-peo} for checking consistency, it is straightforward to extend the semi-local IDA~\citep[Algorithm~2]{perkovic2017interpreting} to estimate all possible causal effects of multiple treatments on multiple responses. Moreover, by Theorem~\ref{thm:adjustment}, the optimal IDA~\citep{Witte2020efficient} and the minimal IDA~\citep{Guo2020minimal} can be similarly extended. Thus, in the following, we mainly focus on extending the IDA framework to fully-locally estimate all possible causal effects of a single treatment on a single target. The key result is the following local orientation rules for CPDAGs with DCCs.
	
	\begin{theorem}\label{thm:local}
		Let $\cal K$ be a DCC set consistent with a CPDAG ${\cal G}^*$, and $\mathcal{H}$ be the MPDAG of $[{\cal G}^*, {\cal K}]$. For any vertex $X$ and $\textbf{S}\subseteq sib(X, \mathcal{H})$, the following statements are equivalent.
		%\vspace{-0.5em}
		\begin{enumerate}
			\item[(i)]  There is a DAG $\mathcal{G}$ in $[{\cal G}^*, {\cal K}]$ such that $pa(X, \mathcal{G}) = \textbf{S}\cup pa(X, \mathcal{H})$ and $ch(X, \mathcal{G}) = sib(X, \mathcal{H})\cup ch(X, \mathcal{H})\setminus \textbf{S}$.
			%\vspace{-0.15em}
			\item[(ii)]  The restriction subset of ${\cal K}\cup D_X$ on ${\cal G}^*(\{X\}\cup sib(X, {\cal G}^*))$ is consistent with ${\cal G}^*(\{X\}\cup sib(X, {\cal G}^*))$, where $D_X\coloneqq\{u\to X \mid u\in pa(X, \mathcal{H}) \cup \textbf{S}\} \cup \{X\to v \mid v\in sib(X, \mathcal{H})\cup ch(X, \mathcal{H})\setminus \textbf{S}\}$.
		\end{enumerate}
	\end{theorem}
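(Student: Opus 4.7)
The plan is to prove the two directions of the equivalence using the consistency criterion from Theorem~\ref{thm:consistency} as the main technical tool. The statement says that the existence of a DAG in $[{\cal G}^*, {\cal K}]$ with a prescribed parent/child split for $X$ can be certified by a purely local computation on $X$'s sibling neighborhood, so it is natural to reduce condition (i) to a local orientation problem and then appeal to the already-established potential-leaf-node machinery.

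For the forward direction (i) $\Rightarrow$ (ii), I would argue directly. Given a witnessing DAG $\mathcal{G} \in [{\cal G}^*, {\cal K}]$ with the prescribed $pa(X,\mathcal{G})$ and $ch(X,\mathcal{G})$, the induced subgraph of $\mathcal{G}$ on $sib(X,{\cal G}^*)$ is acyclic and, by construction, satisfies every DCC in ${\cal K}$ whose tail and heads all lie in $sib(X,{\cal G}^*)$; moreover, the edges incident to $X$ are oriented exactly as required by $D_X$. Hence $\mathcal{G}$ itself witnesses an orientation certifying that the restriction subset of ${\cal K}\cup D_X$ on ${\cal G}^*(sib(X,{\cal G}^*))$ is consistent, which by Theorem~\ref{thm:consistency} gives (ii).

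For the converse (ii) $\Rightarrow$ (i), I would construct the desired DAG in three stages. First, use the local consistency hypothesis to extract, via Theorem~\ref{thm:consistency} and the PEO produced by Algorithm~\ref{algo:consistency-peo}, an acyclic orientation $\mathcal{O}_s$ of the sibling subgraph ${\cal G}^*(sib(X,{\cal G}^*))$ that satisfies the restriction subset. Second, combine $\mathcal{O}_s$ with the prescribed orientations $D_X$ around $X$ and extend this partial orientation to an acyclic orientation of the entire chain component of ${\cal G}^*$ that contains $X$; because chain components are chordal and $X$ together with the oriented sibling subgraph can be treated as an already-processed initial segment of a PEO, the potential-leaf-node procedure still applies to the remaining undirected part. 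Third, orient the remaining chain components of ${\cal G}^*$ independently, again using Theorem~\ref{thm:consistency} on ${\cal K}$ restricted to each component, and piece the orientations together with the directed edges of ${\cal G}^*$. Finally, verify that the resulting graph is a DAG in $[{\cal G}^*]$ satisfying every DCC in ${\cal K}$ and having the prescribed parent/child sets at $X$, invoking Theorem~\ref{the:MPDAG} to confirm no new v-structures are introduced across chain components.

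The main obstacle will be the second stage of the converse: extending the local orientation around $X$ to the rest of the chain component while preserving acyclicity and introducing no new v-structures. The condition $\textbf{S}\subseteq sib(X,\mathcal{H})$ is essential here, since it ensures that the orientations demanded by $D_X$ are compatible with the Meek-rule closure already captured by $\mathcal{H}$; without this, local orientations could conflict with Meek propagations triggered elsewhere in the chain component. Handling the interaction between $D_X$, the sibling orientation $\mathcal{O}_s$, and Meek's rules on the remaining undirected edges will require a careful inductive argument in which, at each step, the chosen potential leaf node is verified to have the correct relationship with the already-oriented edges so that no cycle or spurious v-structure is created.
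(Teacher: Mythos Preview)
Your forward direction (i) $\Rightarrow$ (ii) is fine and matches the paper: a witnessing DAG directly certifies the local consistency via Theorem~\ref{thm:consistency}.

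The converse is where your plan diverges from the paper and where it has a genuine gap. The paper does \emph{not} construct the DAG by first fixing a local orientation $\mathcal{O}_s$ on ${\cal G}^*(sib(X,{\cal G}^*))$ and then extending. Instead it proves an augmented version with a third, intermediate statement (ii$'$) about consistency on every maximal clique containing $\mathbf{T}=\{X\}\cup((pa(X,\mathcal{H})\cup\mathbf{S})\cap sib(X,{\cal G}^*))$, and shows (ii$'$) $\Rightarrow$ (i) by contradiction: assume ${\cal K}\cup D_X$ is \emph{not} consistent with the full ${\cal G}^*$; form the chain-graph MPDAG ${\cal C}^*$ of $[{\cal G}^*,D_X]$ (using that orienting all edges at a single vertex yields a chain graph, \citet{he2008active}); extend Theorem~\ref{thm:consistency} to chain-graph MPDAGs (a separate lemma); and then, via a lengthy case analysis on maximal orientation components, either produce a maximal clique $\mathbf{M}_{\mathbf{T}}$ on which the restriction is already inconsistent, or derive an edge that contradicts $\mathcal{H}$ being the MPDAG of $[{\cal G}^*,{\cal K}]$.

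Your stage~2 is precisely where this work is hidden, and the justification you give does not go through. Treating ``$X$ together with the oriented sibling subgraph'' as an initial segment of a PEO is not valid: a PEO removes vertices one at a time, whereas you have oriented a subgraph; after fixing $\mathcal{O}_s$ and $D_X$, Meek's rules propagate orientations \emph{outside} $sib(X,{\cal G}^*)$, so the ``remaining undirected part'' is not an induced subgraph of ${\cal G}^*$ and Theorem~\ref{thm:consistency} no longer applies to it as stated. More seriously, nothing in your argument rules out that the particular $\mathcal{O}_s$ you pick in stage~1 conflicts with a DCC in ${\cal K}$ whose tail or head lies outside $sib(X,{\cal G}^*)$; local consistency only guarantees that \emph{some} global orientation exists, not that every locally consistent orientation extends. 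The substance of the theorem is exactly that local consistency on the sibling subgraph forces global consistency of ${\cal K}\cup D_X$ with ${\cal G}^*$, and this requires the orientation-component/case-analysis machinery the paper develops, which your proposal does not supply.
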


	We remark that ${\cal G}^*(\{X\}\cup sib(X, {\cal G}^*))$ is a chordal graph, which can be viewed as a CPDAG~\citep{andersson1997characterization}, and thus the consistency in statement (ii) is well-defined. Theorem~\ref{thm:local}
	%is an extension of \citet[Theorem~1]{fang2020bgida}, which is the local orientation rule for MPDAGs. It
	provides a method to locally enumerate all possible parental sets of a given $X$, which are then used to estimate all possible causal effects. Algorithm~\ref{algo:ida} shows the procedure.
	
	% Although there are three equivalent conditions in Theorem~~\ref{thm:local}, in Algorithm~\ref{algo:ida} we   use the fourth condition, as it is easy to implement.
	
	\begin{algorithm}[!h]
		\caption{The bgk-IDA algorithm.}
		\label{algo:ida}
		\begin{algorithmic}[1]
			\REQUIRE
			A CPDAG $\mathcal{G}^*$, a consistent pairwise causal constraint set $\cal B$, a treatment  $X$, and a response $Y$.
			\ENSURE
			$\Theta_X$ which stores all possible causal effects of $X$ on $Y$.
			
			\STATE{Derive the DCC set $\cal K$ from $\mathcal{G}^*$ and $\cal B$ based on Theorem~\ref{thm:nbr_set_const}.}
			\STATE{Construct the MPDAG $\cal H$ of $[\mathcal{G}^*, {\cal K}]$.}
			\FOR{each $\mathbf{S}\subseteq sib(X, {\cal H})$ such that the restriction subset of ${\cal K}\cup D_X$ on ${\cal G}^*(\{X\}\cup sib(X, {\cal G}^*))$ is consistent with ${\cal G}^*(\{X\}\cup sib(X, {\cal G}^*))$, where $D_X$ is defined in Theorem~\ref{thm:local},}
			\STATE Estimate the causal effect of $X$ on $Y$ by adjusting for $\mathbf{S}\cup pa(X, {\cal H})$, and add the causal effect to $\Theta_X$.
			\ENDFOR
			
			\STATE{\textbf{return} $\Theta_X$.}
		\end{algorithmic}
	\end{algorithm}

	To illustrate Theorem~\ref{thm:local}, Figure~\ref{fig:impossible} shows the four impossible parental sets of $X$ in the MPDAG shown in Figure~\ref{fig:bgk-ida-mpdag}. Recall that $X$ has $3$ siblings, meaning that there are totally $8$ candidate parental sets of $X$. For example, if we let $B\to X$ and $X \to \{A, C\}$ (Figure~\ref{fig:impossible-1}) then $C\tor \{X, A\}$ indicates that $C\to A$, and further causes $A\to B$ by the constraint $A\tor \{B, C\}$. Hence, ${\cal K}\cup \{B\to X, X \to A, X\to C\}$ on ${\cal G}^*(\{X\}\cup sib(X, {\cal G}^*))$ is inconsistent with ${\cal G}^*(\{X\}\cup sib(X, {\cal G}^*))$, as $X\to A\to B\to X$ is a directed cycle.

	\begin{figure}[!h]
		%\vspace{-1em}
		\centering
		
		\subfloat[\label{fig:impossible-1}]{
			\begin{minipage}[t]{0.18\linewidth}
				\centering
				\includegraphics[width=0.8\linewidth]{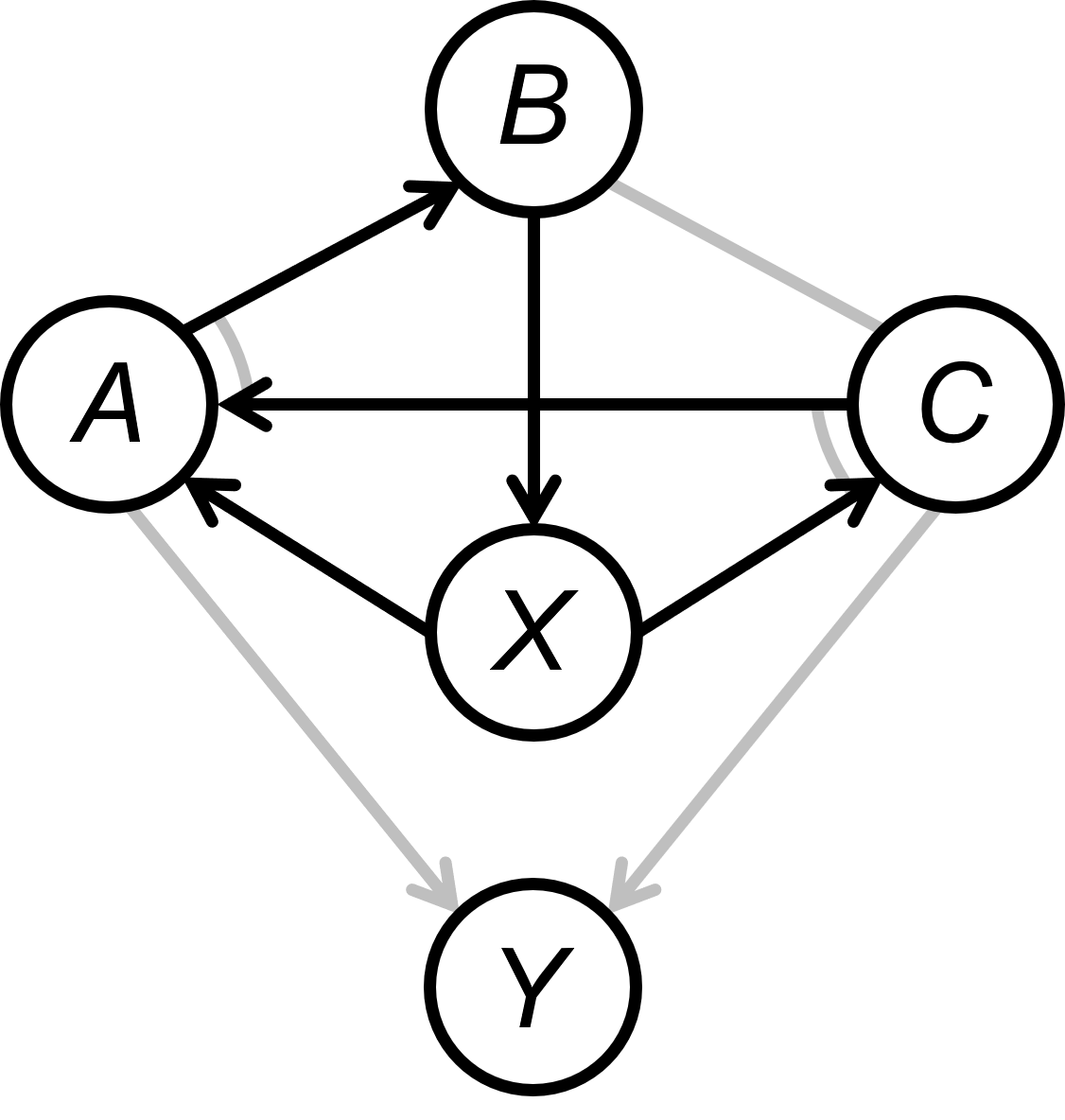}
				%\caption{fig1}
			\end{minipage}%
		}%
		\hspace{0.04\linewidth}
		\subfloat[\label{fig:impossible-2}]{
			\begin{minipage}[t]{0.18\linewidth}
				\centering
				\includegraphics[width=0.8\linewidth]{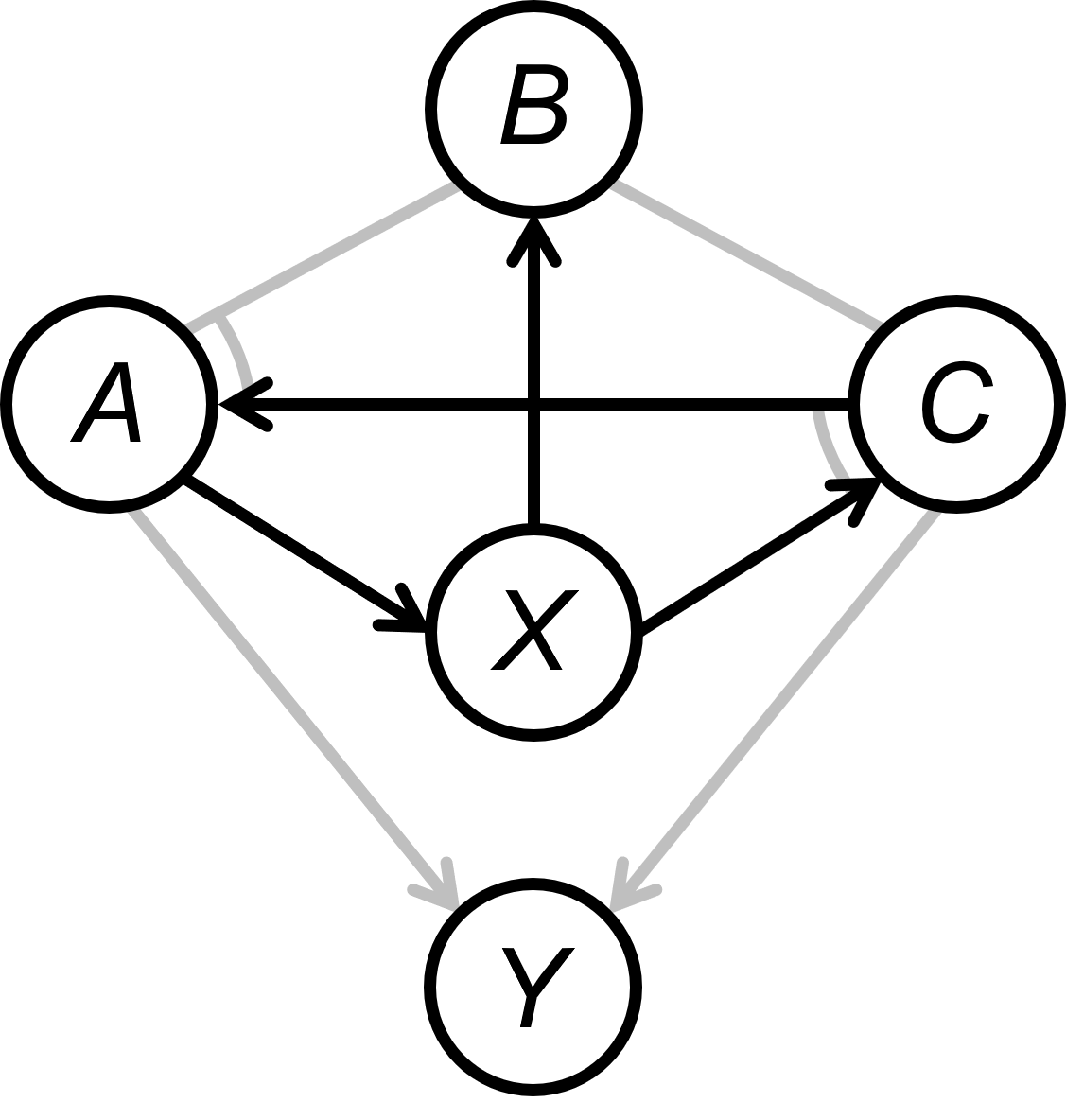}
				%\caption{fig1}
			\end{minipage}%
		}%
		\hspace{0.04\linewidth}
		\subfloat[\label{fig:impossible-3}]{
			\begin{minipage}[t]{0.18\linewidth}
				\centering
				\includegraphics[width=0.8\linewidth]{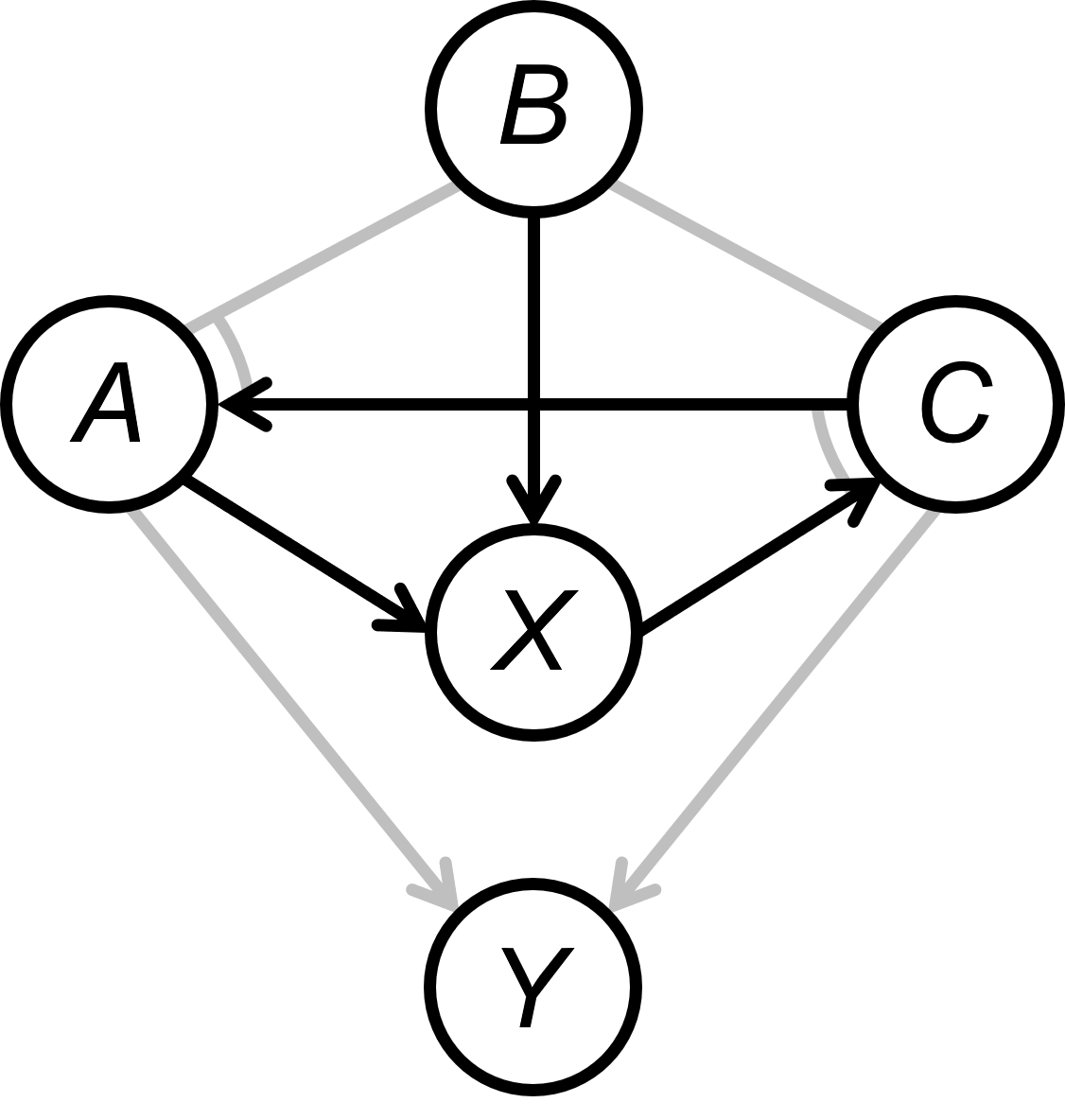}
				%\caption{fig1}
			\end{minipage}%
		}%
		\hspace{0.04\linewidth}
		\subfloat[\label{fig:impossible-4}]{
			\begin{minipage}[t]{0.18\linewidth}
				\centering
				\includegraphics[width=0.8\linewidth]{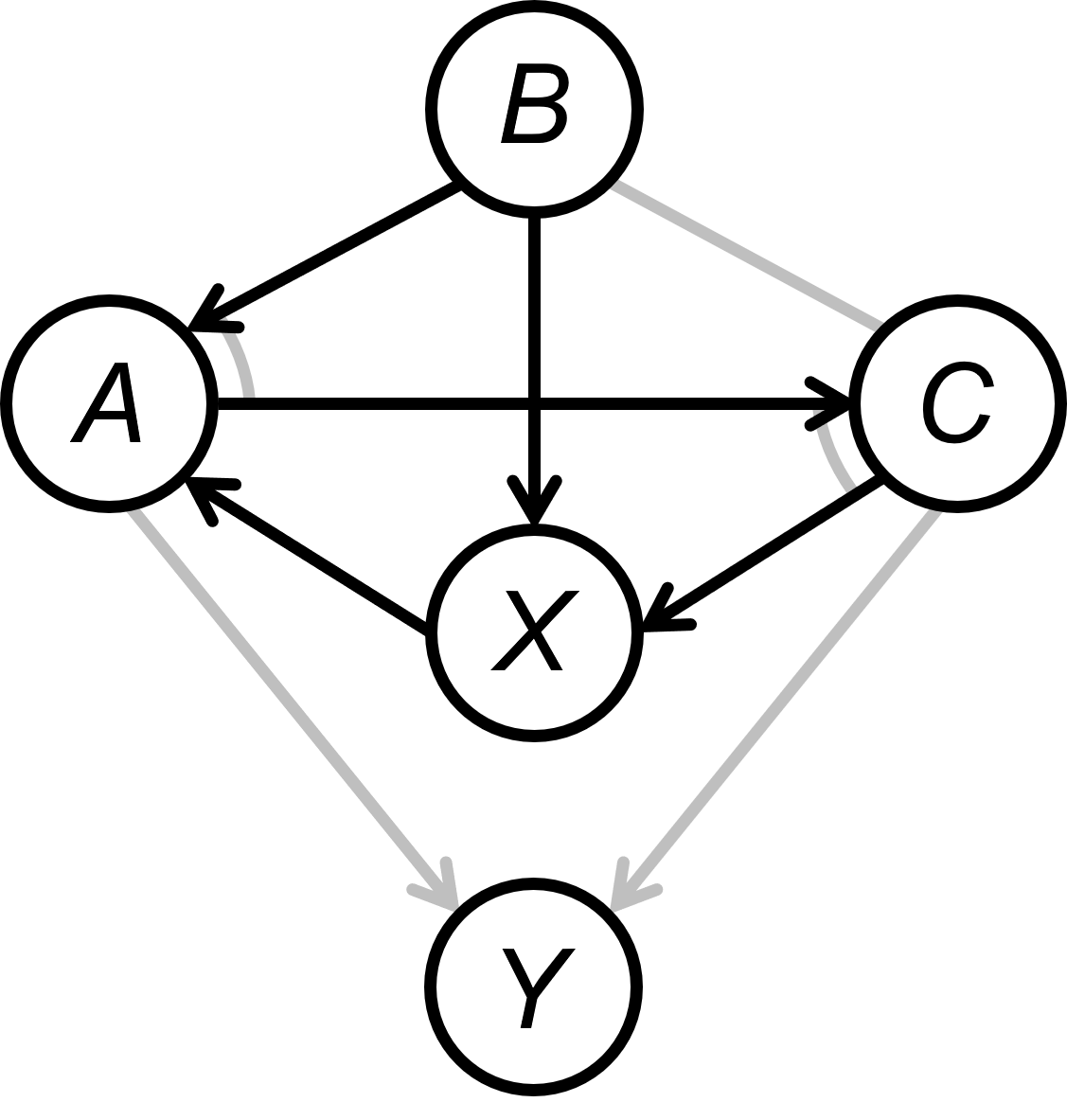}
				%\caption{fig1}
			\end{minipage}%
		}%

		\caption{The four impossible parental sets of $X$ in the MPDAG shown in Figure~\ref{fig:bgk-ida-mpdag}.}
		\label{fig:impossible}
		%\vspace{-1em}
	\end{figure}

	\subsection{Simulations}\label{sec:sec:sim}

	We empirically study how pairwise causal background knowledge improves the identifiability of a causal effect in this subsection.
	% \subsubsection{Simulations}
	We use randomly sampled chordal graphs instead of CPDAGs in our simulations since the impact of the clauses can be separately considered for each chain component of the CPDAG after transforming pairwise causal background knowledge into DCCs (please refer to Lemma~\ref{lem:app-equi} in Appendix for details). {The chordal graphs were sampled in a reject-sampling manner and the detailed algorithm is in Appendix~\ref{app:app:sim}. }
	%(2) Usually, random CPDAGs are transformed from randomly generated DAGs according, for example, the Erd{\"o}s-R{\'e}nyi random graph model. Empirically, the proportion of directed edges in a sampled CPDAG is much larger than that of undirected edges, and thus a large amount of effects are identifiable without background knowledge. Since the purpose of the study is to investigate how causal background knowledge improves the identifiability of a causal effect, it is better to focus on the cases where the most of the causal effects are not identifiable without background knowledge.
	
	For each combination of $n\in\{10, 30\}$ and $e$, where $e\in\{10, 15, 20, 25\}$ if $n=10$ and $e\in\{30, 45, 60, 75\}$ if $n=30$, we first sampled $500$ chordal graphs with $n$ vertices
	and $e$ edges. Next, for each chordal graph ${\cal G}^*$, we treated it as a CPDAG and sampled a DAG $\cal G$ from $[{\cal G}^*]$, and assigned each edge in $\cal G$ with a weight sampled from $\mathrm{Uniform}(0.5, 2)$. $\cal G$ was regarded as the underlying true DAG. Then, we sampled a treatment $X$ as well as a response $Y$. Finally, we enumerated all pairs of direct, ancestral, or non-ancestral causal relations in $\cal G$ and randomly sampled $b$  direct, ancestral, or non-ancestral causal constraints from them as background knowledge, and used the bgk-IDA to compute the possible causal effects of $X$ on $Y$, where $b\in\{0 ,1 ,2, 3, 4, 5\}$ for $n=10$ and $b\in\{0 ,3 ,6, 9, 12, 15\}$ for $n=30$. We assumed the data generating mechanism is linear-Gaussian with equal variances, and the causal effects were estimated from the true covariance matrix computed from $\cal G$.
	
	To measure the discrepancy between an estimated multi-set of possible effects and the true effect computed with $\cal G$, we examined four metrics including the causal mean squared error (CMSE) introduced by~\citep{Tsirlis2018scoring, liu2020local}, the number of possible causal effects, the length of the interval determined by the minimum and maximum values of a set of possible effects, and the Int-MSE introduced by~\citet{malinsky2017estimating}. Denote by $\hat{\Theta}_{XY}$ and $\theta_{XY}$ the estimated multi-set of possible causal effects and the true causal effect, respectively. The CMSE is defined as
	\[ \mathrm{CMSE}(\hat{\Theta}_{XY}, \theta_{XY})=\frac{1}{m}\sum_{i=1}^{m}
	{\left[(\hat{\theta}_{i}-\theta_{XY})^2\right]},\]
	where $m=|\hat{\Theta}_{XY}|$ and $\hat{\theta}_{i}\in \hat{\Theta}_{XY}$ is an estimated possible causal effect. {The CMSE is calculated as a weighted average of the squared distances between the true effect and each distinct estimated effect in the multi-set, with the weights corresponding to the frequency of each distinct estimate's occurrence in the multi-set. In practice, the CMSE may be inflated if the number of possible effects in the multi-set is large, hence we examined the number of possible	causal effects as well as the length of the interval determined by the minimum and maximum possible effects. The Int-MSE, introduced by~\citet{malinsky2017estimating}, represents the mean absolute distance between the true effect, $\theta$, and the interval $[\hat{\theta}_{\min}, \hat{\theta}_{\max}]$. The distance $d$ is defined as $d=0$ if the interval covers the true effect and $d=\min(|\theta-\hat{\theta}_{\min}|, |\theta-\hat{\theta}_{\max}|)$ otherwise. The Int-MSE thus evaluates whether an estimated interval covers the true effect. For all four metrics discussed above, lower values indicate higher identifiability of the causal effect.}

	%As the results are similar, we only present the results of using CMSE in the following. Denote by $\hat{\Theta}_{XY}$ and $\theta_{XY}$ the estimated multi-set of possible causal effects and the true causal effect, respectively. Then, the CMSE is defined as
	%\[ \mathrm{CMSE}(\hat{\Theta}_{XY}, \theta_{XY})=\frac{1}{m}\sum_{i=1}^{m}
	%{\left[(\hat{\theta}_{i}-\theta_{XY})^2\right]},\]
	%where $m=|\hat{\Theta}_{XY}|$ and $\hat{\theta}_{i}\in \hat{\Theta}_{XY}$ is an estimated possible causal effect. The CMSE can be viewed as the averaged square distance between each possible effect and the true effect. Therefore, CMSE equals zero indicates that the true causal effect is identifiable, and the lower the CMSE, the higher the identifiability of a causal effect.
	
	Since the causal effects were estimated from the true covariance matrix derived from $\cal G$, and given the correctness of IDA-type algorithms, the range between the minimum and maximum possible causal effects should contain the true effect. Therefore, the Int-MSE should be (nearly) zero. In fact, in our simulations, all Int-MSE values are below $10^{-8}$, and non-zero Int-MSE values are due to numerical errors. This result also suggests the correctness of Algorithm~\ref{algo:ida}.
	
	Except for Int-MSE, the other metrics show similar results. We focus on CMSE in what follows, and the results for the other metrics are provided in Appendix~\ref{app:app:exp}. For each sampled DAG and variable pair $(X, Y)$, we computed three sequences of CMSEs corresponding to three types of background knowledge. Each sequence contains six elements corresponding to the six possible values of $b$. For each sequence, all CMSEs were normalized by the CMSE at $b=0$, yielding the so-called rescaled CMSE.

	Figure~\ref{fig:cmse} shows the sequences of rescaled CMSEs under different types of causal constraints. The rescaled CMSE decreases rapidly as the number of constraints increases. Moreover, given the same number of constraints, the rescaled CMSE under ancestral causal constraints is much lower than that under direct causal constraints, which in turn is lower than that under non-ancestral constraints. This phenomenon occurs because ancestral causal constraints are more informative than non-ancestral ones: if $X$ is a cause of $Y$, then $Y$ cannot be a cause of $X$, but not vice versa.

	\begin{figure}[!t]
		%\vspace{-1em}
		\centering
		\subfloat[$n=10$, $e=10$]{
			\begin{minipage}[t]{0.22\linewidth}
				\centering
				\includegraphics[width=1\linewidth]{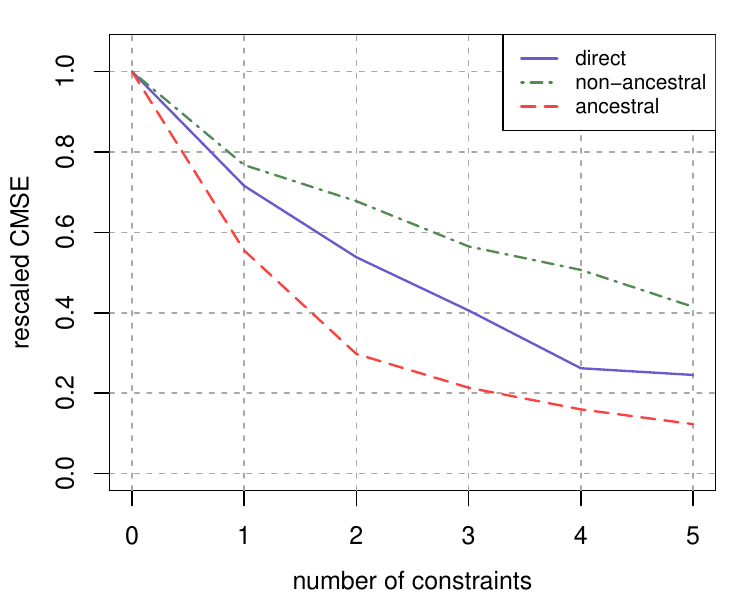}
				%\caption{fig1}
			\end{minipage}%
		}%
		\hspace{0.01\linewidth}
		\subfloat[$n=10$, $e=15$]{
			\begin{minipage}[t]{0.22\linewidth}
				\centering
				\includegraphics[width=1\linewidth]{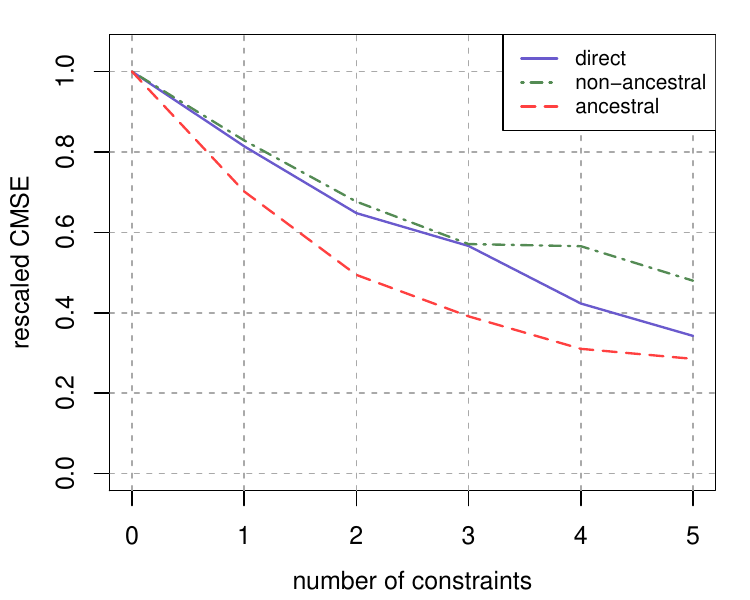}
				%\caption{fig1}
			\end{minipage}%
		}%
		\hspace{0.01\linewidth}
		\subfloat[$n=10$, $e=20$]{
			\begin{minipage}[t]{0.22\linewidth}
				\centering
				\includegraphics[width=1\linewidth]{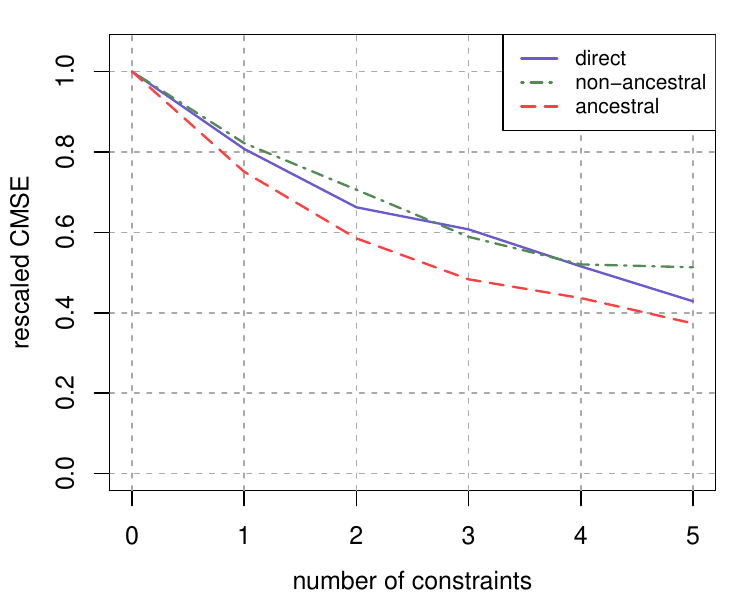}
				%\caption{fig1}
			\end{minipage}%
		}%
		\hspace{0.01\linewidth}
		\subfloat[$n=10$, $e=25$]{
			\begin{minipage}[t]{0.22\linewidth}
				\centering
				\includegraphics[width=1\linewidth]{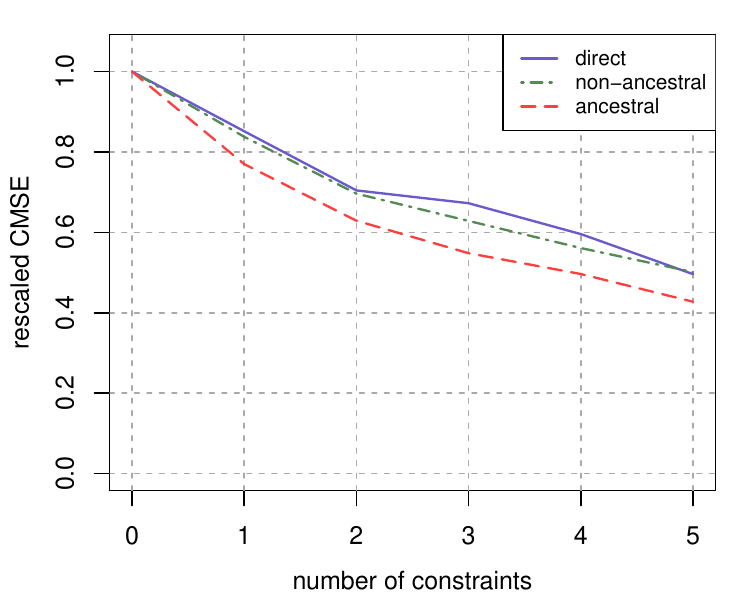}
				%\caption{fig1}
			\end{minipage}%
		}%
		
		\subfloat[$n=30$, $e=30$]{
			\begin{minipage}[t]{0.22\linewidth}
				\centering
				\includegraphics[width=1\linewidth]{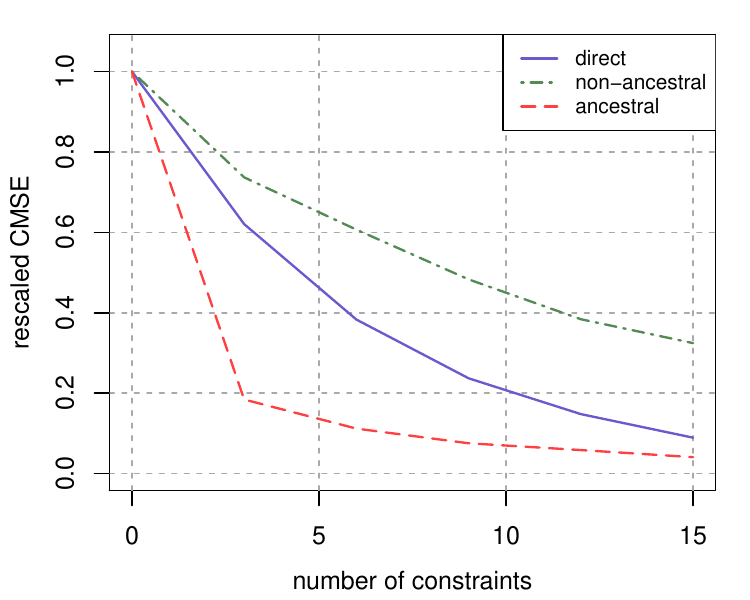}
				%\caption{fig1}
			\end{minipage}%
		}%
		\hspace{0.01\linewidth}
		\subfloat[$n=30$, $e=45$]{
			\begin{minipage}[t]{0.22\linewidth}
				\centering
				\includegraphics[width=1\linewidth]{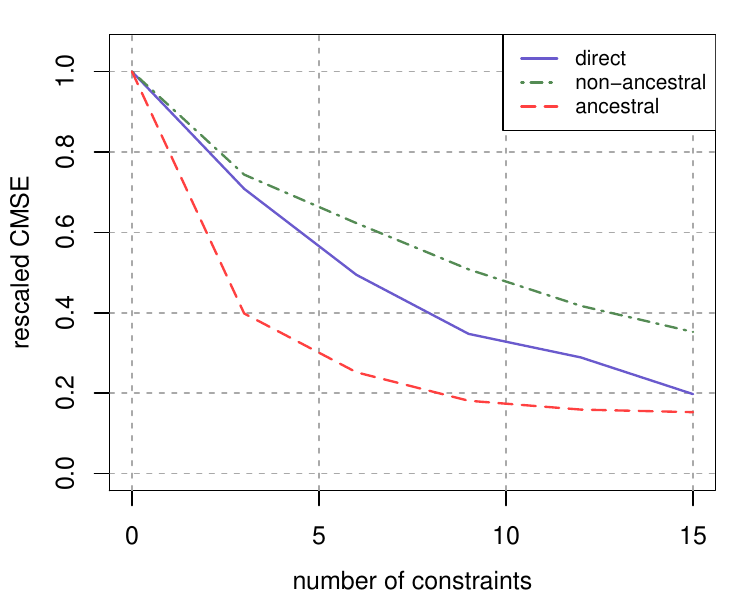}
				%\caption{fig1}
			\end{minipage}%
		}%
		\hspace{0.01\linewidth}
		\subfloat[$n=30$, $e=60$]{
			\begin{minipage}[t]{0.22\linewidth}
				\centering
				\includegraphics[width=1\linewidth]{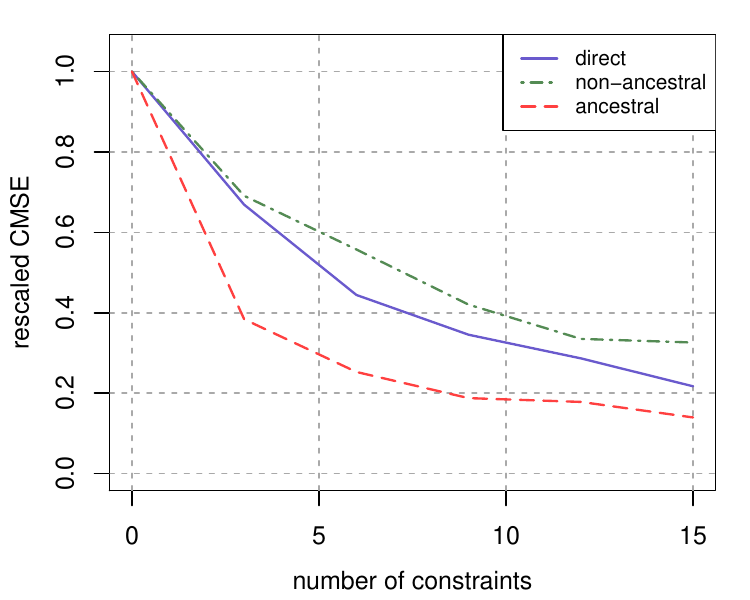}
				%\caption{fig1}
			\end{minipage}%
		}%
		\hspace{0.01\linewidth}
		\subfloat[$n=30$, $e=70$]{
			\begin{minipage}[t]{0.22\linewidth}
				\centering
				\includegraphics[width=1\linewidth]{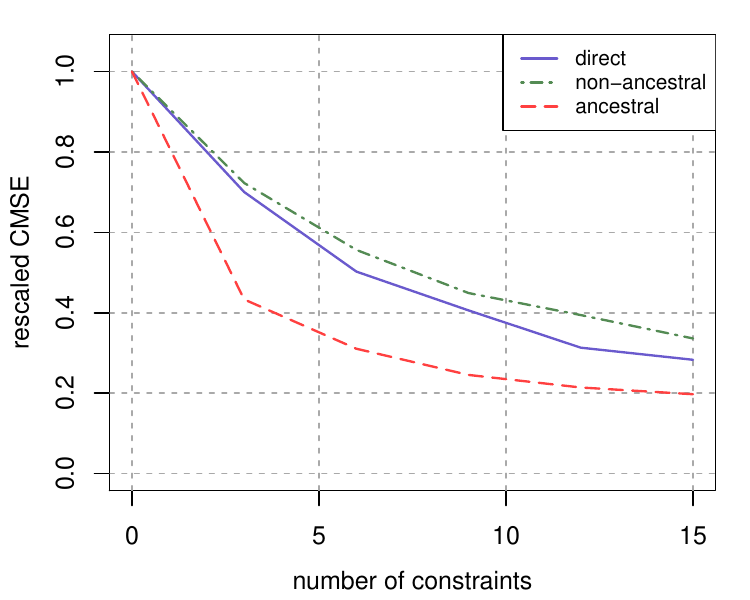}
				%\caption{fig1}
			\end{minipage}%
		}%

		\caption{The rescaled CMSEs.}
		\label{fig:cmse}
		%\vspace{-1em}
	\end{figure}

	\section{Discussion}\label{sec:conclusion}
	
	% In this paper, we first study the graphical characterization of causal MPDAGs. Extending the works of~\citet{andersson1997characterization} and~\citet{hauser2012characterization}, we present sufficient and necessary graphical conditions for a PDAG to be an causal MPDAG. As causal MPDAGs may not be able to represent all ancestral causal constraints, we next develop a new tool called DCC to mitigate this problem. With DCCs, we can represent direct, ancestral and non-ancestral causal constraints in a unified form. We also characterize the common direct causal relations in a restricted Markov equivalence class. The characterization not only motivates a polynomial-time algorithm to construct an MPDAG, but also solves the problem that under what condition the common direct causal relations can completely represent all causal background knowledge. Finally, we apply causal background knowledge to causal inference. We study the identifiability of a causal effect when causal background knowledge is accessible, and extend the IDA framework to estimate all possible values of an unidentifiable causal effect. Simulation studies show that causal background knowledge can significantly improves the identifiability of a causal effect, especially when the causal background knowledge is of the ancestral causal type.
	
	Pairwise causal background knowledge is frequently encountered in real-world problems. Assuming both pairwise causal background knowledge and a sufficiently large observational data set are available, this paper systematically studies the representation of pairwise causal background knowledge, and demonstrates the potential of exploiting pairwise causal background knowledge in causal inference. The main contribution of the paper is three-fold.
	
	Firstly, we investigate the graphical characterization of causal MPDAGs.
	%Extending the works of~\citet{andersson1997characterization} and~\citet{hauser2012characterization},
	{We present sufficient and necessary graphical conditions for a partially directed graph to be a causal MPDAG. MPDAGs are important in representing common direct causal relations in a restricted Markov equivalent class. Our graphical characterization generalizes the existing results for  essential graphs~\citep{andersson1997characterization} and intervention essential graphs~\citep{hauser2012characterization}.}

	Despite the wide use of causal MPDAGs,
	% Although causal MPDAGs have been widely applied to represent direct and non-ancestral causal knowledge,
	they may fail to represent ancestral causal knowledge exactly. Therefore, we develop direct causal clauses to represent all types of pairwise causal background knowledge in a unified form. Because of the local nature of the direct causal clauses, our new representation brings a lot of convenience. As a result, we can now check the consistency of pairwise causal background knowledge, or the equivalence of two pairwise causal background knowledge sets, in polynomial time. Moreover, we prove that any pairwise causal background knowledge set can be decomposed into a causal MPDAG and a minimal residual set of direct causal clauses, and the decomposition can be achieved in polynomial time, too.
	
	The decomposition of pairwise causal background knowledge plays an important role in causal inference. The third contribution of our work is that, we show that the decomposed causal MPDAG entirely determines the identifiability of a causal effect, and the residual direct causal clauses alone contribute nothing to the identifiability, but may reduce the possible values of an unidentifiable effect. We also develop IDA-type algorithms to locally or semi-locally estimate possible causal effects. Using the proven sufficient and necessary identification condition, the adjustment criterion, and the IDA-type algorithms, we can identify causal effects as well as estimate their values or bounds.
	
	% Although there exist complex common causal relations, such as $X$ is a direct cause of $Y$ or $Z$, that cannot be represented by an MPDAG, common direct causal relations are proved to be enough to identify causal effects, which is the third contribution of our paper.
	{There are also many topics worthy of further investigation. First, although the proposed algorithms run in polynomial time, they may not be the most time-efficient. Determining the optimal time complexity for consistency checking and MPDAG construction, as well as developing corresponding optimal algorithms, remains an important direction for future research. Moreover, efficiently enumerating Markov equivalent DAGs that satisfy given pairwise causal background knowledge remains an interesting open problem. The local properties of the DCC representation may help prune the search space and facilitate the design of more efficient enumeration algorithms. Furthermore, our approach assumes that the pairwise causal background knowledge is consistent with the true CPDAG (or the learned CPDAG). Extending our framework to accommodate inconsistent background knowledge is another important direction. Finally, studying the representation of pairwise causal background knowledge in the presence of hidden variables and selection bias is also an interesting direction for future work.}

	\section*{Acknowledgements}
	This work was supported by the National Key Research and Development Program of China (Grant No. 2022ZD0160300). Liu’s work was partially supported by the National Natural Science Foundation of China (Grant No. 12201629). We are grateful to the editor and the reviewers for their valuable comments and suggestions, which have significantly improved the quality of the paper.
	
	\newpage
	
	\appendix

	\section{Additional Remarks}\label{sec:remark}
	
	Some additional topics are discussed in this section.
	
	{
		\subsection{From DCCs to Pairwise Causal Constraints}\label{sec:sec:from}
		Assuming that $[{\cal H}, {\cal R}]$ is equivalent to some set of pairwise causal constraints, we can obtain the maximal equivalent set of such constraints by using Theorems~\ref{thm:nbr_set_const} and \ref{thm:equi_dcc} and Algorithm~\ref{algo:consistency-peo}.

		We begin by enumerating all pairs of nodes $X$ and $Y$, and checking whether $X$ is an ancestor of $Y$  in all DAGs within $[{\cal H}, {\cal R}]$. This is done by transforming the proposition into the corresponding DCC, $X \tor {\bf C}_{XY}({\cal G}^*)$, using Theorem~\ref{thm:nbr_set_const}, and then determining whether $X \tor {\bf C}_{XY}({\cal G}^*)$ is redundant given ${\cal H}$ and ${\cal R}$, based on Theorem~\ref{thm:equi_dcc} and the consistency-checking algorithm (Algorithm~\ref{algo:consistency-peo}). If $X \tor {\bf C}_{XY}({\cal G}^*)$ is found to be redundant, then $X$ is an ancestor of $Y$ in all DAGs in $[{\cal H}, {\cal R}]$, indicating that the proposition “$X$ is an ancestor of $Y$” can be recovered from $[{\cal H}, {\cal R}]$. Similarly, we can check whether the proposition “$X$ is not an ancestor of $Y$” holds in all DAGs in $[{\cal H}, {\cal R}]$. Denote by $\cal B$ the set of all ancestral and non-ancestral causal constraints that hold for all DAGs in $[{\cal H}, {\cal R}]$. This set $\cal B$ constitutes the maximal set of pairwise causal constraints that can be translated back from $[{\cal H}, {\cal R}]$.
		
		Moreover, one can iteratively prune the redundant constraints in $\cal B$ to further obtain a minimal equivalent set of constraints. More formally, for each $b \in {\cal B}$, we transform it into its corresponding DCCs, denoted by $\kappa(b)$. We then apply Theorem 27 and the consistency-checking algorithm (Algorithm 1) to determine whether $\kappa(b)$ is redundant given the DCCs derived from ${\cal B} \setminus {b}$. The resulting subset of $\cal B$ constitutes a minimal set of pairwise causal constraints equivalent to $[{\cal H}, {\cal R}]$.
		
	}
	
	\subsection{A Sequential Method for Checking Consistency}\label{sec:sec:seq}
	In this section, we present an approach to sequentially check consistency. Suppose that ${\cal G}^*$ is a CPDAG and $\cal K$ is a set of consistent DCCs, and $\cal H$ is the MPDAG representing $[{\cal G}^*, \cal K]$. A set of DCCs ${\cal K}'$ is called consistent with ${\cal G}^*$ given $\cal K$, if ${\cal K}'\cup {\cal K}$ is consistent with ${\cal G}^*$. For any DCC $(X\tor \mathbf{D}) \in {\cal K}'$, it holds that,
	\begin{align*}
		{\cal K}\cup\{X\tor \mathbf{D}\} \text{~is~inconsistent} \iff & X\tor \mathbf{D} \text{~does~not~hold~for~any~DAG~in~} [{\cal G}^*, {\cal K}]\\
		\iff & \forall\, {\cal G}\in [{\cal G}^*, {\cal K}],\,ch(X, {\cal G})\cap \mathbf{D}=\varnothing\\
		\iff & \forall\, {\cal G}\in [{\cal G}^*, {\cal K}],\,\mathbf{D}\subseteq pa(X, {\cal G})\\
		\iff & \forall\, {\cal G}\in [{\cal G}^*, {\cal K}],\,\mathbf{D}\to X\\
		\iff & \mathbf{D}\subseteq pa(X, {\cal H}).
	\end{align*}
	Therefore, the consistency of ${\cal K}'$ with ${\cal G}^*$ given $\cal K$ can be checked sequentially: picking one DCC from the current ${\cal K}'$  at a time; if the clause is consistent with ${\cal G}^*$ given $\cal K$, then adding it to the current ${\cal K}$ and updating the current MPDAG based on ${\cal K}$.
	
	% \begin{algorithm}[!h]
		% 	\caption{Checking the consistency of a set of DCCs.}
		% 	\label{algo:consistency}
		% 	\begin{algorithmic}[1]
			% 		\REQUIRE
			% 		A CPDAG $\mathcal{G}^*$, a set of DCCs $\cal K$.
			% 		\ENSURE
			% 		A Boolean value indicating whether  $\cal K$ is consistent with $\mathcal{G}^*$.		
			
			% 		\STATE {Set ${\cal K}_v=\varnothing$ and $I=\textrm{True}$,}	
			% 		\WHILE {${\cal K}\neq\varnothing$,}
			% 		\STATE {choose a DCC $X\tor\mathbf{D}$ from ${\cal K}$,}
			% 		\STATE {run Algorithm~\ref{algo:mpdag} with inputs $\mathcal{G}^*$ and ${\cal K}_v$, and denote the output by $\cal H$,}
			% 		\IF {$\mathbf{D}\subseteq pa(X, {\cal H})$,}
			% 		\STATE {set $I=\textrm{False}$,}
			% 		\STATE {\textbf{break}}
			% 		\ENDIF
			% 		\STATE {set ${\cal K}={\cal K}\setminus\{X\tor \mathbf{D}\}$ and ${\cal K}_v={\cal K}_v\cup\{X\tor \mathbf{D}\}$,}
			% 		\ENDWHILE
			
			% 		\RETURN {$I$.}
			% 	\end{algorithmic}
		% \end{algorithm}

	\begin{figure}[!h]
		%	\vspace{-0.5em}
		\centering
		\subfloat{
			\begin{minipage}[t]{0.23\linewidth}
				\centering
				\includegraphics[width=1\linewidth]{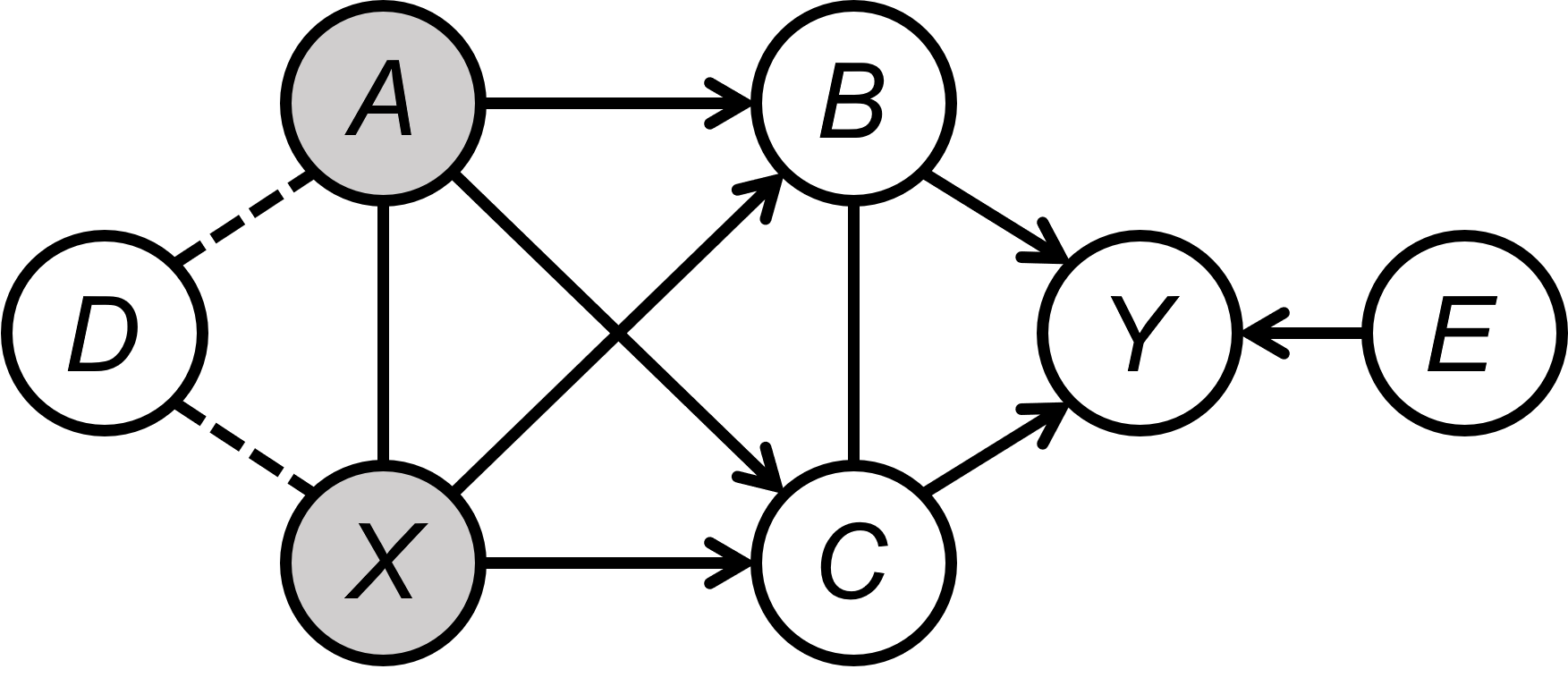}
				%\caption{fig1}
			\end{minipage}%
		}%
		\caption{An MPDAG given in Example~\ref{ex:app}.}
		\label{fig:1-10}
		% \vspace{-0.5em}
	\end{figure}
	
	\begin{example}\label{ex:app}
		Following Example~\ref{ex:consistency}, we now use the sequential method to check the consistency of ${\cal K}=\{C \tor A, B \tor A, D \tor \{A, X\}\}$ with respect to ${\cal G}^*$ shown in Figure~\ref{fig:1-1}. If the first DCC chosen from $\cal K$ is $D\tor \{A, X\}$, then the updated MPDAG, denoted by $\cal H$, is the one shown in Figure \ref{fig:1-10}. Since $A \to B$ and $A\to C$ are both in $\cal H$, no matter what clause will be chosen next, the procedure returns {\rm False}. Similarly, if we choose $B\to A$ and $C \to A$ in the first two loops, then based on Rule 4 of Meek's rules, the resulting MPDAG must have edges $A \to D$ and $X \to D$. Hence, $\cal K$ is inconsistent once we consider the remaining clause $D\tor \{A, X\}$.
	\end{example}

	\section{Proofs}\label{app:proof}
	
	Before presenting all detailed proofs of the theorems, propositions and corollaries of the main text, we introduce some helpful concepts and results in Appendix~\ref{app:proof:pre}. In the following paper, for a variable $X$ and a non-empty variable set $\bf Y$, we will use the notion ${\bf Y}\to X$ to represent $Y\to X$ for all $Y\in{\bf Y}$, and use the notion $X\to{\bf Y}$ to represent $X\to Y$ for all $Y\in{\bf Y}$.
	
	\subsection{Preliminaries}
	\label{app:proof:pre}
	% seprately oreintations, PEO
	We briefly review some graphical properties of chordal graphs and CPDAGs. In a graph, a cycle with length three is called a triangle. A path is called unshielded if none of its three consecutive vertices form a triangle. For a graph ${\cal G}$ over a vertex set $\mathbf{V}$, $\mathbf{M}\subseteq\mathbf{V}$ is called a {clique} if $\mathbf{M}$ induces a complete subgraph. A clique is called {maximal} if it is not a proper subset  of any other cliques. Let ${\cal C}$ be a chordal graph over the vertex set ${\bf V}({\cal C})$. Any induced subgraph of ${\cal C}$ is chordal. It can be proved that any chordal graph has a simplicial vertex, and moreover, any non-complete chordal graph has two non-adjacent simplicial vertices~\citep{blair1993chordal}. A perfect elimination ordering (PEO) of ${\cal C}$ is a total ordering of the vertices in ${\bf V}({\cal C})$, denoted by $\beta = (V_1,V_2,\cdots,V_n)$, such that for any $V_i$, $i=1,2,\cdots,n$, $adj(V_i, {\cal C})\cap \{V_i,V_{i+1},\cdots,V_n\}$ induces a complete subgraph of ${\cal C}$. An undirected graph is chordal if and only if it has a PEO.
	
	Given a PEO $\beta = (V_1,V_2,\cdots,V_n)$ of ${\cal C}$, if we orient the edges in ${\cal C}$ such that $adj(V_i, {\cal C})\cap \{V_i,V_{i+1},\cdots,V_n\}$ are parents of $ V_i$, then the resulting directed graph is acyclic and v-structure-free. Conversely, any v-structure-free DAG who has the same skeleton as ${\cal C}$ can be oriented from ${\cal C}$ according to some PEO of ${\cal C}$.
	
	Let ${\cal G}^*$ be a CPDAG. It was pointed by~\citet{maathuis2009estimating} that (i) no orientation of the edges not oriented in ${\cal G}^*$ will create a directed cycle which includes an edge or edges that were oriented in ${\cal G}^*$, and (ii) no orientation of an edge not	directed in ${\cal G}^*$ can create a new v-structure with an edge that was oriented in ${\cal G}^*$. As any orientation of the edges in ${\cal G}^*$ which does not create directed cycles or v-structures corresponds to a DAG in $[{\cal G}^*]$, we can separately orient the undirected edges in each chain component such that every resulting directed graph is a DAG without v-structure.
	
	\subsection{Proof of Proposition \ref{nonemptyMPDAG}}\label{app:proof:nonemptyMPDAG}
	
	We first prove Proposition \ref{nonemptyMPDAG} as it is required in proving Theorem~\ref{the:MPDAG}.
	
	\begin{proof}
		The first claim holds because the definition of a causal MPDAG implies that for any restricted Markov equivalence class $[{\cal G}^*, {\cal B}]$ that can be represented by $\cal H$, ${\cal G}^*$ and $\cal H$ have the same skeleton and v-structures.
		
		To prove the second claim, let $[{\cal G}^*, {\cal B}]$ be a restricted Markov equivalence class that can be represented by $\cal H$, and let ${\cal B}_d = \mathbf{E}_d({\cal H}) \setminus \mathbf{E}_d({\cal G}^*) $. It is easy to verify that orienting undirected edges in ${\cal G}^*$ according to ${\cal B}_d$ does not introduce any v-structure or directed cycle and the resulting PDAG is closed under Meek's rules, as the resulting PDAG is exactly $\cal H$. Therefore, $[{\cal H}]=[{\cal G}^*, {\cal B}_d]$.
	\end{proof}

	\subsection{Proof of Theorem \ref{the:MPDAG}}\label{app:proof:the:MPDAG}
	
	We first introduce two properties of chordal graphs. For the completeness of the paper, the proofs of these results are also provided.
	
	\begin{lemma}\label{app:lem:chordal}
		Let $\cal C$ be a connected chordal graph, then the following claims hold for $\cal C$.
		\begin{enumerate}
			\item[(i)]  For any simplicial vertex $X$ in $\cal C$, there is a unique maximal clique containing $X$.
			\item[(ii)]  If $\cal C$ is not complete, then for any non-simplicial vertex $Y$ in $\cal C$ that is adjacent to some simplicial vertex $X$, $Y$ has a neighbor $Z\neq X$ which is not adjacent to $X$.
		\end{enumerate}
	\end{lemma}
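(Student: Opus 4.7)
Both claims hinge on the defining property of a simplicial vertex, namely that its neighborhood induces a clique, together with the elementary observation that in any clique every pair of members is adjacent. Chordality and connectedness are included as the hypotheses of the lemma (and are the context where the lemma will be used), but neither claim appears to require them intrinsically; the arguments below work in any graph. I would organize the proof as two short arguments, one per claim.

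\textbf{For claim (i).} The plan is to exhibit the maximal clique explicitly and then argue uniqueness. Let $M \coloneqq \{X\} \cup adj(X,{\cal C})$. First, I would verify that $M$ is a clique: since $X$ is simplicial, $adj(X,{\cal C})$ induces a complete subgraph of ${\cal C}$, and every vertex in $adj(X,{\cal C})$ is by definition adjacent to $X$, so all pairs in $M$ are adjacent. Second, I would show $M$ is maximal: any vertex $V\notin M$ adjacent to every vertex of $M$ would in particular be adjacent to $X$, forcing $V\in adj(X,{\cal C})\subseteq M$, a contradiction. Finally, for uniqueness I would observe that any clique $M'$ containing $X$ satisfies $M'\setminus\{X\}\subseteq adj(X,{\cal C})$, hence $M'\subseteq M$; so any maximal clique containing $X$ must coincide with $M$.

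\textbf{For claim (ii).} The plan is to extract two non-adjacent neighbors of $Y$ from its non-simpliciality and then show that at least one of them is the required vertex $Z$. Since $Y$ is non-simplicial, there exist $U,V\in adj(Y,{\cal C})$ with $U$ and $V$ non-adjacent. Since $X$ is simplicial, $adj(X,{\cal C})$ is a clique, so $U$ and $V$ cannot both lie in $adj(X,{\cal C})$; hence at least one of them, call it $W$, is not adjacent to $X$. The main (minor) obstacle is the edge case $W=X$: if that happens, I would instead take $W'$ to be the other member of $\{U,V\}$, which is a neighbor of $Y$, distinct from $X$ (since it is non-adjacent to $X=W$), and not adjacent to $X$. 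In either case the chosen vertex is a neighbor of $Y$, distinct from $X$, and not adjacent to $X$, which is what the claim requires.
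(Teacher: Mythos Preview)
Your proof is correct and follows essentially the same ideas as the paper's proof. For (i) you give a constructive argument (exhibiting $M=\{X\}\cup adj(X,{\cal C})$ and showing every clique through $X$ is contained in it), whereas the paper argues by contradiction from two distinct maximal cliques; for (ii) you extract the required $Z$ directly from a non-adjacent pair in $adj(Y,{\cal C})$, while the paper assumes no such $Z$ exists and concludes $adj(Y,{\cal C})$ lies in the unique maximal clique from (i), forcing $Y$ simplicial---these are dual presentations of the same argument.
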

	
	\begin{proof}[Proof of lemma \ref{app:lem:chordal}]
		For any simplicial vertex $X$ in $\cal C$, if there are two distinct maximal cliques ${\bf M}_i, {\bf M}_j$ containing $X$, then there exist $X_i\in {\bf M}_i$ and $X_j\in {\bf M}_j$ such that $X_i$ and $X_j$ are not adjacent, since otherwise ${\bf M}_i\cup {\bf M}_j$ is also a maximal clique.
		Note that both $X_i$ and $X_j$ are adjacent to $X$, hence $X$ is not simplicial, which is contrary to the assumption. This completes the proof of the first claim.
		
		% If $\cal C$ is not complete, then there are two non-adjacent simplicial vertices in $\cal C$. Based on the first claim, there are at least two distinct maximal cliques of $\cal C$, each of which contains one of the two simplicial vertices.
		
		We next prove the second claim. Let ${\bf M}_i$ be the maximal clique containing $X$. Since $X$ is simplicial, $Y$ is adjacent to $X$ implies that $Y\in {\bf M}_i$. Assume, for the sake of contradiction, that every neighbor of $Y$ other than $X$ is adjacent to $X$, then $adj(Y, {\cal C})\subseteq {\bf M}_i$, which means $adj(Y, {\cal C})$ induces a complete subgraph of $\cal C$. This contradicts the assumption that $Y$ is non-simplicial. Therefore, the second claim holds true. 	
	\end{proof}
	
	Next, we present Lemmas \ref{Bchordal}-\ref{re:arrowinclique} in the following. These lemmas contribute to the proof of the necessity of Theorem \ref{the:MPDAG}.

	\begin{lemma}[Necessity of Condition (ii)]\label{Bchordal}
		The skeleton of $ {\cal C}^b $ is a chordal graph for any B-component $ {\cal C}^b $ of a causal MPDAG $ \cal H $.
	\end{lemma}
	
	\begin{proof}[Proof of lemma \ref{Bchordal}]
		According to Proposition~\ref{nonemptyMPDAG}, the skeleton of each B-component of a causal MPDAG is an induced subgraph of a chain component of a CPDAG. The result comes from the fact that any induced subgraph of a chordal graph is still chordal.
	\end{proof}
	
	\begin{lemma}[Necessity of Condition (iii)]\label{lem:into-B-com}
		Let $ \cal H $ be a causal MPDAG and $ {\cal C}^b $ be a B-component of $ \cal H $. For any vertex $ X \notin \mathbf{V} ({\cal C}^b) $, if $ X \rightarrow Y $ for some vertex $ Y \in \mathbf{V} ({\cal C}^b) $, then $ X \rightarrow Y $ for every vertex $ Y \in \mathbf{V} ({\cal C}^b) $.
	\end{lemma}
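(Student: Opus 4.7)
The plan is to reduce the lemma to the case where $Y$ and $Y'$ are adjacent in ${\cal C}^b$, and then dispatch this base case by a local analysis using Meek's rules applied to~${\cal H}$. Since ${\cal H}$ is an MPDAG, it is closed under Meek's rules; I will use this closure as a black box.

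First, I would observe that by the definition of a B-component, any two vertices $Y, Y' \in \mathbf{V}({\cal C}^b)$ are connected by an undirected path lying entirely in ${\cal C}^b$. So it suffices to prove the statement when $Y$ and $Y'$ are joined by an undirected edge $Y - Y'$ in ${\cal C}^b$; the general case then follows by a straightforward induction on the length of this undirected path, propagating the arrow $X \rightarrow \cdot$ from one vertex to the next.

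Now I would handle the base case $Y - Y'$ with $X \rightarrow Y$ and $X \notin \mathbf{V}({\cal C}^b)$ by a case analysis on the edge, if any, between $X$ and $Y'$:
\begin{itemize}
\item If $X$ and $Y'$ are not adjacent in ${\cal H}$, then the induced subgraph on $\{X, Y, Y'\}$ matches the left-hand side of Meek's Rule~1, so closure under that rule forces $Y \rightarrow Y'$, contradicting $Y - Y'$.
\item If $Y' \rightarrow X$ in ${\cal H}$, then $Y' \rightarrow X \rightarrow Y$ together with $Y - Y'$ matches the left-hand side of Meek's Rule~2, forcing $Y' \rightarrow Y$, again contradicting $Y - Y'$.
\item If $X - Y'$ in ${\cal H}$, then there is an undirected path (indeed an undirected edge) from $X$ to $Y'$, so $X$ lies in the same B-component as $Y'$, namely ${\cal C}^b$; this contradicts the assumption $X \notin \mathbf{V}({\cal C}^b)$.
\end{itemize}
The only remaining possibility is $X \rightarrow Y'$, which is what we want.

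I do not expect a serious obstacle here: the argument is local and purely case-based once we commit to using Meek's closure of ${\cal H}$. The one subtlety to be careful about is ensuring that in the inductive step, the arrow $X \rightarrow Y_i$ propagated along the undirected path is known to hold in ${\cal H}$ itself (not merely in some completion of it), so that Meek's rules can be applied again at the next step; this is automatic because each step rediscovers a directed edge of ${\cal H}$, and ${\cal H}$ is already closed under Meek's rules by Definition~\ref{interpretbg:mpdagresclass}.
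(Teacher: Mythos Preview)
Your proof is correct and follows essentially the same approach as the paper's: reduce to the case of adjacent vertices $Y - Y'$ via the connectivity of the B-component, then use closure under Meek's Rules~1 and~2 (together with the observation that $X \notin \mathbf{V}({\cal C}^b)$ excludes an undirected edge $X - Y'$) to force $X \rightarrow Y'$. The paper's argument is identical up to notation, phrasing the exclusion of $X - Y'$ as ``$X$ and $Z$ must be connected by a directed edge'' since $X$ is outside the B-component.
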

	
	\begin{proof}[Proof of Lemma \ref{lem:into-B-com}]
		If $|\mathbf{V} ({\cal C}^b)|=1$, then Lemma~\ref{lem:into-B-com} naturally holds. Assume that $|\mathbf{V} ({\cal C}^b)|>1$, then $sib(Y, {\cal C}^b)\neq \varnothing$ by the definition of a B-component. Since every B-component is a connected graph, to prove Lemma~\ref{lem:into-B-com} it suffices to show that $X\to Z$ for every $Z\in sib(Y, {\cal C}^b)$. This is because every vertex in ${\cal C}^b$ other than $Y$ is connected to $Y$ by an undirected path. If the conclusion holds for every $Z\in sib(Y, {\cal C}^b)$, then the same argument can be successively applied to every vertex along the path.
		
		% It suffices to show that the lemma holds for all vertices $ Z $ satisfying: (1) $ Z \in adj(Y, {\cal H}) \cap \mathbf{V} ({\cal C}^b) $; (2) $ Y - Z $. Note that such a vertex $ Z $ exists. Since B-component is a connected graph, the set $ adj(Y, {\cal H}) \cap \mathbf{V} ({\cal C}^b) $ is nonempty. If a vertex $ W \in adj(Y, \cal H) $ is connected with $ Y $ by a directed edge, there must be an undirected path between $ Y $ and $ W $, on which all the vertices are still in $ \mathbf{V} ({\cal C}^b) $. So the vertex adjacent to $ Y $ on this path can be chosen as $ Z $.
		
		For any vertex $Z\in sib(Y, {\cal C}^b)$, it is clear that $ X $ and $ Z $ are adjacent, since otherwise by Rule 1 of Meek's rules, it holds that $ Y \rightarrow Z $ is in $\cal H$, which is a contradiction to $Z\in sib(Y, {\cal C}^b)$. As $ X \notin \mathbf{V} ({\cal C}^b) $, $ X $ and $ Z $ must be connected by a directed edge. If $ Z \rightarrow X $, then $ Z - Y $ can be oriented as $ Z \rightarrow Y $ by Rule 2 of Meek's rules, which also contradicts to $Z\in sib(Y, {\cal C}^b)$. Thus, we have $ X \rightarrow Z $.
	\end{proof}

	\begin{lemma}[Necessity of Condition (i)]\label{Hc-chain}
		Given a causal MPDAG $ \cal H $, the chain skeleton $ {\cal H}_c $ of $ \cal H $ is a chain graph. Furthermore, $ { \cal H }_c $ is also an MPDAG.
	\end{lemma}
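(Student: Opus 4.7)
The plan is to exploit the fact that every directed edge of ${\cal H}_c$ is also directed in $\cal H$ (only arrowheads \emph{within} B-components are erased), combined with Lemma~\ref{lem:into-B-com}. First, I would observe that ${\cal H}_c$ is a PDAG: any directed cycle in ${\cal H}_c$ would lift verbatim to a directed cycle in $\cal H$, contradicting the fact that $\cal H$ is an MPDAG.

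To show that ${\cal H}_c$ is a chain graph, I would argue by contradiction. Suppose there is a partially directed cycle in ${\cal H}_c$ containing at least one directed edge. Since undirected edges of ${\cal H}_c$ only occur within a single B-component while directed edges only occur between distinct B-components, the cycle induces a cyclic sequence of B-components ${\cal C}^b_1,\dots,{\cal C}^b_k,{\cal C}^b_{k+1}={\cal C}^b_1$, with consecutive pairs linked by some directed edge $X_i\to Y_{i+1}$ in $\cal H$ where $X_i\in{\cal C}^b_i$ and $Y_{i+1}\in{\cal C}^b_{i+1}$. Since $X_i\notin{\cal C}^b_{i+1}$, Lemma~\ref{lem:into-B-com} gives $X_i\to Z$ in $\cal H$ for every $Z\in{\cal C}^b_{i+1}$; in particular $X_i\to X_{i+1}$. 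Chaining yields $X_1\to X_2\to\cdots\to X_k\to X_1$, a directed cycle in $\cal H$, the desired contradiction.

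To show that ${\cal H}_c$ is an MPDAG, it remains to verify closure under the four Meek's rules. The key observation is the dichotomy noted above: an undirected edge in ${\cal H}_c$ forces its endpoints into the same B-component, while a directed edge in ${\cal H}_c$ forces its endpoints into distinct B-components. For Rule~1, a putative configuration $A\to B - C$ with $A,C$ non-adjacent in ${\cal H}_c$ would put $C\in{\cal C}^b_B$ and $A\notin{\cal C}^b_B$, so Lemma~\ref{lem:into-B-com} would supply $A\to C$ in $\cal H$, giving the adjacency $A$--$C$ in ${\cal H}_c$ and contradicting the premise. For Rules~2, 3, and 4, I expect the same pattern: the undirected edges in the rule's premise force groups of vertices into a common B-component, and then Lemma~\ref{lem:into-B-com} applied to each directed edge that crosses the B-component boundary produces contradicting adjacencies, foreclosing the rule's premise.

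The main obstacle is the bookkeeping in Rules~3 and~4, whose premises involve more vertices and several simultaneous edge types; however, the unifying strategy of identifying which vertices must share a B-component and then invoking Lemma~\ref{lem:into-B-com} on every boundary-crossing directed edge should dispatch each case in the same way as Rule~1.
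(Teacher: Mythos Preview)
Your proposal is correct and follows essentially the same route as the paper: the chain-graph part is virtually identical (contract each maximal undirected segment of a hypothetical partially directed cycle to a single B-component and use Lemma~\ref{lem:into-B-com} to produce a directed cycle in $\cal H$), and the MPDAG part is the same rule-by-rule case analysis driven by the dichotomy ``undirected in ${\cal H}_c$ $\Rightarrow$ same B-component, directed in ${\cal H}_c$ $\Rightarrow$ distinct B-components'' together with Lemma~\ref{lem:into-B-com}. One small refinement: for Rule~3 (and one subcase of Rule~4) the undirected edges already force \emph{all} involved vertices into a single B-component, so the contradiction comes directly from your dichotomy (a directed edge of ${\cal H}_c$ cannot live inside a B-component) rather than from Lemma~\ref{lem:into-B-com}; your ``boundary-crossing'' phrasing slightly obscures this, but the argument goes through.
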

	
	\begin{proof}[Proof of Lemma \ref{Hc-chain}]
		We first prove that the graph $ { \cal H }_c $ is a chain graph, which suffices to show that there are no partially directed cycles in $ {\cal H}_c $.  Assume that there is a partially directed cycle in $ {\cal H}_c $, and the cycle is of the following form: $ X_{11} \rightarrow X_{21} - \cdots - X_{2n_2} \rightarrow \cdots \rightarrow X_{k1} - \cdots - X_{kn_k} \rightarrow X_{1n_1} - \cdots - X_{11} $. Based on the definitions of $ {\cal H}_c $ and B-component, $X_{21}, \cdots, X_{2n_2}$ are in the same B-component while $X_{11}$ is not in this B-component. By Lemma~\ref{lem:into-B-com}, it holds that $X_{11}\to X_{2n_2}$. Similarly, $X_{2n_2} \rightarrow X_{3n_3}\rightarrow \cdots\rightarrow X_{kn_k}\to X_{11}$, which together with $X_{11}\to X_{2n_2}$ gives a directed cycle. Since all directed edges in $ {\cal H}_c $ are also in $ \cal H $, we have constructed a directed cycle in $ {\cal H}$, which contradicts to the definition of a causal MPDAG.
		
		We then show that $ { \cal H }_c $ is an MPDAG. Since we have already proved that there is no (partially) directed cycle in $ {\cal H}_c $, $ {\cal H}_c $ is a PDAG. What remains is to show that $ {\cal H}_c $ is closed under the four Meek's rules.
		
		\begin{figure}[!h]
			%	\vspace{-0.5em}
			\centering
			\subfloat[ \label{rule3left}]{
				\begin{minipage}[t]{0.1\linewidth}
					\centering
					\includegraphics[width=1\linewidth]{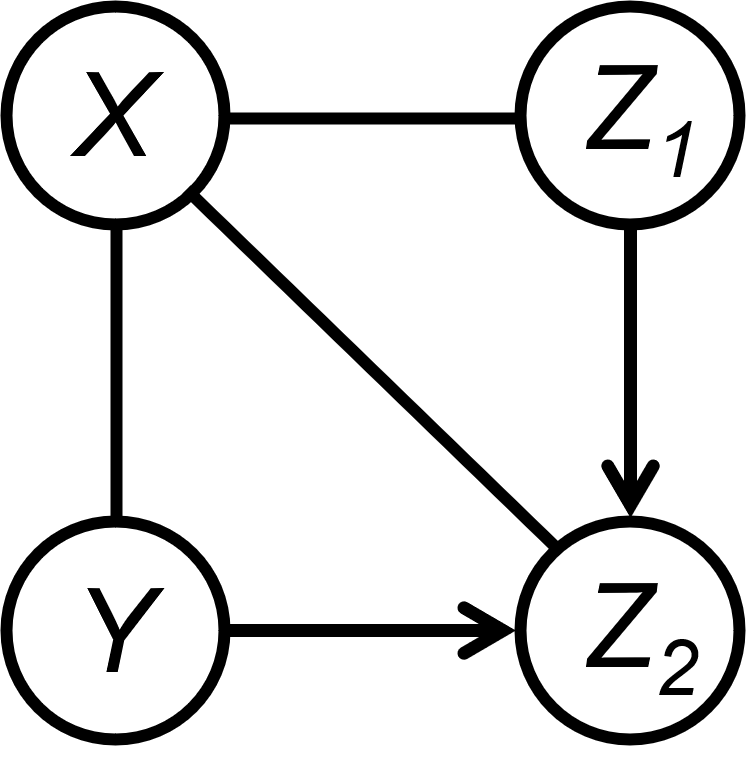}
					%\caption{fig1}
				\end{minipage}%
			}%
			\hspace{0.1\linewidth}
			\subfloat[  \label{rule4left-1}]{
				\begin{minipage}[t]{0.1\linewidth}
					\centering
					\includegraphics[width=1\linewidth]{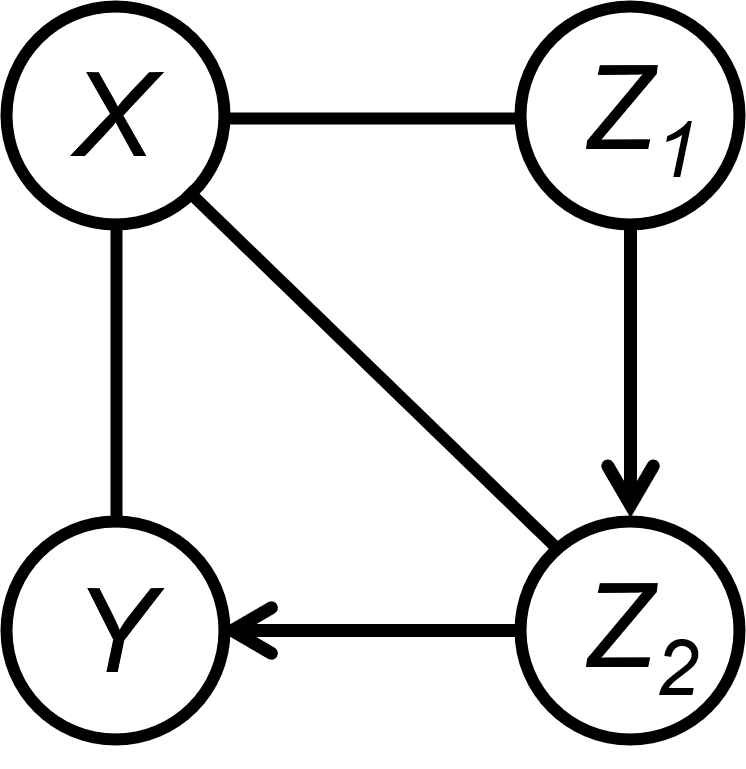}
					%\caption{fig1}
				\end{minipage}%
			}%
			\hspace{0.1\linewidth}
			\subfloat[  \label{rule4left-2}]{
				\begin{minipage}[t]{0.1\linewidth}
					\centering
					\includegraphics[width=1\linewidth]{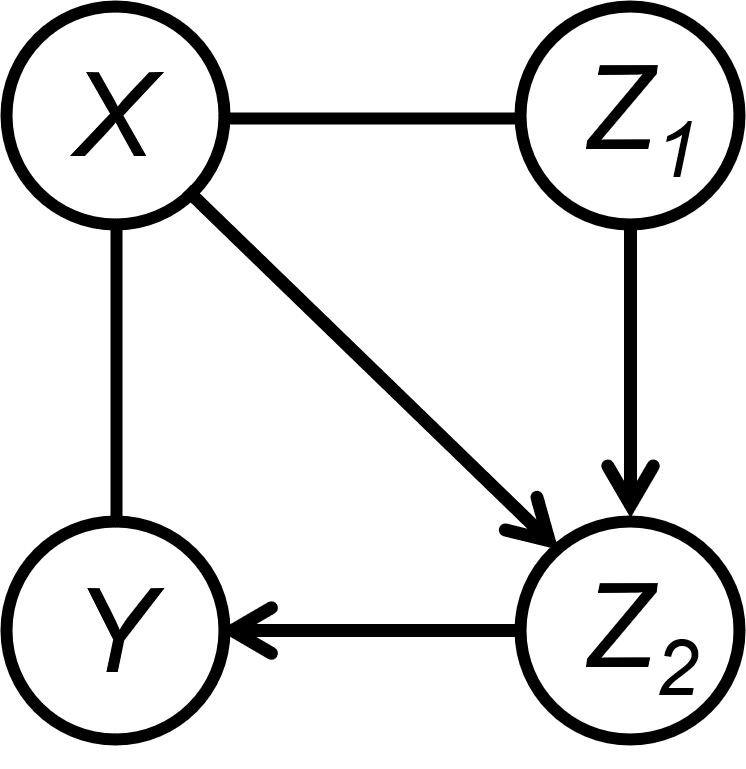}
					%\caption{fig1}
				\end{minipage}%
			}%
			\hspace{0.1\linewidth}
			\subfloat[ \label{rule4left-3}]{
				\begin{minipage}[t]{0.1\linewidth}
					\centering
					\includegraphics[width=1\linewidth]{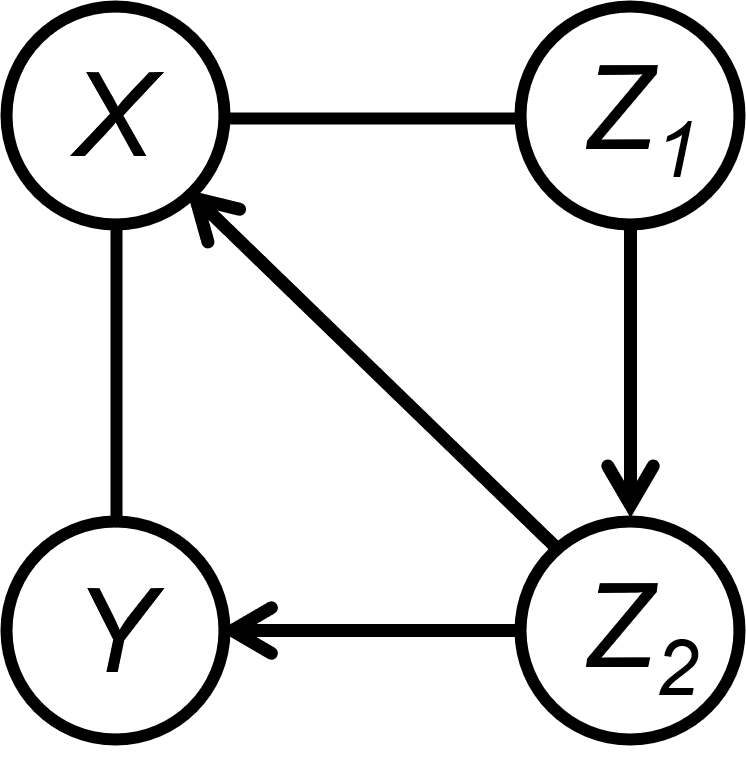}
					%\caption{fig1}
				\end{minipage}%
			}%
			\caption{The cases discussed in the proof of Lemma \ref{Hc-chain}}
			\label{fig:HcMPDAG}
			% \vspace{-0.5em}
		\end{figure}

		% item \rightarrow enumerate
		\begin{enumerate}
			\item[(i)] If $ {\cal H}_c $ is not closed under the first Meek's rule, then $ {\cal H}_c $ has an induced subgraph $ X \rightarrow Y - Z $, in which $ X \notin adj(Z, {\cal H}_c) $. By the construction of $ {\cal H}_c $, $Y$ and $Z$ are in the same B-component of ${\cal H}$ while $X$ is not in that B-component.  According to Lemma \ref{lem:into-B-com}, $ X \rightarrow Z $ must be in $ {\cal H}_c $, which contradicts the assumption that $ X \notin adj(Z, {\cal H}_c) $.
			
			\item[(ii)] If $ {\cal H}_c $ is not closed under the second Meek's rule, then $ {\cal H}_c $ has an induced subgraph consisting of $ X \rightarrow Y \rightarrow Z $ and $ X - Z $. By the similar argument for (i), $ Y \rightarrow X $ must be in $ {\cal H}_c $, which leads to a contradiction.
			
			\item[(iii)] If  $ {\cal H}_c $ is not closed under the third Meek's rule, then $ {\cal H}_c $ has an induced subgraph with the configuration shown in Figure \ref{rule3left}. In this case, $ X, Y, Z_1, Z_2 $ are in the same B-component. Again, by the construction of $ {\cal H}_c $, $ Y \rightarrow Z_2, Z_1 \rightarrow Z_2 $ should not be in $ {\cal H}_c $. which leads to a contradiction.
			
			\item[(iv)] If $ {\cal H}_c $ is not closed under the fourth Meek's rule, then one of the configurations shown in Figures \ref{rule4left-1}-\ref{rule4left-3} must appear in $ {\cal H}_c $ as an induced subgraph. In the following, we will prove that none of the configurations is in $ {\cal H}_c $. In fact, if $ {\cal H}_c $ has an induced subgraph with the configuration shown in Figure \ref{rule4left-1}, then by the similar argument for (iii), $ Z_1 \rightarrow Z_2 \rightarrow Y $ should not be in $ {\cal H}_c $ as $X, Y, Z_1, Z_2$ are in the same B-component of $ {\cal H} $. If $ {\cal H}_c $ has an induced subgraph with the configuration shown in Figure \ref{rule4left-2}, then the induced subgraph of $ {\cal H}_c $ over $ X, Z_2, Y $ is not closed under the second Meek's rule. Similarly, in Figure \ref{rule4left-3}, the induced subgraph of $ {\cal H}_c $ over $ Z_1, Z_2, X $ is not closed under the second Meek's rule either.
		\end{enumerate}
		This completes the proof.	
	\end{proof}

	\begin{lemma}\label{rule1or4}
		Given a causal MPDAG $ \cal H $, if a directed edge in $ \cal H $ can be oriented by Rule 1 or Rule 4 of Meek's rules, then the involved vertices of the corresponding Meek's rule are not in the same B-component.
	\end{lemma}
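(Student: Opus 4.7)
The plan is to argue by contradiction. For Rule 1, suppose all three involved vertices of some Rule 1 application lie in a common B-component ${\cal C}^b$ of $\cal H$; the Rule 4 case will then follow by an analogous argument on the corresponding four-vertex configuration. The Rule 1 configuration is $X \to Y$ in $\cal H$ together with the edge $Y - Z$ (oriented as $Y \to Z$ in $\cal H$) and $X$ not adjacent to $Z$. Under the contradiction hypothesis $X, Y, Z \in \mathbf{V}({\cal C}^b)$, the definition of B-component (Definition \ref{Bcomp}) and the fact that $Y \to Z$ is directed in $\cal H$ force the existence of an alternative undirected $Y$-to-$Z$ path $Y = V_0 - V_1 - \cdots - V_k = Z$ in $\cal H$ with $k \geq 2$.

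The key step is to propagate adjacency from $X$ along this path. I would use two ingredients: (a) $\cal H$ shares v-structures with its underlying CPDAG (which exists and is unique by Proposition \ref{nonemptyMPDAG}), so no DAG in $[{\cal H}]$ may realize a new unshielded collider; and (b) $\cal H$ is closed under Meek's Rule 2. The base case runs as follows: since $V_0 - V_1$ is undirected in $\cal H$, some DAG in $[{\cal H}]$ has $V_1 \to V_0 = V_1 \to Y$, and then $X \to Y \leftarrow V_1$ must not be a new v-structure, forcing $X$ to be adjacent to $V_1$; meanwhile the orientation $V_1 \to X$ in $\cal H$ is ruled out by Rule 2 applied to $V_1 \to X \to Y$ with $V_1 - Y$ undirected. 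Iterating the v-structure argument along consecutive triples on the path yields that $X$ is adjacent to $V_k = Z$, contradicting the Rule 1 premise.

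The main technical obstacle lies in the intermediate subcase where the adjacency $X - V_i$ produced by the induction is itself undirected in $\cal H$: in that event the joint DAG orientation realizing simultaneously $X \to V_i$ and $V_{i+1} \to V_i$ may fail to exist in $[{\cal H}]$, so the direct v-structure argument does not immediately yield $X$ adjacent to $V_{i+1}$. To overcome this, my plan is to fix the undirected path from the outset to be a shortest $Y$-to-$Z$ path in $\cal H$, so that by the chordality of ${\cal C}^b$'s skeleton (Lemma \ref{Bchordal}) no chord $V_{i-1} - V_{i+1}$ exists. Combined with the inductive adjacency of $X$ to $V_{i-1}$ and a careful application of Meek's Rule 1 and Rule 2 on the induced subgraph of $\cal H$ over $\{X, V_{i-1}, V_i, V_{i+1}\}$, the desired adjacency of $X$ to $V_{i+1}$ should follow, closing the induction. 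The same strategy applied to the four-vertex configuration of Rule 4 yields the corresponding contradiction there.
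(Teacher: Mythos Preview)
Your setup and base case are correct, but the inductive step does not go through, and the proposed fix does not repair it.

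First, the chordality claim is confused. Taking a shortest \emph{undirected} path $Y=V_0-\cdots-V_k=Z$ only rules out \emph{undirected} chords; chordality of the skeleton of ${\cal C}^b$ says the opposite of what you want, namely that long cycles in the \emph{skeleton} must have chords. Since the cycle you are working with contains the directed edge $Y\to Z$, any chord guaranteed by chordality can perfectly well be a directed edge of $\cal H$. So you cannot conclude that $V_{i-1}$ and $V_{i+1}$ are non-adjacent in $\cal H$.

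Second, even granting non-adjacency of $V_{i-1}$ and $V_{i+1}$, Rules~1 and~2 on the induced subgraph over $\{X,V_{i-1},V_i,V_{i+1}\}$ do not force $X\sim V_{i+1}$ in the problematic case. Concretely, take $i=1$ with $X-V_1$ undirected: on $\{X,V_0,V_1,V_2\}$ you have the single directed edge $X\to V_0$ and three undirected edges $V_0-V_1$, $V_1-V_2$, $X-V_1$, with $V_0\not\sim V_2$ and (by hypothesis) $X\not\sim V_2$. Rule~1 needs a directed edge into a vertex with an outgoing undirected edge and a missing adjacency; the only directed edge is $X\to V_0$, but $X\sim V_1$, so it does not fire. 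Rule~2 needs two consecutive directed edges; there is only one. Nothing forces an orientation, and for $i\ge 2$ the situation is worse because the only directed edge $X\to V_0$ is no longer even in your four-vertex window.

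The paper's proof avoids both problems by working from the \emph{other} end of the path and by using Rule~4 rather than only Rules~1 and~2. From $Y\to Z$ and $W_n-Z$ it gets $Y\sim W_n$ (Rule~1), and then propagates directed edges $Y\to W_i$ backwards using $X\to Y$ together with $X\not\sim W_i$; at the first index $k$ with $X\sim W_k$, the configuration on $\{X,Y,W_k,W_{k+1}\}$ triggers Rule~4 (or Rule~1/Rule~2 when the $X$--$W_k$ edge is directed), forcing $W_k\to W_{k+1}$ and contradicting that the path is undirected. The essential ingredient you are missing is that you must exploit \emph{both} directed edges $X\to Y$ and $Y\to Z$, and you must invoke Rule~4 at the pivot. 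Finally, for the Rule~4 part of the lemma the paper does not rerun the argument on a four-vertex configuration; it simply observes that inside any Rule~4 configuration the edge $Z_2\to Y$ is itself orientable by Rule~1 (via $Z_1\to Z_2$ and $Z_1\not\sim Y$), so the Rule~1 case already handles $\{Z_1,Z_2,Y\}$.
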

	
	We remark that, a directed edge $Y\to Z$ \emph{can} be oriented by Rule 1 means that there is an $X\notin adj(Z, {\cal H})$ such that $X\to Y$ is in ${\cal H}$. In this case, $X, Y, Z$ are the involved vertices of Rule 1. The meaning of the expression that "a directed edge can be oriented by Rule 4" is similar. We also note that, the condition of Lemma~\ref{rule1or4} does not rule out the possibility that the edge can also be oriented by Rules 2, 3 or from the background knowledge set.
	
	\begin{proof}[Proof of Lemma \ref{rule1or4}]
		If a directed edge $ Y \rightarrow Z $ can be oriented by Rule 1 of Meek's rules, then there exists a vertex $ X \in \mathbf{V}({\cal H}) $ such that $ X \rightarrow Y $ and $ X \notin adj(Z, {\cal H}) $. We need to prove that $ X, Y, Z $ are not in the same B-component of ${\cal H}$. Assume, for the sake of contradiction, that  $ X, Y, Z $ are in the same B-component $ {\cal C}^b $, then there is an undirected path $ Y - W_1 - \cdots - W_n - Z $ connecting $ Y $ and $ Z $ where $n\geq 1$ and $ W_i \in {\cal C}^b $ for $i=1,2,\cdots,n$. Since $Y-W_1$ is undirected, we have $ X \in adj(W_1, {\cal H}) $, otherwise we would have $ Y \rightarrow W_1 $ due to Rule 1 of Meek's rules. Similarly, $ Y \in adj(W_n, {\cal H}) $ since $ Y \rightarrow Z - W_n $ is in ${\cal H}$.
		
		\begin{figure}[!h]
			\centering
			\vspace{2em}
			\begin{minipage}[b]{0.33\linewidth}
				\centering
				\includegraphics[width=1\linewidth]{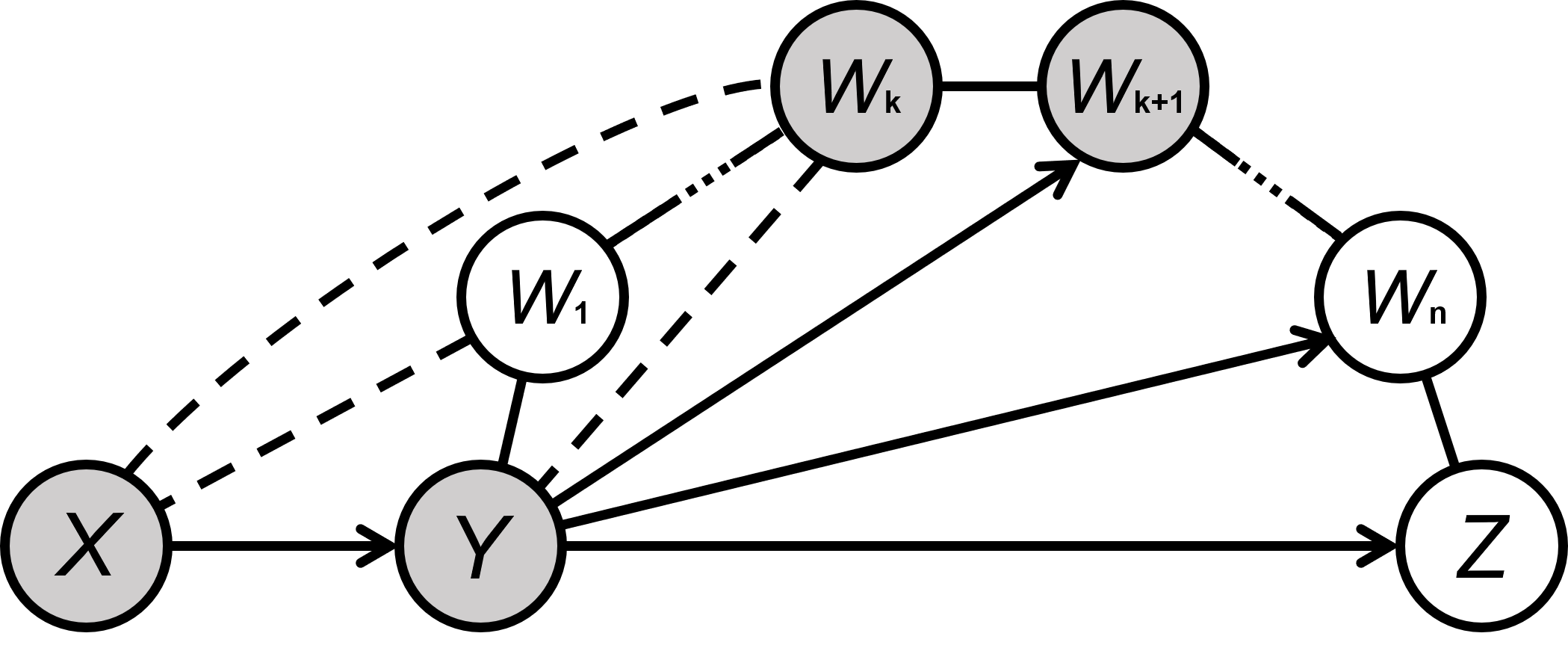}
			\end{minipage}
			\caption{An illustration of the graph structure discussed in the proof of Lemma \ref{rule1or4}. A dashed undirected edge connecting two vertices indicates they are adjacent, but the direction of the edge is not relevant to the proof.}
			\label{fig:rule1or4}
		\end{figure}

		If $n = 1$ or $X\in adj(W_n, {\cal H})$, then $W_n\to Z$ should be in ${\cal H}$ by Rule 4 of Meek's rules (Figure~\ref{fig:rule1or4}), which contradicts our assumption. Now consider the case where $n>1$ and $X\notin adj(W_n, {\cal H})$. Since $X\to Y$, by the first Meek's rule we have $Y\to W_n$. Moreover, since $W_{n-1}-W_n$, it holds that $Y\in adj(W_{n-1}, {\cal H}) $. Let $ k \coloneqq \arg\max_{1 \leq j \leq n-1} X \in adj(W_j, {\cal H}) $ denote the largest subscript of the vertex on the path which is adjacent to $ X $, then by the same argument we can show that $Y\to W_i$ for $i=k+1,\cdots,n$ and $Y\in adj(W_k, {\cal H})$. Note that, the induced subgraph of ${\cal H}$ over $X, Y, W_{k+1}$ and $W_k$ is the same as the left-hand side of the fourth Meek's rule, $W_{k}\to W_{k+1}$ is in ${\cal H}$, which is contrary to our assumption.

		If a directed edge $ X \rightarrow Y $ can be oriented by Rule 4 of Meek's rules but cannot be oriented by Rule 1 of Meek's rules, then there exist vertices $ Z_1, Z_2 \in {\cal H} $ such that $ \cal H $ has an induced subgraph shown in Figure \ref{rule4left-1}. Note that, $Z_2\to Y$ can be oriented by Rule 1 of Meek's rules since $Z_1\to Z_2$ and $Z_1$ is not adjacent to $Y$, $Z_1, Z_2$ and $Y$ are not in the same B-component by the first part of the proof. Thus, $X, Z_1, Z_2$ and $Y$ are not in the same B-component, which completes the proof.
	\end{proof}
	
	\begin{lemma}\label{onlyrule2}
		Let $ \cal H $ be a causal MPDAG and $ {\cal C}^b $ be a B-component of $ \cal H $. If a directed edge in $ {\cal C}^b $ can be oriented by Meek's rules, then it can only be oriented by Rule 2 of Meek's Rules, and the directed edges in the configuration on the left-hand side of Rule 2 are all in $ {\cal C}^b $.
	\end{lemma}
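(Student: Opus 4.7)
The plan is to exclude each of Meek's four rules in turn as a possible source of orientation for a directed edge that lies inside $\mathcal{C}^b$, keeping only Rule 2, and then to verify that whenever Rule 2 does orient such an edge, the intermediate vertex of its configuration also belongs to $\mathcal{C}^b$. For Rules 1 and 4, Lemma \ref{rule1or4} already delivers exactly what is needed: whenever either rule can be applied, the vertices involved necessarily straddle two different B-components, so the resulting directed edge cannot have both endpoints inside $\mathcal{C}^b$.

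The central step will be Rule 3. Suppose, toward a contradiction, that an edge in $\mathcal{C}^b$ is oriented by Rule 3. In Rule 3's left-hand configuration, one vertex is joined to the other three by undirected edges; those undirected edges pull the entire quadruple into the same B-component and, since $\mathcal{H}$ and $\mathcal{G}^*$ have identical skeletons and identical undirected edges, into the same chain component of $\mathcal{G}^*$. But Rule 3's configuration also contains a pair of directed edges pointing to a common head from two non-adjacent sources, i.e.\ a v-structure in $\mathcal{H}$. Inside a chain component of $\mathcal{G}^*$ every edge is undirected, so this v-structure cannot appear in $\mathcal{G}^*$, contradicting Definition \ref{interpretbg:mpdagresclass}, which forces $\mathcal{H}$ and $\mathcal{G}^*$ to share their v-structures. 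Hence Rule 3 cannot orient any edge inside $\mathcal{C}^b$.

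Finally, for Rule 2, suppose an edge $X \to Y$ in $\mathcal{C}^b$ is obtained from $X \to Z \to Y$ together with $X - Y$. I would show $Z \in \mathbf{V}(\mathcal{C}^b)$ by a short contradiction using Lemma \ref{lem:into-B-com}: if $Z$ were outside $\mathcal{C}^b$, then the edge $Z \to Y$ with $Y \in \mathbf{V}(\mathcal{C}^b)$ would force, by Lemma \ref{lem:into-B-com}, $Z \to X'$ for every $X' \in \mathbf{V}(\mathcal{C}^b)$, in particular $Z \to X$, conflicting with the assumed $X \to Z$. Therefore $Z \in \mathbf{V}(\mathcal{C}^b)$, and both directed edges $X \to Z$ and $Z \to Y$ of the Rule 2 configuration lie inside $\mathcal{C}^b$, finishing the lemma. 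I expect the Rule 3 case to be the main obstacle, since it is the only rule for which Lemmas \ref{rule1or4} and \ref{lem:into-B-com} do not directly apply and one has to invoke the v-structure invariance of causal MPDAGs.
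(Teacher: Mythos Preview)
Your handling of Rule~3 and Rule~2 is essentially the paper's argument, but your treatment of Rules~1 and~4 contains a genuine logical gap. Lemma~\ref{rule1or4} only asserts that the \emph{full set} of involved vertices does not lie in a single B-component; it does \emph{not} say that the two endpoints of the edge being oriented lie in different B-components. Your inference ``so the resulting directed edge cannot have both endpoints inside $\mathcal{C}^b$'' is therefore a non-sequitur: nothing in Lemma~\ref{rule1or4} rules out, say, that in the Rule~1 configuration $Z\to X\to Y$ with $Z\notin adj(Y,\mathcal{H})$, the oriented edge $X\to Y$ sits entirely in $\mathcal{C}^b$ while only the auxiliary vertex $Z$ lies outside.

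The fix is exactly the device you already use for Rule~2, namely Lemma~\ref{lem:into-B-com}. For Rule~1: once Lemma~\ref{rule1or4} forces $Z\notin\mathbf{V}(\mathcal{C}^b)$, the edge $Z\to X$ with $X\in\mathbf{V}(\mathcal{C}^b)$ triggers Lemma~\ref{lem:into-B-com} and yields $Z\to Y$, contradicting $Z\notin adj(Y,\mathcal{H})$. For Rule~4: since $X-Z_1$ is undirected, $Z_1\in\mathbf{V}(\mathcal{C}^b)$; Lemma~\ref{rule1or4} then forces $Z_2\notin\mathbf{V}(\mathcal{C}^b)$, and $Z_2\to Y$ with $Y\in\mathbf{V}(\mathcal{C}^b)$ gives $Z_2\to Z_1$ by Lemma~\ref{lem:into-B-com}, contradicting $Z_1\to Z_2$. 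This is precisely how the paper closes these cases. A minor additional point: in your Rule~3 paragraph, $\mathcal{H}$ and $\mathcal{G}^*$ do not have ``identical undirected edges''; rather, the undirected edges of $\mathcal{H}$ form a subset of those of $\mathcal{G}^*$, which is still enough to conclude that a B-component of $\mathcal{H}$ lies inside a chain component of $\mathcal{G}^*$.
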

	
	\begin{proof}[Proof of Lemma \ref{onlyrule2}]
		Let $X\to Y$ be a directed edge in $ {\cal C}^b $ that can be oriented by Meek's rules. If $X\to Y$ can be oriented by the first Meek's rule, then there is a $Z\notin adj(Y, {\cal H})$ such that $Z\to X$ is in ${\cal H}$. By Lemma~\ref{lem:into-B-com}, $Z\in \mathbf{V}({\cal C}^b)$. However, this is impossible based on Lemma~\ref{rule1or4}. Therefore, $X\to Y$ cannot be oriented by the first Meek's rule.
		
		If $X\to Y$ can be oriented by the third Meek's rule, then there are $Z_1, Z_2\in adj(X, {\cal H})$ such that $Z_1\to Y \leftarrow Z_2$ is a v-structure in ${\cal H}$ while $Z_1\to X \leftarrow Z_2$ is not in ${\cal H}$. If $Z_1\notin \mathbf{V}({\cal C}^b)$, then $Z_1\to X \to Z_2$ by Lemma~\ref{lem:into-B-com} and the first Meek's rule. On the other hand, $Z_2\to Y$ while $Z_2\not\to X$ implies that $Z_2\in \mathbf{V}({\cal C}^b)$ by Lemma~\ref{lem:into-B-com}. However, this is impossible by Lemma~\ref{lem:into-B-com}, since $Z_1$ and $Z_2$ are not adjacent. Therefore, $Z_1\in \mathbf{V}({\cal C}^b)$. Similarly, $Z_2\in \mathbf{V}({\cal C}^b)$, meaning that $X, Y, Z_1$ and $Z_2$ are in the same B-component of ${\cal H}$. However, as implied by Proposition~\ref{nonemptyMPDAG}, any B-component is an induced subgraph of some chain component of a CPDAG, and thus the v-structure $Z_1\to Y \leftarrow Z_2$ is not allowed in $ {\cal C}^b $, leading to a contradiction.
		
		If $X\to Y$ can be oriented by the fourth Meek's rule but cannot be oriented by the first Meek's rule, then there are $Z_1\in sib(X, {\cal H})$ and $ Z_2\in sib(X, {\cal H})$ such that $X-Z_1\to Z_2\to Y$ and $Z_1\notin adj(Y, {\cal H})$. Since $Z_1-X$ and $Z_2-X$, we have that $Z_1, Z_2, X$ and $Y$ are in the same B-component $ {\cal C}^b $. However, $Z_2 \to Y$ can be oriented by the first Meek's rule, which is contrary to Lemma~\ref{rule1or4}. Therefore, $X\to Y$ cannot be oriented by the fourth Meek's rule.
		
		% $X, Y, Z_1$ and $Z_2$ induce one of the graph shown in Figures \ref{rule4left-1}-\ref{rule4left-3}.
		
		Finally, if $X\to Y$ can be oriented by the second Meek's rule, then there is a $Z$ such that $X\to Z\to Y$ is in ${\cal H}$. By Lemma~\ref{lem:into-B-com}, $Z$ is in $ {\cal C}^b $. Thus, $X\to Z$ and $Z\to Y$ are all in $ {\cal C}^b $.
	\end{proof}

	\begin{lemma}\label{arrowinclique}
		Given a causal MPDAG $ \cal H $ and a B-component $ {\cal C}^b $ of $ \cal H $, let $ \mathbf{M}_i $ and $ \mathbf{M}_j $ be two distinct maximal cliques of $ {\cal C}^b $ such that $ \mathbf{M}_{ij} \coloneqq \mathbf{M}_i \cap \mathbf{M}_j \neq \emptyset $. For any vertex $ X \in \mathbf{M}_i \backslash \mathbf{M}_{ij}, Y \in \mathbf{M}_{ij} $, the directed edge $ X \rightarrow Y $ does not exist in $ {\cal C}^b $.
	\end{lemma}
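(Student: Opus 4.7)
My plan is to argue by contradiction: suppose $X\to Y$ lies in ${\cal C}^b$ with $X\in \mathbf{M}_i\setminus\mathbf{M}_{ij}$ and $Y\in\mathbf{M}_{ij}$. The first step is to produce a carefully chosen witness $W$. Since $\mathbf{M}_j$ is a maximal clique of ${\cal C}^b$ and $X\notin\mathbf{M}_j$, the set $\{X\}\cup\mathbf{M}_j$ cannot itself be a clique, so there is some vertex $W\in\mathbf{M}_j$ that is not adjacent to $X$ in ${\cal H}$. Because $X$ lies in the clique $\mathbf{M}_i$ and is therefore adjacent to every vertex of $\mathbf{M}_{ij}\subseteq\mathbf{M}_i$, this $W$ must in fact sit in $\mathbf{M}_j\setminus\mathbf{M}_{ij}$, and $W\sim Y$ since $\mathbf{M}_j$ is a clique. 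I will then case-analyze the edge between $W$ and $Y$ in ${\cal H}$.

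Two of the three cases dispatch quickly. If $Y-W$ is undirected, then Rule~1 of Meek's rules applied to $X\to Y-W$ with $X\not\sim W$ forces $Y\to W$, contradicting the undirected status. If $W\to Y$ is directed, then $X\to Y\leftarrow W$ with $X\not\sim W$ is a v-structure in ${\cal H}$; by Proposition~\ref{nonemptyMPDAG}, the undirected edges of ${\cal H}$ coincide with the undirected edges of its generating CPDAG ${\cal G}^*$, so ${\cal C}^b$ is contained in a chain component of ${\cal G}^*$ and hence cannot host a v-structure, a contradiction.

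The main obstacle is the remaining case $Y\to W$. Since $Y,W\in{\cal C}^b$ yet the edge $Y\to W$ is directed, the two vertices must be connected by an undirected path of length at least two in ${\cal H}$; I would fix a shortest such path $Y=V_0-V_1-\cdots-V_k=W$. The idea is to propagate orientations along this path using Meek's rules anchored at $X\to Y$ and $Y\to W$. Working inward from the $W$-end, Rule~1 applied to $Y\to W$ and $W-V_{k-1}$ forces $Y\sim V_{k-1}$, while Rule~2 applied to a hypothetical $V_{k-1}\to Y\to W$ combined with $V_{k-1}-W$ rules out the orientation $V_{k-1}\to Y$, leaving either $Y\to V_{k-1}$ or $Y-V_{k-1}$; the same dichotomy then propagates inward along the path, and shortest-path minimality forbids any chord that would shortcut the argument. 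The hard part is closing this induction: the two branches must interact with the background edge $X\to Y$ and the non-adjacency $X\not\sim W$, and I expect Meek's Rule~4 (applied to the configuration $X\to Y\to W$ together with the undirected triangle formed by $X$, the critical intermediate vertex on the path, and $W$) to be the decisive ingredient that forces an undirected path-edge to become directed, thereby contradicting the existence of the assumed undirected path and completing the proof.
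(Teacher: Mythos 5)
Your witness construction and the two quick cases are sound and coincide with the paper's: the paper's proof of Lemma~\ref{arrowinclique} picks the same vertex $Z\in\mathbf{M}_j\setminus\mathbf{M}_{ij}$ with $Z\notin adj(X,\mathcal{H})$ (it must exist, else $X$ would be adjacent to all of $\mathbf{M}_j$ and belong to it), and your dispatching of the subcases $Y-W$ (Rule~1 plus maximality of $\mathcal{H}$) and $W\to Y$ (a v-structure inside a B-component, impossible since Proposition~\ref{nonemptyMPDAG} places $\mathcal{C}^b$ inside a chain component of the unique generating CPDAG) is correct. But the paper does not fight the remaining case by hand: it observes that the forced edge $Y\to Z$ is a directed edge within $\mathcal{C}^b$ that can be oriented by Rule~1 (witnessed by $X\to Y$ and $X\notin adj(Z,\mathcal{H})$), which directly contradicts Lemma~\ref{onlyrule2} (resting on Lemma~\ref{rule1or4}: a directed edge orientable by Rule~1 or Rule~4 cannot have all its involved vertices in one B-component). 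All of the hard path-propagation work is done once, in the proof of Lemma~\ref{rule1or4}.

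This is exactly where your proposal has a genuine gap: the case $Y\to W$ is the entire content of the lemma, and you leave it unproven --- ``I expect Meek's Rule~4 \dots to be the decisive ingredient'' is a conjecture, not a closing step. Moreover, the sketch has two concrete defects. First, the Rule~4 configuration you name, an ``undirected triangle formed by $X$, the critical intermediate vertex on the path, and $W$'', cannot exist: $X\notin adj(W,\mathcal{H})$ by construction, so no triangle contains both $X$ and $W$. The correct firing of Rule~4 (as in the proof of Lemma~\ref{rule1or4}) involves $X$, $Y$ and two \emph{consecutive path vertices}: writing the undirected path as $Y-V_1-\cdots-V_n-W$, one shows $X\in adj(V_1,\mathcal{H})$ (else Rule~1 orients $Y-V_1$), $Y\in adj(V_n,\mathcal{H})$ (else Rule~1 applied to $Y\to W-V_n$ orients $W-V_n$), sets $k$ to be the largest index with $X\in adj(V_k,\mathcal{H})$, propagates $Y\to V_i$ for all $i>k$ (Rule~1 supplies the orientation since $X\notin adj(V_i,\mathcal{H})$, while Rule~2 excludes $V_i\to Y$), and then $X\to Y\to V_{k+1}$ together with $X\in adj(V_k,\mathcal{H})$, $X\notin adj(V_{k+1},\mathcal{H})$ and $V_k-V_{k+1}$ matches the left-hand side of Rule~4, orienting $V_k\to V_{k+1}$ and contradicting the undirectedness of the path. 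Second, your appeal to shortest-path minimality to ``forbid any chord'' only excludes \emph{undirected} chords; directed edges between nonconsecutive path vertices remain possible, and your inward dichotomy does not dispose of them --- this is precisely why the paper anchors the induction on adjacency to $X$ via the index $k$ rather than on chordlessness. To repair the proof, either invoke Lemma~\ref{onlyrule2}/Lemma~\ref{rule1or4} at the point where $Y\to W$ arises, or reproduce the argmax argument above in full.
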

	
	% Note that both Rule 1 and Rule 4 are orientation rules between maximal cliques, while Rule 2 is the orientation rule within the maximal clique. Lemma \ref{onlyrule2} implies that when we consider the directed edges in B-component, we only need to focus on those in the maximal clique.
	% Finally, we present the proof of Proposition \ref{arrowinclique}.
	
	\begin{proof}[Proof of Lemma~\ref{arrowinclique}]
		$ X \in \mathbf{M}_i \backslash \mathbf{M}_{ij}$ implies that there must be a  $ Z \in \mathbf{M}_j \setminus \mathbf{M}_{ij} $ such that $Z$ is not adjacent to $X$, since otherwise $X$ is adjacent to every vertex in $\mathbf{M}_j $ and consequently $X\in \mathbf{M}_j$. Assume that such a directed edge $ X \rightarrow Y $ exists in $\cal H$, then such a $Z$ must be a child of $Y$ in $\cal H$ based on the first Meek's rule. This contradicts Lemma~\ref{onlyrule2}.
	\end{proof}
	
	\begin{lemma}[Necessity of Condition (iv)]\label{re:arrowinclique}
		Suppose that $ \cal H $ is a causal MPDAG and $ {\cal C}^b $ is a B-component of $ \cal H $. For any directed edge $ X \rightarrow Y $ in $ {\cal C}^b $, it can be proved that {(i)} $ adj(Y, {\cal C}^b) \backslash \{X\} \subseteq adj(X, {\cal C}^b) $, and {(ii)} $ pa(X, {\cal H}) \subseteq pa(Y, {\cal H}) \backslash \{X\} $.
	\end{lemma}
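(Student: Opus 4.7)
The plan is to establish (i) first and then derive (ii) from it, using Lemmas \ref{lem:into-B-com}, \ref{onlyrule2}, and Proposition \ref{nonemptyMPDAG} as the main tools.

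For (i), I would argue by contradiction. Pick any $Z \in adj(Y, {\cal C}^b)$ with $Z \neq X$, and suppose $Z \notin adj(X, {\cal H})$. I would exhaust the three possible orientations of the edge between $Z$ and $Y$ in ${\cal H}$. If $Y - Z$ is undirected, then the induced configuration $X \rightarrow Y - Z$ with $X \notin adj(Z, {\cal H})$ is precisely the left-hand side of Rule 1 of Meek's rules, which would force $Y \rightarrow Z$, contradicting the closure of ${\cal H}$ under Meek's rules. If $Z \rightarrow Y$, then $X \rightarrow Y \leftarrow Z$ is a v-structure in ${\cal H}$ (since $X$ and $Z$ are non-adjacent); by Proposition \ref{nonemptyMPDAG}, ${\cal H}$ shares its v-structures with the underlying unique CPDAG ${\cal G}^*$, so this v-structure lives in ${\cal G}^*$ too, forcing $X \rightarrow Y$ to be directed in ${\cal G}^*$. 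But $X$ and $Y$ lie in the same B-component of ${\cal H}$ (and undirected edges of ${\cal H}$ remain undirected in ${\cal G}^*$), so $X$ and $Y$ must lie in the same chain component of ${\cal G}^*$, which forces $X - Y$ to be undirected there, a contradiction. Finally, if $Y \rightarrow Z$, then $Y \rightarrow Z$ is a directed edge in ${\cal C}^b$ that is orientable by Rule 1 via $X \rightarrow Y$ with $X \notin adj(Z, {\cal H})$, contradicting Lemma \ref{onlyrule2}.

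For (ii), I would take any $W \in pa(X, {\cal H})$ and split on whether $W \in \mathbf{V}({\cal C}^b)$. If $W \notin \mathbf{V}({\cal C}^b)$, Lemma \ref{lem:into-B-com} applied to $W \rightarrow X$ immediately yields $W \rightarrow Y$. If $W \in \mathbf{V}({\cal C}^b)$, then $W \rightarrow X$ is itself a directed edge inside ${\cal C}^b$, so applying (i) to this edge gives $Y \in adj(X, {\cal C}^b) \subseteq adj(W, {\cal C}^b)$. Hence $W$ and $Y$ are adjacent in ${\cal H}$; the edge cannot be oriented $Y \rightarrow W$ (this would give the directed cycle $W \rightarrow X \rightarrow Y \rightarrow W$, violating acyclicity of PDAGs), and it cannot remain undirected $W - Y$ (Rule 2 of Meek's rules applied to $W \rightarrow X \rightarrow Y$ with $W - Y$ would orient it), so $W \rightarrow Y$. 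Since $W \in pa(X, {\cal H})$ also rules out $W = X$, we conclude $W \in pa(Y, {\cal H}) \setminus \{X\}$.

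The main obstacle is making the case analysis in (i) fully rigorous, in particular the v-structure case, where the argument must carefully chain Proposition \ref{nonemptyMPDAG} (v-structures of ${\cal H}$ equal those of ${\cal G}^*$) with the observation that a B-component of ${\cal H}$ sits inside a single chain component of ${\cal G}^*$ (since any undirected path in ${\cal H}$ is also an undirected path in ${\cal G}^*$). The remaining steps are short applications of Meek's rules, acyclicity, and the already-established Lemmas \ref{lem:into-B-com} and \ref{onlyrule2}.
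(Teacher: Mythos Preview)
Your proof is correct. The argument for (ii) is essentially the same as the paper's, with the minor cosmetic difference that you invoke (i) to establish $W\in adj(Y,{\cal C}^b)$ while the paper re-uses Lemma~\ref{onlyrule2} directly (observing that otherwise $X\to Y$ would be Rule~1--orientable from $W\to X$); both then finish with Rule~2.

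For (i), however, your approach genuinely differs from the paper's. The paper routes the argument through Lemma~\ref{arrowinclique}: it places $X$ and $Y$ in a common maximal clique $\mathbf{M}_i$ of ${\cal C}^b$, observes that any $Z\in adj(Y,{\cal C}^b)\setminus adj(X,{\cal C}^b)$ forces $Y$ into a second maximal clique $\mathbf{M}_j$ not containing $X$, and then Lemma~\ref{arrowinclique} forbids the edge $X\to Y$ with $X\in\mathbf{M}_i\setminus\mathbf{M}_{ij}$ and $Y\in\mathbf{M}_{ij}$. Your proof bypasses the clique machinery entirely and instead case-splits on the orientation of the $Y$--$Z$ edge, eliminating each case via closure under Meek's rules, the v-structure invariant from Proposition~\ref{nonemptyMPDAG}, and Lemma~\ref{onlyrule2}. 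Your route is more elementary and self-contained (it does not need Lemma~\ref{arrowinclique} at all), at the cost of a slightly longer case analysis; the paper's route is shorter on the page once Lemma~\ref{arrowinclique} is in hand, and that lemma carries independent structural information about how directed edges sit relative to the clique tree of ${\cal C}^b$.
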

	
	\begin{proof}[Proof of Lemma~\ref{re:arrowinclique}]
		We first prove the correctness of statement (i).
		%, by showing that statement (i) holds if and only if Lemma \ref{arrowinclique} holds. Suppose that Lemma \ref{arrowinclique} is true.
		For any directed edge $X\to Y$ in a B-component ${\cal C}^b$ of ${\cal H}$, there is a maximal clique $\mathbf{M}_i$ of ${\cal C}^b$ containing both $X$ and $Y$. If $ adj(Y, {\cal C}^b) \backslash \{X\} \nsubseteq adj(X, {\cal C}^b) $, then there exists a vertex $ Z $ satisfying $ Z \in adj(Y, {\cal C}^b) \backslash \{X\} $ but $ Z \notin adj(X, {\cal C}^b) $, implying that $ Y$ and $Z $ also belong to another maximal clique $ \mathbf{M}_j $ of $ {\cal C}^b $ which does not contain $X$. Hence, we have that $ X \in \mathbf{M}_i \setminus \mathbf{M}_{ij}$ and $ Y \in \mathbf{M}_{ij} $, which contradicts Lemma \ref{arrowinclique}.
		
		We next show that statement (ii) holds. When $ pa(X, {\cal H}) = \varnothing $, the result is trivial, so we assume that $ pa(X, {\cal H}) \neq \varnothing $. For any vertex $ Z \in pa(X, {\cal H}) $, if $ Z \notin {\cal C}^b $, then $ Z \rightarrow Y $ by Lemma \ref{lem:into-B-com}. If $ Z \in {\cal C}^b $, then we have $ Z \in adj(Y, \cal H) $, since otherwise $ Z \rightarrow X \rightarrow Y $ implies that $X\to Y$ can be oriented by the first Meek's rule, contradicted to Lemma \ref{onlyrule2}. Therefore, $ Z \rightarrow Y $ appears in ${\cal H}$ by applying Rule 2 of Meek's rules.
	\end{proof}
	
	% Conversely, suppose that statement (i) is true. If there are two distinct maximal cliques $ \mathbf{M}_i $ and $ \mathbf{M}_j $ of $ {\cal C}^b $ such that $ \mathbf{M}_{ij} \neq \varnothing $, and there exists a directed edge from $ X \in \mathbf{M}_i \setminus \mathbf{M}_{ij}$ to $ Y \in \mathbf{M}_{ij} $, then every vertex in $ \mathbf{M}_j \setminus \mathbf{M}_{ij} $ is in {\color{red} $ adj(Y, {\cal C}^b) $}, but none of the vertices in $ \mathbf{M}_j \setminus \mathbf{M}_{ij} $ is in {\color{red} $ adj(X, {\cal C}^b) $}, which is contradicted to statement (i) saying that {\color{red} $ adj(Y, {\cal C}^b) \subseteq adj(X, {\cal C}^b) $} holds.
	
	Finally, we present the proof of  Theorem \ref{the:MPDAG}.
	
	\begin{proof}[{Proof of  Theorem \ref{the:MPDAG}}]
		The necessity of conditions (i)-(iv) follows from Lemma \ref{Hc-chain}, Lemma \ref{Bchordal}, Lemma \ref{lem:into-B-com}, and Lemma \ref{re:arrowinclique} , respectively. In the following, we will prove the sufficiency of conditions (i) to (iv). Let $ \mathcal{H} = (\mathbf{V}, \mathbf{E}) $ be a partially directed graph which satisfies conditions (i)-(iv). Our goal is to show that $ \cal H $ is an MPDAG and $\cal H$ is causal.
		
		We first show that $ \cal H $ is acyclic. That is, there is no directed cycle in $ \cal H $. Assume, for the sake of contradiction, that there exist directed cycles in $ \cal H $ and let $\rho=(X_1, X_2,\cdots,X_n,X_1)$ be the shortest one. By condition (i), all the vertices on $\rho$ are in the same B-component, since otherwise the corresponding cycle of $\rho$ in $ {\cal H}_c $ is a partially directed cycle, which contradicts condition (i). If $n=3$, then $X_3\in pa(X_1, {\cal H })$ while $X_3\notin pa(X_2, {\cal H })$, meaning that $pa(X_1, {\cal H })\nsubseteq pa(X_2, {\cal H })\backslash \{X_1\}$, which contradicts condition (iv). If $n>3$, however, condition (iv) implies that $X_n\to X_2$, and thus $X_n\to X_2\to X_3\to\cdots\to X_n$ is a directed cycle of length $n-1$, contrary to the assumption that $\rho$ is the shortest. Therefore, $ \cal H $ is acyclic.
		
		To prove that $ \cal H $ is an MPDAG, it suffices to show that $ \cal H $ is closed under Meek's rules. We will consider each rule separately in below.
		\begin{enumerate}
			\item[(i)] If $ \cal H $ is not closed under Rule 1 of Meek's rules, then $ \cal H $ has an induced subgraph $ X \rightarrow Y - Z $, in which $ X \notin adj(Z, \cal H) $. Since $ Y $ and $ Z $ are connected by an undirected edge, there exists a B-component $ {\cal C}^b $ of $ \cal H $ such that $ Y, Z \in {\cal C}^b $. According to condition (iii), $ X \in {\cal C}^b $ since otherwise $X \rightarrow Z$ should be in $ \cal H $. However, if $ X \in {\cal C}^b $, then $ Z \in adj(Y, {\cal C}^b) $ but $ Z \notin adj(X, {\cal C}^b) $, which contradicts condition (iv). Thus, $ \cal H $ is closed under the first Meek's rule.
			
			\item[(ii)] If $ \cal H $ is not closed under Rule 2 of Meek's rules, then $ \cal H $ has an induced subgraph consists of $ X \rightarrow Y \rightarrow Z $ as well as $ X - Z $. By condition (iii), $ X, Y, Z $ are in the same B-component. However, for the directed edge $ Y \rightarrow Z $, $ X \in pa(Y, \cal H) $ but $ X \notin pa(Z, {\cal H}) \setminus \{Y\} $, which contradicts condition (iv). Thus, $ \cal H $ is closed under the second Meek's rule.
			
			\item[(iii)] If $ \cal H $ is not closed under Rule 3 of Meek's rules, then $ \cal H $ has an induced subgraph shown in Figure \ref{rule3left}. The vertices $ X, Y, Z_1, Z_2 $ are in the same B-component $ {\cal C}^b $. However, this is impossible as {$ Y \in adj(Z_2, {\cal C}^b) \backslash \{Z_1\}$ but $ Y \notin adj(Z_1, {\cal C}^b) $}.
			
			\item[(iv)] If  $ \cal H $ is not closed under Rule 4 of Meek's rules, then $ \cal H $ has an induced subgraph shown in one of Figures \ref{rule4left-1}-\ref{rule4left-3}. By the similar argument for (ii), no matter which induced subgraph $ \cal H $ has, the vertices $ X, Y, Z_1, Z_2 $ are in the same B-component $ {\cal C}^b $. However, for the directed edge $ Z_1 \rightarrow Z_2 $, we have $ Y \in adj(Z_2, {\cal C}^b) \backslash \{Z_1\}$ but $ Y \notin adj(Z_1, {\cal C}^b) $, contrary to condition (iv). Hence, $ \cal H $ is closed under the fourth Meek's rule.
		\end{enumerate}
		
		So far, we have proved that $ \cal H $ is an MPDAG. It remains to show that $\cal H$ is causal. By the definition of a causal MPDAG, it suffices to show that there is a DAG that has the same skeleton and the same v-structures as $ \cal H $, or equivalently, there exists an orientation of all undirected edges in  $ \cal H $ that does not create a new v-structure or a directed cycle.
		
		We first claim that (1) no orientation of the undirected edges in ${\cal H}$ will create a directed cycle which includes a directed edge or edges in ${\cal H}_c$ and (2) no orientation of an undirected edge in ${\cal H}$ can create a new v-structure with an edge that was oriented in ${\cal H}_c$. In fact, the first claim holds because of condition (i), and the second claim holds because of condition (iii).

		Based on the above two claims, to prove that  $\cal H$ is causal, it suffices to show that the skeleton of each B-component of $ \cal H $ has a perfect elimination ordering whose corresponding DAG contains the existing directed edges in that B-component.
		
		Let $ {\cal C}^b $ be a B-component of $ \cal H $. If $ {\cal C}^b $ is a complete graph with $ n $ vertices, then every vertex is simplicial in $ {\cal C}^b $. We claim that, there exists a vertex $V_1$ in $ {\cal C}^b $ which has no child in $ {\cal C}^b $. In fact, if every vertex in $ {\cal C}^b $ has a child in $ {\cal C}^b $, then there will be a directed cycle in $ {\cal C}^b $, which is impossible.  Note that, the induced subgraph of $ {\cal C}^b $ over $ \mathbf{V}({\cal C}^b) \setminus \{V_1\} $ is still complete. Hence, repeat the above procedure we can find a sequence of vertices $V_1, V_2, \cdots, V_n$. It can be easily verified that the ordering of the vertices forms a PEO of $ {\cal C}^b $, and the corresponding DAG contains  all directed edges in $ {\cal C}^b $.
		
		We then consider the case where $ {\cal C}^b $ is not a complete graph. Assume that every simplicial vertex has a child in $ {\cal C}^b $. Let $X$ be a simplicial vertex and $\mathbf{M}_i$ be the (unique) maximal clique that contains $X$ (Lemma~\ref{app:lem:chordal}). Denote by $\mathbf{S}$ the set of simplicial vertices of ${\cal C}^b$ contained in $\mathbf{M}_i$. Since $\mathbf{M}_i$ induces a complete subgraph of $ {\cal C}^b $, $\mathbf{S}\subseteq \mathbf{M}_i$ also induces a complete subgraph of $ {\cal C}^b $. If every simplicial vertex in $\mathbf{S}$ has a child which is also in $\mathbf{S}$, then we can construct a directed cycle. Thus, there must be a simplicial vertex in $\mathbf{S}$ whose child is not in $\mathbf{S}$. Without loss of generality, we can assume that such a vertex is $X$. Notice that, $ {\cal C}^b $ is an incomplete connected chordal graph, $\mathbf{M}_i\setminus\mathbf{S}\neq\varnothing$, and thus $X\to Y$ is in $ {\cal C}^b $ for some $Y\in \mathbf{M}_i\setminus\mathbf{S}$. As $\mathbf{S}$ consists of all simplicial vertices contained in $\mathbf{M}_i$, $Y$ is not simplicial, and thus there is a $Z\in adj(Y, {\cal C}^b)$ that is not adjacent to $X$ (Lemma~\ref{app:lem:chordal}). This means that $adj(Y, {\cal C}^b) \setminus\{X\}\nsubseteq adj(X, {\cal C}^b)$, which violates condition (iv).
		
		Therefore, we can find a simplicial vertex, denoted by $ V_1 $, that does not have any child in $ {\cal C}^b $. Since the induced subgraph of $ {\cal C}^b $ over $ \mathbf{V}({\cal C}^b) \setminus \{V_1\} $ is still chordal, we can repeat the above procedure and find a sequence  vertices $V_1, V_2, \cdots, V_n$. Again, it can be checked that the ordering of the vertices forms a PEO of $ {\cal C}^b $, and the corresponding DAG contains  all directed edges in $ {\cal C}^b $. This completes the proof.
	\end{proof}

	\subsection{Proof of Proposition \ref{thm:MSP}}\label{app:proof:thm:MSP}

	\begin{proof}
		To prove the sufficiency, we first show that $ \cal A $ is a generator of $ \cal H $, and then prove its minimality.
		
		% Denote by $ {\cal G}^* $ the CPDAG with the same skeleton and v-structures as $ {\cal H} $, and denote by $ {\cal H}_{\cal A} $ the MPDAG of $ [{\cal G}^*, {\cal A}] $. To prove that $ \cal A $ is a generator of $ \cal H $, we only need to prove that ${\cal H} = {\cal H}_{\cal A}$, or equivalently, to prove that $ \mathbf{E}_d({\cal H}_{{\cal A}}) = \mathbf{E}_d(\cal H) $, as ${\cal H}$ and ${\cal H}_{\cal A}$ has the same skeleton. Since every directed edge in $ {\cal H}_{\cal A} $ is either in $ \mathbf{E}({\cal G}^*) \cup \cal A $, or resulted from the directed edges in $ \mathbf{E}({\cal G}^*) \cup \cal A $ by applying Meek's rules, we have $ \mathbf{E}_d({\cal H}_{{\cal A}}) \subseteq \mathbf{E}_d(\cal H) $, as the directed edges in  $ \cal A $ are also in $\cal H$ and $\cal H$ itself is closed under Meek's rules.
		
		Denote by $ {\cal G}^* $ the CPDAG with the same skeleton and v-structures as $ {\cal H} $. To prove that $ \cal A $ is a generator of $ \cal H $, or equivalently, to prove that $\cal H$ is the MPDAG of  $ [{\cal G}^*, {\cal A}] $, by Corollary \ref{coro:maximaloc} it suffices to show that for every directed edge $ X \rightarrow Y $ in $ \mathbf{E}_d({\cal H}) \setminus (\mathbf{E}_d({\cal G}^*)\cup {\cal A} )$, an orientation component for $Y$ with respect to ${\cal A}$ and ${\cal G}^*$ contains $X$.
		
		By construction of ${\cal A}$, the edges in $ \mathbf{E}_d({\cal H}) \setminus (\mathbf{E}_d({\cal G}^*)\cup {\cal A} )$ are all M-strongly protected. On the other hand, as $\cal H$ can be viewed as the MPDAG of $[{\cal G}^*, \mathbf{E}_d({\cal H})]$, the maximal orientation component for $Y$ with respect to $\mathbf{E}_d({\cal H})$ and ${\cal G}^*$, denoted by ${\cal U}$, contains $X$. In the following, we will show that ${\cal U}$ is an orientation component for $Y$ with respect to ${\cal A}$ and ${\cal G}^*$.

		Suppose that, with respect to ${\cal A}$ and ${\cal G}^*$, ${\cal U}$ is not an orientation component for $Y$, then by the definition of an orientation component, there exists another potential leaf node $Z$ in ${\cal U}$ with respect to ${\cal A}$ and ${\cal G}^*$. Such a variable $Z$ must have the following properties.
		
		\begin{enumerate}
			\item[(P1)] $Z$ is simplicial in ${\cal U}$.
			\item[(P2)] If an undirected edge connected to $Z$ in ${\cal U}$ is directed in ${\cal A}$, $Z$ is not the tail of that directed edge.
			\item[(P3)] For every sibling $W$ of $Z$ in ${\cal U}$ such that $Z\to W$ is in ${\cal H}$ (such a $W$ definitely exists), $Z\to W$ is not in ${\cal A}$. (Note that, since every directed edge in ${\cal A}$ is also in ${\cal H}$, $Z\to W$ is in ${\cal H}$ implies that $W\to Z$ is not in ${\cal A}$ either).
		\end{enumerate}
		
		The third property comes from that fact that, with respect to $\mathbf{E}_d({\cal H})$ and ${\cal G}^*$, ${\cal U}$ is the maximal orientation component for $Y$.
		
		Since $Z\to W$ is not in ${\cal A}$, $Z\to W$ is M-strongly protected in ${\cal H}$. Note that, $Z$ and $W$ are adjacent in ${\cal U}$, they are in the same chain component of ${\cal G}^*$. We first prove that $Z\to W$ cannot occur in the configurations (b) and (d), and if it occurs in one of the configurations (a), (c) and (e), the involved vertices in those configurations are all in the same chain component of ${\cal G}^*$.
		
		\begin{enumerate}
			\item[(i)] If $Z\to W$ occurs in the configuration (a), then there exists a vertex $ W_2 \notin adj(W, \cal H) $ such that $ W_2 \rightarrow Z \to W $ is in $\cal H$. If $W_2$ is not in the same chain component as $Z$ and $W$, then $W_2\to Z$ implies that $W_2\to W$ is also in ${\cal H}$, which contradicts the configuration (a).
			
			\item[(ii)] If $Z\to W$ occurs in the configuration (b), then there is a vertex $W_2\notin adj(Z, {\cal H})$ such that $Z\to W \leftarrow W_2$, which is a v-structure collided on $W$, is in ${\cal H}$. This means that $W_2$ is not in the same chain component as $Z$, and thus, $W_2\to Z$ should be in ${\cal H}$, leading to a contradiction.
			
			\item[(iii)] If $Z\to W$ occurs in the configuration (c), then there is a vertex $W_2$ such that $Z\to W_2 \to W$ is in ${\cal H}$. If $W_2$ is not in the same chain component as $Z$ and $W$, then $W_2\to W$ implies that $W_2\to Z$ is also in ${\cal H}$, which contradicts the configuration (c).
			
			\item[(iv)] If $Z\to W$ occurs in the configuration (d), then there are vertices $W_2, W_3\in sib(Z, {\cal H})$ such that $W_2$ is not adjacent to $ W_3$ in ${\cal H}$ and $W_2\to W \leftarrow W_3$ is in ${\cal H}$. It is easy to see that $W_2$, $W_3$ and $\{Z, W\}$ are in three different chain components, and thus, $W_2\to Z \leftarrow W_3$ should be in ${\cal H}$. This contradicts the configuration (d).

			\item[(v)] If $Z\to W$ occurs in the configuration (e), then there are vertices $ W_2, W_3 \in sib(Z, \cal H) $ such that $W_2\to W_3\to W$ and $W_2$ is not adjacent to $W$ in $\cal H$. If $W_3$ is not in the same chain component as $Z$ and $W$, then  $W_3\to Z$ should be in $\cal H$, contradicted to the configuration. If $W_2$ not in the same chain component as $Z$, $W$ and $W_3$, then $W_2\to Z$ should also be in $\cal H$, contradicted to the configuration.		
		\end{enumerate}
		
		Below, we will consider the configurations (a), (c) and (e) separately and show that although $Z\to W$ is M-strongly protected in ${\cal H}$, $Z\to W$ cannot occur in any of these configurations.
		
		\begin{enumerate}
			\item[(i)] If $Z\to W$ occurs in the configuration (a), then there exists a vertex $ W_2 \notin adj(W, \cal H) $ such that $W_2$, $Z$ and $W$ are in the same chain component and $ W_2 \rightarrow Z $ is in $\cal H$. If $ W_2 \in \mathbf{V}(\cal U ) $, then $ Z $ is not simplicial in $ \cal U $, which contradicts (P1). Thus, $ W_2 \notin \mathbf{V}(\cal U) $. We claim that the induced subgraph of ${\cal G}^*$ over $ \mathbf{V}({\cal U}) \cup \{W_2\} $, denoted by ${\cal U}'$,  is also an orientation component for $ Y $ with respect to $\mathbf{E}_d({\cal H})$ and ${\cal G}^*$. In fact, with respect to $\mathbf{E}_d({\cal H})$ and ${\cal G}^*$, since none of the vertices except for $Y$ is a potential leaf node in ${\cal U}$, the vertices in ${\bf V}({\cal U})\setminus\{Y\}$ are definitely not potential leaf nodes in ${\cal U}'$ either. On the other hand, $W_2$ is not a potential leaf node in ${\cal U}'$, as $W_2\to Z$ is in $\mathbf{E}_d({\cal H})$, the consistency of $\mathbf{E}_d({\cal H})$ with ${\cal G}^*$ implies that the only potential leaf node in ${\cal U}'$ must be $Y$ (Theorem~\ref{thm:consistency}). Therefore, ${\cal U}'$ is also an orientation component for $ Y $ with respect to $\mathbf{E}_d({\cal H})$ and ${\cal G}^*$. This contradicts the maximality of $\cal U$.
			
			\item[(ii)] If $Z\to W$ occurs in the configuration (e), then there are vertices $ W_2, W_3 \in sib(Z, \cal H) $ such that $W_2\to W_3\to W$ and $W_2$ is not adjacent to $W$ in $\cal H$. If both $ W_2$ and $W_3$ are in $\mathbf{V}(\cal U ) $, then $ Z $ is not a simplicial vertex in $ \cal U $, which is again contradicted to (P1). On the other hand, if either $ W_2 $ or $ W_3 $ is not in $ \mathbf{V(\cal U)} $, then by the similar argument given in (i), the induced subgraph of ${\cal G}^*$ over $ \mathbf{V}({\cal U}) \cup \{W_2, W_3\} $ is also an orientation component for $ Y $ with respect to $\mathbf{E}_d({\cal H})$ and ${\cal G}^*$. This is again contradicted to the maximality of $\cal U$.

			\item[(iii)] If $Z\to W$ occurs in the configuration (c), then there is a vertex $W_2$ such that {$Z\to W_2 \to W$} is in ${\cal H}$. If $ W_2 \notin \mathbf{V}(\cal U) $, then by the similar argument given in (i), the induced subgraph of ${\cal G}^*$ over $ \mathbf{V}({\cal U}) \cup \{W_2\} $ is also an orientation component for $ Y $ with respect to $\mathbf{E}_d({\cal H})$ and ${\cal G}^*$, contradicted to the maximality of $\cal U$. Hence, $ W_2 \in \mathbf{V}(\cal U) $, and by (P3), $ Z \rightarrow W_2$ is also M-strongly protected in $\cal H$. By the same argument given in (i) and (ii), $ Z \rightarrow W_2$ can only occur in the configuration (c). Repeat the above procedure we can find a sequence of vertices $W_2,W_3,\cdots W_n, \cdots$ in $adj(Z, {\cal U})$, such that $Z\to W_i$ for $i=2,3,\cdots $ and $W\leftarrow W_2 \leftarrow W_3 \leftarrow \cdots\leftarrow  W_n \cdots\leftarrow $. Note that, since the above procedure never ends, but there are only a finite number of vertices in $adj(Z, {\cal U})$,  a subpath of $W\leftarrow W_2 \leftarrow W_3 \leftarrow \cdots$ must be a directed cycle, leading to a contradiction.
		\end{enumerate}

		Therefore, although $Z\to W$ is M-strongly protected in ${\cal H}$, $Z\to W$ cannot occur in any of the configurations (a) to (e). This is impossible. Thus, ${\cal U}$ is an orientation component for $Y$ with respect to ${\cal A}$ and ${\cal G}^*$. Consequently, $ \cal A $ is a generator of $ \cal H $.
		
		We then show that $ \cal A $ is minimal. If there is another generator $ {\cal A}^- $ of $ \cal H $ such that of $ |{\cal A}^- | < |{\cal A}|$, then there is a directed edge $ X \rightarrow Y $ in  $ {\cal A}$ which is not in $ {\cal A}^-$. Since $ {\cal A}^- $ is a generator of $\cal H$ and $ X \rightarrow Y $ is in  $\cal H$, $ X \rightarrow Y $ is either in a v-structure of the form $X\to Y\leftarrow Z$, or can be oriented by Meek's rules. This means that $ X \rightarrow Y $ occurs in at least one configurations labeled by (a) to (e) as an induced subgraph of $\cal H$. Hence, $ X \rightarrow Y $ is M-strongly protected in $\cal H$, which is contrary to the assumption that $ X \rightarrow Y $ is in $ {\cal A}$. This completes the proof of sufficiency.
		
		Next, we prove the necessity. Let ${\cal A}$ be a minimal generator of $\cal H$. We first show that ${\cal A}$ contains all directed edges in $\cal H$ that are not M-strongly protected. Suppose that there is a directed edge $ X \rightarrow Y \notin {\cal A} $ which is in $ \cal H $ but not M-strongly protected. Then, since ${\cal A}$ generates $\cal H$, $ X \rightarrow Y $ is either in a v-structure of the form $X\to Y\leftarrow Z$, or can be oriented by Meek's rules. This means that $ X \rightarrow Y $ occurs in at least one configurations labeled by (a) to (e) as an induced subgraph of $\cal H$. That is, $ X \rightarrow Y $ is M-strongly protected. This contradicts our assumption. We next show that  ${\cal A}$ only contains the directed edges in $\cal H$ that are not M-strongly protected. In fact, if this is not the case, then the proper subset of ${\cal A}$ which consisting of the directed edges that are not M-strongly protected in $\cal H$ is a generator of $\cal H$, meaning that ${\cal A}$ is not minimal, which is a contradiction.

		Finally, the uniqueness follows from the fact that the set of  directed edges in $\cal H$ that are not M-strongly protected in $\cal H$ is unique.
	\end{proof}

	\subsection{Proof of Theorem \ref{thm:MSPrep}}\label{app:proof:thm:MSPrep}
	\begin{proof}
		The conclusion follows directly from Propositions~\ref{nonemptyMPDAG} and \ref{thm:MSP}.
	\end{proof}
	
	\subsection{Proof of Proposition \ref{fact1}}\label{app:proof:fact1}
	\begin{proof}
		Both claims hold by the definition of a DCC, and thus we omit the proof.
	\end{proof}
	
	\subsection{Proof of Theorem \ref{thm:nbr_set_const}}\label{app:proof:nbr_set_const}
	\begin{proof}
		The first statement is clearly true and the third statement can be derived from \citet[Lemma~2]{fang2020bgida}. Since the second statement is the inverse of the third statement, the proof is completed.
	\end{proof}
	
	\subsection{Proof of Proposition \ref{fact3}}\label{app:proof:fact3}
	\begin{proof}
		The proof follows directly from the definition of a DCC.
	\end{proof}
	
	\subsection{Proof of Proposition \ref{pro:reducedf}}\label{app:proof:reducedf}
	\begin{proof}
		The proof follows from Proposition \ref{fact3}.
	\end{proof}
	
	\subsection{Proof of Theorem \ref{thm:consistency}}\label{app:proof:consistency}
	
	We first prove two lemmas.
	
	\begin{lemma}\label{lem:app-pln}
		Let $\mathcal{G}^*$ be a CPDAG, $\cal K$ be a set of consistent DCCs, and $\cal U$ be a connected undirected induced subgraph of $\mathcal{G}^*$. For any DAG ${\cal G} \in [{\cal G}^*, {\cal K}]$, every leaf node in the induced subgraph of $\cal G$ over $\mathbf{V}(\mathcal{U})$ is a potential leaf node in $\mathcal{U}$ with respect to ${\cal K}$ and ${\cal G}^*$.
	\end{lemma}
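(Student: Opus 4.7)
The plan is to verify the two defining properties of a potential leaf node separately: simplicial in $\mathcal{U}$, and not the tail of any DCC in $\mathcal{K}(\mathcal{U})$. Let $\mathcal{G}\in[\mathcal{G}^*,\mathcal{K}]$ and let $X$ be a leaf node (i.e., a sink) in the induced subgraph $\mathcal{G}(\mathbf{V}(\mathcal{U}))$.

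First I would set the scene. Since $\mathcal{U}$ is a connected undirected induced subgraph of $\mathcal{G}^*$ and every undirected edge of $\mathcal{G}^*$ lies inside a chain component, $\mathbf{V}(\mathcal{U})$ is contained in a single chain component of $\mathcal{G}^*$. Consequently the induced subgraph $\mathcal{G}(\mathbf{V}(\mathcal{U}))$ has the same skeleton as $\mathcal{U}$ and, because orienting a chain component into a DAG cannot introduce v-structures (any v-structure of $\mathcal{G}$ is already present in $\mathcal{G}^*$), $\mathcal{G}(\mathbf{V}(\mathcal{U}))$ is a v-structure-free DAG.

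For the simpliciality claim, I would argue as follows. Since $X$ is a sink of $\mathcal{G}(\mathbf{V}(\mathcal{U}))$, every neighbor of $X$ in $\mathcal{U}$ is a parent of $X$ in $\mathcal{G}(\mathbf{V}(\mathcal{U}))$. If two such neighbors $Y,Z$ were not adjacent in $\mathcal{U}$, they would not be adjacent in $\mathcal{G}$ either, and $Y\to X\leftarrow Z$ would form a v-structure, contradicting the previous paragraph. Hence $adj(X,\mathcal{U})$ induces a complete subgraph, so $X$ is simplicial in $\mathcal{U}$.

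For the tail claim, suppose toward a contradiction that $X$ is the tail of some $\kappa=X\tor\kappa_h\in\mathcal{K}(\mathcal{U})$. By Definition~\ref{def:restrict} we have $\kappa\in\mathcal{K}(\mathcal{G}^*)$ and $\kappa_h\subseteq\mathbf{V}(\mathcal{U})$. By Proposition~\ref{pro:reducedf}, $\mathcal{K}(\mathcal{G}^*)$ is equivalent to $\mathcal{K}$, so $\mathcal{G}$ satisfies $\kappa$, meaning there exists $D\in\kappa_h$ with $X\to D$ in $\mathcal{G}$. But $D\in\mathbf{V}(\mathcal{U})$, so $D$ is a child of $X$ in $\mathcal{G}(\mathbf{V}(\mathcal{U}))$, contradicting the assumption that $X$ is a leaf node there. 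Combining the two claims shows $X$ is a potential leaf node in $\mathcal{U}$ with respect to $\mathcal{K}$ and $\mathcal{G}^*$.

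The main subtlety (rather than a real obstacle) is keeping the reduction bookkeeping honest: one must use Proposition~\ref{pro:reducedf} to transfer satisfaction from $\mathcal{K}$ to $\mathcal{K}(\mathcal{G}^*)$, and then use the condition $\kappa_h\subseteq\mathbf{V}(\mathcal{U})$ built into Definition~\ref{def:restrict} to force the witnessing child $D$ to live inside $\mathbf{V}(\mathcal{U})$; without this the contradiction with the leaf property would not go through.
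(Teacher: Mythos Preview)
Your proof is correct and follows essentially the same approach as the paper's: both arguments first observe that $\mathcal{U}$ coincides with the induced subgraph of $\mathcal{G}^*$ over $\mathbf{V}(\mathcal{U})$ (you phrase this via containment in a chain component, the paper via the chain-graph property of $\mathcal{G}^*$), then derive simpliciality from the absence of new v-structures, and finally use Proposition~\ref{pro:reducedf} together with $\kappa_h\subseteq\mathbf{V}(\mathcal{U})$ to rule out $X$ being a tail. The only cosmetic difference is that the paper notes $\kappa_h\subseteq adj(V_{\mathrm{leaf}},\mathcal{U})$ explicitly before invoking the reduced-form equivalence, whereas you go straight to $\kappa_h\subseteq\mathbf{V}(\mathcal{U})$, which is all that is needed.
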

	
	\begin{proof}
		Denote by ${\cal G}_{\mathrm{sub}}$ the induced subgraph of $\cal G$ over $\mathbf{V}(\mathcal{U})$. By the definition of an undirected induced subgraph and the fact that $\mathcal{G}^*$ has no partially directed cycle~\citep{andersson1997characterization}, $\mathcal{U}$ is  the induced subgraph of $\mathcal{G}^*$ over $\mathbf{V}(\mathcal{U})$. Thus, two vertices are adjacent in $\mathcal{U}$ if and only if they are adjacent in ${\cal G}_{\mathrm{sub}}$. Let $V_{\mathrm{leaf}}$ be a leaf node in ${\cal G}_{\mathrm{sub}}$. By the definition of a potential leaf node, the conclusion holds if $adj(V_{\mathrm{leaf}}, \mathcal{U})=\varnothing$. Therefore, to prove the lemma it suffices to show that, (1) $adj(V_{\mathrm{leaf}}, \mathcal{U})$ induces a complete subgraph of $\mathcal{U}$, and (2) $V_{\mathrm{leaf}}$ is not the tail of any DCC in ${\cal K}(\mathcal{U})$.
		
		As $V_{\mathrm{leaf}}$ is a leaf node in ${\cal G}_{\mathrm{sub}}$, $adj(V_{\mathrm{leaf}}, {\cal G}_{\mathrm{sub}}) \to V_{\mathrm{leaf}}$ are in ${\cal G}_{\mathrm{sub}}$. Since $adj(V_{\mathrm{leaf}}, {\cal G}_{\mathrm{sub}}) \subseteq sib(V_{\mathrm{leaf}}, {\cal G}^*)$, the configuration $adj(V_{\mathrm{leaf}}, {\cal G}_{\mathrm{sub}}) \to V_{\mathrm{leaf}}$ contains no v-structure collided on $V_{\mathrm{leaf}}$. Thus, $adj(V_{\mathrm{leaf}}, {\cal G}_{\mathrm{sub}})$ induces a complete subgraph of ${\cal G}_{\mathrm{sub}}$, meaning that $adj(V_{\mathrm{leaf}}, \mathcal{U})$ induces a complete subgraph of $\mathcal{U}$. This completes the proof of statement (1). On the other hand, if there is a $V_{\mathrm{leaf}}\tor \kappa_h$ in ${\cal K}(\mathcal{U})$, then by Equation~(\ref{eq:n_uu}), $\kappa_h\subseteq adj(V_{\mathrm{leaf}}, \mathcal{U})=adj(V_{\mathrm{leaf}}, {\cal G}_{\mathrm{sub}})$ and $V_{\mathrm{leaf}}\tor \kappa_h$ is in ${\cal K}({\cal G}^*)$. Notice that $\cal K$ is equivalent to ${\cal K}({\cal G}^*)$, $V_{\mathrm{leaf}}\tor \kappa_h$ must hold for ${\cal G}$.  Consequently, $V_{\mathrm{leaf}}\tor \kappa_h$ holds for ${\cal G}_{\mathrm{sub}}$. However, $V_{\mathrm{leaf}}$ is a leaf node in ${\cal G}_{\mathrm{sub}}$, which leads to a contradiction.
	\end{proof}
	
	Lemma~\ref{lem:app-pln} suggests that if a vertex is not a potential leaf node in some connected undirected induced subgraph $\mathcal{U}$, then it cannot be a leaf node in the induced subgraph over $\mathbf{V}(\mathcal{U})$ of any restricted Markov equivalent DAG.
	
	\begin{lemma}\label{lem:app-equi}
		Let $\mathcal{G}^*$ be a CPDAG and $\cal K$ be a set of DCCs. Then, ${\cal K}$ is consistent with ${\cal G}^*$ if and only if ${\cal K}({\cal C})$ is consistent with ${\cal C}$ for any chain component ${\cal C}$.
	\end{lemma}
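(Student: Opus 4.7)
The plan is to prove both directions using the fact, recalled from the paper's preliminaries, that the chain components of a CPDAG can be oriented independently into DAGs without creating directed cycles or new v-structures, combined with the observation that every DCC in ${\cal K}(\mathcal{G}^*)$ is ``localized'' to a single chain component because all of its heads are siblings of its tail.

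For the forward direction, I would fix some ${\cal G}\in[\mathcal{G}^*,{\cal K}]$, let ${\cal G}_{\cal C}$ be the induced subgraph of ${\cal G}$ on $\mathbf{V}({\cal C})$, and note that ${\cal G}_{\cal C}\in[{\cal C}]$ (by the standard decomposition of a CPDAG into chain components). To verify that ${\cal G}_{\cal C}$ satisfies ${\cal K}({\cal C})$, pick any $\kappa\in{\cal K}({\cal C})$; by construction it arose from some $\kappa''\in{\cal K}$ with $\kappa''_h\cap ch(\kappa''_t,\mathcal{G}^*)=\varnothing$, and $\kappa=\kappa''_t\tor(\kappa''_h\cap sib(\kappa''_t,\mathcal{G}^*))$. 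Since ${\cal G}$ satisfies $\kappa''$, some $D\in\kappa''_h$ is a child of $\kappa''_t$ in ${\cal G}$; the case analysis $D\in pa(\kappa''_t,\mathcal{G}^*)\cup ch(\kappa''_t,\mathcal{G}^*)\cup sib(\kappa''_t,\mathcal{G}^*)$, together with the fact that parents in $\mathcal{G}^*$ remain parents in ${\cal G}$ and the hypothesis $\kappa''_h\cap ch(\kappa''_t,\mathcal{G}^*)=\varnothing$, forces $D\in sib(\kappa''_t,\mathcal{G}^*)$, hence $D\in\mathbf{V}({\cal C})$ and $\kappa''_t\to D$ lies in ${\cal G}_{\cal C}$.

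For the backward direction, I would pick ${\cal G}_{\cal C}\in[{\cal C},{\cal K}({\cal C})]$ for each chain component ${\cal C}$ and assemble a graph ${\cal G}$ whose directed edges are the directed edges of $\mathcal{G}^*$ together with the orientations chosen inside every ${\cal G}_{\cal C}$. Three things then need checking: (a) ${\cal G}$ is acyclic, (b) ${\cal G}\in[\mathcal{G}^*]$, and (c) ${\cal G}$ satisfies ${\cal K}$. Parts (a) and (b) follow from the standard CPDAG facts recalled in Appendix~\ref{app:proof:pre}: any directed cycle in ${\cal G}$ would either live inside a single ${\cal G}_{\cal C}$ (ruled out since ${\cal G}_{\cal C}$ is a DAG) or project to a nontrivial cycle in the chain-component condensation of $\mathcal{G}^*$ (ruled out since $\mathcal{G}^*$ is a chain graph); and no new v-structure is created because any configuration $X\to Y-Z$ at the boundary of a chain component already forces $X$ adjacent to $Z$ in $\mathcal{G}^*$. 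For (c), given $\kappa\in{\cal K}$, if $\kappa_h\cap ch(\kappa_t,\mathcal{G}^*)\neq\varnothing$ there is a witness already in $\mathcal{G}^*$, and otherwise the reduced clause $\kappa_t\tor(\kappa_h\cap sib(\kappa_t,\mathcal{G}^*))$ belongs to ${\cal K}({\cal C})$ for the chain component ${\cal C}$ containing $\kappa_t$, so the required witness comes from ${\cal G}_{\cal C}$.

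The only subtle point, and the one I would spell out most carefully, is the invocation that assembling per-component orientations with the directed edges of $\mathcal{G}^*$ lands inside $[\mathcal{G}^*]$; although this is a well-known consequence of the CPDAG preliminaries, I would verify the no-new-v-structure claim explicitly to make sure that v-structures spanning two chain components are exactly those already present in $\mathcal{G}^*$. Everything else reduces to translating between $\kappa\in{\cal K}$ and its reduced form in ${\cal K}(\mathcal{G}^*)={\cal K}(\mathcal{G}^*_u)$, and then localizing to the chain component of $\kappa_t$ via the fact that heads in the reduced form are siblings of the tail.
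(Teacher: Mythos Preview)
Your proposal is correct and follows essentially the same approach as the paper's proof: restrict a witness DAG to each chain component for the forward direction, and assemble per-component witnesses into a global DAG for the backward direction, using the decomposition ${\cal K}(\mathcal{G}^*_u)=\bigcup_{\cal C}{\cal K}({\cal C})$ (which you use implicitly when localizing the reduced clause to the chain component of $\kappa_t$). The paper's argument is terser---it says ``it is easy to verify that ${\cal G}_{\mathrm{sub}}\in[{\cal C},{\cal K}({\cal C})]$'' and defers the assembly checks to the preliminaries---whereas you spell out the case analysis on $D\in pa\cup ch\cup sib$ and the acyclicity/no-new-v-structure verification explicitly, but the substance is the same.
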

	
	\begin{proof}
		Suppose that $[{\cal G}^*, {\cal K}({\cal G}^*_u)]\neq\varnothing$ and let ${\cal G} \in [{\cal G}^*, {\cal K}({\cal G}^*_u)]$. For any chain component ${\cal C}$, let ${\cal G}_{\mathrm{sub}}$ denote the induced subgraph of ${\cal G}$ over $\mathbf{V}({\cal C})$. It is easy to verify that ${\cal G}_{\mathrm{sub}}\in [{\cal C}, {\cal K}({\cal C})]$. Conversely, if $[{\cal C}, {\cal K}({\cal C})]\neq\varnothing$ for any chain component ${\cal C}$, then choose ${\cal G}_{\cal C}\in [{\cal C}, {\cal K}({\cal C})]$ arbitrarily for each chain component and orient undirected edges in ${\cal G}^*$ according to $\{{\cal G}_{\cal C}\}$. That is, orient $X-Y$ in ${\cal G}^*$ as $X\to Y$ if $X$ is a parent of $Y$ in the DAG ${\cal G}_{\cal C}$, where $\cal C$ is the chain component containing $X$ and $Y$.  Notice that ${\cal G}^*_u$ is a union of (disjoint) chain components, we have ${\cal K}({\cal G}^*_u)=\bigcup_{\cal C} {\cal K}({\cal C})$, where ${\cal C}$ is a chain component. It is straightforward to show that the resulting DAG with the orientations defined above is in $[{\cal G}^*, {\cal K}({\cal G}^*_u)]$.
	\end{proof}
	
	Finally, we present the proof of Theorem~\ref{thm:consistency}.
	
	\begin{proof}[Proof of Theorem~\ref{thm:consistency}]
		We first prove the necessity. If $\cal K$ is consistent, then there is a DAG $\cal G$ in $[\mathcal{G}^*, {\cal K}]$. Let $\mathcal{U}$ be an arbitrary connected undirected induced subgraph of $\mathcal{G}^*$, and denote the induced subgraph of $\cal G$ over $\mathbf{V}(\mathcal{U})$ by ${\cal G}_{\mathrm{sub}}$. Since any induced subgraph of a DAG is still a DAG, ${\cal G}_{\mathrm{sub}}$ is a DAG, and thus it must have a leaf node $V_{\mathrm{leaf}}$. By Lemma~\ref{lem:app-pln}, we can conclude that $V_{\mathrm{leaf}}$ is a potential leaf node in $\mathcal{U}$ with respect to ${\cal K}$ and ${\cal G}^*$.
		
		We next prove the sufficiency. By Lemma~\ref{lem:app-equi}, ${\cal K}$ is consistent with ${\cal G}^*$ if and only if ${\cal K}({\cal C})$ is consistent with ${\cal C}$ for any chain component ${\cal C}$. Therefore, given a chain component ${\cal C}$, our goal is to prove that ${\cal K}({\cal C})$ is consistent with ${\cal C}$, providing that the potential-leaf-node condition holds. Based on the analysis in Appendix \ref{app:proof:pre}, we need to construct a PEO of ${\cal C}$ such that the corresponding DAG satisfies all DCCs in ${\cal K}({\cal C})$. By assumption, ${\cal C}$ has a potential leaf node with respect to ${\cal K}$ and ${\cal G}^*$, denoted by $V_1$. By the definition of a potential leaf node, $V_1$ is simplicial in ${\cal C}$ . Next, consider the induced subgraph of ${\cal C}$ over $\mathbf{V}({\cal C})\setminus\{V_1\}$, denoted by ${\cal C}_2$. ${\cal C}_2$ is clearly connected. Hence, by assumption ${\cal C}_2$ has a potential leaf node, denoted by $V_2$. Following the above procedure, we have a sequence of undirected graphs $({\cal C} = {\cal C}_1, {\cal C}_2, \cdots, {\cal C}_m)$ and a sequence of vertices $(V_1, V_2,\cdots,V_m)$, where $m=|\mathbf{V}({\cal C})|$. By the construction, the ordering of the vertices in this sequence forms a PEO of ${\cal C}$. Denote by ${\cal G}_{\cal C}$ the corresponding DAG of this PEO. If there is a $V_i\tor \kappa_h\in {\cal K}({\cal C})$ which does not hold for ${\cal G}_{\cal C}$, then $\kappa_h\to V_i$ are in ${\cal G}_{\cal C}$. This means $\kappa_h\subseteq \{V_{i+1}, V_{i+2}, \cdots, V_m\}$, which contradicts the construction of the vertex sequence as $V_i$ is definitely not a potential leaf node in the induced subgraph ${\cal C}_i$. Therefore, ${\cal G}_{\cal C}\in [{\cal C}, {\cal K}({\cal C})]$. This completes the proof of Theorem~\ref{thm:consistency}.
	\end{proof}

	\subsection{Proof of Theorem \ref{thm:equi_dcc}}
	\begin{proof}
		The equivalence of statements (i) and (ii) follows from the definition of equivalence and redundancy. Observed that when ${\cal K}$ is consistent with ${\cal G}^*$,
		\begin{equation}\label{eq:induction}
			\begin{split}
				\kappa_t\tor \kappa_h \text{~is~redundant~with~respect~to~} [{\cal G}^*, {\cal K}]
				\iff & \forall\, {\cal G}\in [{\cal G}^*, {\cal K}],\, \kappa_t\tor \kappa_h \text{~holds~for~} {\cal G} \\
				\iff & \forall\, {\cal G}\in [{\cal G}^*, {\cal K}],\, ch(\kappa_t, {\cal G})\cap \kappa_h\neq \varnothing \\
				\iff & \forall\, {\cal G}\in [{\cal G}^*, {\cal K}],\, \kappa_h\nsubseteq pa(\kappa_t, {\cal G})\\
				\iff & \{\kappa_h\to \kappa_t\}\cup {\cal K} \text{~is~inconsistent~with~} {\cal G}^*.
			\end{split}
		\end{equation}
		Therefore, statement (ii) is equivalent to statement (iii) if both  ${\cal K}_1$ and ${\cal K}_2$ are consistent with ${\cal G}^*$. If neither ${\cal K}_1$ nor ${\cal K}_2$ is consistent with ${\cal G}^*$, then statements (i) and (iii) hold simultaneously. Finally, if ${\cal K}_1$ is consistent with ${\cal G}^*$ but ${\cal K}_2$ is not, then statements (i) does not hold. Thus, we need only to show that statements (iii) does not hold either. In fact, by Equation~\eqref{eq:induction}, if $\cup_{D\in {\kappa_h}}\{D\to \kappa_t\}\cup {\cal K}_1$ is not consistent with ${\cal G}^*$ for every $\kappa \in {\cal K}_2$, then every $\kappa \in {\cal K}_2$ is redundant with respect to $[{\cal G}^*, {\cal K}_1]$. Consequently, ${\cal K}_2 \cup {\cal K}_1$ is equivalent to ${\cal K}_1$ given ${\cal G}^*$ and thus is consistent. However, this is impossible, as the union of an inconsistent DCC and a consistent DCC is definitely inconsistent.
	\end{proof}

	{
		\subsection{Proof of Theorem~\ref{thm:equal_card}}
		Some technical lemmas are required before we present the proof of Theorem~\ref{thm:equal_card}.
		
		\begin{lemma}\label{lem:moc_of_dcc}
			Let $\mathcal{G}^*$ be a CPDAG and $\cal K$ be a set of DCCs consistent with $\mathcal{G}^*$, then a DCC $\kappa$ holds for every DAG in $[\mathcal{G}^*, {\cal K}]$ if and only if either $\kappa_t\rightarrow s$ appears in $\mathcal{G}^*$ for some $s\in\kappa_h$ or there exists a connected undirected induced subgraph $\cal U$ of $\mathcal{G}^*$ containing the vertices in $\kappa_h\cap sib(\kappa_t, \mathcal{G}^*)$ and $\kappa_t$ such that with respect to $\cal K$ and $\mathcal{G}^*$, every potential leaf node in $\cal U$ is in $\kappa_h\cap sib(\kappa_t, \mathcal{G}^*)$.
		\end{lemma}
		\begin{proof}
			We first prove the sufficiency. It is clear that $\kappa_t\rightarrow s$ appears in $\mathcal{G}^*$ for some $s\in\kappa_h$ implies that $\kappa$ holds for every DAG in $[\mathcal{G}^*, {\cal K}]$. On the other hand, if there exists a connected undirected induced subgraph $\cal U$ of $\mathcal{G}^*$ containing the vertices in $\kappa_h\cap sib(\kappa_t, \mathcal{G}^*)$ and $\kappa_t$ such that with respect to $\cal K$ and $\mathcal{G}^*$, every potential leaf node in $\cal U$ is in $\kappa_h\cap sib(\kappa_t, \mathcal{G}^*)$, by Lemma~\ref{lem:app-pln}, for every DAG ${\cal G}\in[\mathcal{G}^*, {\cal K}]$, there exists an $s\in\kappa_h$ such that $s$ is a leaf node in the induced subgraph of $\cal G$ over ${\bf V}({\cal U})$. Therefore, $\kappa_t\to s$ is in $\cal G$, indicating that $\kappa$ holds for $\cal G$.
			
			We next prove the necessity. By Theorem~\ref{thm:equi_dcc}, $\cup_{s\in {\kappa_h}}\{s\to \kappa_t\}\cup {\cal K}$ is not consistent with ${\cal G}^*$. Therefore, by Theorem~\ref{thm:consistency}, there is a connected undirected induced subgraph ${\cal U}_0$ of $\mathcal{G}^*$ which has no potential leaf node with respect to $\cup_{s\in {\kappa_h}}\{s\to \kappa_t\}\cup {\cal K}$ and ${\cal G}^*$. If $\kappa_t\rightarrow s$ appears in $\mathcal{G}^*$ for some $s\in\kappa_h$, then the proof is completed. Thus, we assume that $\kappa_t\nrightarrow s$ in $\mathcal{G}^*$ for all $s\in\kappa_h$. Let $P=\{s\in\kappa_h\mid s\to\kappa_t\; \text{is in}\; {\cal G}^{*}\}$ and $Q=\kappa_h\cap sib(\kappa_t, \mathcal{G}^*)=\{s\in\kappa_h\mid s-\kappa_t\; \text{is in}\; {\cal G}^{*}\}$. Since ${\cal K}$ is consistent with ${\cal G}^*$, $Q\neq\varnothing$ and ${\cal U}_0$ must contain $\kappa_t$ and at least one $s\in Q$. Now consider the connected undirected induced subgraph $\cal U$ of $\mathcal{G}^*$ over ${\bf V}({\cal U}_0)\cup Q$. With respect to $\cup_{s\in {\kappa_h}}\{s\to \kappa_t\}\cup {\cal K}$ and ${\cal G}^*$, none of the vertices in $Q$ is a potential leaf node as $s\to\kappa_t$ for $s\in Q$, and none of the vertices in ${\bf V}({\cal U}_0)$ is a potential leaf node as ${\cal U}_0$ has no potential leaf node with respect to $\cup_{s\in {\kappa_h}}\{s\to \kappa_t\}\cup {\cal K}$ and ${\cal G}^*$. As a result, $\cal U$ has no potential leaf node with respect to $\cup_{s\in {\kappa_h}}\{s\to \kappa_t\}\cup {\cal K}$ and ${\cal G}^*$.
			
			However, $\cal U$ has potential leaf nodes with respect to ${\cal K}$ and ${\cal G}^*$. If a vertex $p\in {\bf V}({\cal U}_0)\setminus Q$ is a potential leaf node with respect to ${\cal K}$ and ${\cal G}^*$, then $p$ is also a potential leaf node with respect to $\cup_{s\in {\kappa_h}}\{s\to \kappa_t\}\cup {\cal K}$ and ${\cal G}^*$. Therefore, the potential leaf nodes in $\cal U$ with respect to ${\cal K}$ and ${\cal G}^*$ are all in $Q$.
		\end{proof}
		
		The above lemma generalizes Theorem~\ref{thm:mpdag} and Corollary~\ref{coro:maximaloc}. When $\kappa$ is minimally redundant with respect to $[\mathcal{G}^*, {\cal K}]$, we have the following corollary.
		
		\begin{corollary}\label{coro:moc_of_dcc}
			Let $\mathcal{G}^*$ be a CPDAG and $\cal K$ be a set of DCCs consistent with $\mathcal{G}^*$, then a DCC $\kappa$ is minimally redundant with respect to $[\mathcal{G}^*, {\cal K}]$ if and only if either $\kappa_h=\{s\}$ is a singleton set and $\kappa_t\rightarrow s$ appears in $\mathcal{G}^*$, or $\kappa_h \subseteq sib(\kappa_t, \mathcal{G}^*)$ and there exists a connected undirected induced subgraph $\cal U$ of $\mathcal{G}^*$ containing the vertices in $\kappa_h$ and $\kappa_t$ such that with respect to $\cal K$ and $\mathcal{G}^*$, $\kappa_{h}$ are all and only potential leaf nodes in ${\cal U}$.
		\end{corollary}
		
		The following lemma is the key to prove Theorem~\ref{thm:equal_card}.
		
		\begin{lemma}\label{lemma:minimal_dcc}
			Suppose that ${\cal G}^*$ is a CPDAG and ${\cal K}$, ${\cal K}'$ are two non-redundant consistent DCC sets over ${\bf V}(\mathcal{G}^*)$, and $\kappa'$ is minimally redundant with respect to $[{\cal G}^*,{\cal K}]$ for any $\kappa'\in {\cal K}'$.
			\begin{enumerate}
				\item[(i)] If $\kappa\in{\cal K}$ is redundant with respect to $[{\cal G}^*,{\cal K}'\cup({\cal K}\setminus\{\kappa\})]$, then there exists a unique DCC $\kappa'\in {\cal K}'$ such that $\kappa'_t=\kappa_t$ and $\kappa'_h\subseteq\kappa_h$. Moreover, for any DCC $\gamma\in{\cal K}$ such that $\gamma\neq\kappa$, either $\kappa'_t\neq\gamma_t$ or $\kappa'_h\nsubseteq\gamma_h$.
				\item[(ii)] If $[{\cal G}^*,{\cal K}] = [{\cal G}^*,{\cal K}']$, then $|{\cal K}|=|{\cal K}'|$.
			\end{enumerate}
		\end{lemma}
		
		\begin{proof}[Proof of Lemma~\ref{lemma:minimal_dcc}] By assumption, ${\cal K}'\cup{\cal K}$ and all of its subset are consistent with ${\cal G}^*$.
			We first prove statement (i). Without loss of generality, in the following, we assume that ${\cal K}=\{\kappa_1,\cdots,\kappa_n\}$, ${\cal K}'=\{\kappa'_1,\cdots,\kappa'_m\}$, and $\kappa_n$ is redundant with respect to $[{\cal G}^*, {\cal K}'\cup({\cal K}\setminus\{\kappa_n\})]$. Since $\kappa'_i$ is not redundant with respect to ${\cal K}'\setminus\{\kappa'_i\}$, $\kappa'_{ih}\cap ch(\kappa'_{it}, {\cal G}^*)=\varnothing$. Moreover, since $\kappa'_i$ is minimally redundant with respect to $[{\cal G}^*, {\cal K}]$,  by Corollary~\ref{coro:moc_of_dcc}, for every $\kappa'_i$, there exists a connected undirected induced subgraph ${\cal U}'_i$ of $\mathcal{G}^*$ containing the vertices in $\kappa'_{ih}$ and $\kappa'_{it}$ such that with respect to $\cal K$ and $\mathcal{G}^*$, $\kappa'_{ih}$ are all and only potential leaf nodes in ${\cal U}'_i$.
			
			Assuming that the corresponding connected undirected induced subgraphs of $\kappa'_1,\cdots,\kappa'_k$, which are denoted by ${\cal U}'_1,\cdots,{\cal U}'_k$, contain all vertices in $\kappa_{nt}$ and $\kappa_{nh}\cap sib(\kappa_{nt}, {\cal G}^*)$, while the remaining induced subgraphs ${\cal U}'_{k+1},\cdots,{\cal U}'_m$ do not contain all vertices in $\kappa_{nt}$ and $\kappa_{nh}$. Since  $\kappa_n$ is redundant with respect to $[{\cal G}^*, {\cal K}'\cup({\cal K}\setminus\{\kappa_n\})]$, by Lemma~\ref{lem:moc_of_dcc} and the fact that $\kappa_n$ is not redundant with respect to $[{\cal G}^*, {\cal K}\setminus\{\kappa_n\}]$, there is a connected undirected induced subgraph ${\cal U}_n$ of $\mathcal{G}^*$ containing $\kappa_{nt}$ and a subset ${\bf s}_n$ of $\kappa_{nh}$ such that all potential leaf nodes in ${\cal U}_n$, with respect to ${\cal K}'\cup({\cal K}\setminus\{\kappa_n\})$, are in ${\bf s}_n$. It is clear that, there is at least one DCC $\kappa'^{*}\in{\cal K}'$ satisfying that $\kappa'^{*}_{t}$ and all vertices in $\kappa'^{*}_{h}\cap sib(\kappa'^{*}_{t}, {\cal G}^*)$ are in ${\cal U}_n$ (this condition is denoted by C1).
			%    	\begin{enumerate}
				%    		\item[(P1)] there is at least one DCC $\kappa'^{*}\in{\cal K}'$ satisfying that $\kappa'^{*}_{t}$ and all vertices in $\kappa'^{*}_{h}\cap sib(\kappa'^{*}_{t}, {\cal G}^*)$ are in ${\cal U}_n$.  		
				%%    		denote by $\{\kappa'^{*}_1,\cdots,\kappa'^{*}_x\}$ the set of all DCCs in ${\cal K}'$ such that for any $\kappa'^{*}_i$, $i=1,\cdots,x$, $\kappa'^{*}_{it}$ and all vertices in $\kappa'^{*}_{ih}\cap sib(\kappa'^{*}_{it}, {\cal G}^*)$ are in ${\cal U}_n$. Then, this set is not empty.
				%    	\end{enumerate}
			Since otherwise, the restriction subset of  ${\cal K}'\cup({\cal K}\setminus\{\kappa_n\})$ on ${\cal U}_n$ is identical to that of ${\cal K}\setminus\{\kappa_n\}$, which violates the assumption that ${\cal K}$ is not redundant.
			
			Denote by $\{\kappa'^{*}_1,\cdots,\kappa'^{*}_x\}$ the set of all DCCs in ${\cal K}'$ satisfying the above condition C1. We claim that $\{\kappa'^{*}_1,\cdots,\kappa'^{*}_x\}\cap \{\kappa'_1,\cdots,\kappa'_k\}\neq\varnothing$.
			In fact, if $\{\kappa'^{*}_1,\cdots,\kappa'^{*}_x\}\subseteq \{\kappa'_{k+1},\cdots,\kappa'_m\}$, by the assumption that ${\cal U}'_l$ for every $l=k+1,\cdots,m$ does not contain all vertices in $\kappa_{nt}$ and $\kappa_{nh}\cap sib(\kappa_{nt}, {\cal G}^*)$, the restriction subset of ${\cal K}$ on ${\cal U}'_l$ is identical to that of ${\cal K}\setminus\{\kappa_n\}$, and hence $\kappa'^{*}_1,\cdots, \kappa'^{*}_x$ are redundant with respect to  $[{\cal G}^*, {\cal K}\setminus\{\kappa_n\}]$. On the other hand, the restriction subset of ${\cal K}'\cup{\cal K}\setminus\{\kappa_n\}$ on ${\cal U}_n$ is identical to that of $\{\kappa'^{*}_1,\cdots, \kappa'^{*}_x\}\cup({\cal K}\setminus\{\kappa_n\})$, implying that $\kappa_n$ is redundant with respect to $[{\cal G}^*,\{\kappa'^{*}_1,\cdots, \kappa'^{*}_x\}\cup({\cal K}\setminus\{\kappa_n\})]$. As a result, $\kappa_n$ is redundant with respect to $[{\cal G}^*, {\cal K}\setminus\{\kappa_n\}]$, which contradicts the non-redundancy of ${\cal K}$. This completes the proof of the claim.
			
			Without loss of generality, we assume that $\{\kappa'^{*}_1,\cdots,\kappa'^{*}_x\}\cap \{\kappa'_1,\cdots,\kappa'_k\}=\{\kappa'_1,\cdots,\kappa'_d\}$. We next show that there is at least one $\kappa'_i\in\{\kappa'_1,\cdots,\kappa'_d\}$ such that $\kappa'_{ih}\cap {\bf s}_n \neq \varnothing$.
			Consider the undirected induced subgraph over ${\bf V}({\cal U}'_1),\cdots,{\bf V}({\cal U}'_d)$ and ${\bf V}({\cal U}_n)$, which is also the union graph of ${\cal U}'_1,\cdots,{\cal U}'_d$ and ${\cal U}_n$. With respect to ${\cal K}'\cup({\cal K}\setminus\{\kappa_n\})$, if a vertex is not a potential leaf node in one of ${\cal U}'_1,\cdots,{\cal U}'_d$ and ${\cal U}_n$, then it is not a potential leaf node in the union graph either. If a vertex is a potential leaf node in ${\cal U}'_i$ ($i=1,\cdots,d$) with respect to ${\cal K}'\cup({\cal K}\setminus\{\kappa_n\})$, it must be in $\kappa'_{ih}\cup\{\kappa_{nt}\}$. However, if $\kappa'_{ih}\cap {\bf s}_n = \varnothing$,  the vertices in $\kappa'_{ih}\cup\{\kappa_{nt}\}$ are not potential leaf nodes in ${\cal U}_n$ with respect to ${\cal K}'\cup{\cal K}\setminus\{\kappa_n\}$, and thus are not potential leaf nodes in the union graph. Similarly, $\kappa'_{ih}\cap {\bf s}_n = \varnothing$ for any $i=1,\cdots,d$ implies that the vertices in ${\bf s}_n$ are not potential leaf nodes in any ${\cal U}'_i$, and thus are not potential leaf nodes in the union graph. As a result, no vertex in the union graph is a potential leaf node under the condition that $\kappa'_{ih}\cap {\bf s}_n = \varnothing$ for any $i=1,\cdots,d$ , which is impossible as ${\cal K}'\cup{\cal K}\setminus\{\kappa_n\}$ is consistent.
			
			Assuming that $\kappa'_1,\cdots \kappa'_l$ ($1\leq l\leq d$) are all and only the DCCs such that $\kappa'_{ih}\cap {\bf s}_n \neq \varnothing$ ($i=1,\cdots,l$). We will show that
			\[{\bf s}\coloneqq\bigcap_{i=1}^l\kappa'_{ih}\cap {\bf s}_n \neq \varnothing.\]
			Assume, for contradiction, that $\bigcap_{i=1}^l\kappa'_{ih}\cap {\bf s}_n = \varnothing$, and consider again the union graph of ${\cal U}'_1,\cdots,{\cal U}'_d$ and ${\cal U}_n$. Using the same argument given above, we can prove that with respect to ${\cal K}'\cup({\cal K}\setminus\{\kappa_n\})$, any potential leaf node in ${\cal U}'_i$ ($i=l+1,\cdots,d$) is not a potential leaf node in the union graph. If a vertex $h_i$ is a potential leaf node in ${\cal U}'_i$ $(i=1,\cdots,l)$ but not in ${\bf s}_n$, it is not a potential leaf node in the union graph either. However,  if $h_i\in{\bf s}_n$, by assumption, the exists a $\kappa'_j$, $j\neq i$ and $j\in\{1,...,l\}$, such that $h_i\notin\kappa'_{jh}$. Note that, $h_i\in{\bf s}_n\subseteq\kappa_{nh}$, meaning that $h_i\neq \kappa_{nt}$. Therefore, in ${\cal U}'_j$, with respect to ${\cal K}'\cup({\cal K}\setminus\{\kappa_n\})$, $h_i$ is not a potential leaf node as  $h_i\notin \kappa'_{jh}\cup\{\kappa_{nt}\}$, and thus $h_i$ is not a potential leaf node in the union graph. Similarly, any potential leaf node in ${\cal U}_n$ is not a potential leaf node in at least one ${\cal U}'_i$, $i\in\{1,...,l\}$, and thus is not a potential leaf node in the union graph. As a consequence, no vertex in the union graph is a potential leaf node, which leads to a contradiction.
			
			Since ${\bf s}\neq \varnothing$, by the similar argument we can prove that every potential leaf node in the union graph of ${\cal U}'_1,\cdots,{\cal U}'_d$ and ${\cal U}_n$ with respect to ${\cal K}'\cup({\cal K}\setminus\{\kappa_n\})$ is in ${\bf s}$. According to Lemma~\ref{lem:moc_of_dcc}, $\kappa'_{it}\tor {\bf s}$ is redundant with respect to $[{\cal G}^*, {\cal K}'\cup({\cal K}\setminus\{\kappa_n\})]$ for any $i=1,\cdots,l$, and thus redundant with respect to $[{\cal G}^*, {\cal K}'\cup{\cal K}]$. Note that, ${\cal K}'\cup{\cal K}$ is equivalent to ${\cal K}$ due to the minimal redundancy of the DCCs in ${\cal K}'$, $\kappa'_{it}\tor {\bf s}$ is redundant with respect to $[{\cal G}^*, {\cal K}]$ for any $i=1,\cdots,l$. Therefore, $\kappa'_{ih}={\bf s}$ for any $i=1,\cdots,l$, since otherwise ${\bf s}\subsetneq \kappa'_{ih}$ contradicts to the minimal redundancy of $\kappa'_{i}$.
			
			With $\kappa'_{ih}={\bf s}$ for any $i=1,\cdots,l$, we claim that $l=d$. Suppose that $l<d$, then with respect to ${\cal K}'\cup({\cal K}\setminus\{\kappa_n\})$, any potential leaf node in ${\cal U}'_i$ ($i=l+1,\cdots,d$) is not a potential leaf node in the union graph of ${\cal U}'_1,\cdots,{\cal U}'_d$ and ${\cal U}_n$. On the other hand, any potential leaf node in ${\cal U}_n$ is not a potential leaf node in ${\cal U}'_{l+1},\cdots,{\cal U}'_d$ as it is in ${\bf s}_n$. Since $\kappa'_{ih}={\bf s}\subseteq {\bf s}_n$ for $i=1,\cdots,l$, any potential leaf node in ${\cal U}'_i$, $i=1,\cdots,l$, is not a potential leaf node in ${\cal U}'_{l+1},\cdots,{\cal U}'_d$, and ${\cal U}_n$. Therefore, no vertex in the union graph is a potential leaf node, which leads to a contradiction.
			
			The next step is to prove that there exists a DCC $\kappa'\in \{\kappa'_1,\cdots,\kappa'_d\}$ such that $\kappa'_t=\kappa_{nt}$ and $\kappa'_h\subseteq\kappa_{nh}$. Assume, for contradiction, that none of the DCC in $\{\kappa'_1,\cdots,\kappa'_d\}$ satisfies the condition. Since we proved that $\kappa'_{ih}={\bf s}\subseteq\kappa_{nh}$ for any $i=1,\cdots,d$, by assumption, $\kappa'_{it}\neq\kappa_{nt}$ for any $i=1,\cdots,d$. Firstly, consider ${\cal U}_n$. With respect to ${\cal K}'\cup({\cal K}\setminus\{\kappa_n\})$, since $\{\kappa'_{it}\}$ are not potential leaf nodes, it holds that $\kappa'_{it}\notin{\bf s}_n$. Moreover, the restriction subset of ${\cal K}'\cup({\cal K}\setminus\{\kappa_n\})$ on ${\cal U}_n$ is identical to that of $\{\kappa'^{*}_1,\cdots,\kappa'^{*}_x\}\cup({\cal K}\setminus\{\kappa_n\})$, which is further identical to that of $\{\kappa'_1,\cdots,\kappa'_d\}\cup({\cal K}\setminus\{\kappa_n\})$. The latter hols because ${\cal U}'_i$ for $\kappa'_i\in\{\kappa'^{*}_1,\cdots,\kappa'^{*}_x\}\setminus\{\kappa'_1,\cdots,\kappa'_d\} $ does not contain all vertices in $\kappa_{nt}$ and $\kappa_{nh}\cap sib(\kappa_{nt}, {\cal G}^*)$, implying that $\kappa'_i$ is redundant with respect to $[{\cal G}^*, {\cal K}\setminus\{\kappa_n\}]$. Consequently, with respect to   $\{\kappa'_1,\cdots,\kappa'_d\}\cup({\cal K}\setminus\{\kappa_n\})$, ${\bf s}_n$ are all and only potential leaf nodes in ${\cal U}_n$. Since $\kappa'_{it}\neq\kappa_{nt}$ for any $i=1,\cdots,d$, with respect to ${\cal K}\setminus\{\kappa_n\}$, $\kappa_{nt}$ is not a potential leaf node in ${\cal U}_n$. Next, consider ${\cal U}'_i$ for $i=1,\cdots,d$. With respect to ${\cal K}\setminus\{\kappa_n\}$, $\{\kappa'_{it}\}$ are not potential leaf nodes. Therefore, in the union graph of ${\cal U}'_1,\cdots,{\cal U}'_d$ and ${\cal U}_n$, only the vertices in ${\bf s}_n$ are potential leaf nodes with respect to ${\cal K}\setminus\{\kappa_n\}$. This implies that $\kappa_n$ is redundant with respect to $[{\cal G}^*, {\cal K}\setminus\{\kappa_n\}]$, leading to a contradiction.
			
			Finally, to prove the uniqueness in statement (i), it suffices to show that none of the DCC $\kappa'\in {\cal K}'\setminus \{\kappa'_1,\cdots,\kappa'_d\}$ satisfies $\kappa'_t=\kappa_{nt}$ and $\kappa'_h\subseteq\kappa_{nh}\cap sib(\kappa_{nt}, {\cal G}^*)$. Assume, for contradiction, that such a DCC $\kappa'$ exists. Since ${\cal U}_n$ contains all vertices in $\kappa_{nh}\cap sib(\kappa_{nt}, {\cal G}^*)$ and $\kappa_{nt}$, ${\cal U}_n$ contains $\kappa'_h$ and $\kappa'_t$, meaning that $\kappa'\notin\{\kappa'_{d+1},\cdots,\kappa'_{k}\}$. However,      if $\kappa'\in\{\kappa'_{k+1},\cdots,\kappa'_{m}\}$, since ${\cal U}'_i$ does not contain all vertices in $\kappa_{nt}$ and $\kappa_{nh}\cap sib(\kappa_{nt}, {\cal G}^*)$, $\kappa'$ is redundant with respect to $[{\cal G}^*, {\cal K}\setminus\{\kappa_n\}]$, and hence $\kappa_n$ is redundant with respect to $[{\cal G}^*, {\cal K}\setminus\{\kappa_n\}]$, which leads to a contradiction.
			
			To complete the proof of statement (i), we assume, for contradiction, that there is another DCC $\kappa_i\in{\cal K}$, $i\neq n$, such that $\kappa'_t=\kappa_{it}$ and $\kappa'_h\subseteq\kappa_{ih}$. It is clear that ${\cal U}'$ contains all vertices in $\kappa_{ih}$ and $\kappa_{it}$, since otherwise $\kappa_i$ is redundant with respect to $[{\cal G}^*, {\cal K}\setminus\{\kappa_i\}]$. Now, consider ${\cal U}'$ and $\bigcup_{s\in\kappa_{nh}}\{s\to\kappa_{nt}\}\cup({\cal K}\setminus\{\kappa_n\})$. It is easy to verify that none of the vertices in ${\cal U}'$ is a potential leaf node. Therefore, $\kappa_n$ is redundant with respect to $[{\cal G}^*, {\cal K}\setminus\{\kappa_n\}]$, which is contradicted to the non-redundancy $\kappa_n$.
			
			We then prove statement (ii). If $[{\cal G}^*,{\cal K}] = [{\cal G}^*,{\cal K}']$, then any $\kappa\in{\cal K}$ is redundant with respect to $[{\cal G}^*, {\cal K}']$, and thus redundant with respect to $[{\cal G}^*, {\cal K}'\cup({\cal K}\setminus\{\kappa\})]$.
			By statement (i), $n\leq m$. If $n<m$, assume, without generality, that the unique DCC satisfies the condition in statement (i) for $\kappa_i$ is $\kappa'_i$, $i=1,2,\cdots,n$, then $\kappa_i$ is redundant with respect to $[{\cal G}^*, {\kappa'_i}]$. As a result, $\kappa'_m$ is redundant with respect to $[{\cal G}^*, \{\kappa'_1,\cdots, \kappa'_n\}]$, leading to a contradiction. Therefore, $n = m$.     	
		\end{proof}

		\begin{proof}[Proof of Theorem~\ref{thm:equal_card}]
			We first prove statement (i). For any DCC $\kappa\in{\cal K}$, let $\kappa'$ be the DCC such that (1) $\kappa'_t=\kappa_t$, (2) $\kappa'_h\subseteq\kappa_h$, and (3) $\kappa'$ is minimally redundant with respect to  $[{\cal G}^*, {\cal K}]$. Such a DCC always exists. In fact, one can first check whether $\kappa_t\to s$ is redundant for some $s\in\kappa_h$. If exists, then $\kappa_t\to s$ is minimally redundant, and if not exists, one can further enumerate and check all 2-elements subsets of $\kappa_h$. The enumeration will end because $\kappa'=\kappa$ is redundant with respect to  $[{\cal G}^*, {\cal K}]$.
			
			Let ${\cal K}'$ denote the set of all $\kappa'$ constructed above. It is clear that ${\cal K}'$ is equivalent to ${\cal K}$ and $|{\cal K}'| \leq |{\cal K}|$. If ${\cal K}'$ is redundant, then its non-redundant proper subset ${\cal K}''\subsetneq{\cal K}'$ is equivalent to ${\cal K}$. By statement (ii) of Lemma~\ref{lemma:minimal_dcc}, $|{\cal K}''| = |{\cal K}|$. This is not possible as $|{\cal K}''| < |{\cal K}'|$. Therefore, ${\cal K}'$ is not redundant.
			
			Assume that there is another non-redundant DCC set ${\cal K}'''$ such that ${\cal K}$ is equivalent to ${\cal K}'''$ and $\kappa'''$ is minimally redundant with respect to $[{\cal G}^*,{\cal K}]$ for any $\kappa'''\in {\cal K}'''$. Then,  ${\cal K}'''$ is also equivalent to ${\cal K}'$ and $\kappa'''$ is minimally redundant with respect to $[{\cal G}^*,{\cal K}']$ for any $\kappa'''\in {\cal K}'''$. By the first statement of Lemma~\ref{lemma:minimal_dcc}, for any $\kappa'\in{\cal K}'$, there is a unique DCC $\kappa'''\in {\cal K}'''$ such that $\kappa'''_t=\kappa'_t$ and $\kappa'''_h\subseteq\kappa'_h$. Since $\kappa'$ is also minimally redundant, $\kappa'''_h=\kappa'_h$. Therefore, By the first statement of Lemma~\ref{lemma:minimal_dcc}, ${\cal K}'''={\cal K}'$.
			
			Finally, by the first statement of Lemma~\ref{lemma:minimal_dcc}, for every $\kappa\in{\cal K}$, there exists a unique $\kappa'\in{\cal K}'$ such that $\kappa'_t=\kappa_t$ and $\kappa'_h\subseteq\kappa_h$; and for every $\kappa'\in{\cal K}'$, there exists a unique $\kappa\in{\cal K}$ such that $\kappa'_t=\kappa_t$ and $\kappa'_h\subseteq\kappa_h$.
			
			We next prove statement (ii). By statement (i), there is a  unique non-redundant DCC set ${\cal K}''$ over ${\bf V}(\mathcal{G}^*)$ such that ${\cal K}$ is equivalent to ${\cal K}''$ and $\kappa''$ is minimally redundant with respect to $[{\cal G}^*,{\cal K}]$ for any $\kappa''\in {\cal K}''$. Since ${\cal K}'$ is equivalent to ${\cal K}$, ${\cal K}''$ is also the unique non-redundant DCC set such that ${\cal K}'$ is equivalent to ${\cal K}''$ and $\kappa''$ is minimally redundant with respect to $[{\cal G}^*,{\cal K}']$ for any $\kappa''\in {\cal K}''$. By Lemma~\ref{lemma:minimal_dcc}, $|{\cal K}|=|{\cal K}''|=|{\cal K}'|$. Moreover, statement (i) of Lemma~\ref{lemma:minimal_dcc} indicates  that $\{\kappa_t\mid \kappa\in{\cal K}\}=\{\kappa''_t\mid \kappa''\in{\cal K}''\}$. therefore, $\{\kappa_t\mid \kappa\in{\cal K}\}=\{\kappa''_t\mid \kappa''\in{\cal K}''\}=\{\kappa'_t\mid \kappa'\in{\cal K}'\}$.    	
		\end{proof}

		\subsection{Proof of Theorem \ref{prop:eqdecomp}}
		\begin{proof}
			We first prove statement (i). Theorem~\ref{thm:nbr_set_const} proves that $\cal B$ can be equivalently represented by a set of DCCs $\cal K$. By Definition~\ref{interpretbg:mpdagresclass} of MPDAGs, $\cal H$ represents all common direct causal relations shared by all DAGs in  $[\mathcal{G}^*,{\cal B}]$. Due to the equivalence, $[\mathcal{G}^*, {\cal B}]=[\mathcal{G}^*, {\cal K}]=[{\cal H}, {\cal K}]$. Now consider ${\bf E}_d({\cal H})\cup{\cal K}$. For any $\kappa \in {\cal K}$, if $\kappa$ is redundant with respect to ${\bf E}_d({\cal H}) \cup ({\cal K} \setminus {\kappa})$, it can be removed from ${\cal K}$, yielding a set ${\bf E}_d({\cal H}) \cup ({\cal K} \setminus {\kappa})$ that is equivalent to ${\bf E}_d({\cal H}) \cup {\cal K}$. By repeating this procedure, we can iteratively remove all redundant DCCs from $\cal K$ until no DCC remains redundant given the others and ${\bf E}_d({\cal H})$. The remaining DCCs in ${\cal K}$ constitute the desired residual set ${\cal R}$.
			
			We next prove statement (ii). Let ${\cal R}_1$ and ${\cal R}_2$ be two residual DCC sets. By assumption, every DCC $\kappa_1\in{\cal R}_1$, if exists, is not redundant with respect to ${\bf E}_d({\cal H})\cup({\cal R}_1\setminus\{\kappa_1\})$, and hence ${\cal R}_1\cap {\bf E}_d({\cal H})=\varnothing$. Following the similar pruning procedure in the proof of (i), we can further remove redundant DCCs from ${\bf E}_d({\cal H})\cup{\cal R}_1$ and obtain a non-redundant DCC set ${\cal A}_1\subseteq {\bf E}_d({\cal H})\cup{\cal R}_1$ with respect to $\mathcal{G}^*$. Clearly, ${\cal A}_1$ is equivalent to ${\bf E}_d({\cal H})\cup{\cal R}_1$, and ${\cal A}_1$ can be written as a union of two sets, say ${\cal E}_1$ and ${\cal R}'_1$, where ${\cal E}_1\subseteq {\bf E}_d({\cal H})$ and ${\cal R}'_1\subseteq {\cal R}_1$. However, if there is a $\kappa \in {\cal R}_1$ but $\kappa \notin {\cal R}'_1$ then $\kappa$ is redundant with respect to ${\cal E}_1\cup{\cal R}'_1$, and therefore is redundant with respect to ${\bf E}_d({\cal H})\cup{\cal R}_1\setminus\{\kappa\}$, which contradicts the assumption. Therefore, ${\cal R}'_1 = {\cal R}_1$ and  ${\cal A}_1 = {\cal E}_1\cup{\cal R}_1$. By Theorem~\ref{thm:equal_card}, there is a unique DCC set ${\cal A}^*$ such that ${\cal A}^*$ is equivalent to ${\cal A}_1$ and every DCC in ${\cal A}^*$ is minimally redundant with respect to $[\mathcal{G}^*, {\cal A}_1]$, which also means ${\cal A}^*$ is minimally redundant with respect to $[\mathcal{G}^*, {\cal B}]$. Based on statement (i) of Theorem~\ref{thm:equal_card}, for any $\kappa_1 \in {\cal R}_1$, if exists, there is a unique $\kappa^*\in {\cal A}^*$ such that $\kappa^*_{t}=\kappa_{1,t}$ and $\kappa^*_{h}\subseteq\kappa_{1,h}$. If $|\kappa^*_{h}|=1$, then $\kappa^*$ is a directed edge and $\kappa^*\in {\bf E}_d({\cal H})$. This means $\kappa_1 \in {\cal R}_1$ is redundant with respect to ${\bf E}_d({\cal H})$, which is contradicted to our assumption. Therefore, it holds that $|\kappa^*_{h}|>1$. Similarly, all directed edges in ${\cal E}_1$ are in ${\cal A}^*$, and hence ${\cal A}^*={\cal E}_1\cup {\cal R}^*$ where ${\cal R}^*$ is the set of DCCs each of which has more than one head and uniquely corresponds to a DCC in ${\cal R}_1$. By statement (i) of Theorem~\ref{thm:equal_card}, $|{\cal R}^*| = |{\cal R}_1|$. Due to the uniqueness of the element-wise head-minimal DCC set, applying the same technique given above, we can prove that $|{\cal R}^*| = |{\cal R}_2|$. This completes the proof of statement (ii).
			
			For statement (iii), notice that the DCC set ${\cal R}^*$ in the proof of statement (ii) is a DCC set satisfying conditions (1) to (3). The uniqueness comes from statement (i) of Theorem~\ref{thm:equal_card}.
			
			Finally, statement (iv) comes from the proof of statement (ii).
		\end{proof}
		
	}
	
	\subsection{Proof of Theorem \ref{thm:h_equals_n}}\label{app:proof:h_equals_n}
	
	The proof of Theorem~\ref{thm:h_equals_n} requires the following lemma.
	
	\begin{lemma}\label{lem:app:clique}
		Let $\mathcal{H}$ be a causal MPDAG. For any vertex $X$ and $\textbf{S}\subseteq sib(X, \mathcal{H})$, if $\textbf{S}$ induces a complete subgraph of $\mathcal{H}$, then there is a DAG ${\cal G} \in [\mathcal{H}]$ in which $\textbf{S} \subseteq pa(X, {\cal G})$.
	\end{lemma}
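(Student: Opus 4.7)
The plan is to construct $\mathcal{G}$ in two stages. First, I would strengthen $\mathcal{H}$ to a partially directed graph $\mathcal{H}'$ by orienting every undirected edge $X - S$ with $S \in \textbf{S}$ as $S \to X$; then I would invoke a standard PDAG-extension result to complete $\mathcal{H}'$ to a DAG while preserving its skeleton and v-structures. For the resulting DAG to lie in $[\mathcal{H}]$, I need to verify two properties of $\mathcal{H}'$: (a) it has no directed cycle, and (b) it has the same skeleton and v-structures as $\mathcal{H}$, so that no new v-structure is introduced.

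Property (b) is relatively direct. The skeleton is unchanged by construction, and because every new edge has head $X$, any putative new v-structure must be collided on $X$. This splits into two sub-cases that I would handle separately: a configuration $S_1 \to X \leftarrow S_2$ with $S_1, S_2 \in \textbf{S}$ is not a v-structure, since $\textbf{S}$ induces a complete subgraph and hence $S_1, S_2$ are adjacent; a configuration $Y \to X \leftarrow S$ with $Y \in pa(X, \mathcal{H})$ and $S \in \textbf{S}$ cannot be a v-structure either, because otherwise $Y$ and $S$ would be non-adjacent in $\mathcal{H}$, and Meek's Rule~1 applied to $Y \to X - S$ would already force $X \to S$, contradicting $S \in sib(X, \mathcal{H})$.

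The main obstacle is property (a). Since every new edge has head $X$, any directed cycle in $\mathcal{H}'$ must take the form $X \to W_1 \to \cdots \to W_{k-1} \to S \to X$ with $k \geq 2$, where $X \to W_1 \to \cdots \to W_{k-1} \to S$ is an existing directed path in $\mathcal{H}$ ending at some $S \in \textbf{S}$. Hence it suffices to show that for every $S \in sib(X, \mathcal{H})$, $\mathcal{H}$ contains no directed path of length at least $2$ from $X$ to $S$. I plan to prove this by induction on the path length $k$. The base case $k=2$ is immediate: Meek's Rule~2 applied to $X \to W_1 \to S$ together with the undirected edge $X - S$ would force $X \to S$ in $\mathcal{H}$, contradicting $S \in sib(X, \mathcal{H})$. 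For the inductive step with $k \geq 3$, I would case on the adjacency of $W_{k-1}$ and $X$: if they are non-adjacent, Meek's Rule~1 applied to the induced subgraph $W_{k-1} \to S - X$ would force $S \to X$, another contradiction; if they are adjacent, then $W_{k-1} \to X$ is impossible (it would already create a directed cycle $X \to W_1 \to \cdots \to W_{k-1} \to X$ in $\mathcal{H}$), while $X - W_{k-1}$ undirected would put $W_{k-1} \in sib(X, \mathcal{H})$ and contradict the induction hypothesis applied to the length-$(k-1)$ path, leaving only $X \to W_{k-1}$, which reduces the problem to the length-$2$ path $X \to W_{k-1} \to S$ and the base case.

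Once (a) and (b) are in hand, $\mathcal{H}'$ is an acyclic PDAG with the same skeleton and v-structures as $\mathcal{H}$, so by the standard consistent-extension result for PDAGs it admits a completion to a DAG $\mathcal{G}$ preserving these, giving $\mathcal{G} \in [\mathcal{H}]$ with $\textbf{S} \subseteq pa(X, \mathcal{G})$ as required by the construction of $\mathcal{H}'$.
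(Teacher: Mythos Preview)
Your verifications of (a) and (b) are clean and correct. The gap is in the final step: the implication ``$\mathcal{H}'$ is an acyclic PDAG with the same skeleton and v-structures as $\mathcal{H}$, hence it admits a consistent DAG extension'' is \emph{not} a standard result, and is false for PDAGs in general. The undirected $4$-cycle is an acyclic PDAG with no v-structures, yet every DAG orientation of a $4$-cycle has a v-structure at its sink, so no consistent extension exists. What rules this out in your setting is that $\mathcal{H}'$ extends a causal MPDAG, but your appeal to a ``standard PDAG result'' makes no use of that structure. The correct criterion (Meek, 1995) is that a set of added directed edges is consistent with an MPDAG iff closing the resulting PDAG under Meek's rules produces no directed cycle. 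You have only checked acyclicity \emph{before} closure; after orienting $\textbf{S} \to X$, Rule~1 applied to $S \to X - C$ (with $S,C$ non-adjacent) forces $X \to C$, Rule~2 applied to $C \to S \to X$ (with $C - X$) forces $C \to X$, and these cascade. Showing that this cascade never closes a cycle is the real content of the lemma, and you have not addressed it.

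The paper avoids analysing the Meek cascade altogether. It first checks that $pa(X,\mathcal{H}) \cup \textbf{S} \cup \{X\}$ is a clique, then invokes the local characterization of \citet[Theorem~1]{fang2020bgida}: a set $\textbf{S}' \subseteq sib(X,\mathcal{H})$ arises \emph{exactly} as $pa(X,\mathcal{G}) \setminus pa(X,\mathcal{H})$ for some $\mathcal{G} \in [\mathcal{H}]$ iff $\textbf{S}'$ is complete and no $C \in sib(X,\mathcal{H}) \setminus \textbf{S}'$ satisfies $C \to S$ in $\mathcal{H}$ for some $S \in \textbf{S}'$. Since $\textbf{S}$ itself need not satisfy the second condition, the paper constructs an increasing chain $\textbf{S} \subseteq \textbf{M}^0_s \subsetneq \textbf{M}^1_s \subsetneq \cdots$ inside $sib(X,\mathcal{H})$, at each step absorbing the offending vertices $C$ into a larger clique, and argues the chain stabilises at some $\textbf{M}_s$ satisfying both conditions. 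This directly yields $\mathcal{G}$ with $\textbf{S} \subseteq \textbf{M}_s \subseteq pa(X,\mathcal{G})$.
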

	\begin{proof}
		Without loss of generality, we assume that the skeleton of $\cal H$, denoted by $\cal C$, is a connected chordal graph, and $\cal H$ is the MPDAG representing $[{\cal C}, {\cal B}_d]$ for some direct causal constraints ${\cal B}_d$ (Proposition \ref{nonemptyMPDAG}). That is, $[{\cal H}]=[{\cal C}, {\cal B}_d]$. We first show that $pa(X, {\cal H})\cup \textbf{S}\cup\{X\}$ induces a complete subgraph of $\mathcal{H}$. In fact, if $pa(X, {\cal H})=\varnothing$, then $pa(X, {\cal H})\cup \textbf{S}\cup\{X\}$ is complete. If $pa(X, {\cal H})\neq\varnothing$, then for any $p\in pa(X, {\cal H})$ and $S \in \textbf{S}$, $p$ and $S$ are adjacent in $\cal H$, since otherwise $X\to S$ should appear in $\cal H$ due to the maximality of $\cal H$. Thus, $pa(X, {\cal H})\cup \textbf{S}\cup\{X\}$ is also complete.
		
		\citet[Theorem~1]{fang2020bgida} proved that for any vertex $X$ and $\textbf{S}\subseteq sib(X, \mathcal{H})$, the following three statements are equivalent.
		\begin{enumerate}
			\item[(i)]  There is a DAG $\mathcal{G}$ in  $[\mathcal{H}]$ such that $pa(X, \mathcal{G}) = \textbf{S}\cup pa(X, \mathcal{H})$ and $ch(X, \mathcal{G}) = sib(X, \mathcal{H})\cup ch(X, \mathcal{H})\setminus \textbf{S}$.
			%\vspace{-0.15em}
			\item[(ii)]  Orienting $\textbf{S}\rightarrow X$ and $X\to sib(X, \mathcal{H})\setminus \textbf{S}$ in $\mathcal{H}$  does not introduce any new v-structure collided on $X$ or any directed triangle containing $X$.
			%\vspace{-0.15em}
			\item[(iii)]  The induced subgraph of $\mathcal{H}$ over $\textbf{S}$ is complete, and there does not exist an $S\in \textbf{S}$ and a $C\in sib(X, \mathcal{H})\setminus\textbf{S}$ such that $C\rightarrow S$ is in $\cal H$.
		\end{enumerate}
		Denote by $\textbf{M}$ a maximal clique containing $pa(X, {\cal H})\cup \textbf{S}\cup\{X\}$ (which definitely exists but may not be unique) and let $\textbf{M}_p=\textbf{M}\cap pa(X, {\cal H})=pa(X, {\cal H})$, $\textbf{M}_c=\textbf{M}\cap ch(X, {\cal H})$ and $\textbf{M}_s=\textbf{M}\cap sib(X, {\cal H})$. We first show that if $\textbf{M}_c=\varnothing$, then there is a DAG $\mathcal{G}\in[\mathcal{H}]$ such that $pa(X, \mathcal{G}) = \textbf{M}_s\cup pa(X, \mathcal{H})$ and $ch(X, \mathcal{G}) = sib(X, \mathcal{H})\cup ch(X, \mathcal{H})\setminus \textbf{M}_s$. As $\textbf{M}_s\subseteq \textbf{M}$, $\textbf{M}_s$ induces a complete subgraph of $\cal H$. {Suppose that there is a $P\in \textbf{M}_s$ and a $C\in sib(X, \mathcal{H})\setminus\textbf{M}_s$ such that $C\rightarrow P$, then $C$ is adjacent to every vertex in $pa(X, {\cal H})\cup \textbf{M}_s$, since otherwise $X\to P'$ is in $\cal H$ for any $P'\in pa(X, {\cal H})\cup \textbf{M}_s$ which is not adjacent to $C$,} and contradicts the assumption that $ pa(X, {\cal H})\cup \textbf{M}_s\subseteq  pa(X, {\cal H})\cup sib(X, {\cal H})$. However, $C$ is adjacent to every vertex in $pa(X, {\cal H})\cup \textbf{M}_s$ implies that $C$ is adjacent to every vertex in $\textbf{M}=\textbf{M}_s\cup\textbf{M}_p$, meaning that $\textbf{M}$ is not a maximal clique. This leads to a contradiction. The desired result then follows from \citet[Theorem~1]{fang2020bgida}.
		
		On the other hand, suppose that $\textbf{M}_c\neq\varnothing$ for any maximal clique $\textbf{M}$ containing $pa(X, {\cal H})\cup \textbf{S}\cup\{X\}$. In the following, we will construct a maximal clique $\textbf{M}$ containing $pa(X, {\cal H})\cup \textbf{S}\cup\{X\}$, such that orienting $\textbf{M}_s\to X$ and $X\to sib(X, {\cal H})\setminus\textbf{M}_s$ does not violate statement (iii) of \citet[Theorem~1]{fang2020bgida}.
		
		Let ${\bf M}^0$ be an arbitrary maximal clique containing $pa(X, {\cal H})\cup \textbf{S}\cup\{X\}$. If orienting $\textbf{M}^0_s\to X$ and $X\to sib(X, {\cal H})\setminus\textbf{M}^0_s$ does not violate statement (iii) of \citet[Theorem~1]{fang2020bgida}, then the proof is completed. If otherwise, let $\mathbf{C}^0\subseteq sib(X, {\cal H})\setminus\textbf{M}^0_s$ be the set of vertices such that for any $C\in\mathbf{C}^0$, $C\to P$ for some $P\in \textbf{M}^0_s$. Using the same argument given in the last paragraph, we can prove that $C$ is adjacent to every vertex in $pa(X, {\cal H})\cup\textbf{M}^0_s$ for any $C\in\mathbf{C}^0$. Likewise, it can be shown that any two distinct vertices in $\mathbf{C}^0$, if exist, are adjacent. Hence, $\mathbf{C}^0\cup \textbf{M}^0_s\cup pa(X, {\cal H})$ is a clique.
		
		Let ${\bf M}^1$ be a maximal clique containing $\{X\}\cup\mathbf{C}^0\cup \textbf{M}^0_s\cup pa(X, {\cal H})$. By assumption, ${\bf M}^1_c\neq\varnothing$ and ${\bf S}\subseteq \textbf{M}^0_s\subsetneq \textbf{M}^1_s$. If orienting $\textbf{M}^1_s\to X$ and $X\to sib(X, {\cal H})\setminus\textbf{M}^1_s$ does not violate statement (iii) of \citet[Theorem~1]{fang2020bgida}, then the proof is completed. Otherwise, following the above procedure we can find a new maximal clique ${\bf M}^2$ containing $\{X\}\cup\mathbf{C}^1\cup \textbf{M}^1_s\cup pa(X, {\cal H})$, where $\mathbf{C}^1\subseteq sib(X, {\cal H})\setminus\textbf{M}^1_s$ be the set of vertices such that for any $C\in\mathbf{C}^1$, $C\to P$ for some $P\in \textbf{M}^1_s$. Note that, $\textbf{S}\subseteq {\bf M}^0_s\subsetneq {\bf M}^1_s\subsetneq\cdots\subsetneq {\bf M}^i_s\subsetneq\cdots\subseteq sib(X, {\cal H})$ implies that the above construction will eventually stop as $sib(X, {\cal H})$ is a finite set. That is, we will finally find an ${\bf M}={\bf M}^i_s$ containing $pa(X, {\cal H})\cup \textbf{S}\cup\{X\}$ such that orienting $\textbf{M}_s\to X$ and $X\to sib(X, {\cal H})\setminus\textbf{M}_s$ does not violate statement (iii) of \citet[Theorem~1]{fang2020bgida}, which completes the proof.
	\end{proof}

	\begin{proof}[Proof of Theorem~\ref{thm:h_equals_n}] By definition, (i) is equivalent to (ii), and thus we only prove (i) is equivalent to (iii) in the following.
		
		By definition, $\cal H$ is fully informative with respect to ${\cal K}$ and ${\cal G}^*$ if and only if $\kappa_t\tor \kappa_h$ holds for all DAGs in $[\mathcal{H}]$ for any $\kappa\in {\cal K}$, where $\kappa\coloneqq \kappa_t\tor \kappa_h$.
		
		If $\kappa_h\cap ch(\kappa_t, {\cal H})\neq\varnothing$, then $\kappa_t\tor \kappa_h$ holds for all DAGs in $[\mathcal{H}]$. If $\kappa_h\cap sib(\kappa_t, {\cal H})$ induces an incomplete subgraph of ${\cal H}$, then there exist $V_1, V_2 \in \kappa_h\cap sib(\kappa_t, {\cal H})$ such that $V_1, V_2$ are not adjacent. By the first rule of Meek's rules, for any DAG in $[\mathcal{H}]$, either $V_1$ or $V_2$ is a child of $\kappa_t$. Thus, $\kappa_t\tor \kappa_h\cap sib(\kappa_t, {\cal H})$ holds for all DAGs in $[\mathcal{H}]$.
		
		Conversely, if $\kappa_h\cap ch(\kappa_t, {\cal H})=\varnothing$ and the induced subgraph of ${\cal H}$ over $\kappa_h\cap sib(\kappa_t, {\cal H})$ is complete, then by Lemma \ref{lem:app:clique}, there is a DAG ${\cal G} \in [\mathcal{H}]$ in which $\kappa_h\cap sib(\kappa_t, {\cal H})\subseteq pa(\kappa_t, {\cal G})$. Therefore, $\kappa_t\tor \kappa_h$ does not hold for ${\cal G}$.
	\end{proof}

	\subsection{Proof of Corollary \ref{coro:fullyinfo}}
	\begin{proof}
		Corollary \ref{coro:fullyinfo} follows from Theorem \ref{thm:h_equals_n} and Equation~\eqref{eq:n_u^c}.
	\end{proof}

	\subsection{Proof of Proposition \ref{prop:suff_oc}}\label{app:proof:suff_oc}
	\begin{proof}
		By Lemma~\ref{lem:app-pln}, for any DAG ${\cal G} \in [{\cal G}^*, {\cal K}]$, the leaf node in the induced subgraph of $\cal G$ over $\mathbf{V}(\mathcal{U})$, denoted by ${\cal G}_{\mathrm{sub}}$, must be $X$, as $X$ is the unique potential leaf node in $\cal U$ with respect to ${\cal K}$ and ${\cal G}^*$. The result comes from the fact that $adj(X, {\cal G}_{\mathrm{sub}}) = adj(X, {\cal U})$.
	\end{proof}
	
	\subsection{Proof of Theorem \ref{thm:mpdag}}\label{app:proof:mpdag}
	\begin{proof}
		The sufficiency follows from Proposition \ref{prop:suff_oc}, and below we prove  the necessity.	Since $\cal H$ has the same skeleton as ${\cal G}^*$, it suffices to prove that, for two adjacent variables $X$ and $Y$, if $X\to Y$ is not in ${\cal G}^*$ and there is no orientation component for $Y$ containing $X$ with respect to $\cal K$ and $\mathcal{G}^*$, then $X\to Y$ is not in ${\cal H}$.
		
		As $X\to Y$ is not in ${\cal G}^*$ but $X$ and $Y$ are adjacent, $Y$ is either a parent or a sibling of $X$ in ${\cal G}^*$. If $Y \to X$ is in ${\cal G}^*$, then $Y \to X$ is in ${\cal H}$, which completes the proof. Now consider the case where $X-Y$ is in ${\cal G}^*$. According to our assumption, with respect to $\cal K$ and $\mathcal{G}^*$, every connected undirected induced subgraph containing $X$ and $Y$ is not an orientation component for $Y$. If there is a such subgraph which is an orientation component for $X$, then by Proposition~\ref{prop:suff_oc} and the definition of an MPDAG,  $Y\to X$ is in ${\cal H}$.
		
		On the other hand, with respect to $\cal K$ and $\mathcal{G}^*$, if every connected undirected induced subgraph containing $X$ and $Y$ is neither an orientation component for $X$ nor an orientation component for $Y$, then by Theorem~\ref{thm:consistency}, it either has a potential leaf node which is neither $X$ nor $Y$, or has exactly two potential leaf nodes which are $X$ and $Y$. In the following, we will prove that there is a DAG ${\cal G}_1\in [{\cal G}^*, {\cal K}]$ in which $X\to Y$ and there is also a DAG ${\cal G}_2\in [{\cal G}^*, {\cal K}]$ in which $Y\to X$. According to symmetry, we need only to prove that there is a DAG ${\cal G}\in[{\cal G}^*, {\cal K}]$ in which $X\to Y$.
		
		To prove this claim, it suffices to show that ${\cal K}\cup \{X\to Y\}$ is consistent with ${\cal G}^*$. By  Theorem~\ref{thm:consistency}, any connected undirected induced subgraph $\cal U$ has a potential leaf node with respect to ${\cal K}$ and ${\cal G}^*$. If $\cal U$ does not contain $X$ or $Y$, then it can be easily verify that the potential leaf node of $\cal U$ with respect to ${\cal K}$ and ${\cal G}^*$ is still a potential lead node with respect to ${\cal K}\cup \{X\to Y\}$ and ${\cal G}^*$. If $\cal U$ contains both $X$ and $Y$, then we have that,
		\begin{enumerate}
			\item[(i)]  if $\cal U$ has a potential leaf node with respect to ${\cal K}$ and ${\cal G}^*$ which is neither $X$ nor $Y$, then by the definition of a potential leaf node, such a vertex is still a potential leaf node with respect to ${\cal K}\cup \{X\to Y\}$ and ${\cal G}^*$;
			\item[(ii)]  if $\cal U$ had exactly two potential leaf nodes which are $X$ and $Y$, then with respect to ${\cal K}\cup \{X\to Y\}$ and ${\cal G}^*$, $Y$ is still a potential leaf node in $\cal U$.
		\end{enumerate}
		Therefore, any connected undirected induced subgraph of ${\cal G}^*$ has a potential leaf node with respect to ${\cal K}\cup \{X\to Y\}$ and ${\cal G}^*$. The desired result follows from Theorem~\ref{thm:consistency}.
	\end{proof}
	
	\subsection{Proof of Proposition \ref{prop:union_of_oc}}\label{app:proof:union_of_oc}
	\begin{proof}
		Since $\mathcal{U}_1$ is an orientation components for $X$ with respect to ${\cal K}$ and ${\cal G}^*$, for any $Y_1\in\mathbf{V}(\mathcal{U}_1)$ such that $Y_1\neq X$, either $Y_1$ is a non-simplicial node in $\mathcal{U}_1$, or there is a $\kappa\in{\cal K}(\mathcal{U}_1)$ such that $\kappa_t=Y_1$. Denote by $\mathcal{U}_{12}$ the undirected induced subgraph of $\mathcal{G}^*$ over $\mathbf{V}(\mathcal{U}_1)\cup\mathbf{V}(\mathcal{U}_2)$. If $Y_1$ is a non-simplicial node in $\mathcal{U}_1$, then it is a non-simplicial node in $\mathcal{U}_{12}$, as two non-adjacent vertices in $\mathcal{U}_1$ remains non-adjacent in $\mathcal{U}_{12}$. Moreover, if there exists a $\kappa\in{\cal K}(\mathcal{U}_1)$ such that $\kappa_t=Y_1$, then $\kappa\in{\cal K}(\mathcal{U}_{12})$. Hence, $Y_1$ is not a potential leaf node in $\mathcal{U}_{12}$ with respect to ${\cal K}$ and ${\cal G}^*$. Following the same argument, none of the vertices in $\mathbf{V}(\mathcal{U}_2)\setminus\{X\}$ is a potential leaf node of $\mathcal{U}_{12}$ with respect to ${\cal K}$ and ${\cal G}^*$. However, due to the consistency,  $\mathcal{U}_{12}$ should have at least one potential leaf node. Therefore, $X$ is the only potential leaf node of $\mathcal{U}_{12}$ with respect to ${\cal K}$ and ${\cal G}^*$, which completes the proof.
	\end{proof}
	
	\subsection{Proof of Corollary \ref{coro:maximaloc}}\label{app:proof:maximaloc}
	\begin{proof}
		The conclusion follows directly from Proposition~\ref{prop:union_of_oc} and Theorem~\ref{thm:mpdag}.
	\end{proof}
	
	\subsection{Proof of Theorem \ref{thm:moc}}\label{app:proof:moc}
	\begin{proof}
		It is easy to see that the outputted graph of Algorithm~\ref{algo:moc}, denoted by ${\cal U}_\mathrm{out}$, is an orientation component for $X$ with respect to ${\cal K}$ and ${\cal G}^*$. Assuming that ${\cal U}_\mathrm{out}$ is not maximal, then the true maximal orientation component for $X$ with respect to ${\cal K}$ and ${\cal G}^*$, denoted by ${\cal U}_\mathrm{true}$, is a proper super graph of ${\cal U}_\mathrm{out}$. Recall that Algorithm~\ref{algo:moc} removes one vertex from the current graph in each while loop until the current graph is ${\cal U}_\mathrm{out}$. Since $\mathbf{V}({\cal U}_\mathrm{out})\subsetneq \mathbf{V}({\cal U}_{\mathrm{true}})$, let $\{Y_{{i_1}}, Y_{i_2}, \cdots, Y_{i_k}\} = \mathbf{V}({\cal U}_{\mathrm{true}})\setminus \mathbf{V}({\cal U}_\mathrm{out})$, where the subscript of each $Y_i$ indicates the number of loops when it is removed. Without loss of generality we can assume that $i_1<i_2<\cdots<i_k$. Based on Algorithm~\ref{algo:moc}, the vertices that are removed before $Y_{i_1}$, if exist, are not in $\mathbf{V}({\cal U}_\mathrm{out})$, since otherwise the while loop ends before considering $Y_{{i_1}}$. Let ${\cal U}_1$ be the undirected graph right before removing $Y_{{i_1}}$. By Algorithm~\ref{algo:moc}, with respect to ${\cal K}$ and ${\cal G}^*$, ${\cal U}_1$ is not an orientation component for $X$ and $Y_{{i_1}}$ is a potential lead node in ${\cal U}_1$. This means $Y_{{i_1}}$ is a simplicial node in ${\cal U}_1$ and there is no DCC in ${\cal K}(\mathcal{U}_1)$ in which $Y_{{i_1}}$ is the tail. Note that ${\cal U}_\mathrm{true}$ is the induced subgraph of ${\cal U}_1$ over $\mathbf{V}({\cal U}_{\mathrm{true}})$ and $Y_{{i_1}}\in \mathbf{V}({\cal U}_{\mathrm{true}})$, $Y_{{i_1}}$ is also a simplicial node in $\mathbf{V}({\cal U}_{\mathrm{true}})$. Moreover, there is no DCC in ${\cal K}(\mathcal{U}_{\mathrm{true}})$ in which $Y_{{i_1}}$ is the tail, as ${\cal K}(\mathcal{U}_{\mathrm{true}}) \subseteq {\cal K}(\mathcal{U}_1)$. Thus, $Y_{{i_1}}$ is a potential leaf node in ${\cal U}_\mathrm{true}$ with respect to ${\cal K}$ and ${\cal G}^*$, meaning that ${\cal U}_\mathrm{true}$ is not an orientation component for $X$. This leads to a contradiction, and hence ${\cal U}_\mathrm{out}$ is maximal.
	\end{proof}

	\subsection{Proof of Theorem \ref{thm:id}}\label{app:proof:id}
	
	The key ingredient for proving the identifiability is the following lemma.
	
	\begin{lemma}\label{app:lem:out}
		Let $\mathcal{H}$ be an MPDAG representing $[{\cal G}^*, {\cal K}]$ induced by a CPDAG ${\cal G}^*$ and a set of consistent DCCs ${\cal K}$. Then, for any vertex $X$, there is a DAG $\mathcal{G}$ in $[{\cal G}^*, {\cal K}]$ such that  $ch(X, \mathcal{G}) = sib(X, \mathcal{H}) \cup ch(X, \mathcal{H})$.
	\end{lemma}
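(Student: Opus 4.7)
The plan is to refine the sufficiency construction of Theorem~\ref{thm:consistency}: run the potential-leaf-node removal procedure on the chain component ${\cal C}$ of ${\cal G}^*$ containing $X$, but use a tie-breaking rule that forbids removing $X$ while any sibling of $X$ in $\mathcal{H}$ is still present. With the PEO convention ``later indices are parents'', this produces a PEO of ${\cal C}$ in which every $Y \in sib(X, \mathcal{H})$ appears strictly before $X$, so the induced DAG ${\cal G}_{\cal C}$ orients every edge $X - Y$ of $\mathcal{H}$ as $X \to Y$ while still lying in $[{\cal C}, {\cal K}({\cal C})]$.

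For every other chain component ${\cal C}'$ of ${\cal G}^*$ I apply Theorem~\ref{thm:consistency} directly to obtain an arbitrary DAG in $[{\cal C}', {\cal K}({\cal C}')]$, and then glue the local DAGs together using Lemma~\ref{lem:app-equi} to get $\mathcal{G} \in [{\cal G}^*, {\cal K}]$. Since any DAG in $[{\cal C}, {\cal K}({\cal C})]$ must respect the directed edges of $\mathcal{H}$ restricted to ${\cal C}$, the output ${\cal G}_{\cal C}$ already contains the edges from $pa(X, \mathcal{H}) \cap \mathbf{V}({\cal C})$ into $X$ and from $X$ into $ch(X, \mathcal{H}) \cap \mathbf{V}({\cal C})$; combined with the extra orientations $X \to Y$ for $Y \in sib(X, \mathcal{H})$ produced by the tie-breaking rule, this yields $ch(X, \mathcal{G}) = sib(X, \mathcal{H}) \cup ch(X, \mathcal{H})$, as required.

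The heart of the argument, and the step I expect to be the main obstacle, is verifying that the tie-breaking rule is always executable: whenever both $X$ and some $Y \in sib(X, \mathcal{H})$ lie in the current undirected subgraph $\mathcal{U}_i$, there is a potential leaf node distinct from $X$ in the connected component $\mathcal{U}_i^X$ of $\mathcal{U}_i$ containing $X$. Since $\mathcal{H}$ and ${\cal G}^*$ share the same skeleton and $Y - X$ is undirected in $\mathcal{H}$, the edge $X - Y$ also sits in ${\cal G}^*$ and hence in $\mathcal{U}_i^X$, so $Y \in adj(X, \mathcal{U}_i^X)$. If $X$ were the unique potential leaf node in $\mathcal{U}_i^X$, then by definition $\mathcal{U}_i^X$ would be an orientation component for $X$ with respect to $\mathcal{K}$ and ${\cal G}^*$, and Proposition~\ref{prop:suff_oc} would force every variable in $adj(X, \mathcal{U}_i^X)$ to be a direct cause of $X$ in every DAG of $[{\cal G}^*, \mathcal{K}]$; in particular $Y \to X$ would appear in $\mathcal{H}$, contradicting $Y \in sib(X, \mathcal{H})$. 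Hence a non-$X$ potential leaf node in $\mathcal{U}_i^X$ always exists, potential leaf nodes in the other connected components of $\mathcal{U}_i$ are automatically distinct from $X$, and the procedure terminates with the desired PEO, after which the rest is routine application of the Theorem~\ref{thm:consistency} construction.
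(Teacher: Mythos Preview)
Your argument is correct and uses essentially the same key observation as the paper: if $X$ were the unique potential leaf node in a connected undirected induced subgraph $\mathcal{U}$, then $\mathcal{U}$ is an orientation component for $X$, so $adj(X,\mathcal{U}) \subseteq pa(X,\mathcal{H})$, which rules out any $Y \in sib(X,\mathcal{H})$ lying in $\mathcal{U}$.

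The only difference is packaging. The paper does not run the removal procedure explicitly; instead it verifies in one stroke that the augmented DCC set $\mathcal{K} \cup \{X \to W \mid W \in sib(X,\mathcal{H})\}$ satisfies the potential-leaf-node condition of Theorem~\ref{thm:consistency} for every connected undirected induced subgraph $\mathcal{U}$ of $\mathcal{G}^*$: any non-$X$ potential leaf node of $\mathcal{U}$ with respect to $\mathcal{K}$ survives as a potential leaf node with respect to the augmented set, and if $X$ is the only potential leaf node of $\mathcal{U}$ with respect to $\mathcal{K}$, then as above no $W \in sib(X,\mathcal{H})$ can belong to $\mathcal{U}$, so $X$ remains a potential leaf node under the extra constraints. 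Consistency then gives the desired DAG immediately. Your constructive PEO argument and the paper's direct consistency check are two presentations of the same case split; the paper's is a bit shorter because it avoids tracking the procedure step by step, while yours has the minor advantage of exhibiting the PEO explicitly.
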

	
	\begin{proof}
		It suffices to show that ${\cal K}\cup\{X\to W\mid W\in sib(X, \mathcal{H})\}$ is consistent with ${\cal G}^*$. For any connected undirected induced subgraph $\cal U$ of ${\cal G}^*$,  a potential leaf node in $\cal U$ with respect to ${\cal K}$ and ${\cal G}^*$, if it is not $X$, is still a potential leaf node in $\cal U$ with respect to ${\cal K}\cup\{X\to W\mid W\in sib(X, \mathcal{H})\}$ and ${\cal G}^*$. On the other hand, if $X$ is the only potential leaf node in $\cal U$ with respect to ${\cal K}$ and ${\cal G}^*$, then $adj(X, {\cal U})\to X$ are in $\cal H$. Consequently, $W$ is not in $\cal U$ since $W\in sib(X, \mathcal{H})$, and hence $X$ is still the only potential leaf node in $\cal U$ with respect to ${\cal K}\cup\{X\to W\mid W\in sib(X, \mathcal{H})\}$ and ${\cal G}^*$. The desired result follows from Theorem~\ref{thm:consistency}.
	\end{proof}
	
	\citet[Proposition~3.2]{Perkovic2020mpdag} showed that,
	
	\begin{lemma}[\citealt{Perkovic2020mpdag}, Proposition~3.2]\label{app:lem:two-special-p-origin}
		% {\rm \textbf{\citep[Proposition~3.2]{Perkovic2020mpdag}}}
		Let $\mathcal{H}$ be a causal MPDAG. If there is a proper possibly causal path $\pi=(X, W, U,...,Y)$ from $X\in \mathbf{X}$ to $Y\in \mathbf{Y}$ such that $X\to W\to U\to\cdots\to Y$ is in one DAG ${\cal G}_1\in[{\cal H}]$ and  $X\leftarrow W\to U\to\cdots\to Y$ is in another DAG ${\cal G}_2\in[{\cal H}]$, then there exists a multivariate Gaussian density $f$ over $\mathbf{V}({\cal G}^*)$ such that $f_1(y\mid do(\mathbf{x}))\neq f_2(y\mid do(\mathbf{x}))$, where $f_1(y \,|\, do(\mathbf{x}))$ and $f_2(y \,|\, do(\mathbf{x}))$ are interventional distributions computed from two causal models $({\cal G}_1, f(\mathbf{V}))$ and $({\cal G}_2, f(\mathbf{V}))$, respectively.
	\end{lemma}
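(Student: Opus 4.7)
The plan is to exhibit a single multivariate Gaussian density $f$ on $\mathbf{V}(\mathcal{G}^*)$ such that the two interventional means $E_i[Y\mid do(\mathbf{X}=\mathbf{x})]$ under $\mathcal{G}_1$ and $\mathcal{G}_2$ disagree for some $\mathbf{x}$. Because $\mathcal{G}_1,\mathcal{G}_2\in[\mathcal{H}]$ are Markov equivalent, any density factorizing according to $\mathcal{G}_1$ is automatically Markovian to $\mathcal{G}_2$; and in a linear Gaussian SEM the coefficient of $x_X$ in $E_i[Y\mid do(\mathbf{X}=\mathbf{x})]$ is precisely the total causal effect $\tau_i$ of $X$ on $Y$ in $\mathcal{G}_i$, which by Wright's path tracing equals the sum over all proper directed paths from $X$ to $Y$ in $\mathcal{G}_i$ of the product of edge weights along the path. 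It therefore suffices to choose parameters making $\tau_1\ne\tau_2$.

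I parameterize $f$ as a linear Gaussian SEM on $\mathcal{G}_1$: each vertex $V$ has noise $\epsilon_V\sim\mathcal{N}(0,1)$ and $V=\sum_{A\in pa(V,\mathcal{G}_1)}\beta_{AV}A+\epsilon_V$. I then specialize by setting $\beta_{XW}=a\ne 0$, setting $\beta_e=1$ for each edge $e$ lying on the sub-path $W\to U\to\cdots\to Y$, and setting $\beta_e=0$ for every other edge of $\mathcal{G}_1$. Since $\pi$ is proper, its intermediate vertices lie outside $\mathbf{X}\cup\mathbf{Y}$, so $\pi$ survives the edge-truncation that defines $do(\mathbf{X}=\mathbf{x})$ on $\mathcal{G}_1$; with all off-path weights zero, $\pi$ is the unique proper directed path from $X$ to $Y$ in $\mathcal{G}_1$ carrying non-zero weight, and hence $\tau_1=a$.

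To compute $\tau_2$, I reexpress the same $f$ as a linear Gaussian SEM on $\mathcal{G}_2$ with coefficients $\gamma_e$ and apply path tracing. Under the chosen parameters every vertex off the vertex sequence of $\pi$ is an independent standard normal in $f$ and has zero covariance with every other variable, so computing each $\gamma_e$ as the partial regression coefficient of the child on the parent given the remaining parents in $\mathcal{G}_2$ yields $\gamma_e=0$ for every edge $e$ of $\mathcal{G}_2$ not lying along $\pi$, while along $\pi$ one finds $\gamma_{WX}=a/(a^2+1)$ and $\gamma_e=1$ for the remaining edges. Crucially, in $\mathcal{G}_2$ the $\pi$-edge incident to $X$ points into $X$, so $X$ has no outgoing edge with non-zero coefficient in $\mathcal{G}_2$'s SEM, which forces $\tau_2=0$. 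Since $\tau_1-\tau_2=a\ne 0$, choosing $\mathbf{x}$ with $x_X\ne 0$ and $x_{X'}=0$ for every $X'\in\mathbf{X}\setminus\{X\}$ yields $f_1(y\mid do(\mathbf{x}))\ne f_2(y\mid do(\mathbf{x}))$.

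The main obstacle is the verification that all off-path $\gamma_e$ vanish in $\mathcal{G}_2$'s SEM. The cleanest route is an induction along a topological order of $\mathcal{G}_2$, using the identifiability of linear Gaussian SEM coefficients via partial regression together with the observation that, under the chosen $f$, every vertex off the sequence of $\pi$ is jointly independent of all other vertices. Once this vanishing is established, the remaining computations reduce to elementary Wright-style path counts within each DAG.
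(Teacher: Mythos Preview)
The paper does not supply its own proof of this lemma; it is quoted from \citet{Perkovic2020mpdag}, Proposition~3.2. Your construction is correct in spirit and mirrors the one the paper uses for the neighboring Lemma~\ref{app:lem:two-special-p1}, and your conclusions $\tau_1=a$, $\tau_2=0$ are valid. There is, however, a gap in your justification that $\gamma_e=0$ for every $\mathcal{G}_2$-edge $e$ not lying on $\pi$: the observation that each vertex \emph{off} the sequence of $\pi$ is independent of everything handles only edges with at least one endpoint off that sequence. It does not address chord edges---edges of $\mathcal{G}_2$ joining two nonconsecutive vertices of $\pi$---and the hypotheses do not rule these out (nothing says $\pi$ is unshielded). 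In particular, $\mathcal{G}_2$ may well contain an edge $X\to V_k$ for some $k\ge3$; your proposed induction, which invokes only off-sequence independence, does not show $\gamma_{XV_k}=0$, so the step ``$X$ has no outgoing edge with non-zero coefficient in $\mathcal{G}_2$'s SEM'' is not yet established.

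The repair is short. Under your parameters the on-path variables form a simple Markov chain in $f$, so for every on-path $V_k$ with $k\ge3$ one has $V_k$ conditionally independent of $(X,W,\ldots,V_{k-2})$ given $V_{k-1}$, and likewise $X$ is conditionally independent of $(U,\ldots,Y)$ given $W$. Since $V_{k-1}\in pa(V_k,\mathcal{G}_2)$ and $W\in pa(X,\mathcal{G}_2)$, all chord coefficients vanish by partial regression, and the rest of your argument goes through. Alternatively---and this is exactly how the paper argues for Lemma~\ref{app:lem:two-special-p1}---observe that $f$ is Markovian to the sparse DAGs $\mathcal{G}'_i$ obtained from $\mathcal{G}_i$ by deleting every edge not on $\pi$, and compute $f_i(y\mid do(\mathbf{x}))$ directly in $\mathcal{G}'_i$, where no chord edges exist and the backdoor formula applies immediately.
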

	
	Lemma~\ref{app:lem:two-special-p-origin} results the following corollary.
	
	\begin{corollary}\label{app:lem:two-special-p}
		% {\rm \textbf{\citep[Proposition~3.2]{Perkovic2020mpdag}}}
		Let $\mathcal{H}$ be an MPDAG representing $[{\cal G}^*, {\cal K}]$ for a CPDAG ${\cal G}^*$ and a set of consistent DCCs ${\cal K}$. If there is a proper possibly causal path $\pi=(X, W, U,...,Y)$ from $X\in \mathbf{X}$ to $Y\in \mathbf{Y}$ such that $X\to W\to U\to\cdots\to Y$ is in one DAG ${\cal G}_1\in[{\cal G}^*, {\cal K}]$ and  $X\leftarrow W\to U\to\cdots\to Y$ is in another DAG ${\cal G}_2\in[{\cal G}^*, {\cal K}]$, then there exists a multivariate Gaussian density $f$ over $\mathbf{V}({\cal G}^*)$ such that $f_1(y\mid do(\mathbf{x}))\neq f_2(y\mid do(\mathbf{x}))$, where $f_1(y \,|\, do(\mathbf{x}))$ and $f_2(y \,|\, do(\mathbf{x}))$ are interventional distributions computed from two causal models $({\cal G}_1, f(\mathbf{V}))$ and $({\cal G}_2, f(\mathbf{V}))$, respectively.
	\end{corollary}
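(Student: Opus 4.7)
The plan is to derive Corollary~\ref{app:lem:two-special-p} as an essentially immediate consequence of Lemma~\ref{app:lem:two-special-p-origin}, by exploiting the inclusion between the restricted Markov equivalence class and the class of DAGs represented by its MPDAG.

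First I would recall the containment $[{\cal G}^*, {\cal K}] \subseteq [{\cal H}]$, which is noted in the paper just after Definition~\ref{interpretbg:mpdagresclass}: since $\cal H$ is constructed to share its skeleton and v-structures with ${\cal G}^*$ and to contain exactly the directed edges common to all DAGs in $[{\cal G}^*, {\cal K}]$, every DAG in $[{\cal G}^*, {\cal K}]$ preserves the skeleton, v-structures, and directed edges of ${\cal H}$, hence lies in $[{\cal H}]$. Next I would observe that $\cal H$ is, by hypothesis, the MPDAG of a restricted Markov equivalence class induced by some CPDAG and DCC set, so by Proposition~\ref{nonemptyMPDAG} it is in particular a causal MPDAG in the sense required by Lemma~\ref{app:lem:two-special-p-origin}.

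With these two observations in hand, the hypotheses of the corollary transfer directly to the hypotheses of Lemma~\ref{app:lem:two-special-p-origin}: the path $\pi$ is the same proper possibly causal path from $X$ to $Y$, and the two DAGs ${\cal G}_1, {\cal G}_2 \in [{\cal G}^*, {\cal K}] \subseteq [{\cal H}]$ realize the two edge configurations at the first edge of $\pi$ required by the lemma. Applying the lemma then yields a multivariate Gaussian density $f$ over $\mathbf{E}({\cal G}^*)$ with $f_1(y \mid do(\mathbf{x})) \neq f_2(y \mid do(\mathbf{x}))$, where $f_1$ and $f_2$ are the interventional distributions computed from the causal models $({\cal G}_1, f)$ and $({\cal G}_2, f)$ respectively. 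This is exactly the conclusion of the corollary.

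I do not expect any real obstacle in this argument, since it is a strict specialization of Lemma~\ref{app:lem:two-special-p-origin}: the only subtlety is to verify that the set-theoretic inclusion $[{\cal G}^*, {\cal K}] \subseteq [{\cal H}]$ is being used at the correct level (DAG membership rather than distributional equivalence), and that ${\cal H}$ genuinely qualifies as a causal MPDAG so that the referenced lemma applies. Both points are routine given the characterization results established earlier in Section~\ref{sec:sec:cha}.
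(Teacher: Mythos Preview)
Your proposal is correct and matches the paper's own proof, which simply notes that the corollary holds because $[{\cal G}^*, {\cal K}]\subseteq [{\cal H}]$ and omits further detail. Your additional remark that ${\cal H}$ is a causal MPDAG (so that Lemma~\ref{app:lem:two-special-p-origin} applies) is a valid clarification, though it follows directly from Definition~\ref{interpretbg:mpdagresclass} rather than requiring Proposition~\ref{nonemptyMPDAG}.
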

	
	Corollary~\ref{app:lem:two-special-p} holds because $[{\cal G}^*, {\cal K}]\subseteq [{\cal H}]$, Thus, we omit the proof. Analogue to Corollary~\ref{app:lem:two-special-p}, we have,
	
	\begin{lemma}\label{app:lem:two-special-p1}
		Let $\mathcal{H}$ be an MPDAG representing $[{\cal G}^*, {\cal K}]$ for a CPDAG ${\cal G}^*$ and a set of consistent DCCs ${\cal K}$. If there is a proper possibly causal path $\pi=(X, W, U,...,Y)$ from $X\in \mathbf{X}$ to $Y\in \mathbf{Y}$ such that $X\to W\leftarrow U\to\cdots\to Y$ and $X\to U$ are in one DAG ${\cal G}_1\in[{\cal G}^*, {\cal K}]$, and  $X\leftarrow W\to U\to\cdots\to Y$ and $X\to U$ are in another DAG ${\cal G}_2\in[{\cal G}^*, {\cal K}]$, then there exists a multivariate Gaussian density $f$ over $\mathbf{V}({\cal G}^*)$ such that $f_1(y\mid do(\mathbf{x}))\neq f_2(y\mid do(\mathbf{x}))$, where $f_1(y \,|\, do(\mathbf{x}))$ and $f_2(y \,|\, do(\mathbf{x}))$ are interventional distributions computed from two causal models $({\cal G}_1, f(\mathbf{V}))$ and $({\cal G}_2, f(\mathbf{V}))$, respectively.
	\end{lemma}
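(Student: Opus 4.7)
The plan is to adapt the construction used in the proof of Lemma~\ref{app:lem:two-special-p-origin} to accommodate the additional edge $X\to U$. The main idea is to exhibit an explicit multivariate Gaussian density via a linear structural equation model (SEM) and to compute the two interventional distributions $f_1(y\mid do(x))$ and $f_2(y\mid do(x))$ directly.

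First, I would reduce to a minimal substructure. Define an SEM based on $\mathcal{G}_1$ in which only the edges along $\pi=(X, W, U, V_1,\ldots, V_k=Y)$ together with the edge $X\to U$ receive nonzero weights, with independent standard Gaussian noise. The resulting joint density $f$ is Markov to $\mathcal{G}_1$ by construction and, since $\mathcal{G}_1$ and $\mathcal{G}_2$ lie in the same Markov equivalence class, $f$ is Markov to $\mathcal{G}_2$ as well. Under this SEM, the only directed path from $X$ to $Y$ in either DAG is $X\to U\to V_1\to\cdots\to Y$: on $\pi$, $W$ is a collider in $\mathcal{G}_1$ and the initial segment is $X\leftarrow W$ in $\mathcal{G}_2$, so the segment $(X,W,U)$ is not a directed sub-path of either DAG.

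Second, I would compute the causal effects from each DAG. Applying the truncated factorization formula, $E_i[Y\mid do(x)]$ equals $x$ times the population linear-regression coefficient of $Y$ on $X$ when conditioning on $pa(X,\mathcal{G}_i)$. The decisive observation is that $W\notin pa(X,\mathcal{G}_1)$ whereas $W\in pa(X,\mathcal{G}_2)$: the expression for $\mathcal{G}_1$ ignores $W$ while the one for $\mathcal{G}_2$ controls for it. Under the chosen SEM, a direct linear-Gaussian calculation (e.g., taking all unit weights and unit noise variances) shows that $X$ and $W$ are correlated and that $W$ remains predictive of $Y$ given $X$ through the downstream segment $U\to V_1\to\cdots\to Y$. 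Consequently, the simple and partial regression coefficients of $Y$ on $X$ differ, which yields $f_1(y\mid do(x))\ne f_2(y\mid do(x))$ for a suitable $x$.

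The main obstacle is to ensure that this reduction is lossless in full generality: $\pi$ may be arbitrarily long, and $\mathcal{G}_1, \mathcal{G}_2$ may carry additional edges not accounted for in the minimal substructure. As in the original proof, setting every other edge weight to zero preserves the Markov property with respect to both DAGs and precludes alternative directed paths from $X$ to $Y$ that could cancel the identified discrepancy, so the argument then proceeds exactly as in the proof of Lemma~\ref{app:lem:two-special-p-origin}.
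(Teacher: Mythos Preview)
Your proposal is correct and follows essentially the same approach as the paper: construct a linear Gaussian SEM on $\mathcal{G}_1$ with nonzero weights only on the edges of $\pi$ and on $X\to U$, observe that the resulting density is Markov to both DAGs, and then compare the parent-adjustment formulas, which differ precisely because $W\in pa(X,\mathcal{G}_2)\setminus pa(X,\mathcal{G}_1)$. The paper normalizes all variables to unit variance and invokes Wright's rules to identify the difference $E_1[Y\mid do(X=1)]-E_2[Y\mid do(X=1)]$ with the product of edge weights along $X\leftarrow W\to U\to\cdots\to Y$, whereas you argue more qualitatively via the simple-versus-partial regression coefficient; both lead to the same conclusion.
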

	
	\begin{proof}
		Let $f$ be a multivariate Gaussian density determined by the following linear Gaussian structural equation model,
		\[X_i=\sum_{X_j\in pa(X_i, {\cal G}_1)} \beta_{ji}X_j + \epsilon_i,\]
		where $\beta_{ji}=0$ if $X_j\to X_i$ is neither $X\to U$ nor the edges on the corresponding path of $\pi$ in ${\cal G}_2$ and $0<\beta_{ji}<1$ otherwise, $\epsilon_i$ are Gaussian noises with zero means and variances that makes every variable has variance one. (This is possible if we set $\beta_{ji}$'s small enough.) It is clear that $f$ is Markovian to ${\cal G}_2$, and thus Markovian to ${\cal G}_1$ as ${\cal G}_1$ and ${\cal G}_2$ are Markov equivalent. Moreover, denote by ${\cal G}'_i$ the DAGs obtained by removing from ${\cal G}_i$ the edges that are neither on the corresponding path of $\pi$ nor $X\to U$ ($i=1,2$), it can be checked that $f$ is Markovian to ${\cal G}'_i$ and $f_i(y\mid do(\mathbf{x})) = f'_i(y\mid do(\mathbf{x}))$, where $f'_i(y\mid do(\mathbf{x}))$ is the interventional distributions computed from the causal model $({\cal G}'_i, f(\mathbf{V}))$, $i=1,2$.
		
		Using the backdoor adjustment, it can be verified that $E_1(y\mid do(X = 1))=\sigma_{xy}$ and $E_2(y\mid do(X = 1))={(\sigma_{xy}-\sigma_{xw}\sigma_{wy})}/{(1-\sigma_{xw}^2)}$.  By Wright's rule~\citep{wright1921}, it can be checked that $\sigma_{xy}-{(\sigma_{xy}-\sigma_{xw}\sigma_{wy})}/{(1-\sigma_{xw}^2)}$ equals to the product of the edge weights along the path $X\leftarrow W\to U\to\cdots\to Y$ in ${\cal G}_2$. By our assumption, the edge weights are non-zero, $E_1(y\mid do(X = 1))\neq E_2(y\mid do(X = 1))$, and consequently, $f_1(y\mid do(\mathbf{x}))\neq f_2(y\mid do(\mathbf{x}))$.
	\end{proof}
	
	Finally, the proof of Theorem \ref{thm:id} follows a similar argument to that for \citet[Lemma~C.1]{perkovic2017interpreting}.
	
	% \begin{lemma}\label{app:lem:two-special-graph}
		%     Let $\mathcal{H}$ be an MPDAG representing $[{\cal G}^*, {\cal K}]$ for a CPDAG ${\cal G}^*$ and a set of consistent DCCs ${\cal K}$, and $\mathbf{X}, \mathbf{Y}$ are disjoint subsets of vertices of $\mathcal{H}$. If there is a proper possibly causal path from $\mathbf{X}$ to $\mathbf{Y}$ starting with an undirected edge in $\mathcal{H}$, then there is one such path, denoted by $\pi=(X, W, U,...,Y)$, connecting $X\in \mathbf{X}$ and $Y\in \mathbf{Y}$ in $\mathcal{H}$ such that the corresponding path of $\pi$ in one DAG in $[{\cal G}^*, {\cal K}]$ is $ X\to W\to U\to\cdots\to Y$, and in another DAG in $[{\cal G}^*, {\cal K}]$ is $X\leftarrow W\to U\to\cdots\to Y$.
		% \end{lemma}
	
	\begin{proof}[Proof of Theorem \ref{thm:id}]
		Based on \citet[Theorem~3.6]{Perkovic2020mpdag}, the sufficiency of the identification condition holds, since $[{\cal G}^*, {\cal K}] \subseteq [\mathcal{H}]$ and the causal effect of $\mathbf{X}$ on $\mathbf{Y}$ is identifiable from $\cal H$ when the condition holds.
		
		To prove the necessity, let $\pi$ be a possibly causal path from $X\in \mathbf{X}$ to $Y\in \mathbf{Y}$ in $\cal H$ where the first edge from the side of $X$ is undirected. Denote by $\pi^*=(X, W, U,...,Y)$, a shortest subsequence of  $\pi$ with length at least 1, such that $\pi^*$ is also a possibly causal path from $X$ to $Y$, where the first edge from the side of $X$ is undirected. It is clear that $\pi^*(W,Y)$ is unshielded. By Lemma~\ref{app:lem:out}, there is a DAG $\mathcal{G}_2\in[{\cal G}^*, {\cal K}]$ such that  $ch(W, \mathcal{G}_2) = sib(W, \mathcal{H}) \cup ch(W, \mathcal{H})$. Hence, the corresponding path of $\pi^*$ in ${\cal G}_2$ is $X\leftarrow W\to U\to\cdots\to Y$, according to the first Meek's rule.
		
		If $X$ is not adjacent to $U$ in $\cal H$, then we consider a DAG $\mathcal{G}_1\in[{\cal G}^*, {\cal K}]$ where  $ch(X, \mathcal{G}_1) = sib(X, \mathcal{H}) \cup ch(X, \mathcal{H})$. By Lemma~\ref{app:lem:out}, such a DAG exists. Since $X$ is not adjacent to $U$, $\pi^*$ is unshielded, and thus the corresponding path of $\pi^*$ in ${\cal G}_1$ is $X\to W\to U\to\cdots\to Y$. By Corollary~\ref{app:lem:two-special-p}, the causal effect of $\mathbf{X}$ on $\mathbf{Y}$ is not identifiable.
		
		If $X$ is adjacent to $U$, then $X\to U$ is in $\cal H$, since otherwise $X-U$ and $\pi^*(U,Y)$ form a possibly causal path shorter than $\pi^*$, which contradicts our assumption. If $W\to U$ is in $\cal H$, then we again consider a DAG $\mathcal{G}_1\in[{\cal G}^*, {\cal K}]$ where  $ch(X, \mathcal{G}_1) = sib(X, \mathcal{H}) \cup ch(X, \mathcal{H})$. In ${\cal G}_1$, the corresponding path of $\pi^*$ is $X\to W\to U\to\cdots\to Y$. If $W-U$ is in $\cal H$, then we consider the DAG $\mathcal{G}_1\in[{\cal G}^*, {\cal K}]$ where $ch(U, \mathcal{G}_1) = sib(U, \mathcal{H}) \cup ch(U, \mathcal{H})$. Since $X\to U$ and $U\to W$ are in $\mathcal{G}_1$, $X\to W$ is in ${\cal G}_1$. Thus, according to Lemma~\ref{app:lem:two-special-p1}, the causal effect of $\mathbf{X}$ on $\mathbf{Y}$ is not identifiable.
	\end{proof}

	\subsection{Proof of Corollary \ref{coro:id}}
	\begin{proof}
		We first prove that (ii) $\Rightarrow$ (i). Assume that $X\to Y$ exists in every DAG in $[\mathcal{G}^*, {\cal K}]$, but $X-Y$ is in $\mathcal{G}^*$, then the causal effect of $Y$ on $X$ is not identifiable in $[\mathcal{G}^*]$ but becomes identifiable in $[\mathcal{G}^*, {\cal K}]$. In fact, the causal effect of $Y$ on $X$ is $0$ in every DAG in $[\mathcal{G}^*, {\cal K}]$.
		
		Conversely, if the common directed causal relations of the DAGs in $[\mathcal{G}^*, {\cal K}]$ are all encoded by directed edges in $\mathcal{G}^*$, then by the definition of a causal MPDAG, ${\cal H} = \mathcal{G}^*$. By Theorem~\ref{thm:id}, an effect is identifiable in $[\mathcal{G}^*]$ if and only if it is identifiable in $[\mathcal{G}^*, {\cal K}]$. This completes the proof of (i) $\Rightarrow$ (ii).

		Next, suppose that $\cal K$ is derived from a consistent pairwise causal background knowledge set $\cal B$. If there is a direct causal constraint in $\cal B$ which does not hold for all DAGs in $[\mathcal{G}^*]$, then it is clear that an unidentifiable effect becomes identifiable  in $[\mathcal{G}^*, {\cal K}]$, as statement (ii) holds.  If there is a non-ancestral causal constraint in $\cal B$ which does not hold for all DAGs in $[\mathcal{G}^*]$, then by Theorem~\ref{thm:nbr_set_const}, statement (ii) also holds. Finally, since $X\dashrightarrow Y$ implies $Y\longarrownot\dashrightarrow X$, $\cal B$ is equivalent to $(Y\longarrownot\dashrightarrow X) \cup {\cal B}$ with respect to $\mathcal{G}^*$. Thus, if $Y\longarrownot\dashrightarrow X$ does not hold for all DAG in $[\mathcal{G}^*]$, then statement (ii) holds.
	\end{proof}

	\subsection{Proof of Theorem \ref{thm:adjustment}}\label{app:proof:adjustment}
	
	Before proving Theorem \ref{thm:adjustment}, we first introduce some technical lemmas.
	
	\begin{lemma}[\citealt{perkovic2017interpreting}, Lemma~C.2]\label{app:lem:c2o}
		% {\rm \textbf{\citep[Lemma~C.2]{Perkovic2020mpdag}}}
		Let $\mathcal{H}$ be a causal MPDAG, and $\mathbf{X}, \mathbf{Y}$ are disjoint subsets of vertices of $\mathcal{H}$. Suppose that the causal effect of $\mathbf{X}$ on $\mathbf{Y}$ is identifiable in $[\cal H]$, then
		\begin{enumerate}
			\item[(i)]  $\mathbf{Z}\cap \mathrm{Forb}(\mathbf{X},\mathbf{Y}, {\cal H})=\varnothing$ implies that $\mathbf{Z}\cap \mathrm{Forb}(\mathbf{X},\mathbf{Y}, {\cal G})=\varnothing$ for any DAG ${\cal G}\in[{\cal H}]$, and
			\item[(ii)]  $\mathbf{Z}\cap \mathrm{Forb}(\mathbf{X},\mathbf{Y}, {\cal H})\neq\varnothing$ implies that in $\cal H$ there exist $X\in \mathbf{X}$, $Y\in \mathbf{Y}$, $U\in \mathbf{Z}\cap \mathrm{Forb}(\mathbf{X},\mathbf{Y}, {\cal H})$ and $W$ such that there are a directed path from $X$ to $W$ and unshielded possibly causal paths from $W$ to $Y$ and $U$, respectively.
		\end{enumerate}
	\end{lemma}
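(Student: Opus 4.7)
The plan is to establish each statement by exploiting the identifiability condition from Theorem~\ref{thm:id}, which forces every proper possibly causal path from $\mathbf{X}$ to $\mathbf{Y}$ in $\mathcal{H}$ to begin with a directed edge, together with the routine observation that a directed path in any $\mathcal{G} \in [\mathcal{H}]$ corresponds, edge by edge, to a possibly causal path in $\mathcal{H}$ (since $\mathcal{G}$ only orients the undirected edges of $\mathcal{H}$ and preserves all directed ones).

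For part (i), I would argue by contrapositive. Suppose some $\mathcal{G} \in [\mathcal{H}]$ and $U \in \mathbf{Z}$ satisfy $U \in \mathrm{Forb}(\mathbf{X}, \mathbf{Y}, \mathcal{G})$. Then by definition there exist $X \in \mathbf{X}$, $Y \in \mathbf{Y}$, and $W \notin \mathbf{X}$ such that $W$ lies on a proper directed path $\pi_1$ from $X$ to $Y$ in $\mathcal{G}$, and there is a directed path $\pi_2$ from $W$ to $U$ in $\mathcal{G}$. Because every edge of $\mathcal{G}$ is an edge of $\mathcal{H}$ possibly with an orientation added, $\pi_1$ is a proper possibly causal path in $\mathcal{H}$ and $\pi_2$ is a possibly causal path in $\mathcal{H}$, so $U \in \mathrm{Forb}(\mathbf{X}, \mathbf{Y}, \mathcal{H})$, contradicting $\mathbf{Z} \cap \mathrm{Forb}(\mathbf{X}, \mathbf{Y}, \mathcal{H}) = \varnothing$.

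For part (ii), pick any $U \in \mathbf{Z} \cap \mathrm{Forb}(\mathbf{X}, \mathbf{Y}, \mathcal{H})$. By definition there exist $X \in \mathbf{X}$, $Y \in \mathbf{Y}$, and $W' \notin \mathbf{X}$ on a proper possibly causal path $\rho$ from $X$ to $Y$ in $\mathcal{H}$, together with a possibly causal path $\rho'$ from $W'$ to $U$. Under the identifiability assumption, $\rho$ starts with a directed edge out of $X$, so by walking along $\rho$ as long as the edges remain directed, we obtain a maximal directed sub-segment of $\rho$ emanating from $X$; call its endpoint $W$. By maximality, the rest of $\rho$ from $W$ to $Y$ is a possibly causal path, from which one can extract an unshielded possibly causal sub-path to $Y$ by repeatedly short-cutting through chords (this shortening step is a standard fact about possibly causal paths in PDAGs). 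The concatenation of the $W$-to-$W'$ portion of $\rho$ with $\rho'$ is a possibly causal path from $W$ to $U$, and the same chord-shortening argument yields an unshielded possibly causal path from $W$ to $U$.

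The main obstacle is the chord-shortening step: one must verify that whenever a possibly causal path in $\mathcal{H}$ has a chord, the resulting shortcut remains a possibly causal path, and that this procedure terminates at an unshielded path between the same endpoints. Care is also needed when the endpoint $W$ coincides with $X$ or with $W'$, or when the directed segment already takes us past $W'$; these degenerate cases should be handled by suitably redefining $W$, and the $X$-to-$W$ portion should be genuinely directed (not merely possibly causal) so that $W$ is a true descendant of $X$ in $\mathcal{H}$.
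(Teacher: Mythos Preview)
The paper does not prove this lemma; it is quoted verbatim from \citet{perkovic2017interpreting} (their Lemma~C.2) and used as a black box, so there is no in-paper argument to compare against. Your outline for part~(i) is essentially correct and, in fact, does not need the identifiability hypothesis: if $X_1\to\cdots\to X_n$ is directed in $\mathcal G\in[\mathcal H]$ and some $X_j\to X_i$ with $i<j$ were present in $\mathcal H$, that arrow would persist in $\mathcal G$ and close a directed cycle. Note that your stated justification (``every edge of $\mathcal G$ is an edge of $\mathcal H$ possibly with an orientation added'') only controls consecutive edges; the full possibly-causal condition checks \emph{all} pairs $i<j$, and it is the acyclicity argument above that closes this small gap.

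Part~(ii) has a genuine problem beyond the ``degenerate cases'' you flag. When $W'$ lies strictly after your $W$ on $\rho$, you concatenate the $W$--$W'$ segment of $\rho$ with $\rho'$ and assert the result is a possibly causal path from $W$ to $U$. But concatenation of two possibly causal paths need not be possibly causal in $\mathcal H$: the definition forbids \emph{any} backward arrow $V_j\to V_i$ with $i<j$ along the combined sequence, and nothing you have said rules out an arrow in $\mathcal H$ from a vertex on $\rho'$ back to a vertex on the $W$--$W'$ segment. The concatenation may also fail to be a simple path if $\rho'$ revisits a vertex of $\rho$. Your chord-shortening step (replacing a possibly causal path by an unshielded subsequence) is a standard fact for MPDAGs and is fine once you actually have a possibly causal path to shorten, but here you have not yet produced one. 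The repair requires exploiting structural properties of causal MPDAGs (e.g., the b-component constraints of Theorem~\ref{the:MPDAG} or the path results in \citet{perkovic2017interpreting}) to show that a possibly causal path from $W$ to $U$ exists; it does not follow from concatenation alone.
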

	
	\begin{lemma}[\citealt{perkovic2017interpreting}, Lemma~C.3]\label{app:lem:c3o}
		% {\rm \textbf{\citep[Lemma~C.3]{Perkovic2020mpdag}}}
		Let $\mathcal{H}$ be a causal MPDAG, and $\mathbf{X}, \mathbf{Y}$ are disjoint subsets of vertices of $\mathcal{H}$. Suppose that the causal effect of $\mathbf{X}$ on $\mathbf{Y}$ is identifiable in $[\cal H]$ and $\mathbf{Z}\cap \mathrm{Forb}(\mathbf{X},\mathbf{Y}, {\cal H})=\varnothing$, then there is a proper definite status non-causal path from $X\in\mathbf{X}$ to $Y\in\mathbf{Y}$ which is not blocked by $\mathbf{Z}$ in ${\cal H}$ implies that $\mathbf{Z}$ is not an adjustment set for $(\mathbf{X}, \mathbf{Y})$ in any DAG ${\cal G} \in [{\cal H}]$.
	\end{lemma}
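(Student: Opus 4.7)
The plan is to proceed by contradiction: assume there exists some DAG $\mathcal{G}\in[\mathcal{H}]$ in which $\mathbf{Z}$ does satisfy the adjustment criterion for $(\mathbf{X},\mathbf{Y})$, and then produce a proper d-connecting non-causal path from some $X\in\mathbf{X}$ to some $Y\in\mathbf{Y}$ in $\mathcal{G}$ given $\mathbf{Z}$, contradicting that criterion. The hypotheses on identifiability and $\mathbf{Z}\cap\mathrm{Forb}(\mathbf{X},\mathbf{Y},\mathcal{H})=\varnothing$ will enter only through Lemma~\ref{app:lem:c2o}, which guarantees $\mathbf{Z}\cap\mathrm{Forb}(\mathbf{X},\mathbf{Y},\mathcal{G})=\varnothing$ as well, so the forbidden-descendant clause of the adjustment criterion is handled, and it suffices to exhibit the open non-causal path.

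The central step is to transfer the given proper definite-status non-causal path $p=(X=V_0,V_1,\ldots,V_k=Y)$ from $\mathcal{H}$ to $\mathcal{G}$ by following the same sequence of vertices. First I would verify that the collider/non-collider pattern along $p$ coincides in $\mathcal{H}$ and $\mathcal{G}$. A definite collider $V_{i-1}\to V_i\leftarrow V_{i+1}$ in $\mathcal{H}$ consists of two directed edges, which remain directed with the same orientation in every $\mathcal{G}\in[\mathcal{H}]$, so $V_i$ is a collider in $\mathcal{G}$. A definite non-collider $V_i$ on $p$ falls into one of three cases by the definition in Section~\ref{sec:sec:graphical}; in the first two cases at least one incident edge is already oriented away from $V_i$ in $\mathcal{H}$ and so remains so in $\mathcal{G}$, making $V_i$ a non-collider; in the third case $V_{i-1}-V_i-V_{i+1}$ with $V_{i-1}\not\sim V_{i+1}$, so orienting the two undirected edges in $\mathcal{G}$ cannot create a v-structure at $V_i$ because $\mathcal{G}$ and $\mathcal{H}$ have the same skeleton and v-structures (Definition~\ref{interpretbg:mpdagresclass}).

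Next I would argue that $p$ is still d-connecting given $\mathbf{Z}$ in $\mathcal{G}$: every definite non-collider of $p$ lies outside $\mathbf{Z}$ (inherited from the non-blocking hypothesis in $\mathcal{H}$), and every collider of $p$ has a directed path in $\mathcal{H}$ to some vertex in $\mathbf{Z}$, which survives verbatim in $\mathcal{G}$ since directed edges of $\mathcal{H}$ are directed edges of $\mathcal{G}$. Combined with the collider/non-collider preservation from the previous step, every node along $p$ that opened in $\mathcal{H}$ still opens in $\mathcal{G}$. Moreover, $p$ is non-causal in $\mathcal{H}$, i.e.\ it carries some arrow $V_i\leftarrow V_{i+1}$; that arrowhead is a directed edge of $\mathcal{H}$ and hence of $\mathcal{G}$, so $p$ remains non-causal in $\mathcal{G}$, and properness depends only on the endpoints $X,Y$ so it is automatic. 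The path $p$ therefore witnesses that $\mathbf{Z}$ fails the adjustment criterion in $\mathcal{G}$, contradicting our assumption.

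The main obstacle I anticipate is the non-collider case analysis, specifically the ``two undirected edges with non-adjacent endpoints'' subcase: making this airtight requires the graphical fact that no orientation extending $\mathcal{H}$ to a DAG in $[\mathcal{H}]$ can introduce a new v-structure. This is intuitive but formally relies on $\mathcal{H}$ being closed under Meek's rules (Rule~1 would force one of the ``undirected'' edges to be already oriented if a new v-structure were possible), so I would invoke the characterization encoded in Theorem~\ref{the:MPDAG} and the same orient-chain-components-independently argument recalled in Appendix~\ref{app:proof:pre}. A secondary bookkeeping point is to ensure the descendants of colliders used to open $p$ in $\mathcal{H}$ are counted under the correct ``directed-path'' notion in both graphs; this is transparent once one notes that the relevant directed paths are made entirely of edges directed in $\mathcal{H}$, hence also in $\mathcal{G}$.
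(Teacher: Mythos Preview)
The paper does not prove this lemma itself; it is quoted directly from \citet{perkovic2017interpreting}, Lemma~C.3, and used as a black box in the proof of Theorem~\ref{thm:adjustment}. Your argument is correct and is essentially the one given in the original source: transfer the definite-status path to any $\mathcal{G}\in[\mathcal{H}]$, verify that collider/non-collider status is preserved (using that $\mathcal{G}$ and $\mathcal{H}$ share skeleton and v-structures), and that the openness conditions survive because directed edges of $\mathcal{H}$ persist in $\mathcal{G}$.

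One small inaccuracy worth fixing: when you write ``it carries some arrow $V_i\leftarrow V_{i+1}$,'' note that the paper's definition of non-causal (Section~\ref{sec:sec:graphical}) checks all pairs $i<j$, not only consecutive ones, so the witnessing backward edge $V_j\to V_i$ may be a chord of $p$ rather than an edge on $p$. This does not affect your conclusion, since that directed edge of $\mathcal{H}$ is still present in $\mathcal{G}$ and therefore forces $p$ to be non-causal in $\mathcal{G}$ as well; but the phrasing should be adjusted.
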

	
	Lemmas~\ref{app:lem:c2o} and~\ref{app:lem:c3o} result the following Corollaries~\ref{app:lem:c2} and~\ref{app:lem:c3}, respectively.
	
	\begin{corollary}\label{app:lem:c2}
		% {\rm \textbf{\citep[Lemma~C.2]{Perkovic2020mpdag}}}
		Let $\mathcal{H}$ be an MPDAG representing $[{\cal G}^*, {\cal K}]$ for a CPDAG ${\cal G}^*$ and a set of consistent DCCs ${\cal K}$, and $\mathbf{X}, \mathbf{Y}$ are disjoint subsets of vertices of $\mathcal{H}$. Suppose that the causal effect of $\mathbf{X}$ on $\mathbf{Y}$ is identifiable in $[{\cal G}^*, {\cal K}]$, then
		\begin{enumerate}
			\item[(i)]  $\mathbf{Z}\cap \mathrm{Forb}(\mathbf{X},\mathbf{Y}, {\cal H})=\varnothing$ implies that $\mathbf{Z}\cap \mathrm{Forb}(\mathbf{X},\mathbf{Y}, {\cal G})=\varnothing$ for any DAG ${\cal G}\in[{\cal G}^*, {\cal K}]$, and
			\item[(ii)]  $\mathbf{Z}\cap \mathrm{Forb}(\mathbf{X},\mathbf{Y}, {\cal H})\neq\varnothing$ implies that in $\cal H$ there exist $X\in \mathbf{X}$, $Y\in \mathbf{Y}$, $U\in \mathbf{Z}\cap \mathrm{Forb}(\mathbf{X},\mathbf{Y}, {\cal H})$ and $W$ such that there are a directed path from $X$ to $W$ and unshielded possibly causal paths from $W$ to $Y$ and $U$, respectively.
		\end{enumerate}
	\end{corollary}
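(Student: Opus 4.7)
The plan is to deduce Corollary~\ref{app:lem:c2} directly from its MPDAG analog Lemma~\ref{app:lem:c2o} by exploiting the containment $[{\cal G}^*, {\cal K}]\subseteq[{\cal H}]$ guaranteed by the definition of the representing MPDAG (Definition~\ref{interpretbg:mpdagresclass}), together with Theorem~\ref{thm:id}. The first step is to observe that, since $\cal H$ is the MPDAG of $[{\cal G}^*,{\cal K}]$, every DAG in the restricted Markov equivalence class is already a DAG in $[{\cal H}]$. Hence the hypothesis of Corollary~\ref{app:lem:c2} lets us transfer each conclusion of Lemma~\ref{app:lem:c2o} from the larger class $[{\cal H}]$ down to the smaller class $[{\cal G}^*,{\cal K}]$.

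Second, I need to check that Lemma~\ref{app:lem:c2o} is actually applicable, which requires that the causal effect of $\mathbf{X}$ on $\mathbf{Y}$ be identifiable in $[{\cal H}]$. This is exactly where Theorem~\ref{thm:id} comes in: the identification condition asserted there (``every proper possibly causal path from $\mathbf{X}$ to $\mathbf{Y}$ starts with a directed edge in $\cal H$'') is formulated purely in terms of $\cal H$, so identifiability in $[{\cal G}^*,{\cal K}]$ is equivalent to identifiability in $[{\cal H}]$. Consequently the hypothesis of Lemma~\ref{app:lem:c2o} is met.

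Third, statement (i) follows immediately: given $\mathbf{Z}\cap\mathrm{Forb}(\mathbf{X},\mathbf{Y},{\cal H})=\varnothing$, Lemma~\ref{app:lem:c2o}(i) provides $\mathbf{Z}\cap\mathrm{Forb}(\mathbf{X},\mathbf{Y},{\cal G})=\varnothing$ for every ${\cal G}\in[{\cal H}]$, and restricting to ${\cal G}\in[{\cal G}^*,{\cal K}]\subseteq[{\cal H}]$ gives the claim. Statement (ii) is even more direct because its conclusion is a purely graphical statement about $\cal H$ and does not reference the restricted equivalence class at all; thus Lemma~\ref{app:lem:c2o}(ii) supplies the required $X$, $Y$, $U$, $W$ verbatim.

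There is essentially no obstacle beyond verifying the two preliminary facts: the containment $[{\cal G}^*,{\cal K}]\subseteq[{\cal H}]$ (which is a reading of Definition~\ref{interpretbg:mpdagresclass}, reiterated in the discussion preceding Example~\ref{rep:failrep}) and the equivalence of identifiability in the two classes (Theorem~\ref{thm:id}). Once these are invoked, the proof reduces to a one-line citation of Lemma~\ref{app:lem:c2o}. The only subtle point to flag is that the identifiability hypothesis in Corollary~\ref{app:lem:c2} is stated for $[{\cal G}^*,{\cal K}]$ rather than $[{\cal H}]$, so without Theorem~\ref{thm:id} one could not directly invoke Lemma~\ref{app:lem:c2o}; this is the place where the decomposition result of Section~\ref{sec:sec:info} is doing its real work.
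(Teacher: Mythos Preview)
Your proposal is correct and follows essentially the same approach as the paper's own argument: the paper states that the corollary holds because $[{\cal G}^*, {\cal K}]\subseteq [{\cal H}]$ and because Theorem~\ref{thm:id} shows identifiability in $[{\cal G}^*, {\cal K}]$ is equivalent to identifiability in $[{\cal H}]$, then simply cites Lemma~\ref{app:lem:c2o}. Your write-up makes the two preliminary ingredients and their roles more explicit than the paper does, but the logical route is identical.
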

	
	\begin{corollary}\label{app:lem:c3}
		% {\rm \textbf{\citep[Lemma~C.3]{Perkovic2020mpdag}}}
		Let $\mathcal{H}$ be an MPDAG representing $[{\cal G}^*, {\cal K}]$ for a CPDAG ${\cal G}^*$ and a set of consistent DCCs ${\cal K}$, and $\mathbf{X}, \mathbf{Y}$ are disjoint subsets of vertices of $\mathcal{H}$. Suppose that the causal effect of $\mathbf{X}$ on $\mathbf{Y}$ is identifiable in $[{\cal G}^*, {\cal K}]$ and $\mathbf{Z}\cap \mathrm{Forb}(\mathbf{X},\mathbf{Y}, {\cal H})=\varnothing$, then there is a proper definite status non-causal path from $X\in\mathbf{X}$ to $Y\in\mathbf{Y}$ which is not blocked by $\mathbf{Z}$ in ${\cal H}$ implies that $\mathbf{Z}$ is not an adjustment set for $(\mathbf{X}, \mathbf{Y})$ in any DAG ${\cal G} \in [{\cal G}^*, {\cal K}]$.
	\end{corollary}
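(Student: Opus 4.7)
The plan is to reduce Corollary~\ref{app:lem:c3} directly to its origin Lemma~\ref{app:lem:c3o} by exploiting the inclusion $[{\cal G}^*, {\cal K}] \subseteq [\mathcal{H}]$ that follows from the definition of the MPDAG representing a restricted Markov equivalence class. The hypothesis $\mathbf{Z}\cap \mathrm{Forb}(\mathbf{X},\mathbf{Y}, {\cal H})=\varnothing$ and the existence of an unblocked proper definite-status non-causal path from $\mathbf{X}$ to $\mathbf{Y}$ in $\cal H$ are stated purely in terms of $\cal H$, so they transfer verbatim to the setting of Lemma~\ref{app:lem:c3o}.

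The only nontrivial step is to verify the remaining hypothesis of Lemma~\ref{app:lem:c3o}, namely that the causal effect of $\mathbf{X}$ on $\mathbf{Y}$ is identifiable in $[\cal H]$. First I would invoke Theorem~\ref{thm:id}, which characterizes identifiability in $[{\cal G}^*, {\cal K}]$ by the graphical condition that every proper possibly causal path from $\mathbf{X}$ to $\mathbf{Y}$ in $\cal H$ starts with a directed edge. This condition is formulated solely in terms of $\cal H$ and is exactly the identification condition of \citet{Perkovic2020mpdag} for the unrestricted Markov equivalence class $[\cal H]$. Thus, identifiability in $[{\cal G}^*, {\cal K}]$ immediately yields identifiability in $[\cal H]$.

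With all hypotheses of Lemma~\ref{app:lem:c3o} in force, I would apply it to conclude that $\mathbf{Z}$ is not an adjustment set for $(\mathbf{X},\mathbf{Y})$ in any DAG ${\cal G}\in[\cal H]$. Because $[{\cal G}^*, {\cal K}] \subseteq [\mathcal{H}]$, the same non-adjustment conclusion holds for every ${\cal G}\in[{\cal G}^*, {\cal K}]$, which is the desired statement. The main (minor) obstacle is conceptual rather than technical: one must be careful that the notions of adjustment set, definite-status path, blocking, and $\mathrm{Forb}(\cdot,\cdot,\cdot)$ used in the two statements are the same, so that Lemma~\ref{app:lem:c3o} can be quoted verbatim; this is immediate from Definitions~\ref{def:adjustment} and~\ref{def:b-adjustment}, both of which are stated in the same graphical language as in~\citet{perkovic2017interpreting}. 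Hence the proof reduces to a one-line application of the earlier result together with the subset relation.
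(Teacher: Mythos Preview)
Your proposal is correct and matches the paper's argument essentially verbatim: the paper states that the corollary holds because $[{\cal G}^*, {\cal K}]\subseteq [{\cal H}]$ and Theorem~\ref{thm:id} shows that identifiability in $[{\cal G}^*, {\cal K}]$ is equivalent to identifiability in $[\cal H]$, so Lemma~\ref{app:lem:c3o} applies directly.
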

	
	The above corollaries hold since $[{\cal G}^*, {\cal K}]\subseteq [{\cal H}]$ and Theorem~\ref{thm:id} proves that the causal effect of $\mathbf{X}$ on $\mathbf{Y}$ is identifiable in $[{\cal G}^*, {\cal K}]$ if and only if it is identifiable in $[\cal H]$. We omit the proofs here.
	
	The proof of Theorem~\ref{thm:adjustment} is analogue to that of \citet[Theorem~4.4]{perkovic2017interpreting}.
	
	\begin{proof}[Proof of Theorem~\ref{thm:adjustment}]
		Based on Theorem~4.4 in \citet{perkovic2017interpreting} and the fact that $[{\cal G}^*, {\cal B}]\subseteq [{\cal H}]$, it is clear that $\mathbf{Z}$ satisfies the b-adjustment criterion relative to $(\mathbf{X}, \mathbf{Y})$ in $\cal H$ implies that $\mathbf{Z}$ is an adjustment set for $(\mathbf{X}, \mathbf{Y})$ with respect to ${\cal G}^*$ and $\cal B$. To prove the other direction, we use the proof by contradiction. First, Theorem~\ref{thm:id} indicates that the first condition holds. Assuming the first condition holds but the second condition fails to hold, then by Corollary~\ref{app:lem:c2}, there exist $X\in \mathbf{X}$, $Y\in \mathbf{Y}$, $U\in \mathbf{Z}\cap \mathrm{Forb}(\mathbf{X},\mathbf{Y}, {\cal H})$ and $W$ such that there are a directed path from $X$ to $W$ and unshielded possibly causal paths from $W$ to $Y$ and $U$, respectively. According to Lemma~\ref{app:lem:out}, there is a DAG ${\cal G} \in [{\cal G}^*, {\cal B}]$ such that every sibling of $W$ is a child of $W$ in  ${\cal G}$. Thus, the unshielded possibly causal paths from $W$ to $Y$ and $U$ are directed in ${\cal G}$, which means $\mathbf{Z}\cap \mathrm{Forb}(\mathbf{X},\mathbf{Y}, {\cal G})\neq\varnothing$ for ${\cal G}$. Consequently, by \citet[Theorem~4.4]{Perkovic2020mpdag}, $\mathbf{Z}$ is not an adjustment set for $(\mathbf{X}, \mathbf{Y})$ in ${\cal G}$. Finally, the necessity of the third condition is guaranteed by Corollary~\ref{app:lem:c3}.
	\end{proof}
	
	\subsection{Proof of Theorem \ref{thm:local}}\label{app:proof:local}
	
	\citet{maathuis2009estimating} proved that,
	
	\begin{lemma}[\citealt{maathuis2009estimating}, Lemma~3.1]\label{lem:local-valid-nbg}
		% {\rm \textbf{\citep[Lemma~3.1]{maathuis2009estimating}}}
		Given a CPDAG $\mathcal{G}^*$, a treatment $X$, and $\textbf{S}\subseteq sib(X, \mathcal{G}^*)$, there is a DAG $\mathcal{G}\in [{\cal G}^*]$ such that $pa(X, \mathcal{G}) = \textbf{S}\cup pa(X, {\cal G}^*)$ and $ch(X, \mathcal{G}) = sib(X, {\cal G}^*)\cup ch(X, {\cal G}^*)\setminus \textbf{S}$ if and only if the induced subgraph of ${\cal G}^*$ over $\textbf{S}$ is complete.
	\end{lemma}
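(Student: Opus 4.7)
The plan is to prove the two directions of the equivalence separately, with necessity following from a short v-structure argument and sufficiency requiring an explicit construction through a perfect elimination ordering.

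For necessity, suppose there exists $\mathcal{G} \in [\mathcal{G}^*]$ with $pa(X, \mathcal{G}) = \textbf{S} \cup pa(X, \mathcal{G}^*)$. For any two distinct $S_1, S_2 \in \textbf{S}$, both are parents of $X$ in $\mathcal{G}$, so if they were non-adjacent in $\mathcal{G}$, the configuration $S_1 \to X \leftarrow S_2$ would be a v-structure of $\mathcal{G}$. Since $\mathcal{G}$ and $\mathcal{G}^*$ are Markov equivalent, this v-structure would also have to appear in $\mathcal{G}^*$, contradicting $S_1, S_2 \in sib(X, \mathcal{G}^*)$. Hence $S_1$ and $S_2$ are adjacent in $\mathcal{G}$, and therefore also in $\mathcal{G}^*$ (same skeleton). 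Applying this to every pair shows $\textbf{S}$ induces a complete subgraph of $\mathcal{G}^*$.

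For sufficiency, assume $\textbf{S}$ induces a complete subgraph. Let $\textbf{T} = sib(X, \mathcal{G}^*) \setminus \textbf{S}$ and let $\mathcal{C}$ be the chain component of $\mathcal{G}^*$ containing $X$. By the facts recalled in Appendix~C.1, undirected edges in different chain components can be oriented independently without interference, so it suffices to orient $\mathcal{C}$ via a PEO $(V_1, \ldots, V_n)$ and show that, under the convention ``parents of $V_i$ are the later adjacent vertices,'' we can force $\textbf{T}$ to appear before $X$ and $\textbf{S}$ to appear after $X$. Such an ordering makes $\textbf{S}$ the siblings of $X$ in $\mathcal{C}$ that become parents of $X$ and makes $\textbf{T}$ those that become children, which, combined with the existing directed edges of $\mathcal{G}^*$, produces a DAG $\mathcal{G} \in [\mathcal{G}^*]$ with the desired parent and child sets.

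To construct the PEO, I would proceed in three stages: (i) peel off vertices of $\textbf{T}$ one at a time as simplicial vertices in successive induced subgraphs, giving the initial segment $V_1, \ldots, V_{|\textbf{T}|}$; (ii) place $X$ next; (iii) extend arbitrarily to a PEO of the remaining chordal subgraph, which contains $\textbf{S}$ as a clique. The key subtlety, and the main obstacle, lies in stage (i): I must guarantee that at each step a vertex of $\textbf{T}$ is simplicial in the current subgraph. The plan is to argue inductively using chordality and the completeness of $\textbf{S}$: if no $T \in \textbf{T}$ were simplicial in the current subgraph, then by the two-simplicial-vertex property of non-complete chordal graphs together with Rule~1 of Meek, one could derive that some edge between $X$ and an element of $\textbf{T}$ must be directed in $\mathcal{G}^*$, contradicting $\textbf{T} \subseteq sib(X, \mathcal{G}^*)$. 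An equivalent and arguably cleaner route is to orient $\textbf{S} \to X \to \textbf{T}$ in $\mathcal{G}^*$, close under Meek's rules, and verify no directed cycle or new v-structure arises: v-structures at $X$ are ruled out by completeness of $\textbf{S}$ (parents are pairwise adjacent), and directed cycles can be excluded by checking that no edge between $\textbf{S} \cup \{X\}$ and $\textbf{T}$ gets propagated backwards. Either path yields the required DAG and completes the proof.
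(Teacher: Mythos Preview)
The paper does not actually prove this lemma; it is cited from \citet{maathuis2009estimating} and used as a black box in the proof of Theorem~\ref{thm:local}. So there is no ``paper's own proof'' to compare against, and your proposal must stand on its own.

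Your necessity argument is correct and standard. The sufficiency argument, however, has a genuine gap in stage~(i). You claim you can peel off the vertices of $\textbf{T}$ first, one at a time, as simplicial vertices of the current induced subgraph, so that the initial segment of the PEO is exactly $V_1,\ldots,V_{|\textbf{T}|}$. This is false in general: take the chain component to be the path $A - B - X - C$ with $\textbf{S}=\{C\}$ and $\textbf{T}=\{B\}$. Here $B$ is not simplicial (its neighbors $A$ and $X$ are non-adjacent), so you cannot remove $B$ first; you must remove $A$ before $B$. Your proposed justification via ``the two-simplicial-vertex property together with Rule~1 of Meek'' does not rescue this, since within a chain component there are no directed edges for Rule~1 to act on.

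The fix is not to insist on removing $\textbf{T}$ first, but to use the fact that $\textbf{S}\cup\{X\}$ is a clique in the chordal chain component $\mathcal{C}$. A standard result on chordal graphs says that any clique can be placed as a suffix of some PEO (extend it to a maximal clique and use that every maximal clique is the end of some PEO). Put $\textbf{S}\cup\{X\}$ last, with $X$ immediately preceding the vertices of $\textbf{S}$; then when $X$ is eliminated its remaining neighbors are exactly $\textbf{S}$, and all of $\textbf{T}$ has necessarily been removed earlier. This gives the required orientation directly. Your ``cleaner route'' via Meek closure is also viable, but the verification you sketch (``directed cycles can be excluded by checking that no edge \ldots gets propagated backwards'') is too vague to count as a proof and would need to be made precise.
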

	
	Given a CPDAG ${\cal G}^*$, a treatment $X$ and $\textbf{S}\subseteq sib(X, \mathcal{G}^*)$, and suppose that there is a DAG $\mathcal{G}\in [{\cal G}^*]$ such that $pa(X, \mathcal{G}) = \textbf{S}\cup pa(X, {\cal G}^*)$ and $ch(X, \mathcal{G}) = sib(X, {\cal G}^*)\cup ch(X, {\cal G}^*)\setminus \textbf{S}$. Regarding $\textbf{S}\to X$ and $X\to sib(X, {\cal G}^*)\setminus \textbf{S}$ as direct causal constraints and denote them by ${\cal K}$, the MPDAG $\cal H$ of $[{\cal G}^*, {\cal K}]$ is a chain graph~\citep[Theorem~6]{he2008active}. Suppose that, apart from ${\cal K}$, we have another DCC set ${\cal K}'$. The following lemma extends Theorem~\ref{thm:consistency} and gives a sufficient and necessary condition to check whether ${\cal K}'$ is consistent with ${\cal G}^*$ given ${\cal K}$ (that is, ${\cal K}'\cup{\cal K}$ is consistent with ${\cal G}^*$).
	
	\begin{lemma}\label{lemma:consist}
		Let ${\cal G}^*$ be a CPDAG, $X$ be a variable in ${\cal G}^*$ and $\textbf{S}\subseteq sib(X, \mathcal{G}^*)$. Let ${\cal K}=\{S\to X\mid S\in \textbf{S}\}\cup \{X\to C\mid C\in sib(X, {\cal G}^*)\setminus \textbf{S}\}$ and assume that ${\cal K}$ is consistent with ${\cal G}^*$. Denote by $\cal H$ the MPDAG of $[{\cal G}^*, {\cal K}]$. For any DCC set ${\cal K}'$, the following two statements are equivalent.
		\begin{enumerate}
			\item[(i)]  ${\cal K}'$ is consistent with $\mathcal{G}^*$ given ${\cal K}$.
			\item[(ii)]  Any connected undirected induced subgraph of $\cal H$ has a potential leaf node with respect to ${\cal K}'$ and $\cal H$.
		\end{enumerate}
	\end{lemma}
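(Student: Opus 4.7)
The plan is to reduce the claim to the analogue of Theorem~\ref{thm:consistency} with the CPDAG ${\cal G}^*$ replaced by the MPDAG $\mathcal{H}$. Since $\mathcal{K}$ consists entirely of direct causal constraints and is consistent with ${\cal G}^*$, the classical result of \citet{meek1995causal} cited in Section~\ref{sec:sec:interpretbg} implies that $\mathcal{H}$ is fully informative, i.e., $[{\cal G}^*, \mathcal{K}] = [\mathcal{H}]$. Consequently $\mathcal{K} \cup \mathcal{K}'$ is consistent with ${\cal G}^*$ if and only if $[\mathcal{H}, \mathcal{K}'] \neq \varnothing$, so it suffices to show that the latter holds exactly when every connected undirected induced subgraph of $\mathcal{H}$ has a potential leaf node with respect to $\mathcal{K}'$ and $\mathcal{H}$, where the reduced form, restriction subset, and potential leaf node are defined as in Definitions~\ref{def:reducedf},~\ref{def:restrict} and~\ref{def:pln} but with $\mathcal{H}$ in place of ${\cal G}^*$.

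For necessity, I take an arbitrary $\mathcal{G} \in [\mathcal{H}, \mathcal{K}']$ and a connected undirected induced subgraph $\mathcal{U}$ of $\mathcal{H}$. Because $\mathcal{H}$ is a chain graph (Theorem~\ref{the:MPDAG}(i)), $\mathbf{V}(\mathcal{U})$ lies inside a single chain component, so $\mathcal{U}$ coincides with the induced subgraph of $\mathcal{H}$ on $\mathbf{V}(\mathcal{U})$ and has the same skeleton as the induced subgraph of $\mathcal{G}$ on $\mathbf{V}(\mathcal{U})$. The latter is a DAG and admits a leaf node; the argument of Lemma~\ref{lem:app-pln} then shows this leaf node is a potential leaf node in $\mathcal{U}$ with respect to $\mathcal{K}'$ and $\mathcal{H}$. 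For sufficiency, I mimic the construction in the sufficiency half of Theorem~\ref{thm:consistency}. The chain components of $\mathcal{H}$ are chordal by Theorem~\ref{the:MPDAG}(ii); working inside each chain component $\mathcal{C}$ independently, the potential-leaf-node assumption lets me iteratively strip off potential leaf nodes, producing a perfect elimination ordering of $\mathcal{C}$ whose corresponding directed orientation respects every DCC in $\mathcal{K}'(\mathcal{C})$. Orienting each chain component in this way and combining with the directed edges already present in $\mathcal{H}$ yields a DAG $\mathcal{G}$; $\mathcal{G}$ belongs to $[\mathcal{H}]$ because orienting within chain components of a chain graph does not create directed cycles or new v-structures (by the observations in Appendix~\ref{app:proof:pre} applied to $\mathcal{H}$), and $\mathcal{G}$ satisfies every DCC in $\mathcal{K}'$ by the same clause-by-clause argument used in Theorem~\ref{thm:consistency}.

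The main obstacle, and essentially the only substantive point, is checking that the proofs of Lemma~\ref{lem:app-pln} and of Theorem~\ref{thm:consistency} extend verbatim when ${\cal G}^*$ is replaced by $\mathcal{H}$. Those CPDAG proofs rely on three structural facts: every partially directed cycle is undirected, the chain components are chordal, and any vertex outside a chain component is either a parent of all or of none of its vertices. Conditions (i)--(iii) of Theorem~\ref{the:MPDAG} guarantee precisely these three properties for $\mathcal{H}$, so no new ideas beyond this translation are required; the rest of the argument is bookkeeping on the restricted DCC sets $\mathcal{K}'(\mathcal{U})$.
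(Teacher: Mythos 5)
Your proposal is correct and follows essentially the same route as the paper: both reduce the claim to the chain graph $\mathcal{H}$ via $[\mathcal{G}^*,\mathcal{K}]=[\mathcal{H}]$ (direct causal constraints only), reuse the argument of Lemma~\ref{lem:app-pln} for necessity, and for sufficiency decompose over the chain components of $\mathcal{H}$ (justified exactly as you do, by the chain graph property and condition (iii) of Theorem~\ref{the:MPDAG} ruling out new directed cycles and v-structures) before running the perfect-elimination-ordering construction from Theorem~\ref{thm:consistency} with the $\mathcal{H}$-relative reduced forms and restriction subsets. Your closing checklist of the three structural facts needed for the transfer matches the paper's own justification, so no gap remains.
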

	
	Note that, in the main text the potential leaf node is defined over a CPDAG instead of an MPDAG. Therefore, for the sake of rigor, we extend the related definitions to chain graph causal MPDAG in the following, where a chain graph causal MPDAG is a causal MPDAG which itself is a chain graph.
	
	\begin{definition}\label{def:reducedf:2}
		Given a chain graph causal MPDAG $\cal H$ and a set  of DCCs $\cal K$ over ${\bf V}({\cal H})$, a reduced form of $\cal K$ with respect to $\cal H$, denoted by ${\cal K}({\cal H})$, is defined as as follows.
		\begin{equation}\label{eq:n_h}
			{\cal K}({\cal H})\coloneqq\{\kappa_t\tor \left(\kappa_h \cap sib(\kappa_t, {\cal H})\right)  \mid  \kappa \in {\cal K} \;\text{and}\; \kappa_h\cap ch(\kappa_t, {\cal H})=\varnothing\}.
		\end{equation}
	\end{definition}
	
	Similar to Proposition~\ref{pro:reducedf}, it is easy to verify that a DAG in $[\cal H]$ satisfies all constraints in ${\cal K}({\cal H})$ if and only if it satisfies all constraints in ${\cal K}$.
	
	\begin{definition}\label{def:restrict:2}
		Given an undirected induced subgraph $\mathcal{U}$ of a chain graph causal MPDAG $\cal H$ over $\mathbf{V}(\mathcal{U})\subseteq {\bf V}({\cal H})$, and a set of DCCs $\cal K$ over ${\bf V}({\cal H})$, the restriction subset of $\cal K$   on $\mathcal{U}$ given $\cal H$ is defined by
		\begin{equation}\label{eq:n_hh}
			{\cal K}(\mathcal{U}\mid{\cal H})\coloneqq\{\kappa\in {\cal K}({\cal H}) \mid \{\kappa_t\}\cup\kappa_h\subseteq \mathbf{V}(\mathcal{U})\}.
		\end{equation}
	\end{definition}

	\begin{definition}\label{def:pln:2}
		Let $\cal H$ be a chain graph causal MPDAG and $\cal K$ be a set of DCCs over ${\bf V}({\cal H})$. Given an undirected induced subgraph $\mathcal{U}$ of $\cal H$ and a vertex $X$ in $\mathcal{U}$, $X$ is called a potential leaf node in $\mathcal{U}$ with respect to $\cal K$ and $\cal H$, if $X$ is a simplicial vertex in $\mathcal{U}$ and $X$ is not the tail of any clause in ${\cal K}(\mathcal{U}\mid{\cal H})$.
	\end{definition}
	
	\begin{proof}[proof of Lemma~\ref{lemma:consist}]
		The proof is similar to that of Theorem~\ref{thm:consistency}.	We first prove the necessity.  If ${\cal K}'$ is consistent with $\mathcal{G}^*$ given ${\cal K}$, then there exists a DAG ${\cal G}\in[{\cal H}]$ satisfying all constraints in ${\cal K}'$. Let $\mathcal{U}$ be an arbitrary connected undirected induced subgraph of ${\cal H}$, and denote the induced subgraph of $\cal G$ over $\mathbf{V}(\mathcal{U})$ by ${\cal G}_{\mathrm{sub}}$. Since any induced subgraph of a DAG is still a DAG, ${\cal G}_{\mathrm{sub}}$ is a DAG, and thus it must have a leaf node $V_{\mathrm{leaf}}$. As we assume that ${\cal H}$ is a chain graph, following exactly the same argument for proving Lemma~\ref{lem:app-pln}, we can show that  $V_{\mathrm{leaf}}$ is a potential leaf node in $\mathcal{U}$ with respect to ${\cal K}'$ and ${\cal H}$.
		
		We next prove the sufficiency. Since ${\cal H}$ is a chain graph, no orientation of the edges not oriented in ${\cal H}$ will create a directed cycle which includes an edge or edges that were oriented in ${\cal H}$. Moreover, based on statement (iii) of Theorem~\ref{the:MPDAG}, no orientation of an edge not	directed in ${\cal H}$ can create a new v-structure with an edge that was oriented in ${\cal H}$. Then, following the same argument for proving Lemma~\ref{lem:app-equi}, we can show that ${\cal K}'$ is consistent with ${\cal G}^*$ given ${\cal K}$ if and only if ${\cal K}'({\cal C}\mid{\cal H})$ is consistent with ${\cal C}$ for any chain component ${\cal C}$ of ${\cal H}$. The desired result comes from the same construction of PEO given in the proof of Theorem~\ref{thm:consistency}.    	
	\end{proof}
	
	In order to prove Theorem \ref{thm:local}, we prove the following Theorem~\ref{thm:local:2},  which includes the result provided in Theorem \ref{thm:local}.
	
	\begin{thmbis}{thm:local}\label{thm:local:2}
		Let $\cal K$ be a set of DCCs consistent with a CPDAG ${\cal G}^*$, and $\mathcal{H}$ be the MPDAG of $[{\cal G}^*, {\cal K}]$. For any vertex $X$ and $\textbf{S}\subseteq sib(X, \mathcal{H})$, let
		\[\mathbf{T}=\{X\}\cup \left( (pa(X, \mathcal{H}) \cup \textbf{S})\cap sib(X, {\cal G}^*)\right),\]
		and
		\[D_X=\{u\to X \mid u\in pa(X, \mathcal{H}) \cup \textbf{S}\} \cup \{X\to v \mid v\in sib(X, \mathcal{H})\cup ch(X, \mathcal{H})\setminus \textbf{S}\}.\]
		Then, the following statements are equivalent.
		%\vspace{-0.5em}
		\begin{enumerate}
			\item[(i)]  There is a DAG $\mathcal{G}$ in $[{\cal G}^*, {\cal K}]$ such that $pa(X, \mathcal{G}) = \textbf{S}\cup pa(X, \mathcal{H})$ and $ch(X, \mathcal{G}) = sib(X, \mathcal{H})\cup ch(X, \mathcal{H})\setminus \textbf{S}$.
			%\vspace{-0.15em}
			
			\item[(ii)]  The induced subgraph of $\mathcal{H}$ over $\textbf{S}$ is complete and the restriction subset of ${\cal K}\cup D_X$ on ${\cal G}^*({\bf M}_{\mathbf{T}})$ given ${\cal G}^*$ is consistent with ${\cal G}^*({\bf M}_{\mathbf{T}})$ for all maximal clique ${\bf M}_{\mathbf{T}}$ of ${\cal G}^*$ containing $\bf T$.
			
			\item[(iii)] Given ${\cal G}^*$, the restriction subset of ${\cal K}\cup D_X$ on ${\cal G}^*(\{X\}\cup sib(X, {\cal G}^*))$ is consistent with ${\cal G}^*(\{X\}\cup sib(X, {\cal G}^*))$.
		\end{enumerate}
	\end{thmbis}
	
	\begin{proof}[Proof of Theorem \ref{thm:local:2}]
		By Theorem~\ref{thm:consistency}, statement (i) implies statement (iii) and statement (iii) $\Rightarrow$ (ii). Thus, we only prove (ii) $\Rightarrow$ (i) in the following.
		
		To prove (ii) $\Rightarrow$ (i), it suffices to show that ${\cal K}\cup D_X$ is consistent with ${\cal G}^*$. According to Lemma~\ref{lem:app-equi}, we can consider each chain component of ${\cal G}^*$ separately. Therefore, without loss of generality, we can assume that ${\cal G}^*$ is a connected chordal graph, and that ${\cal K}$ is already in its reduced form with respect to ${\cal G}^*$. This means we have that,
		\begin{enumerate}
			\item[(P1)] for any $\kappa\in {\cal K}$, $\kappa_h\subseteq sib(\kappa_t, {\cal G}^*)= adj(\kappa_t, {\cal G}^*)$, and the consistency of ${\cal K}$ with ${\cal G}^*$ indicates that $\kappa_h\neq\varnothing$.
		\end{enumerate}

		It can be seen from the definition of consistency that checking the consistency of ${\cal K}\cup D_X$ with  ${\cal G}^*$ is equivalent to the following two-steps procedure:
		\begin{enumerate}[label=\arabic*., leftmargin=0pt, labelwidth=*, align=left]
			\item[{\bf Step 1.}] checking whether $D_X$ is consistent with ${\cal G}^*$, and if the consistency holds, then
			\item[{\bf Step 2.}] checking whether ${\cal K}\setminus D_X$ is consistent with ${\cal G}^*$ given $D_X$.
		\end{enumerate}
		Note that, the intersection of ${\cal K}$ and $D_X$ may not be empty. Clearly, ${\cal K}\cup D_X$ is consistent if and only if neither of the above two steps returns a negative answer.

		We first prove that $D_X$ is consistent with ${\cal G}^*$, meaning that step 1 returns a positive answer. Based on statement (ii), the induced subgraph of $\cal H$ over $\textbf{S}$ is complete, indicating that the induced subgraph of ${\cal G}^*$ over $\textbf{S}$ is complete as $\cal H$ and ${\cal G}^*$ has the same skeleton. If $pa(X, {\cal H})=\varnothing$, then $pa(X, {\cal H})\cup \textbf{S}$ induces a complete subgraph. If $pa(X, {\cal H})\neq\varnothing$, then for any $p\in pa(X, {\cal H})$ and $S \in \textbf{S}$, $p$ and $S$ are adjacent in $\cal H$, since otherwise $X\to S$ should appear in $\cal H$ due to the maximality of $\cal H$. Moreover, since we have assumed that ${\cal G}^*$ is a connected chordal graph, $\cal H$ has no v-structure as discussed in Appendix~\ref{app:proof:pre}, and thus, $pa(X, {\cal H})$ induces a complete subgraph. Therefore, it holds that,
		\begin{enumerate}
			\item[(P2)] $pa(X, {\cal H})\cup \textbf{S}$ induces a complete subgraph of ${\cal G}^*$.
		\end{enumerate}
		By Lemma \ref{lem:local-valid-nbg} and the definition of $D_X$, $D_X$ is consistent with ${\cal G}^*$.
		
		Below we will show that ${\cal K}\setminus D_X$ is consistent with ${\cal G}^*$ given $D_X$. Note that, since $D_X$ contains direct causal constraints only, if we denote the MPDAG representing $[{\cal G}^*, D_X]$ by ${\cal C}^{*}$, then ${\cal C}^{*}$ is a chain graph MPDAG and $[{\cal C}^{*}]=[{\cal G}^*, D_X]$~\citep[Theorem~6]{he2008active}. Now, consider the following four subsets of ${\cal K}\setminus D_X$:
		\begin{align*}
			{\cal K}_p &= \{ \kappa \in {\cal K}\setminus D_X \mid \exists d\in \kappa_h, \kappa_t\to d \; \text{is in}\; {\cal C}^{*} \}, \\
			{\cal K}_u &= \{ \kappa\in {\cal K}\setminus D_X \mid \forall d\in \kappa_h, \kappa_t- d \; \text{is in}\; {\cal C}^{*}\}, \\
			{\cal K}_c  &= \{ \kappa\in {\cal K}\setminus D_X \mid \forall d\in \kappa_h, d\to \kappa_t \; \text{is in}\; {\cal C}^{*}\}, \\
			{\cal K}_r & = {\cal K}\setminus D_X \setminus \left( {\cal K}_p \cup {\cal K}_u \cup {\cal K}_c \right).
		\end{align*}
		It is easy to verify that ${\cal K}_p, {\cal K}_u, {\cal K}_c$ and ${\cal K}_r$ are disjoint and
		\[{\cal K}_r  = \{\kappa \in {\cal K}\setminus D_X \setminus {\cal K}_p \mid \exists d\in \kappa_h, d\to \kappa_t \; \text{is in}\; {\cal C}^{*} \; \text{and}\; \exists d\in \kappa_h, d- \kappa_t \; \text{is in}\; {\cal C}^{*}\}.\]
		Since the DCCs in ${\cal K}_p$ are already satisfied given ${\cal C}^{*}$, to prove that ${\cal K}\setminus D_X$ is consistent with ${\cal G}^*$ given $D_X$, it suffices to show that ${\cal K}_u \cup {\cal K}_c \cup {\cal K}_r$ is consistent with ${\cal G}^*$ given $D_X$.
		
		We will give a proof by contradiction. Assuming that ${\cal K}_u \cup {\cal K}_c \cup {\cal K}_r$ is not consistent with ${\cal G}^*$ given $D_X$, then either ${\cal K}_c \neq \varnothing$ or there exists a connected undirected induced subgraph $\cal W$ of ${\cal C}^{*}$ such that $\cal W$ has no potential lead node with respect to ${\cal K}_u \cup {\cal K}_r$ and ${\cal C}^{*}$, as indicated by Lemma~\ref{lemma:consist}.
		
		\vspace{0.5em}
		
		\textbf{Case 1.} Suppose that ${\cal K}_c \neq \varnothing$. Let $\kappa\in {\cal K}_c$ be an arbitrary DCC. By the definitions of ${\cal K}_c$ and ${\cal C}^*$ as well as the maximality of ${\cal C}^*$, it holds that,
		\begin{enumerate}
			\item[(P3)] the maximal orientation component for $\kappa_t$ with respect to ${\cal G}^*$ and $D_X$, denoted by $\mathcal{U}_t$, satisfies that $\kappa_h \subseteq pa(\kappa_t, {\cal C}^*) = adj(\kappa_t, \mathcal{U}_t)$.
		\end{enumerate}
		$pa(\kappa_t, {\cal C}^*) = adj(\kappa_t, \mathcal{U}_t)$ is because we have assumed that ${\cal G}^*$ does not contain any directed edge. By (P1), $\kappa_h\neq \varnothing$, and thus $adj(\kappa_t, \mathcal{U}_t)\neq \varnothing$ and $\mathcal{U}_t$ is not a singleton graph (that is,  $\mathcal{U}_t$ has at least two vertices).
		
		We first show that $X\in \mathbf{V}(\mathcal{U}_t)$. Assume, for the sake of contradiction, that $X\notin \mathbf{V}(\mathcal{U}_t)$, then the restriction subset of $D_X$ on $\mathcal{U}_t$ given ${\cal G}^*$ is empty, as every DCC in $D_X$ has $X$ as its tail or head. Thus, $\mathcal{U}_t$ is not a maximal orientation component for $\kappa_t$ with respect to $D_X$ and ${\cal G}^*$, since $\mathcal{U}_t$ is chordal and any connected chordal graph has at least two simplicial vertices. This leads to a contradiction.

		Recall that $\mathcal{U}_t$ is connected, hence $X\in \mathbf{V}(\mathcal{U}_t)$ implies that $adj(X, \mathcal{U}_t)\neq\varnothing$. The remaining proof is quite lengthy. We will consider two subcases: $adj(X, \mathcal{U}_t) \subseteq pa(X, {\cal H})\cup \mathbf{S}=pa(X, {\cal C}^*)$ and $adj(X, \mathcal{U}_t) \nsubseteq pa(X, {\cal H})\cup \mathbf{S}=pa(X, {\cal C}^*)$, and show that in both subcases statement (ii) does not hold. That is, the restriction subset of ${\cal K}\cup D_X$ on ${\cal G}^*({\bf M}_{\mathbf{T}})$ given ${\cal G}^*$ is not consistent with ${\cal G}^*({\bf M}_{\mathbf{T}})$ for some ${\bf M}_{\mathbf{T}}$. This completes the proof for case 1.
		
		\vspace{0.5em}
		
		\textbf{Case 1-1.} If $adj(X, \mathcal{U}_t) \subseteq pa(X, {\cal H})\cup \mathbf{S}=pa(X, {\cal C}^*)$, then $adj(X, \mathcal{U}_t)$ is a clique in $\mathcal{U}_t$ based on (P2). Thus, $X$ is a simplicial vertex in $\mathcal{U}_t$. On the other hand, $adj(X, \mathcal{U}_t) \subseteq pa(X, {\cal H})\cup \mathbf{S}$ implies that $D_X(\mathcal{U}_t)=\{u\to X \mid u\in adj(X, \mathcal{U}_t)\}$, hence $X$ is a potential leaf node in $\mathcal{U}_t$ with respect to $D_X$ and ${\cal G}^*$. By the definition of $\mathcal{U}_t$, it holds that $X=\kappa_t$. Thus, $\{X\}\cup \kappa_h \subseteq \{X\}\cup pa(X, {\cal H})\cup \mathbf{S}\subseteq \mathbf{M}_{\mathbf{T}}$ for some maximal clique $\mathbf{M}_{\mathbf{T}}$ of ${\cal G}^*$ containing $\mathbf{T}$. Recall that $\kappa\in {\cal K}_c \subseteq {\cal K}$, the restriction subset of ${\cal K}\cup D_X$ on ${\cal G}^*({\bf M}_{\mathbf{T}})$ given ${\cal G}^*$  contains $X\tor \kappa_h$ and $\kappa_h\to X$ both, which is not consistent with ${\cal G}^*({\bf M}_{\mathbf{T}})$.
		
		\vspace{0.5em}
		
		\textbf{Case 1-2.} If $adj(X, \mathcal{U}_t) \nsubseteq pa(X, {\cal H})\cup \mathbf{S}=pa(X, {\cal C}^*)$, then $adj(X, \mathcal{U}_t)\cap ch(X, {\cal C}^*)\neq\varnothing$. We first claim that,
		\begin{enumerate}
			\item[(P4)] none  of the vertices in $\mathbf{V}(\mathcal{U}_t)\setminus (\{\kappa_t, X\}\cup pa(X, {\cal C}^*))$ (possibly empty) is simplicial in $\mathcal{U}_t$.
		\end{enumerate}
		In fact, with respect to $D_X$ and ${\cal G}^*$, only $X$ and the vertices in $adj(X, \mathcal{U}_t)\cap pa(X, {\cal C}^*)$ (possibly empty) can be the tails of DCCs in $D_X(\mathcal{U}_t)$. By the definition of a potential leaf node and $\mathcal{U}_t$, the vertices in $\mathbf{V}(\mathcal{U}_t)\setminus (\{\kappa_t, X\}\cup pa(X, {\cal C}^*))$ are non-simplicial as they are not potential leaf nodes in $\mathcal{U}_t$ with respect to $D_X$ and ${\cal G}^*$.
		
		Now, denote by $\mathbf{L}$ the set of potential leaf nodes of $\mathcal{U}_t$ with respect to ${\cal K}_c$ and ${\cal G}^*$, then $\mathbf{L}\subseteq \{\kappa_t, X\}\cup pa(X, {\cal C}^*)$ based on (P4) as the vertices in $\mathbf{L}$ are simplicial in $\mathcal{U}_t$. However, as $\kappa_h \subseteq pa(\kappa_t, {\cal C}^*) = adj(\kappa_t, \mathcal{U}_t)$, if $X=\kappa_t$, then $adj(X, \mathcal{U}_t) = pa(X, {\cal C}^*)$, contradicted to the assumption of case 1-2. Hence, $X\neq \kappa_t$. Since $\kappa \in {\cal K}_c(\mathcal{U}_t)$, $\kappa_t$ is not a potential leaf node in $\mathcal{U}_t$ with respect to ${\cal K}_c$ and ${\cal G}^*$. Thus, it holds that,
		\begin{enumerate}
			\item[(P5)] the set of potential leaf nodes of $\mathcal{U}_t$ with respect to ${\cal K}_c$ and ${\cal G}^*$, denoted by $\mathbf{L}$, satisfies that $\mathbf{L}\subseteq \{X\}\cup pa(X, {\cal C}^*)$.
		\end{enumerate}

		\textbf{Case 1-2-1.} If $X$ is simplicial in $\mathcal{U}_t$, then $adj(X, \mathcal{U}_t)$ is a clique in $\mathcal{U}_t$. Recall that $adj(X, \mathcal{U}_t)\cap ch(X, {\cal C}^*)\neq\varnothing$, we next consider two possibilities depends on whether a vertex in $adj(X, \mathcal{U}_t)\cap ch(X, {\cal C}^*)$ is simplicial.
		
		\vspace{0.5em}
		First, suppose that there is a $c\in adj(X, \mathcal{U}_t)\cap ch(X, {\cal C}^*)$ which is simplicial in $\mathcal{U}_t$, then by (P4), $c\notin \mathbf{V}(\mathcal{U}_t)\setminus (\{\kappa_t, X\}\cup pa(X, {\cal C}^*))$, indicating that $c=\kappa_t$. Thus, $\kappa_t\in adj(X, \mathcal{U}_t)$.
		
		\begin{enumerate}
			\item[(i)] $adj(X, \mathcal{U}_t)=\{\kappa_t\}$. We claim that this is an impossible case. In fact, it holds that $adj(\kappa_t, \mathcal{U}_t)=\{X\}$, since otherwise the vertices in $adj(\kappa_t, \mathcal{U}_t)$ are adjacent to $X$ due to the simplicity of $\kappa_t$, which contradicts $|adj(X, \mathcal{U}_t)|=1$. Thus, $\kappa_h=\{X\}$ and $\kappa_t\to X$ is in $\cal H$, which contradicts the assumption that $\kappa_t\in ch(X, {\cal C}^*) = ch(X, {\cal H})\cup sib (X, {\cal H}) \setminus \mathbf{S}$.
			
			\item[(ii)] $adj(X, \mathcal{U}_t)\setminus\{\kappa_t\}\neq\varnothing$ and $adj(X, \mathcal{U}_t)\setminus\{\kappa_t\}= pa(X, {\cal C}^*)$. Since $\kappa_t, X$ are both simplicial in $\mathcal{U}_t$, $\kappa_h \subseteq adj(\kappa_t, \mathcal{U}_t) = \{X\}\cup adj(X, \mathcal{U}_t)\setminus\{\kappa_t\}= \{X\}\cup pa(X, {\cal C}^*)={\mathbf{T}}$. Hence, $\{\kappa_t\}\cup {\mathbf{T}}$ is a clique in ${\cal G}^*$. It is then straightforward to check that the restriction subset of ${\cal K}\cup D_X$ on ${\cal G}^*({\bf M}_{\mathbf{T}})$ given ${\cal G}^*$ is not consistent with ${\cal G}^*({\bf M}_{\mathbf{T}})$ for any ${\bf M}_{\mathbf{T}}$ containing $\{\kappa_t\}\cup {\mathbf{T}}$.
			
			\item[(iii)] $adj(X, \mathcal{U}_t)\setminus\{\kappa_t\}\neq\varnothing$ and $adj(X, \mathcal{U}_t)\setminus\{\kappa_t\} \subsetneq pa(X, {\cal C}^*)$. Below we show that this is also an impossible case. For any $p \in pa(X, {\cal C}^*)$ which is not in $\mathcal{U}_t$, $p$ is not adjacent to $\kappa_t$, since otherwise $p\to \kappa_t$ is in ${\cal C}^*$ by Rule 1 of Meek's rules and $p$ should be included in $\mathcal{U}_t$. Denote by $\cal T$ the induced subgraph of ${\cal G}^*$ over $\{p, \kappa_t\}\cup adj(\kappa_t, \mathcal{U}_t)$. As $adj(\kappa_t, \mathcal{U}_t)\subseteq \{X\}\cup pa(X, {\cal C}^*)$ and $\{X\}\cup pa(X, {\cal C}^*)$ induces a complete subgraph of ${\cal G}^*$, $p\in pa(X, {\cal C}^*)$ is adjacent to every vertex in $adj(\kappa_t, \mathcal{U}_t)$. Therefore, $p$ and $\kappa_t$ are all and only simplicial vertices in $\cal T$. Moreover, due to the consistency of ${\cal K}_c$ and the fact that $\kappa\in {\cal K}_c$, $\cal T$ is an orientation component for $p$ with respect to ${\cal K}_c$ and ${\cal G}^*$. Therefore, $adj(\kappa_t, \mathcal{U}_t) \to p$ are in $\cal H$. In particular, $X\to p$ is in $\cal H$, which is contrary to the assumption that $p\in pa(X, {\cal C}^*) \subseteq pa(X, {\cal H})\cup sib(X, {\cal H})$.
			
			\item[(iv)] $adj(X, \mathcal{U}_t)\setminus\{\kappa_t\}\neq\varnothing$ and $adj(X, \mathcal{U}_t)\cap ch(X, {\cal C}^*)\setminus\{\kappa_t\} \neq\varnothing$. This case is not possible either. Notice that none of the vertices in $adj(X, \mathcal{U}_t)\cap ch(X, {\cal C}^*)\setminus\{\kappa_t\}$ is simplicial in $\mathcal{U}_t$, we have that $\mathcal{U}_t$ is not complete, and thus $\mathcal{U}_t$ must have two non-adjacent simplicial vertices. Since $X$ and $\kappa_t$ are two adjacent simplicial vertices in $\mathcal{U}_t$, there must exist another simplicial vertex $w$ in $\mathcal{U}_t$ which is not adjacent to $X$ and not adjacent to $\kappa_t$. However, as $w\neq X$ and $w$ is not adjacent to $X$, $w$ is not the tail of any DCC in $D_X$. It implies that $w\neq \kappa_t$ is a potential leaf node of $\mathcal{U}_t$ with respect to $D_X$ and ${\cal G}^*$. This leads to a contradiction since we assume that $\mathcal{U}_t$ is the maximal orientation component for $\kappa_t$ with respect to $D_X$ and ${\cal G}^*$.
		\end{enumerate}
		
		\vspace{0.5em}
		
		Next, suppose that none of the vertices in $adj(X, \mathcal{U}_t)\cap ch(X, {\cal C}^*)$ is simplicial in $\mathcal{U}_t$. Denote by $\mathbf{R}$ (possibly empty) the set of simplicial vertices in $\mathcal{U}_t$ which are adjacent to $X$.
		% Clearly, there is a unique  maximal clique of $\mathcal{U}_t$ containing $\{X\}\cup \mathbf{R}$, and $\{X\}\cup \mathbf{R}$ are all and only simplicial vertices in that maximal clique.
		Let $\cal T$ be the induced subgraph of $\mathcal{U}_t$ over $\mathbf{V}(\mathcal{U}_t)\setminus \mathbf{R}$. We will show that,
		\begin{enumerate}
			\item[(P6)] $\cal T$ is an orientation component for $X$ with respect to ${\cal K}_c$ and ${\cal G}^*$.
		\end{enumerate}
		Since $X$ is a simplicial node in $\mathcal{U}_t$, $X$ must be simplicial in $\cal T$. Hence, by Theorem~\ref{thm:consistency} and the consistency of ${\cal K}_c$, it suffices to show that for any $Y\in \mathbf{V}({\cal T})\setminus\{X\}$, $Y$ is not a potential leaf node in ${\cal T}$ with respect to ${\cal K}_c$ and ${\cal G}^*$. The proof consists of two claims:
		
		\begin{enumerate}
			\item [(i)]  For any $Y\in \mathbf{V}({\cal T})\setminus\{X\}$, $Y$ is not simplicial in $\mathcal{U}_t$ implies that $Y$ is not simplicial in $\cal T$. Suppose that there exists a $Y\in \mathbf{V}({\cal T})\setminus\{X\}$ which is not simplicial in $\mathcal{U}_t$ but simplicial in $\cal T$, then $adj(Y, \mathcal{U}_t)\setminus adj(Y, \mathcal{T}) \neq\varnothing$. This implies that $\mathbf{R}\neq \varnothing$ as $adj(Y, \mathcal{U}_t)\setminus adj(Y, \mathcal{T}) \subseteq \mathbf{V}(\mathcal{U}_t)\setminus \mathbf{V}({\cal T})=\mathbf{R}$. On the other hand, since every vertex in  $\mathbf{R}$ is simplicial in $\mathcal{U}_t$, $\mathbf{R} \subseteq adj(Y, \mathcal{U}_t)$, meaning that $adj(Y, \mathcal{U}_t)\setminus adj(Y, \mathcal{T}) = \mathbf{R}$. Moreover, $\mathbf{R} \subseteq adj(X, \mathcal{U}_t)$ implies that $Y$ is also adjacent to $X$. Notice that $adj(Y, {\cal T})$ is a clique and $X\in adj(Y, {\cal T})$, $adj(Y, \mathcal{U}_t)$ is also a clique, which contradicts our assumption.
			
			\item[(ii)] For any $Y\in \mathbf{V}({\cal T})\setminus\{X\}$, $Y$ is simplicial in $\mathcal{U}_t$ implies that $Y$ is the tail of some DCC in ${\cal K}_c(\cal T)$. In fact, if $Y\notin \mathbf{R}\cup \{X\}$ is simplicial in $\mathcal{U}_t$, then it is simplicial in $\cal T$, and there is a DCC in ${\cal K}_c(\mathcal{U}_t)$ whose tail is $Y$, since otherwise $Y$ is a potential leaf node in $\mathcal{U}_t$ with respect to ${\cal K}_c$ and ${\cal G}^*$, which means $Y\in adj(X, \mathcal{U}_t)\cap pa(X, {\cal C}^*)$ and thus $Y\in \mathbf{R}$, according to (P5). If $Y$ is not the tail of any DCC in ${\cal K}_c(\cal T)$, then $adj(Y, {\cal T})\subsetneq adj(Y, {\mathcal{U}_t})$, meaning that $adj(Y, {\mathcal{U}_t})\cap \mathbf{R}\neq\varnothing$. Since the vertices in $\mathbf{R}$ are simplicial in $\mathcal{U}_t$ and $\mathbf{R}\subseteq adj(X, \mathcal{U}_t)$, $Y$ is adjacent to $X$ in $\mathcal{U}_t$. Finally, as $Y$ is simplicial in $\mathcal{U}_t$, by the construction, we have $Y\in \mathbf{R}$ and hence $Y\notin\mathbf{V}({\cal T})$, which contradicts our assumption.
		\end{enumerate}

		In conclusion, $\cal T$ is an orientation component for $X$ with respect to ${\cal K}_c$ and ${\cal G}^*$. Note that $adj(X, \mathcal{U}_t)\cap ch(X, {\cal C}^*)\neq\varnothing$ and $adj(X, \mathcal{U}_t)\cap ch(X, {\cal C}^*)\cap \mathbf{R}=\varnothing$, $adj(X, \mathcal{U}_t)\cap ch(X, {\cal C}^*) \subseteq adj(X, {\cal T})$. Hence, $adj(X, \mathcal{U}_t)\cap ch(X, {\cal C}^*) \to X$ are in $\cal H$, which contradicts the assumption that $ch(X, {\cal C}^*) \subseteq ch(X, {\cal H})\cup sib (X, {\cal H}) \setminus \mathbf{S}$.
		
		\vspace{0.5em}
		
		\textbf{Case 1-2-2.} Suppose that $X$ is not simplicial in $\mathcal{U}_t$. We first show that there is a $\phi\in adj(X, \mathcal{U}_t)\cap pa(X, {\cal C}^*)$ which is a potential leaf node in $\mathcal{U}_t$ with respect to ${\cal K}_c$ and ${\cal G}^*$. In fact, with respect to ${\cal K}_c$ and ${\cal G}^*$, $\mathcal{U}_t$, which is a connected undirected induced subgraph of ${\cal G}^*$, must have a potential leaf node, since ${\cal K}_c\subseteq {\cal K}$ and ${\cal K}$ is consistent with ${\cal G}^*$. By (P5), these potential leaf nodes are in $\{X\}\cup pa(X, {\cal C}^*)$. However, as $X$ is not simplicial in $\mathcal{U}_t$ by our assumption, the potential leaf nodes are all in $pa(X, {\cal C}^*)$.
		
		Now let $\phi\in adj(X, \mathcal{U}_t)\cap pa(X, {\cal C}^*)$ be a potential leaf node in $\mathcal{U}_t$ with respect to ${\cal K}_c$ and ${\cal G}^*$. Denote by $\mathbf{R}$ (possibly empty) the set of simplicial vertices in $\mathcal{U}_t$ which are adjacent to $\phi$. Clearly, $\{\phi\}\cup \mathbf{R}$ is a clique of $\mathcal{U}_t$. Let $\cal T$ be the induced subgraph of $\mathcal{U}_t$ over $\mathbf{V}(\mathcal{U}_t)\setminus \mathbf{R}$. By the similar argument given to prove (P6), we can prove that $\cal T$ is an orientation component for $\phi$ with respect to ${\cal K}_c$ and ${\cal G}^*$. Note that $X\in adj(\phi, \mathcal{U}_t)$ and $X\notin \mathbf{R}$, $X\in adj(\phi, \mathcal{T})$. Hence, $X \to \phi$ is in $\cal H$, which contradicts the assumption that $\phi \in pa(X, {\cal H})\cup \mathbf{S}$.
		
		\vspace{0.5em}
		
		\textbf{Case 2.} Suppose that ${\cal K}_c = \varnothing$, then there exists a connected undirected induced subgraph $\cal W$ of ${\cal C}^{*}$ such that $\cal W$ has no potential lead node with respect to ${\cal K}_u \cup {\cal K}_r$ and ${\cal C}^{*}$ (Lemma~\ref{lemma:consist}). Note that, since ${\cal C}^{*}$ is a chain graph, for any DCC in ${\cal K}_u$, its heads and tail are in the same chain component of ${\cal C}^{*}$. As $\cal K$ is consistent with ${\cal G}^*$, ${\cal K}_u \subseteq {\cal K}$ is also consistent with ${\cal G}^*$. Then, by Lemma~\ref{lemma:consist}, we can conclude that,
		\begin{enumerate}
			\item[(P7)] ${\cal K}_u$ is consistent with ${\cal G}^*$ given $D_X$ because ${\cal K}_u({\cal C}^{*})={\cal K}_u={\cal K}_u({\cal G}^{*})$ and any connected undirected induced subgraph of ${\cal C}^{*}$ is also a connected undirected induced subgraph of ${\cal G}^{*}$.
		\end{enumerate}
		Since $\cal W$ is undirected and connected, $\cal W$ must be an induced subgraph of some chain component of ${\cal C}^{*}$.
		% If $\mathbf{V}(\cal W) \subseteq {\bf M}_{\mathbf{T}}$ for some ${\bf M}_{\mathbf{T}}$, then $\cal W$ is an induced subgraph of ${\cal G}^*({\bf M}_{\mathbf{T}})$ by Theorem \ref{thm:consistency}, the restriction subset of ${\cal K}\cup D_X$ on ${\cal G}^*({\bf M}_{\mathbf{T}})$ given ${\cal G}^*$ is not consistent with ${\cal G}^*({\bf M}_{\mathbf{T}})$, which completes the proof. Suppose that $\mathbf{V}(\cal K) \nsubseteq {\cal M}_{\mathbf{T}}$ for any ${\cal M}_{\mathbf{T}}$.
		Denote by $\mathbf{A}$ the set of all potential leaf nodes of $\cal W$ with respect to ${\cal K}_u$ and ${\cal C}^{*}$. As ${\cal K}_u$ is consistent with ${\cal G}^*$ given $D_X$, $\mathbf{A}\neq\varnothing$. Notice that, $({\cal K}_u \cup {\cal K}_r)({\cal W}\mid {\cal C}^{*}) = {\cal K}_u({\cal W}\mid {\cal C}^{*}) \cup {\cal K}_r({\cal W}\mid {\cal C}^{*})$, $\cal W$ has no potential leaf node with respect to ${\cal K}_u \cup {\cal K}_r$ and ${\cal C}^{*}$ implies that every vertex in $\mathbf{A}$ is the tail of some DCC in ${\cal K}_r({\cal W}\mid {\cal C}^{*})$. That is,
		\begin{enumerate}
			\item[(P8)] for any $a \in \mathbf{A}$ there is a DCC $\kappa\coloneqq(a\tor\kappa_h)\in {\cal K}_r$  such that (i) $\kappa_h\cap sib(a, {\cal C}^*) \neq \varnothing$ and  $\kappa_h\cap sib(a, {\cal C}^*) \subseteq \mathbf{V}(\cal W)$,  (ii) $\kappa_h\setminus\mathbf{V}(\cal W)\neq\varnothing$ and $\kappa_h\setminus\mathbf{V}({\cal W})\subseteq pa(a, {\cal C}^*)$, and (iii) $\kappa_h\subseteq \mathbf{V}({\cal W})\cup pa(a, {\cal C}^*)$.
		\end{enumerate}

		Now consider ${\cal G}^*$ and $D_X$. For any $a\in \mathbf{A}$, let $\mathcal{U}_a$ be the maximal orientation component for $a$ with respect to $D_X$ in ${\cal G}^*$. It is clear that $pa(a, {\cal C}^*) = adj(a, \mathcal{U}_a)$. Since ${\cal C}^*$ is a chain graph causal MPDAG and $\mathbf{A}\subseteq \mathbf{V}(\cal W)$ and $\cal W$ is an induced subgraph of some chain component of ${\cal C}^{*}$, by Theorem~\ref{the:MPDAG}, $pa(a, {\cal C}^*) = pa(a', {\cal C}^*)$ for any $a, a'\in \mathbf{A}$. Thus, $adj(a, \mathcal{U}_a) = adj(a', \mathcal{U}_{a'})$ for any $a, a'\in \mathbf{A}$.
		
		On the other hand, following the same argument for case 1, it can be shown that $X\in \mathbf{V}(\mathcal{U}_a)$ for any $a\in \mathbf{A}$. Moreover, if $adj(X, \mathcal{U}_a) \cap ch(X, {\cal C}^*)=\varnothing$ for some $a\in \mathbf{A}$, then we can prove that $X$ is a potential leaf node in $\mathcal{U}_a$ with respect to $D_X$ and ${\cal G}^*$, which means $X=a$, and consequently,  $X \in \mathbf{V}(\cal W)$. As $\cal W$ is an induced subgraph of some chain component of ${\cal C}^*$ but $X$ has no siblings in  ${\cal C}^*$,  $\mathbf{V}({\cal W})=\{X\}$. This is impossible since $\mathbf{V}({\cal W})=\{X\}$ implies that $\kappa_h\cap sib(a, {\cal C}^*)=\varnothing$, contrary to (P8). Therefore, we have that,
		\begin{enumerate}
			\item[(P9)] for all $a\in \mathbf{A}$, $adj(X, \mathcal{U}_a) \cap ch(X, {\cal C}^*)\neq\varnothing$.
		\end{enumerate}
		Moreover, following the same argument for proving (P4), it can be shown that,
		\begin{enumerate}
			\item[(P10)] none of the vertices in $\mathbf{V}(\mathcal{U}_a)\setminus (\{a, X\}\cup pa(X, {\cal C}^*))$ (possibly empty) is simplicial in $\mathcal{U}_a$.
		\end{enumerate}

		The rest of the proof is similar to that for case 1-2. Let
		\[\mathbf{F}=\mathbf{V}({\cal W})\bigcup_{a\in \mathbf{A}} \mathbf{V}(\mathcal{U}_a),\]
		and
		\[\mathbf{N}=\bigcup_{a\in \mathbf{A}} \mathbf{V}(\mathcal{U}_a) \setminus \left( \{X\}\cup \mathbf{A} \cup pa(X, {\cal C}^*) \right).\]
		Denote by ${\cal F}$ the induced subgraph of ${\cal G}^*$ over $\mathbf{F}$. Firstly, since $\mathbf{V}(\mathcal{U}_a)\subseteq \mathbf{F}$ and $\mathbf{N}\subseteq \mathbf{F}$, by (P10), it holds that
		\begin{enumerate}
			\item[(P11)] none of the vertices in $\mathbf{N}$ is simplicial in ${\cal F}$.
		\end{enumerate}
		Moreover, by the definition of $\mathbf{A}$, every vertex in $\mathbf{V}({\cal W})\setminus\mathbf{A}$ is either non-simplicial in $\cal W$, or the tail of a DCC in ${\cal K}_u({\cal W}\mid {\cal C}^*)$. Thus,
		\begin{enumerate}
			\item[(P12)] $w\in\mathbf{V}({\cal W})\setminus\mathbf{A}$ is non-simplicial in $\cal W$ implies that $w$ is non-simplicial in $\cal F$, and $w\in\mathbf{V}({\cal W})\setminus\mathbf{A}$ is the tail of a DCC in ${\cal K}_u({\cal W}\mid {\cal C}^*)$ implies that $w$ is also the tail of a DCC in ${\cal K}_u({\cal W}\mid {\cal G}^*)$ based on (P7).
		\end{enumerate}
		Finally, for set $\mathbf{A}$, we have that,
		\begin{enumerate}
			\item[(P13)] for any $a \in \mathbf{A}$, $a$ is simplicial in $\cal F$, and is also the tail of some DCC in ${\cal K}_r({\cal F}\mid {\cal G}^*)$.
		\end{enumerate}
		The first claim comes from the simplicity of $a$ in $\cal W$ as well as the fact that
		\[adj(a, {\cal F})=adj(a, {\cal W})\bigcup_{a'\in \mathbf{A}} adj(a', {\cal U}_{a'})\\ =adj(a, {\cal W})\cup adj(a, {\cal C}^*)\\ =  adj(a, {\cal W})\cup pa(a, {\cal C}^*),\]
		where the second equality holds because $adj(a, \mathcal{U}_a) = adj(a', \mathcal{U}_{a'})$ for any $a, a'\in \mathbf{A}$, and the third equality holds because of the definition of ${\cal U}_a$. The second claim holds because of the above equation and (P8)-(iii).

		Below we will consider two subcases depending on whether $X$ is simplicial in ${\cal F}$.

		\vspace{0.5em}
		
		\textbf{Case 2-1 (analogue to case 1-2-1).} If $X$ is simplicial in $\mathcal{F}$, then $adj(X, \mathcal{F})$ is a clique in $\mathcal{F}$. Recall that (P9) says that $adj(X, \mathcal{U}_a) \cap ch(X, {\cal C}^*)\neq\varnothing$ for all $a\in \mathbf{A}$, we have $adj(X, \mathcal{F})\cap ch(X, {\cal C}^*)\neq\varnothing$ based on the definition of $\cal F$.
		
		If there is a $c\in adj(X, \mathcal{F})\cap ch(X, {\cal C}^*)$ which is simplicial in $\mathcal{F}$, then $c\in \mathbf{V}({\cal W})$ based on (P11), (P12) and (P13). Since ${\cal C}^*$ is a chain graph and $\cal W$ is an induced subgraph of some chain component in ${\cal C}^*$, $X\to \mathbf{V}({\cal W })$ in ${\cal C}^*$. That is, $\mathbf{V}({\cal W}) \subseteq ch(X, {\cal C}^*) \cap adj(X, \mathcal{F})$. As $X$ is simplicial, $\mathbf{V}({\cal W})$ is a clique in $\mathcal{F}$.
		
		\begin{enumerate}
			\item[(i)]  $adj(X, \mathcal{F}) = \mathbf{V}({\cal W})$. If there is an $a\in \mathbf{A}\subseteq \mathbf{V}({\cal W})$ such that $pa(a, {\cal C}^*)\setminus\{X\}\neq\varnothing$, then the vertices in $pa(a, {\cal C}^*)\setminus\{X\}$ are adjacent to $X$, since $a$ is simplicial in $\cal F$ and $pa(a, {\cal C}^*) \subseteq \mathbf{V}({\cal U}_a) \subseteq \mathbf{V}({\cal F})$. This contradicts $adj(X, \mathcal{F}) = \mathbf{V}({\cal W})$, and thus, $pa(a, {\cal C}^*)=\{X\}$. Since $pa(w, {\cal C}^*) = pa(w', {\cal C}^*)$ for any $w, w'\in \mathbf{V}({\cal W})$, we have $pa(\mathbf{V}({\cal W}), {\cal C}^*)=\{X\}$. It can be shown by (P12) and (P13) that the induced subgraph of ${\cal G}^*$ over $\{X\}\cup \mathbf{V}({\cal W})$ is an orientation component for $X$ with respect to ${\cal K}_u\cup {\cal K}_r$ and ${\cal G}^*$, thus $\mathbf{V}({\cal W})\to X$ are in $\cal H$, which contradicts the assumption that $\mathbf{V}({\cal W})\subseteq ch(X, {\cal C}^*) \subseteq ch(X, {\cal H})\cup sib (X, {\cal H}) \setminus \mathbf{S}$.
			
			\item[(ii)]  $adj(X, \mathcal{F}) \setminus \mathbf{V}({\cal W}) = pa(X, {\cal C}^*)$. We claim that the restriction subset of ${\cal K}\cup D_X$ on ${\cal G}^*({\bf M}_{\mathbf{T}})$ given ${\cal G}^*$ is not consistent with ${\cal G}^*({\bf M}_{\mathbf{T}})$ for any ${\bf M}_{\mathbf{T}}$ containing $\mathbf{T}\cup \mathbf{V}({\cal W})$. In fact, every vertex in $\mathbf{T}\cup \mathbf{V}({\cal W})=\{X\}\cup pa(X, {\cal C}^*)\cup \mathbf{V}({\cal W})$ is the tail of some DCC in the restriction subset of
			${\cal K}\cup D_X$ on ${\cal G}^*(\mathbf{T}\cup \mathbf{V}({\cal W}))$ given ${\cal G}^*$, because (1) $pa(X,{\cal C}^*)\to X$ and $X\to \mathbf{V}({\cal W})$ are in the restriction subset of	$D_X$ on ${\cal G}^*(\mathbf{T}\cup \mathbf{V}({\cal W}))$ given ${\cal G}^*$, (2) every vertex in $\mathbf{V}({\cal W})\setminus \mathbf{A}$ is the tail of some DCC in the restriction subset of ${\cal K}_u$ on ${\cal G}^*(\mathbf{T}\cup \mathbf{V}({\cal W}))$ given ${\cal G}^*$ according to (P12), and (3) every vertex in $\mathbf{A}$ is the tail of some DCC in the restriction subset of ${\cal K}_r$ on ${\cal G}^*(\mathbf{T}\cup \mathbf{V}({\cal W}))$ given ${\cal G}^*$ according to (P13) and (P8).
			
			\item[(iii)]  $adj(X, \mathcal{F}) \setminus \mathbf{V}({\cal W}) \subsetneq pa(X, {\cal C}^*)$. Let $p\in pa(X, {\cal C}^*)$ such that $p\notin adj(X, \mathcal{F})$. It is clear that none of the vertices in ${\cal W}$ is adjacent to $p$, since otherwise $p\to \mathbf{V}({\cal W})$ are in ${\cal C}^*$, and in particular, $p\to \mathbf{A}$ are in ${\cal C}^*$ and $p$ should be included in $\cal F$. By the similar argument for proving case 1-2-1 we can show that the induced subgraph of ${\cal G}^*$ over $\{p, X\}\cup adj(X, \mathcal{F})$ is an orientation component for $p$ with respect to ${\cal K}_u\cup {\cal K}_r$. In fact, (1) $p$ is simplicial in the induced subgraph of ${\cal G}^*$ over $\{p, X\}\cup adj(X, \mathcal{F})$ since $p$ is adjacent to the vertices in $\{X\}\cup adj(X, \mathcal{F}) \setminus \mathbf{V}({\cal W})$ and $\{X\}\cup adj(X, \mathcal{F})$ is a clique, (2) every vertex in $\{X\}\cup adj(X, \mathcal{F}) \setminus \mathbf{V}({\cal W})$ is non-simplicial in the induced subgraph of ${\cal G}^*$ over $\{p, X\}\cup adj(X, \mathcal{F})$ since $p$ is not adjacent to any $w\in\mathbf{V}({\cal W})$ but both $p$ and $w$ are neighbors of the vertices in $\{X\}\cup adj(X, \mathcal{F}) \setminus \mathbf{V}({\cal W})$, (3) every vertex in $\mathbf{V}({\cal W})\setminus \mathbf{A}$ is the tail of some DCC in ${\cal K}_u({\cal W}\mid {\cal G}^*)$, and (4) $\mathbf{V}({\cal W})\cup pa(a, {\cal C}^*)\subseteq \{p, X\}\cup adj(X, \mathcal{F})$ for every $a\in\mathbf{A}$ (since the simplicity of $a$ in $\cal F$ implies that $pa(a, \mathcal{F})\subseteq adj(X, \mathcal{F})$) and (P8) implies that every $a\in\mathbf{A}$ is the tail of some DCC in the restriction subset of ${\cal K}_r$ on ${\cal G}^*(\{p, X\}\cup adj(X, \mathcal{F}))$ given ${\cal G}^*$. Therefore, $X\to p$ is in $\cal H$, which leads to a contradiction.
			
			\item[(iv)]  $adj(X, \mathcal{F}) \cap ch(X, {\cal C}^*) \setminus \mathbf{V}({\cal W})\neq \varnothing$. Let $c\in adj(X, \mathcal{F}) \setminus \mathbf{V}({\cal W})$ such that $c\in ch(X, {\cal C}^*)$. It is clear that $c\in \mathbf{N}$, and consequently, $c$ is not simplicial in $\cal F$ by (P11). This indicates that $\cal F$ is not complete, and thus, there is a simplicial vertex $w$ in $\cal F$ which is not adjacent to $X$. However, this is impossible since every simplicial vertex in $\cal F$ should be in $\mathbf{F}\setminus\mathbf{N}\subseteq \mathbf{V}({\cal W})\cup pa(X, {\cal C}^*) \cup\{X\}$, which is either $X$ or adjacent to $X$.
		\end{enumerate}

		Now assume that none of the vertices in $adj(X, \mathcal{F})\cap ch(X, {\cal C}^*)$ is simplicial in $\mathcal{F}$. Denote by $\mathbf{R}$ (possibly empty) the set of simplicial vertices in $\mathcal{F}$ which are adjacent to $X$. Following the same argument for proving (P6), we can show that the induced subgraph of $\mathcal{F}$ over $\mathbf{F}\setminus \mathbf{R}$ is an orientation component for $X$ with respect to ${\cal K}_u\cup {\cal K}_r$ and ${\cal G}^*$. Hence, $adj(X, \mathcal{F})\cap ch(X, {\cal C}^*) \to X$ are in $\cal H$, which leads to a contradiction.
		
		\vspace{0.5em}
		
		\textbf{Case 2-2 (analogue to case 1-2-2).} Suppose that $X$ is not simplicial in $\mathcal{F}$. Denote by $\bf L$ the set of potential leaf nodes in $\cal F$ with respect to ${\cal K}_u \cup {\cal K}_r$ and ${\cal G}^*$. Since ${\cal K}_u \cup {\cal K}_r$ is consistent with ${\cal G}^*$, $\mathbf{L}\neq\varnothing$. Based on (P11), (P12), (P13) and the definition of a potential leaf node,  $\mathbf{L}\subseteq \{X\}\cup pa(X, {\cal C}^*)$. Moreover, since $X$ is not simplicial in $\mathcal{F}$, $\mathbf{L}\subseteq pa(X, {\cal C}^*)$. Therefore, there is a $\phi\in adj(X, \mathcal{F})\cap pa(X, {\cal C}^*)$ which is a potential leaf node in $\mathcal{F}$ with respect to ${\cal K}_u \cup {\cal K}_r$. Denote by $\mathbf{R}$ (possibly empty) the set of simplicial vertices in $\mathcal{F}$ which are adjacent to $\phi$. Following the same proof for case 1-2-2, it can be checked that the induced subgraph of $\mathcal{F}$ over $\mathbf{F}\setminus \mathbf{R}$ is an orientation component for $\phi$ with respect to ${\cal K}_u \cup {\cal K}_r$ and ${\cal G}^*$. Hence, $X \to \phi$ is in $\cal H$, which leads to a contradiction.
	\end{proof}
	
	\section{Simulations}\label{app:sim}
	{
		Some details of the simulations are presented in this section.
		\subsection{Generating Chordal Graphs}\label{app:app:sim}
		To generate a chordal graph with $n$ nodes and $e$ edges, where $e\geq n$, we first randomly generate a connected undirected tree with $n$ nodes using the following Algorithm~\ref{algo: tree_construct}. Algorithm~\ref{algo: tree_construct} successively adds $n-1$ edges to a graph with $n$ nodes but without any edge, and every edge Algorithm~\ref{algo: tree_construct} adds except the first one makes a singleton node connect to the existing non-singleton connected component. Next, we sequentially and randomly add $e-n+1$ undirected edges to $\cal T$. Every time an undirected edge is added to $\cal T$, we check whether the resulting graph is chordal, by calling $\tt networkx.is\_chodal$ from the Python package $\tt networkx$. If the resulting graph is chordal, we accept the added edge, otherwise we reject the added edge, re-sample an edge, and check the chordality again. The complete procedure is summarized in Algorithm~\ref{algo: gen_chordal}. In our implementation, we keep track of the number of iterations of the while loop. If it exceeds a predefined maximum threshold, the loop is terminated and FAIL is returned. In such case, we restart the process to sample a new chordal graph.

		\begin{algorithm}[!t]
			\caption{Generating a connected undirected tree.}
			\label{algo: tree_construct}
			\begin{algorithmic}[1]
				\REQUIRE
				$X_1,\cdots,X_n$, which are $n$ nodes.
				\ENSURE
				A connected undirected tree over $X_1,\cdots, X_n$.
				\STATE {Set $Q=(X_1)$ and $P=(X_2, \cdots X_n)$.}
				\STATE {Set $\cal T$ be a graph with nodes $X_1,\cdots,X_n$ and empty edge set.}
				\FOR {$i$ in $2,\cdots,n$,}
				\STATE {Randomly sample a node $q$ from Q and a node $p$ from $P$.}
				\STATE {Add the undirected edge $q-p$ to the graph $\cal T$.}
				\STATE {Adding $p$ to $Q$ and removing $p$ from $P$.}
				\ENDFOR
				\RETURN {$\cal T$.}
			\end{algorithmic}
		\end{algorithm}
		
		\begin{algorithm}[!t]
			\caption{Generating a chordal graph.}
			\label{algo: gen_chordal}
			\begin{algorithmic}[1]
				\REQUIRE
				$X_1,\cdots,X_n$, which are $n$ nodes, and $e$ representing the number of edges.
				\ENSURE
				A connected chordal graph over $X_1,\cdots, X_n$ with $e$ edges.
				\STATE {Sample a connected undirected tree $\cal T$ using Algorithm~\ref{algo: tree_construct}.}
				\STATE {Set $r=e-n+1$.}
				\WHILE {$r\neq 0$,}
				\STATE {Randomly sample two non-adjacent nodes $p$ and $q$.}
				\STATE {Add the undirected edge $q-p$ to $\cal T$ and denote the resulting graph by ${\cal T}_{\rm tmp}$.}
				\IF {${\cal T}_{\rm tmp}$ is chordal,}
				\STATE {Set $r = r-1$.}
				\STATE {Set ${\cal T} = {\cal T}_{\rm tmp}$.}
				\ENDIF
				\ENDWHILE
				\RETURN {$\cal T$.}
			\end{algorithmic}
		\end{algorithm}
		
		\subsection{Additional Results}\label{app:app:exp}
		%Figure~\ref{fig:4metrics} shows the results of four metrics in the settings where $n=30$, where the metrics are the causal mean square error (CMSE) introduced by~\citep{Tsirlis2018scoring, liu2020local}, the number of possible causal effects, the length of the interval determined by the minimum and maximum values of a set of possible effects, and the Int-MSE introduced by~\citet{malinsky2017estimating}. All scores are rescaled, as described in Section~\ref{sec:sec:sim}.
		
		Figure~\ref{fig:4metrics} shows the results of three metrics in the settings where $n=30$, where the metrics are the causal mean squared error (CMSE) introduced by~\citep{Tsirlis2018scoring, liu2020local}, the number of possible causal effects, and the length of the interval determined by the minimum and maximum values of a set of possible effects. All scores are rescaled, as described in Section~\ref{sec:sec:sim}.
		
		The results of the above three metrics are similar, as shown in Figure~\ref{fig:4metrics}. All scores decrease rapidly as the number of constraints increases. Moreover, for the same number of constraints, the scores of providing ancestral causal constraints are much lower than
		those of providing direct causal constraints, which in turn are significantly lower than those for non-ancestral constraints. This phenomenon arises because ancestral causal constraints are more informative than non-ancestral causal constraints. Specifically, knowing that $X$ is a cause of $Y$ implies that $Y$ is not a cause of $X$, but the reverse implication does not hold. In conclusion, pairwise causal background knowledge---particularly ancestral causal background knowledge---can greatly enhance the identifiability of a causal effect.
		%
		%It can be seen that all scores drop quickly as the number of constraints increases. Moreover, given the same number of constraints, the scores of providing ancestral causal constraints are much less than
		%those of providing direct causal constraints, and the latter is also much less than that of providing non-ancestral constraints in general. The reason for this phenomenon is because ancestral causal constraints are more informative than non-ancestral causal constraints in the sense that $X$ is a cause of $Y$ implies that $Y$ is not a cause of $X$ but not vice versa. In conclusion, pairwise causal background knowledge, especially ancestral causal background knowledge, can significantly increase the identifiability of a causal effect.

		\begin{figure}[!t]
			%\vspace{-1em}
			\centering	
			\subfloat[CMSE, $e=30$]{
				\begin{minipage}[t]{0.22\linewidth}
					\centering
					\includegraphics[width=1\linewidth]{fig/edge30number30_cmse.pdf}
					%\caption{fig1}
				\end{minipage}%
			}%
			\hspace{0.01\linewidth}
			\subfloat[CMSE, $e=45$]{
				\begin{minipage}[t]{0.22\linewidth}
					\centering
					\includegraphics[width=1\linewidth]{fig/edge45number30_cmse.pdf}
					%\caption{fig1}
				\end{minipage}%
			}%
			\hspace{0.01\linewidth}
			\subfloat[CMSE, $e=60$]{
				\begin{minipage}[t]{0.22\linewidth}
					\centering
					\includegraphics[width=1\linewidth]{fig/edge60number30_cmse.pdf}
					%\caption{fig1}
				\end{minipage}%
			}%
			\hspace{0.01\linewidth}
			\subfloat[CMSE, $e=75$]{
				\begin{minipage}[t]{0.22\linewidth}
					\centering
					\includegraphics[width=1\linewidth]{fig/edge75number30_cmse.pdf}
					%\caption{fig1}
				\end{minipage}%
			}%
			
			\subfloat[Number of Possible effects, $e=30$]{
				\begin{minipage}[t]{0.22\linewidth}
					\centering
					\includegraphics[width=1\linewidth]{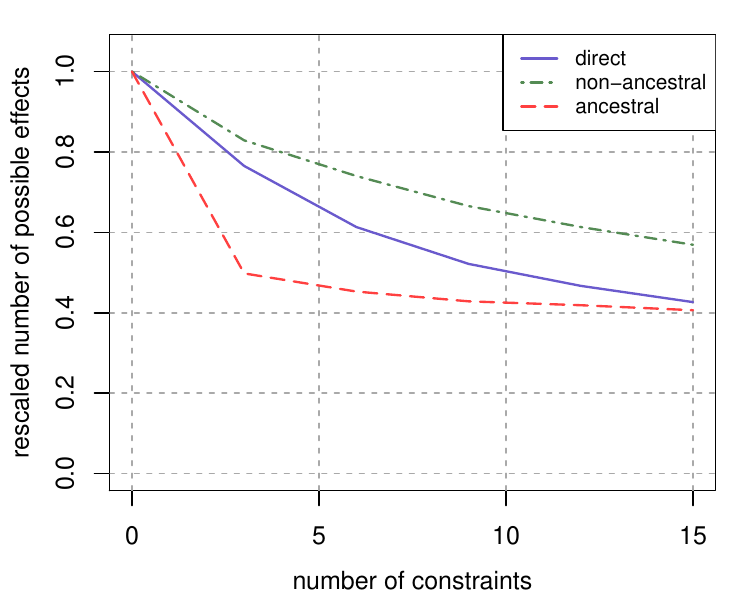}
					%\caption{fig1}
				\end{minipage}%
			}%
			\hspace{0.01\linewidth}
			\subfloat[Number of Possible effects, $e=45$]{
				\begin{minipage}[t]{0.22\linewidth}
					\centering
					\includegraphics[width=1\linewidth]{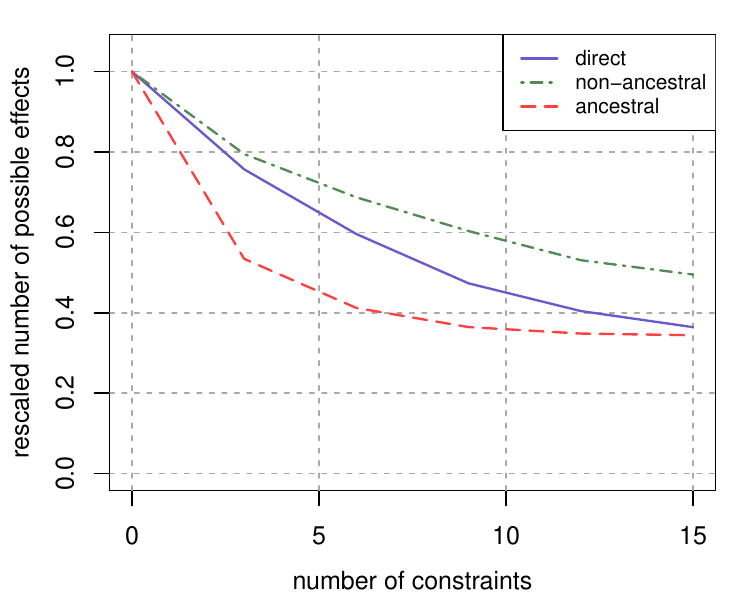}
					%\caption{fig1}
				\end{minipage}%
			}%
			\hspace{0.01\linewidth}
			\subfloat[Number of Possible effects, $e=60$]{
				\begin{minipage}[t]{0.22\linewidth}
					\centering
					\includegraphics[width=1\linewidth]{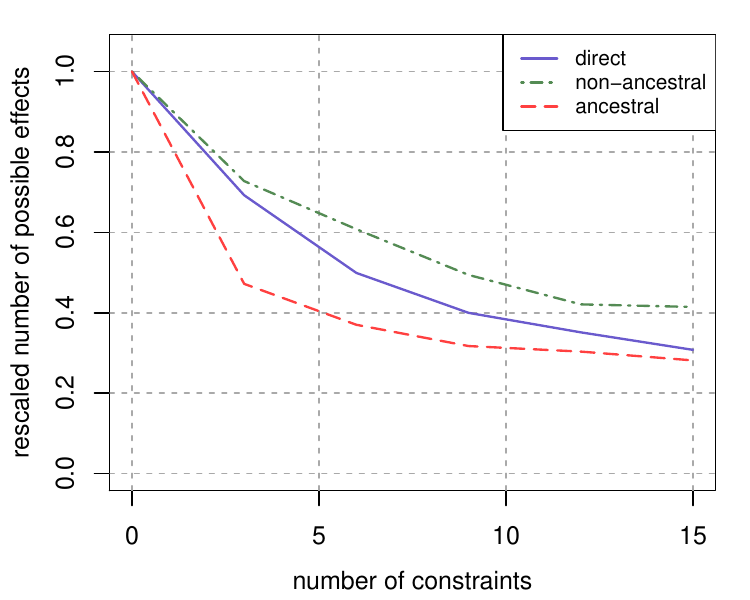}
					%\caption{fig1}
				\end{minipage}%
			}%
			\hspace{0.01\linewidth}
			\subfloat[Number of Possible effects, $e=75$]{
				\begin{minipage}[t]{0.22\linewidth}
					\centering
					\includegraphics[width=1\linewidth]{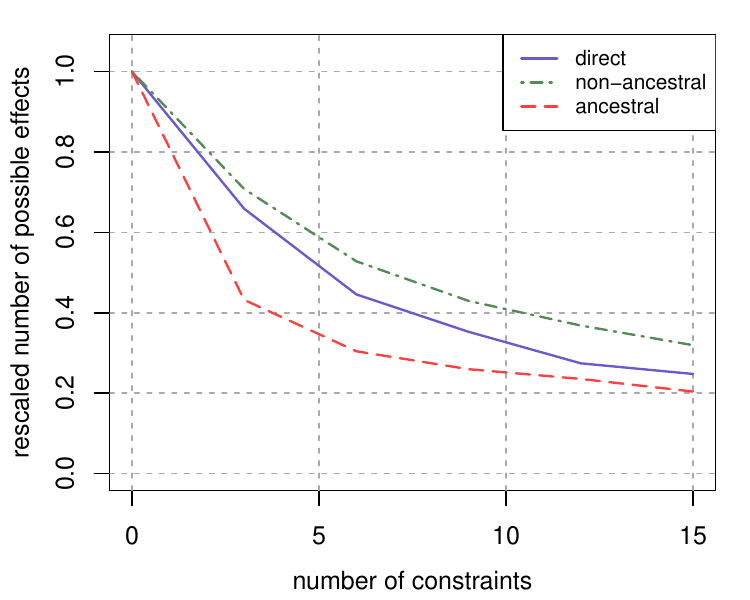}
					%\caption{fig1}
				\end{minipage}%
			}%
			
			\subfloat[Length of the interval, $e=30$]{
				\begin{minipage}[t]{0.22\linewidth}
					\centering
					\includegraphics[width=1\linewidth]{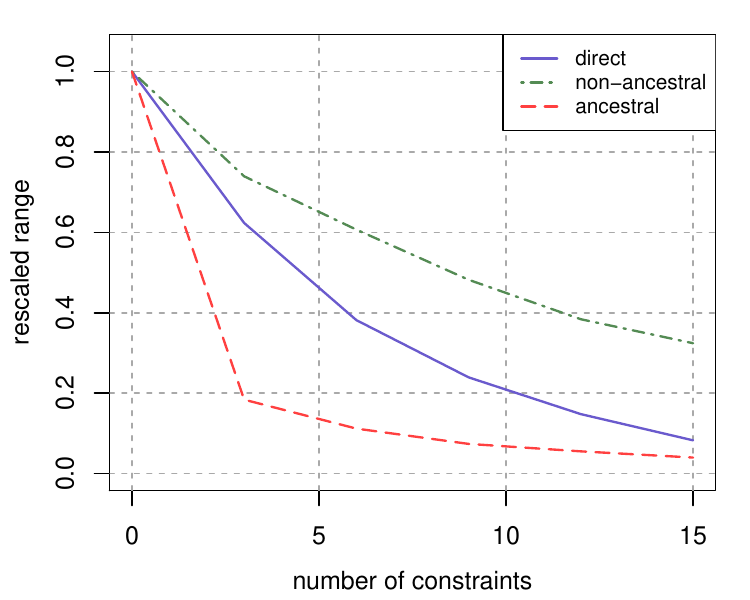}
					%\caption{fig1}
				\end{minipage}%
			}%
			\hspace{0.01\linewidth}
			\subfloat[Length of the interval, $e=45$]{
				\begin{minipage}[t]{0.22\linewidth}
					\centering
					\includegraphics[width=1\linewidth]{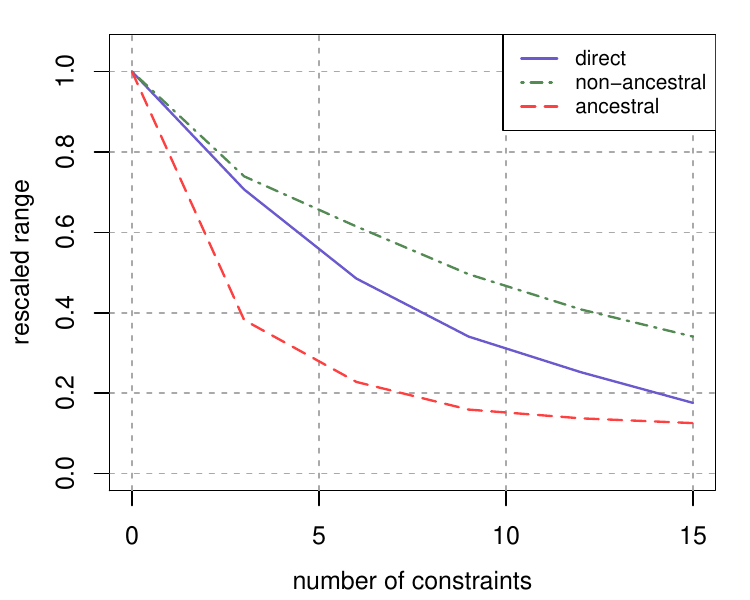}
					%\caption{fig1}
				\end{minipage}%
			}%
			\hspace{0.01\linewidth}
			\subfloat[Length of the interval, $e=60$]{
				\begin{minipage}[t]{0.22\linewidth}
					\centering
					\includegraphics[width=1\linewidth]{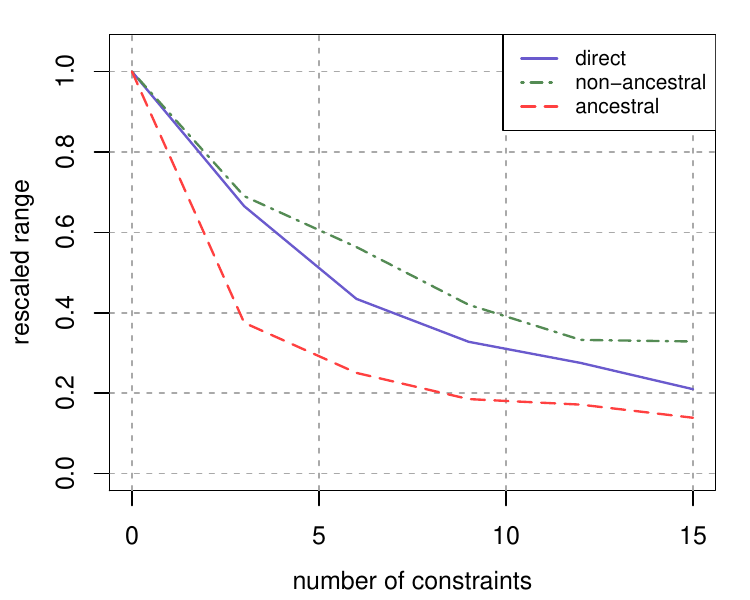}
					%\caption{fig1}
				\end{minipage}%
			}%
			\hspace{0.01\linewidth}
			\subfloat[Length of the interval, $e=75$]{
				\begin{minipage}[t]{0.22\linewidth}
					\centering
					\includegraphics[width=1\linewidth]{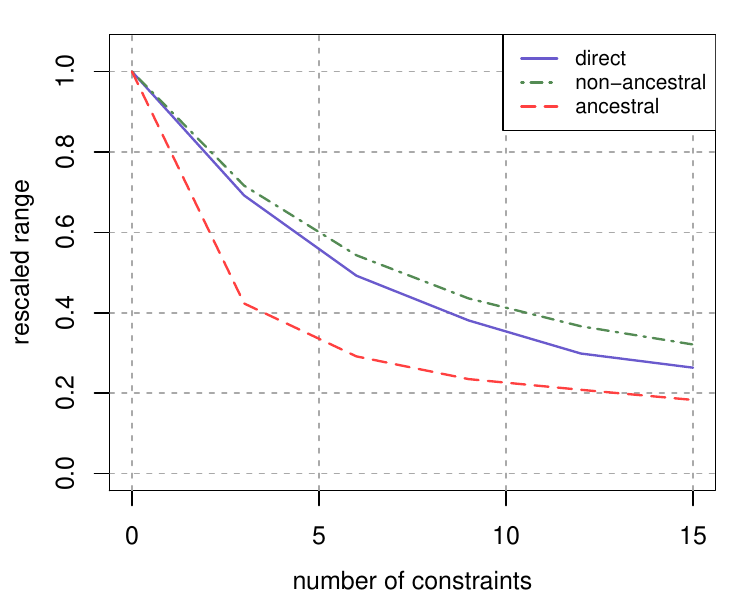}
					%\caption{fig1}
				\end{minipage}%
			}%
			
			\caption{The results of the three metrics when $n=30$.}
			\label{fig:4metrics}
			%\vspace{-1em}
		\end{figure}
		
	}

	\newpage

	\vskip 0.2in
	\bibliography{ref220626}       % Bibliography file (usually '*.bib')

\begin{thebibliography}{31}
\providecommand{\natexlab}[1]{#1}
\providecommand{\url}[1]{\texttt{#1}}
\expandafter\ifx\csname urlstyle\endcsname\relax
  \providecommand{\doi}[1]{doi: #1}\else
  \providecommand{\doi}{doi: \begingroup \urlstyle{rm}\Url}\fi

\bibitem[Andersson et~al.(1997)Andersson, Madigan, and
  Perlman]{andersson1997characterization}
Steen~A. Andersson, David Madigan, and Michael~D. Perlman.
\newblock A characterization of {M}arkov equivalence classes for acyclic
  digraphs.
\newblock \emph{The Annals of Statistics}, 25\penalty0 (2):\penalty0 505--541,
  1997.

\bibitem[Andrews et~al.(2020)Andrews, Spirtes, and Cooper]{Andrews2020tiered}
Bryan Andrews, Peter Spirtes, and Gregory~F. Cooper.
\newblock On the completeness of causal discovery in the presence of latent
  confounding with tiered background knowledge.
\newblock In \emph{Proceedings of the Twenty-Third International Conference on
  Artificial Intelligence and Statistics}, volume 108, pages 4002--4011. PMLR,
  2020.

\bibitem[Blair and Peyton(1993)]{blair1993chordal}
Jean R.~S. Blair and Barry Peyton.
\newblock An introduction to chordal graphs and clique trees.
\newblock In \emph{Graph Theory and Sparse Matrix Computation}, pages 1--29.
  Springer New York, 1993.

\bibitem[Borboudakis and Tsamardinos(2012)]{Borboudakis2012Incorporating}
Giorgos Borboudakis and Ioannis Tsamardinos.
\newblock Incorporating causal prior knowledge as path-constraints in
  {Bayesian} networks and maximal ancestral graphs.
\newblock In \emph{Proceedings of the 29th International Conference on Machine
  Learning}, pages 1799--1806. Omnipress, 2012.

\bibitem[Dor and Tarsi(1992)]{dor1992simple}
Dorit Dor and Michael Tarsi.
\newblock A simple algorithm to construct a consistent extension of a partially
  oriented graph.
\newblock Technical Report R-185, Cognitive Systems Laboratory, UCLA, 1992.

\bibitem[Fang and He(2020)]{fang2020bgida}
Zhuangyan Fang and Yangbo He.
\newblock {IDA} with background knowledge.
\newblock In \emph{Proceedings of the Thirty-sixth Conference on Uncertainty in
  Artificial Intelligence}. PMLR, 2020.

\bibitem[{Fang} et~al.(2022){Fang}, {Liu}, {Geng}, and {He}]{fang2021local}
Zhuangyan {Fang}, Yue {Liu}, Zhi {Geng}, and Yangbo {He}.
\newblock {A local method for identifying causal relations under Markov
  equivalence}.
\newblock \emph{Artificial Intelligence}, 305:\penalty0 103669, 2022.

\bibitem[{Guo} and {Perkovi{\'c}}(2021)]{Guo2020minimal}
F.~Richard {Guo} and Emilija {Perkovi{\'c}}.
\newblock {Minimal enumeration of all possible total effects in a {Markov}
  equivalence class}.
\newblock In \emph{Proceedings of the Twenty-Fourth International Conference on
  Artificial Intelligence and Statistics}, pages 2395--2403. PMLR, 2021.

\bibitem[Hauser and B{\"u}hlmann(2012)]{hauser2012characterization}
Alain Hauser and Peter B{\"u}hlmann.
\newblock Characterization and greedy learning of interventional {M}arkov
  equivalence classes of directed acyclic graphs.
\newblock \emph{Journal of Machine Learning Research}, 13\penalty0
  (Aug):\penalty0 2409--2464, 2012.

\bibitem[He and Geng(2008)]{he2008active}
YangBo He and Zhi Geng.
\newblock Active learning of causal networks with intervention experiments and
  optimal designs.
\newblock \emph{Journal of Machine Learning Research}, 9\penalty0
  (Nov):\penalty0 2523--2547, 2008.

\bibitem[{Henckel} et~al.(2022){Henckel}, {Perkovi{\'c}}, and
  {Maathuis}]{2019arXiv190702435H}
Leonard {Henckel}, Emilija {Perkovi{\'c}}, and Marloes~H. {Maathuis}.
\newblock Graphical criteria for efficient total effect estimation via
  adjustment in causal linear models.
\newblock \emph{Journal of the Royal Statistical Society: Series B (Statistical
  Methodology)}, 84:\penalty0 579--599, 2022.

\bibitem[Lauritzen and Richardson(2002)]{lauritzen2002chain}
Steffen~L Lauritzen and Thomas~S Richardson.
\newblock Chain graph models and their causal interpretations.
\newblock \emph{Journal of the Royal Statistical Society: Series B (Statistical
  Methodology)}, 64\penalty0 (3):\penalty0 321--348, 2002.

\bibitem[Liu et~al.(2020{\natexlab{a}})Liu, Fang, He, and Geng]{liu2020cida}
Yue Liu, Zhuangyan Fang, Yangbo He, and Zhi Geng.
\newblock {Collapsible IDA}: Collapsing parental sets for locally estimating
  possible causal effects.
\newblock In \emph{Proceedings of the Thirty-sixth Conference on Uncertainty in
  Artificial Intelligence}. PMLR, 2020{\natexlab{a}}.

\bibitem[Liu et~al.(2020{\natexlab{b}})Liu, Fang, He, Geng, and
  Liu]{liu2020local}
Yue Liu, Zhuangyan Fang, Yangbo He, Zhi Geng, and Chunchen Liu.
\newblock Local causal network learning for finding pairs of total and direct
  effects.
\newblock \emph{Journal of Machine Learning Research}, 21\penalty0
  (148):\penalty0 1--37, 2020{\natexlab{b}}.

\bibitem[Maathuis et~al.(2009)Maathuis, Kalisch, and
  B{\"u}hlmann]{maathuis2009estimating}
Marloes~H. Maathuis, Markus Kalisch, and Peter B{\"u}hlmann.
\newblock Estimating high-dimensional intervention effects from observational
  data.
\newblock \emph{The Annals of Statistics}, 37\penalty0 (6A):\penalty0
  3133--3164, 2009.

\bibitem[Malinsky and Spirtes(2017)]{malinsky2017estimating}
Daniel Malinsky and Peter Spirtes.
\newblock Estimating bounds on causal effects in high-dimensional and possibly
  confounded systems.
\newblock \emph{International Journal of Approximate Reasoning}, 88:\penalty0
  371--384, 2017.

\bibitem[Meek(1995)]{meek1995causal}
Christopher Meek.
\newblock Causal inference and causal explanation with background knowledge.
\newblock In \emph{Proceedings of the Eleventh Conference on Uncertainty in
  Artificial Intelligence}, pages 403--410. Morgan Kaufmann Publishers Inc.,
  1995.

\bibitem[Nandy et~al.(2017)Nandy, Maathuis, and
  Richardson]{nandy2017estimating}
Preetam Nandy, Marloes~H. Maathuis, and Thomas~S. Richardson.
\newblock Estimating the effect of joint interventions from observational data
  in sparse high-dimensional settings.
\newblock \emph{The Annals of Statistics}, 45\penalty0 (2):\penalty0 647--674,
  2017.

\bibitem[Park and Klabjan(2017)]{park2017bayesian}
Young~Woong Park and Diego Klabjan.
\newblock Bayesian network learning via topological order.
\newblock \emph{Journal of Machine Learning Research}, 18\penalty0
  (1):\penalty0 3451--3482, 2017.

\bibitem[Pearl(1995)]{pearl1995causal}
Judea Pearl.
\newblock {Causal diagrams for empirical research}.
\newblock \emph{Biometrika}, 82\penalty0 (4):\penalty0 669--688, 1995.

\bibitem[Pearl(2009)]{pearl2009causality}
Judea Pearl.
\newblock \emph{Causality}.
\newblock Cambridge University Press, 2009.

\bibitem[Pearl et~al.(1989)Pearl, Geiger, and Verma]{pearl1989conditional}
Judea Pearl, Dan Geiger, and Thomas Verma.
\newblock Conditional independence and its representations.
\newblock \emph{Kybernetika}, 25\penalty0 (7):\penalty0 33--44, 1989.

\bibitem[{Perkovi{\'c}}(2020)]{Perkovic2020mpdag}
Emilija {Perkovi{\'c}}.
\newblock {Identifying causal effects in maximally oriented partially directed
  acyclic graphs}.
\newblock In \emph{Proceedings of the Thirty-sixth Conference on Uncertainty in
  Artificial Intelligence}. PMLR, 2020.

\bibitem[Perkovi{\'c} et~al.(2017)Perkovi{\'c}, Kalisch, and
  Maathuis]{perkovic2017interpreting}
Emilija Perkovi{\'c}, Markus Kalisch, and Marloes~H Maathuis.
\newblock Interpreting and using {CPDAG}s with background knowledge.
\newblock In \emph{Proceedings of the Thirty-Third Conference on Uncertainty in
  Artificial Intelligence}. AUAI press, 2017.

\bibitem[Robins(1986)]{robins1986}
James Robins.
\newblock A new approach to causal inference in mortality studies with a
  sustained exposure period---application to control of the healthy worker
  survivor effect.
\newblock \emph{Mathematical Modelling}, 7\penalty0 (9):\penalty0 1393--1512,
  1986.

\bibitem[Rotnitzky and Smucler(2020)]{Andrea2019}
Andrea Rotnitzky and Ezequiel Smucler.
\newblock Efficient adjustment sets for population average causal treatment
  effect estimation in graphical models.
\newblock \emph{Journal of Machine Learning Research}, 21\penalty0
  (188):\penalty0 1--86, 2020.

\bibitem[Spirtes et~al.(2000)Spirtes, Glymour, and
  Scheines]{spirtes2000causation}
Peter Spirtes, Clark~N Glymour, and Richard Scheines.
\newblock \emph{Causation, Prediction, and Search}.
\newblock MIT Press, second edition, 2000.

\bibitem[Tsirlis et~al.(2018)Tsirlis, Lagani, Triantafillou, and
  Tsamardinos]{Tsirlis2018scoring}
Konstantinos Tsirlis, Vincenzo Lagani, Sofia Triantafillou, and Ioannis
  Tsamardinos.
\newblock On scoring maximal ancestral graphs with the max–min hill climbing
  algorithm.
\newblock \emph{International Journal of Approximate Reasoning}, 102:\penalty0
  74--85, 2018.

\bibitem[Wang and Michailidis(2019)]{wang2019directed}
Pei-Li Wang and George Michailidis.
\newblock Directed acyclic graph reconstruction leveraging prior partial
  ordering information.
\newblock In \emph{Machine Learning, Optimization, and Data Science: 5th
  International Conference, LOD 2019, Siena, Italy, September 10--13, 2019,
  Proceedings 5}, pages 458--471. Springer, 2019.

\bibitem[Witte et~al.(2020)Witte, Henckel, Maathuis, and
  Didelez]{Witte2020efficient}
Janine Witte, Leonard Henckel, Marloes~H. Maathuis, and Vanessa Didelez.
\newblock On efficient adjustment in causal graphs.
\newblock \emph{Journal of Machine Learning Research}, 21\penalty0
  (246):\penalty0 1--45, 2020.

\bibitem[Wright(1921)]{wright1921}
Sewall Wright.
\newblock {Correlation and causation}.
\newblock \emph{Journal of Agricultural Research}, 20\penalty0 (7):\penalty0
  557--585, 1921.

\end{thebibliography}
	
\end{document}